\def\vm{{\bm{m}}}
\def\vp{{\bm{p}}}
\def\vx{{\bm{x}}}
\def\vy{{\bm{y}}}
\def\vz{{\bm{z}}}
\def\R{{\mathbb{R}}}
\def\E{{\mathbb{E}}}
\def\gO{{\mathcal{O}}}
\def\gD{{\mathcal{D}}}
\def\gB{{\mathcal{B}}}
\def\gU{{\mathcal{U}}}
\def\gN{{\mathcal{N}}}
\def\gL{{\mathcal{L}}}
\def\gC{{\mathcal{C}}}
\def\mW{{\bm{W}}}
\def\mA{{\bm{A}}}
\def\mB{{\bm{B}}}
\def\mX{{\bm{X}}}
\def\mG{{\bm{G}}}
\def\mM{{\bm{M}}}
\def\mU{{\bm{U}}}
\def\mSigma{{\bm{\Sigma}}}
\def\mV{{\bm{V}}}
\def\mP{{\bm{P}}}
\def\mQ{{\bm{Q}}}
\def\mC{{\bm{C}}}
\def\mI{{\bm{I}}}
\def\mD{{\bm{D}}}
\def\mR{{\bm{R}}}
\def\mN{{\bm{N}}}
\def\mZ{{\bm{Z}}}
\def\mS{{\bm{S}}}
\def\mY{{\bm{Y}}}
\def\mLambda{{\bm{\Lambda}}}
\def\mE{{\bm{E}}}
\def\mO{{\bm{O}}}
\def\emS{{S}}
\def\emX{{X}}
\def\emB{{B}}
\def\emU{{U}}
\def\emV{{V}}
\newtheoremstyle{mynote}% name
  {3pt}% Space above
  {3pt}% Space below
  {\normalfont}% Body font
  {}% Indent amount
  {\bfseries}% Theorem head font
  {:}% Punctuation after theorem head
  {.5em}% Space after theorem head
  {}% Theorem head spec
\theoremstyle{mynote}
\newcommand{\RR}{\mathbb{R}}
\newcommand{\ie}{\textit{i.e.}}
\newtheorem{theorem}{Theorem}
\newtheorem{lemma}{Lemma}
\newcommand{\eg}{\textit{e.g.}}
\renewcommand{\vx}{\bm{x}}
\renewcommand{\vy}{\bm{y}}
\renewcommand{\vz}{\bm{z}}
\renewcommand{\vm}{\bm{m}}
\theoremstyle{plain}
\newtheorem{proposition}[theorem]{Proposition}
\newtheorem{corollary}[theorem]{Corollary}
\theoremstyle{definition}
\newtheorem{assumption}[theorem]{Assumption}
\theoremstyle{remark}
\icmltitlerunning{Subspace Optimization for Large Language Models with Convergence Guarantees}
\newcommand{\textub}[1]{\textbf{\underline{#1}}}
\definecolor{periwinkle}{RGB}{255, 136, 124} % GaLore
\definecolor{highlight_color}{RGB}{255, 188, 167} % GoLore
\definecolor{babyblue}{RGB}{246, 218, 101} % large-batch
\definecolor{yellow}{RGB}{245, 176, 65} % stochastic
\definecolor{springgreen}{RGB}{255, 255, 133} % deterministic
\definecolor{orchid}{RGB}{93, 173, 226} % with
\definecolor{limegreen}{RGB}{145, 223, 208} % without
\renewcommand{\eqref}[1]{(\ref{#1})}
\begin{document}

\twocolumn[
% \icmltitle{Gradient Random Low-Rank Projection for Large Language \\ Models with Convergence Guarantees}
\icmltitle{Subspace Optimization for Large Language  Models \\ with Convergence Guarantees}
\newcommand{\fix}{\marginpar{FIX}}
\newcommand{\new}{\marginpar{NEW}}

% It is OKAY to include author information, even for blind
% submissions: the style file will automatically remove it for you
% unless you've provided the [accepted] option to the icml2025
% package.

% List of affiliations: The first argument should be a (short)
% identifier you will use later to specify author affiliations
% Academic affiliations should list Department, University, City, Region, Country
% Industry affiliations should list Company, City, Region, Country

% You can specify symbols, otherwise they are numbered in order.
% Ideally, you should not use this facility. Affiliations will be numbered
% in order of appearance and this is the preferred way.
\icmlsetsymbol{equal}{*}

\begin{icmlauthorlist}
\icmlauthor{Yutong He}{pku,zgc}
\icmlauthor{Pengrui Li}{beihang}
\icmlauthor{Yipeng Hu}{pku}
\icmlauthor{Chuyan Chen}{pku}
\icmlauthor{Kun Yuan}{pku,ai4s,nelbdaa}
% \icmlauthor{Firstname6 Lastname6}{sch,yyy,comp}
% \icmlauthor{Firstname7 Lastname7}{comp}
%\icmlauthor{}{sch}
% \icmlauthor{Firstname8 Lastname8}{sch}
% \icmlauthor{Firstname8 Lastname8}{yyy,comp}
%\icmlauthor{}{sch}
%\icmlauthor{}{sch}
\end{icmlauthorlist}

% \icmlaffiliation{yyy}{Department of XXX, University of YYY, Location, Country}
% \icmlaffiliation{comp}{Company Name, Location, Country}
% \icmlaffiliation{sch}{School of ZZZ, Institute of WWW, Location, Country}
\icmlaffiliation{pku}{Peking University}
\icmlaffiliation{zgc}{Zhongguancun Academy}
\icmlaffiliation{beihang}{Beihang University}
\icmlaffiliation{ai4s}{AI for Science Institute, Beijing, China}
\icmlaffiliation{nelbdaa}{National Engineering Laboratory for Big Data Analytics and Applications}
\icmlcorrespondingauthor{Kun Yuan}{kunyuan@pku.edu.cn}
% \icmlcorrespondingauthor{Firstname2 Lastname2}{first2.last2@www.uk}

% You may provide any keywords that you
% find helpful for describing your paper; these are used to populate
% the "keywords" metadata in the PDF but will not be shown in the document
\icmlkeywords{Machine Learning, ICML}

\vskip 0.3in
]

% this must go after the closing bracket ] following \twocolumn[ ...

% This command actually creates the footnote in the first column
% listing the affiliations and the copyright notice.
% The command takes one argument, which is text to display at the start of the footnote.
% The \icmlEqualContribution command is standard text for equal contribution.
% Remove it (just {}) if you do not need this facility.

\printAffiliationsAndNotice{}  % leave blank if no need to mention equal contribution
% \printAffiliationsAndNotice{\icmlEqualContribution} % otherwise use the standard text.

\begin{abstract}
Subspace optimization algorithms, such as GaLore \citep{zhao2024galore}, have gained attention for pre-training and fine-tuning large language models (LLMs) due to their memory efficiency. However, their convergence guarantees remain unclear, particularly in stochastic settings. In this paper, we reveal that GaLore does not always converge to the optimal solution and provide an explicit counterexample to support this finding. We further explore the conditions under which GaLore achieves convergence, showing that it does so when either (i) a sufficiently large mini-batch size is used or (ii) the gradient noise is isotropic. More significantly, we introduce \textbf{GoLore} (\textub{G}radient rand\textub{o}m \textub{Lo}w-\textub{r}ank proj\textub{e}ction), a novel variant of GaLore that provably converges in typical stochastic settings, even with standard batch sizes. Our convergence analysis extends naturally to other subspace optimization algorithms. Finally, we empirically validate our theoretical results and thoroughly test the proposed mechanisms. Codes are available at \url{https://github.com/pkumelon/Golore}.
% Subspace optimization algorithms, with GaLore \citep{zhao2024galore} as a representative method, have gained popularity for pre-training or fine-tuning large language models (LLMs) due to their memory efficiency. However, their convergence guarantees remain unclear, particularly in stochastic settings. In this paper, we unexpectedly discover that GaLore does not always converge to the optimal solution and substantiate this finding with an explicit counter-example. We then investigate the conditions under which GaLore can achieve convergence, demonstrating that it does so either {when using a sufficiently large mini-batch size or assume the gradient noise is isotropic}. More significantly, we introduce \textbf{GoLore} (\textub{G}radient rand\textub{o}m \textub{Lo}w-\textub{r}ank proj\textub{e}ction), a novel variant of GaLore that provably converges in {common} stochastic settings, even with standard batch sizes. Our convergence analysis can be readily extended to other sparse subspace optimization algorithms. Finally, we {empirically} validate our theoretical results and {tested} the proposed mechanisms.
\end{abstract}
% \vspace{-2mm}
\section{Introduction}

Large Language Models (LLMs) have demonstrated impressive performance across a variety of tasks, including language processing, planning, and coding. However, LLMs require substantial computational resources and memory due to their large model size and the extensive amounts of training data.  Consequently, recent advancements in stochastic optimization have focused on developing memory-efficient strategies to pre-train or fine-tune LLMs with significantly reduced computing resources. Most approaches \citep{vyas2024adamem,ramesh2024blockllm,luo2023came,liu2024dora,bini2024ether,hao2024flora,zhao2024galore,muhamed2024grass,pan2024lisa,loeschcke2024loqt,hayou2024lora+,lialin2023relora,han2024sltrain,song2023sparse} concentrate on reducing the memory of optimizer states, which are critical components of overall training memory. For instance, Adam \citep{kingma2014adam} and AdamW \citep{loshchilov2017decoupled} maintain first and second-order momentum terms for gradients as  optimizer states, leading to significant memory overhead for large models. 

Among the most popular memory-efficient fine-tuning algorithms is LoRA \citep{hu2021lora}, which decreases the number of trainable parameters by employing low-rank model adapters. However, the low-rank constraint on weight updates can result in substantial performance degradation for tasks that require full-rank updates, particularly in the pre-training of LLMs. To address this issue, several LoRA variants have been proposed, including ReLoRA \citep{lialin2023relora} and SLTrain \citep{han2024sltrain}. Recently, GaLore \citep{zhao2024galore} has emerged as an effective solution, significantly reducing optimizer states by projecting full-parameter gradients into periodically recomputed subspaces. By retaining optimizer states in low-rank subspaces, GaLore can reduce memory usage by over 60\%, enabling the pre-training of a 7B model on an NVIDIA RTX 4090 with 24GB of memory. In contrast, the vanilla 8-bit Adam without low-rank projection requires over 40GB of memory.

% Among the most popular memory-efficient fine-tuning algorithms is LoRA \citep{hu2021lora}, which decreases the number of trainable parameters by employing low-rank model adapters. However, the low-rank constraint on weight updates can result in substantial performance degradation for tasks that require full-rank updates, particularly in the pre-training of LLMs. To address this issue, several LoRA variants have been proposed, including ReLoRA \citep{lialin2023relora} and SLTrain \citep{han2024sltrain}, both of which aim to enhance performance on pre-training tasks. Recently, GaLore \citep{zhao2024galore} has emerged as an effective solution, reducing optimizer states by projecting the full-parameter gradients into periodically recomputed subspaces. By compressing the optimizer states into low-rank matrices, GaLore can reduce memory usage for over 60\%, enabling pre-training a 7B model on NVIDIA RTX 4090 with 24GB memory. In contrast, the vanilla 8-bit Adam without low-rank projection requires over 40GB memory. 

% Although GaLore's memory efficiency have been well addressed both in theory and in practice, it remains unclear whether and how much it has to sacrifice the performance. Consequently, a natural and fundamental open question is:
% \vspace{-1mm}
\subsection{Fundamental open questions and main results}
% \vspace{-2mm}
While GaLore's memory efficiency has been well established, its convergence guarantees remain unclear. This raises the following fundamental open question: 
\begin{center}
    \textit{Q1. Can GaLore converge to stationary solutions, under {\color{black}standard and mild} assumptions?}
\end{center}
{By \textit{stationary solutions}, we refer to first-order stationary points $\vx\in\R^d$ such that $\nabla f(\vx)=0$ for $f:\R^d\rightarrow\R$.} By \textit{standard and mild assumptions}, we refer to common conditions in non-convex smooth optimization, including lower boundedness, $L$-smoothness and unbiased stochastic gradients with bounded variances, as in Assumptions \ref{asp:lb}-\ref{asp:sg}.

% Contrary to expectations, our investigation reveals that GaLore does \textbf{NOT} converge to stationary solutions under regular assumptions. 
Contrary to expectations, our investigation reveals that GaLore is \textbf{NOT} theoretically guaranteed to converge precisely with standard and mild assumptions. The intuition behind this finding is straightforward: GaLore projects the stochastic gradient matrix onto a low-rank subspace spanned by the top \( r \) singular vectors obtained via Singular Value Decomposition (SVD), effectively capturing the dominant components of the stochastic gradient matrix. However, the stochastic gradient comprises two components: the true gradient and gradient noise. When the true gradient dominates, the  SVD-identified subspace primarily captures the gradient component. \textit{In contrast, as the algorithm approaches a local minimum so that the true gradient diminishes while noise persists, the {greedy and biased} SVD-derived subspace captures only the noise component, ultimately leading to non-convergence. }
To validate this intuition, we construct a counter-example demonstrating that GaLore fails to converge to stationary solutions, see the illustration in Fig.~\ref{fig:qp}. This leads us to a subsequent open question:

\begin{center}
    \textit{Q2. Under what additional assumptions can GaLore converge to stationary solutions?}
\end{center}

Based on the preceding discussion, we conclude that the SVD-identified subspace aligns well with the descent direction {when} the true gradient component dominates the gradient noise. This observation naturally leads to several additional assumptions under which GaLore can converge: 

\begin{itemize}[leftmargin=6mm]
    \item \textbf{Noise-Free Assumption.} We theoretically establish that GaLore converges at a rate of \(\mathcal{O}(1/T)\) in the deterministic and non-convex setting.

    \item \textbf{Large-Batch Assumption.} We theoretically demonstrate that GaLore converges at a rate of \(\mathcal{O}(1/\sqrt{T})\) in the stochastic and non-convex setting, provided that the batch size is extremely large and increases with the number of iterations $T$, \eg, a batch size of \(\Theta(\sqrt{T})\). 
\end{itemize}
We further investigate GaLore's convergence under the \textbf{Isotropic-Noise Assumption}, wherein the noise is assumed to be evenly distributed across all directions, mitigating the bias introduced by the top-\( K \) selection.

\begin{figure*}[t]
\centering
% \vspace{-5mm}
% \includegraphics[width=.36\textwidth]{figures/non-convergence-adamW-new.pdf}
% \includegraphics[width=.36\textwidth]{figures/non-convergence-MSGD-new.pdf}
\includegraphics[width=.4\textwidth]{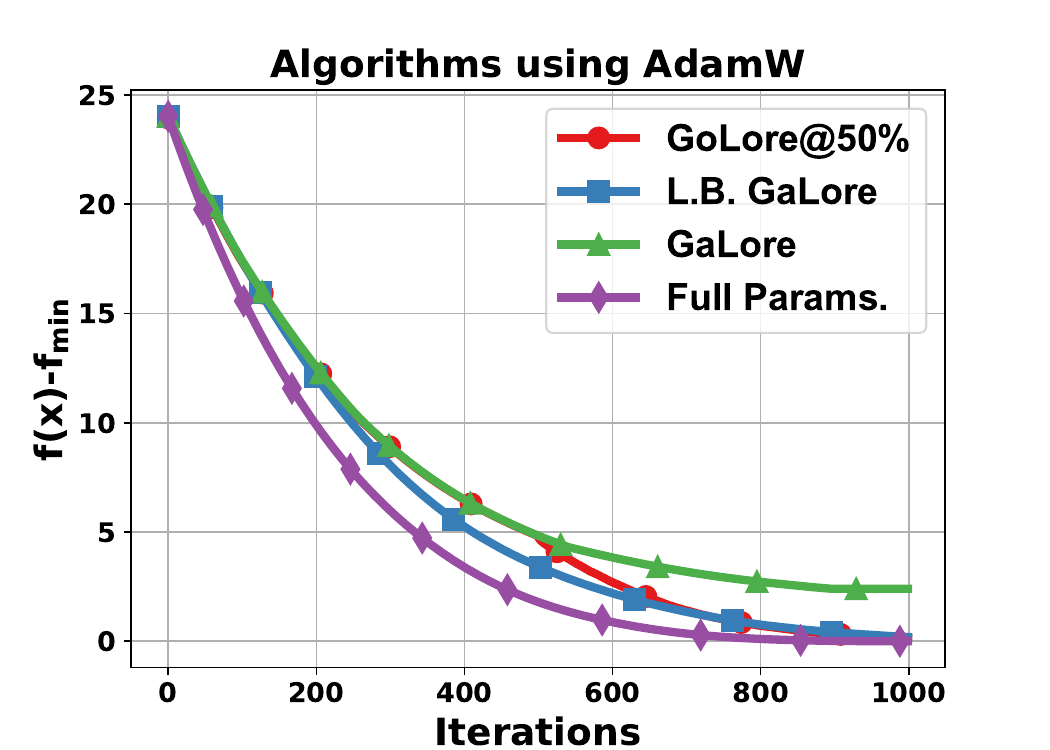}
\includegraphics[width=.4\textwidth]{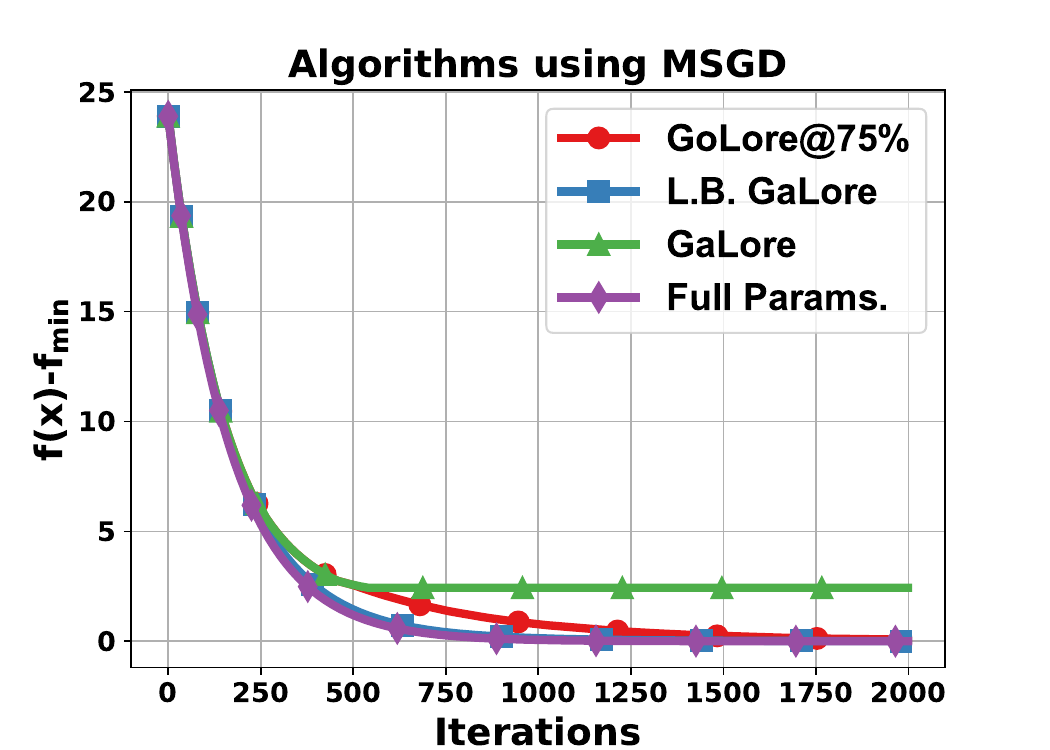}
\caption{\small Loss curves of algorithms using AdamW (left) and {Momentum SGD (right) on problem \eqref{eq:counterexample}, where \textit{L.B. GaLore} stands for large-batch GaLore, \textit{GoLore@$x\%$} applies GaLore for the beginning $(100-x)\%$ iterations and GoLore for the last $x\%$ iterations.}
% {constructed in Appendix \ref{app:exp}}. \textit{GoLore@$x$\%} uses GaLore in the first $(100-x)$\% iterations and GoLore in the last $x$\% iterations, \textit{L.B. GaLore} denotes GaLore with large-batch gradients, \textit{Full Params.} denotes full parameter training. \ky{The experimental settup can be refered to Appendix XX.}
}
\label{fig:qp}
% \vspace{-5mm}
\end{figure*}

However, none of the aforementioned assumptions apply to the practical pre-training and fine-tuning of LLMs, where gradient noise is not assumed  isotropic \citep{zhu2018anisotropic,haochen2021shape,mori2022power,wu2022alignment,wang2310theoretical} and fixed batch sizes are commonly employed. This observation raises a fundamental open question:
\begin{center}
    % \textit{Q3. Under what modifications can GaLore converge if only \\ small-batch stochastic gradients are available?}
    \textit{Q3. Under what modifications can GaLore provably converge in LLM settings, {where possibly anisotropic} gradient noise presents and the batch size is constant?}
\end{center}

 It is evident that {GaLore's} SVD-based projections cannot extract meaningful information from noise-dominant matrices. To address this issue, this paper proposes modifying the SVD projection to a \textub{G}radient Rand\textub{o}m \textub{Lo}w-\textub{R}ank proj\textub{e}ction, resulting in the \textbf{GoLore} algorithm. This random projection can effectively capture gradient information even when gradient noise predominates, allowing for convergence in the stochastic and non-convex setting with normal batch sizes. We establish that GoLore converges at a rate of \(\mathcal{O}(1/\sqrt{T})\) under standard assumptions. 
 
In our empirical experiments, we implement GaLore during the primary phases of pre-training or fine-tuning LLMs due to its efficacy in capturing the gradient component using SVD-based projection. In contrast, we employ GoLore in the final phase, leveraging its ability to extract the gradient component from noise-dominant stochastic gradients using random projection. This approach enhances performance compared to employing GaLore throughout all stages.

While our analysis primarily focuses on GaLore, it also has significant connections to other memory-efficient algorithms. We demonstrate that a ReLoRA-like implementation is equivalent to GaLore, which is more computational efficient with little additional memory overhead.
% indicating that our analysis is potentially applicable to ReLoRA-like algorithms, including \textsc{Flora} \citep{hao2024flora} and LoQT \citep{loeschcke2024loqt}. 
Furthermore, our theoretical results can be easily adapted to sparse {subspace descent} algorithms with minimal effort.

\textbf{Contributions.} {In summary, our contributions include:}
% Our contributions can be summarized as follows: 
\begin{itemize}[leftmargin=6mm]
    \item {We find that GaLore is not theoretically guaranteed to converge to stationary solutions under Assumptions \ref{asp:lb}-\ref{asp:sg}.} 
    % We find that GaLore cannot converge to stationary solutions under regular assumptions. 
    % The key insight is that the SVD-derived subspace primarily captures the noise component rather than the true gradient in scenarios where gradient noise predominates.
    {The key insight is that GaLore's SVD projection is biased and greedy; it may completely lose the true gradient information when the gradient noise is anisotropic and  dominates the true gradient.}
    We validate the non-convergence of GaLore by providing an explicit counterexample. This addresses Question Q1.
    % We demonstrate through counterexamples that GaLore cannot converge under regular assumptions in stochastic optimization.
    \item Inspired by the aforementioned insight, we propose {different} additional assumptions under which GaLore can provably converge to stationary solutions. Under the noise-free assumption, we establish that GaLore converges at a rate of \(\mathcal{O}(1/T)\). Under the large-batch assumption {or isotropic noise assumptions}, we demonstrate that GaLore converges at a rate of \(\mathcal{O}(1/\sqrt{T})\). This addresses Question Q2.
    
    \item {When possibly anisotropic} gradient noise persists and the batch size maintains constant, we modify {GaLore's SVD projection} to a random projection, resulting in GoLore that provably converges to stationary solutions at a rate of $\gO(1/\sqrt{T})$. This addresses Question Q3. 
    % By alternately using GaLore and GoLore during the pretraining and fine-tuning of large language models (LLMs), we achieve enhanced empirical performance.

    % We establish that GaLore can converge in deterministic or large-batch stochastic scenarios, achieving convergence rates of  $\gO(1/T)$ and $\gO(1/\sqrt{T})$, respectively.
    % \item We introduce GoLore, a variant of GaLore that converges in standard small-batch stochastic settings with a convergence rate of $\gO(1/\sqrt{T})$.
    \item We present an equivalent yet more computationally efficient implementation of GaLore/GoLore, and extend our analysis to sparse {subspace descent} algorithms. We conduct experiments across various tasks to validate our theoretical findings. {Alternately using GaLore and GoLore in different phases achieves enhanced empirical performance in LLMs pre-training and fine-tuning.}
    
    % \item 
\end{itemize}

% \vspace{-1mm}
\subsection{Related work}
% \vspace{-2mm}
\textbf{Memory-efficient training. }In LLM training, the primary memory consumption arises not only from the model parameters but also from activation values and optimizer states. \citet{jiang2022back} and \citet{yu2024sheared} have proposed methods to compress activation values into sparse vectors to alleviate memory usage. Other approaches primarily focus on reducing optimizer states. A notable work, LoRA \citep{hu2021lora} reparameterizes the weight matrix $\mW\in\R^{m\times n}$ as $\mW=\mW_0+\mB\mA$, where $\mW_0\in\R^{m\times n}$ remains frozen as the pre-trained weights, and $\mB\in\R^{m\times r}$ and $\mA\in\R^{r\times n}$ are learnable low-rank adapters. Variants of LoRA, such as those proposed by \citet{liu2024dora} and \citet{hayou2024lora+}, aim to enhance training performance. However, constrained to low-rank updates, LoRA and its variants are primarily effective for fine-tuning tasks and struggle with pre-training tasks that require high-rank updates. To address this limitation, ReLoRA \citep{lialin2023relora} enables high-rank updates by accumulating multiple LoRA updates, while LISA \citep{pan2024lisa} learns full-parameter updates on dynamically selected trainable layers. GaLore \citep{zhao2024galore} and \textsc{Flora} \citep{hao2024flora} achieve high-rank updates by accumulating low-rank updates in periodically recomputed subspaces, and SLTrain \citep{han2024sltrain} employs additional sparse adapters for high-rank updates. SIFT \citep{song2023sparse} also utilizes sparse updates. Although these algorithms have demonstrated comparable empirical performance to full-parameter training methods, theoretical guarantees regarding their convergence have not been established. Recently, \citet{liang2024memory} proposes an online subspace decent algorithm with continuous-time convergence results; LDAdam \citep{robert2024ldadam} improves GaLore by incorporating error-feedback; Fira \citep{chen2024fira} and APOLLO \citep{zhu2024apollo} adapt learning rates using optimizer states in the subspace.
% A recent study by \citet{liang2024memory} provides a proof of continuous-time convergence for a class of online subspace descent algorithms, however, its analysis depends on the availability of true gradients rather than the stochastic gradients that are more practical in LLM training. To the best of our knowledge, this work offers the \textit{first} analysis of the discrete-time convergence rate for memory-efficient LLM training algorithms in stochastic settings.

\textbf{Convergence for lossy algorithms. }Many optimization algorithms utilize lossy compression on training dynamics, such as gradients, particularly in the realm of distributed optimization with communication compression. Researchers have established convergence properties for these algorithms based on either unbiased \citep{li2020acceleration,li2021canita,condat2024locodl,huang2023cedas,hedistributed,he2024unbiased,mishchenko2019distributed,gorbunov2021marina,alistarh2017qsgd,he2023lower} or contractive \citep{richtarik2021ef21,xie2020cser,fatkhullin2024momentum,he2023lower}  compressibility. \citet{kozak2019stochastic} provides a convergence analysis for subspace compression under Polyak-Lojasiewicz or convex conditions, where the subspace compression adheres contractive compressibility at each iteration. Despite these extensive findings, analyzing the convergence properties of {subspace descent} algorithms like GaLore remains challenging, as illustrated in Sec.~\ref{subsec:challenge}.
% as the compressions used can be neither unbiased nor contractive due to the reuse of projection matrices.

\subsection{Challenges in theoretical analysis}\label{subsec:challenge}
% Gradient projection onto a low-rank subspace poses two significant challenges for the convergence analysis of (momentum) stochastic gradient descent: 
% \begin{itemize}[leftmargin=1mm]
% \item 
\textbf{Neither unbiased nor contractive compression.} Gradient projection onto the subspace can be viewed as gradient compression. Traditional analyses of optimization algorithms with lossy compression typically rely on either unbiased 
% \citep{li2020acceleration,li2021canita,huang2023cedas,hedistributed,he2024unbiased,condat2024locodl} 
compressibility, \ie, the compressor $\gC$ satisfies
    \begin{align*}
        \E[\gC(\vx)]=\vx,\quad \E[\|\gC(\vx)-\vx\|_2^2]\le\omega\|\vx\|_2^2,\quad\forall\vx\in\R^d,
    \end{align*}
    for some $\omega\ge0$, or contractive 
    % \citep{richtarik2021ef21,xie2020cser,fatkhullin2024momentum,he2023lower} 
    compressibility, \ie, 
    \begin{align*}
        \E[\|\gC(\vx)-\vx\|_2^2]\le(1-\delta)\|\vx\|_2^2,\quad\forall\vx\in\R^d,
    \end{align*}
    for some $\delta\in(0,1]$. However, GaLore's subspace compression is neither unbiased nor contractive due to the reuse of projection matrices. For example, consider a pre-computed projection matrix \(\mP \in \mathbb{R}^{m \times r}\). There exists a full-parameter gradient \(\mG \in \mathbb{R}^{m \times n}\) such that \(\mG \neq 0\) and \(\gC(\mG) := \mP \mP^\top \mG = 0\), violating both compressibilities.
    % unbiased and contractive compressibility.

    % \item 
    \textbf{Periodically projected optimizer states.} 
    When GaLore changes the subspace, the retained momentum terms must be adjusted to track the gradients in the new subspace.
    % When GaLore changes subspaces, momentum terms must be adjusted to align with the new subspace's gradients.
    Since these momentum terms were initially aligned with the gradients in the original subspace, such adjustments inevitably introduce additional errors, especially when the two subspaces differ significantly. In the extreme case where the two subspaces are entirely orthogonal, the momentum from the previous subspace becomes largely irrelevant for optimization in the new one. 
% \end{itemize}
% \vspace{-2mm}
\section{Preliminaries and assumptions}\label{sec:preliminary}
% \vspace{-3mm}

\textbf{Full-parameter training. }{Neural network training} can be formulated as the following optimization problem:
\begin{align*}
    \min_{\vx}f(\vx):=\E_{\xi\sim\gD}F(\vx;\xi).
\end{align*}
Here, $\vx=(\mathrm{vec}(\mX_1)^\top,\cdots,\mathrm{vec}(\mX_{N_L})^\top)^\top$ collects all trainable parameters, $N_L$ is the number of layers,  $\mX_{\ell}\in\R^{m_{\ell}\times n_{\ell}}$ denotes the weight matrix in the $\ell$-th layer. $F(\vx;\xi)$ computes the loss with respective to data point $\xi$, $\gD$ denotes the training data distribution. In full-parameter training, we directly apply the optimizer to the full-parameter $\vx$:
\begin{align*}
&\mG_{\ell}^{(t)}=\nabla_{\ell}F(\vx^{(t)};\xi^{(t)}),\\
&\mX_{\ell}^{(t+1)}=\mX_{\ell}^{(t)}+\rho_{\ell}^{(t)}(\mG_{\ell}^{(t)}),\quad\ell=1,\cdots,N_L;
\end{align*}
where $\nabla_{\ell}$ computes the gradient with respective to the $\ell$-th weight matrix $\mX_{\ell}$, superscript $(t)$ denotes the variable in the $t$-th iteration, and $\rho_{\ell}^{(t)}$ is an entry-wise stateful gradient operator, such as Adam or Momentum SGD (MSGD). Specifically, using MSGD leads to the following $\rho_{\ell}^{(t)}(\cdot)$:
\begin{align*}
    & \mM_{\ell}^{(t)}=(1-\beta_1)\mM_{\ell}^{(t-1)}+\beta_1\mG_{\ell}^{(t)};\\
    & \rho_{\ell}^{(t)}(\mG_{\ell}^{(t)})=-\eta\mM_{\ell}^{(t)};
\end{align*}
where $\eta$ is the learning rate, $\beta_1 \in (0,1]$ is the momentum coefficient, and $\mM_{\ell}^{(t)}$ is the {momentum state}. In full-parameter LLMs pre-training/fine-tuning, the memory requirements for storing momentum in MSGD and the additional variance state in Adam are highly demanding. According to \citet{zhao2024galore}, pre-training LLaMA 7B with a single batch size requires  58 GB of memory, with 42 GB allocated to Adam optimizer states and weight gradients.

\textbf{GaLore algorithm. }
To address the memory challenge, \cite{zhao2024galore} proposes a novel Gradient Low-Rank Projection (GaLore) approach that allows {much more memory-efficient full-parameter learning}. The key idea is to project each stochastic gradient \(\mG_\ell \in \mathbb{R}^{m_\ell \times n_\ell}\) onto a low-rank subspace, yielding a low-dimensional gradient approximation. Specifically, GaLore performs SVD on \(\mG_{\ell}^{(t)} = \mU \mSigma \mV^\top\) and obtains rank-\(r_{\ell}\) projection matrices \(\mP_{\ell}^{(t)} = \mU[:, :r_{\ell}] \in \mathbb{R}^{m_\ell \times r_\ell}\) and \(\mQ_{\ell}^{(t)} = \mV[:, :r_{\ell}]\in \mathbb{R}^{n_\ell \times r_\ell}\), where \([:, :r]\) denotes the selection of the matrix's first \(r\) columns. When \(m_\ell \le n_\ell\), GaLore projects \(\mG_\ell\) onto \(\mP_\ell\), yielding a low-rank gradient representation \((\mP_{\ell}^{(t)})^\top \mG_{\ell}^{(t)} \in \mathbb{R}^{r_\ell \times n_\ell}\). Conversely, when \(m_\ell > n_\ell\), GaLore projects \(\mG_\ell\) onto \(\mQ_\ell\), resulting in \(\mG_{\ell}^{(t)}\mQ_{\ell}^{(t)}  \in \mathbb{R}^{m_\ell \times r_\ell}\). In either scenarios, the memory cost of optimizer states associated with these low-rank representations can be significantly reduced, leading to memory-efficient LLMs pre-training or fine-tuning: 
\begin{align*}
\mX_{\ell}^{(t+1)}=\begin{cases}\mX_{\ell}^{(t)}+\mP_{\ell}^{(t)}\rho_{\ell}^{(t)}((\mP_{\ell}^{(t)})^\top \mG_{\ell}^{(t)}),& \hspace{-3.2pt}\mbox{if } m_{\ell}\le n_{\ell};\\ \mX_{\ell}^{(t)}+\rho_{\ell}^{(t)}(\mG_{\ell}^{(t)}\mQ_{\ell}^{(t)})(\mQ_{\ell}^{(t)})^\top,&\hspace{-3.2pt} \mbox{if } m_{\ell}>n_{\ell}.\end{cases}
\end{align*}
Typically, GaLore selects \(\rho_\ell(\cdot)\) as the Adam gradient operator, as illustrated in Alg.~\ref{alg:main}. However, GaLore can also choose \(\rho_\ell(\cdot)\) to be gradient operators in either vanilla SGD or MSGD. Since SVD is computationally expensive, GaLore updates $\mP_\ell^{(t)}$ or $\mQ_\ell^{(t)}$ periodically. In other words, GaLore computes $\mP_\ell^{(t)}$ or $\mQ_\ell^{(t)}$ when iteration step $t\equiv0$ (mod $\tau$) where $\tau > 0$ is the period, otherwise $\mP_{\ell}^{(t)}=\mP_{\ell}^{(t-1)}$ and $\mQ_{\ell}^{(t)}=\mQ_{\ell}^{(t-1)}$ remain unchanged. Both the gradient subspace projection and periodic switches between different low-rank subspaces pose significant challenges to the convergence analysis for GaLore-like algorithms. 

\textbf{Stiefel manifold. }{Stiefel manifold is the set of low-rank projection matrices to use in subspace optimization.} An $m\times r$ Stiefel manifold $(r\le m)$ is defined as
    \begin{align*}
        \mathrm{St}_{m,r}=\{\mP\in\R^{m\times r}\mid\mP^\top\mP=I_r\}.
    \end{align*}
 In GaLore, we have $\mP_{\ell}^{(t)}\in\mathrm{St}_{m_{\ell},r_{\ell}}$ and $\mQ_{\ell}^{(t)}\in\mathrm{St}_{n_{\ell},r_{\ell}}$.

\textbf{Basic assumptions.} We introduce the basic assumptions used throughout our theoretical analysis. Each of these assumptions is standard for stochastic optimization.
\begin{assumption}[Lower boundedness]\label{asp:lb}
    The objective function $f:\R^d\rightarrow\R$ satisfies $\inf_{\vx\in\R^d}f(\vx)>-\infty$,
    % \begin{align*}
    %     \inf_{\vx\in\R^d}f(\vx)>-\infty,
    % \end{align*}
    where $d=\sum_{\ell=1}^{N_{\ell}}m_{\ell}n_{\ell}$ represents the total number of parameters.
\end{assumption}
\vspace{0.1mm}
\begin{assumption}[$L$-smoothness]\label{asp:ls}
    Function $f:\R^d\rightarrow\R$ satisfies $\|\nabla f(\vx)-\nabla f(\vy)\|_2\le L\|\vx-\vy\|_2$,
    % \begin{align*}
    %     \|\nabla f(\vx)-\nabla f(\vy)\|_2\le L\|\vx-\vy\|_2,
    % \end{align*}
     $\forall \vx,\vy\in\R^{d}$.
\end{assumption}
\vspace{0.1mm}
\begin{assumption}[Stochastic gradient]\label{asp:sg}
    % The gradient oracle $(F,\gD)$ satisfies
    It holds that
    \begin{align*}
        &\E_{\xi\sim\gD}[\nabla_{\ell}F(\vx;\xi)]=\nabla_{\ell}f(\vx),\quad\mbox{and}\\
        &\E_{\xi\sim\gD}[\|\nabla_{\ell}F(\vx;\xi)-\nabla_{\ell}f(\vx)\|_F^2]\le\sigma_{\ell}^2,\quad\forall\vx\in\R^d,
    \end{align*}
    where $(F,\gD)$ represents the gradient oracle, $\sigma_{\ell}>0$ is a scalar. Summing all weight matrices we obtain
    \begin{align*}
        &\E_{\xi\sim\gD}[\nabla F(\vx;\xi)]=\nabla f(\vx),\quad\mbox{and}\\
        &\E_{\xi\sim\gD}[\|\nabla F(\vx;\xi)-\nabla f(\vx)\|_2^2]\le\sigma^2,\quad\forall\vx\in\R^d,
    \end{align*}
    where $\sigma=\sqrt{\sum_{\ell=1}^{N_{\ell}}\sigma_{\ell}^2}$.
\end{assumption}

\begin{figure}[t]
\centering
\includegraphics[width=.35\textwidth]{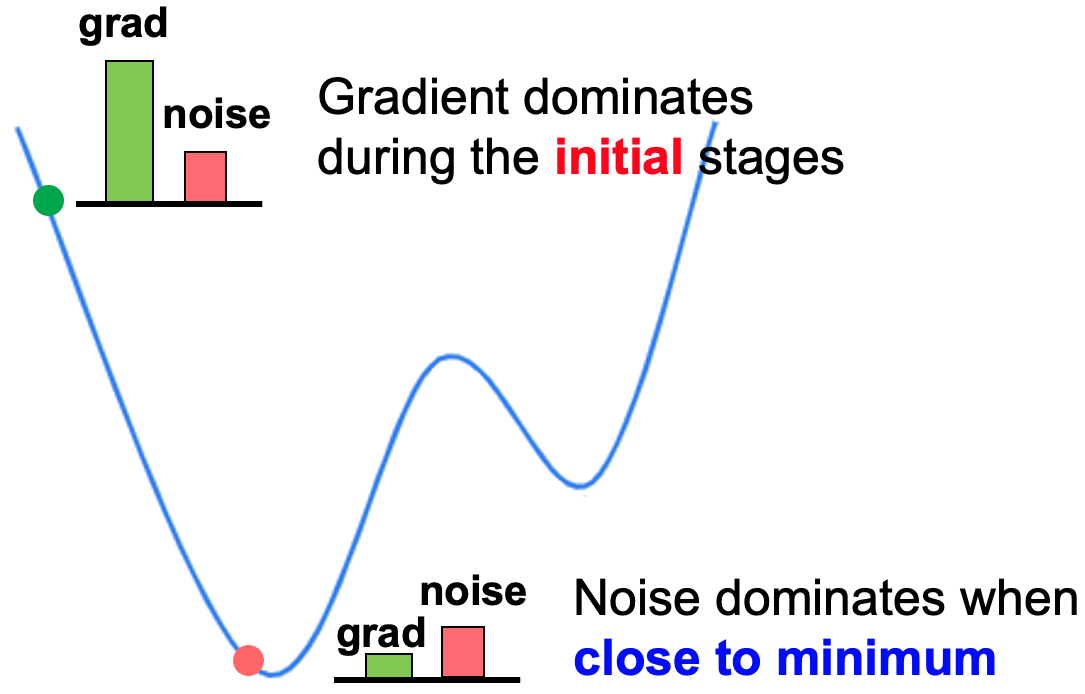}
\vspace{-3mm}
\caption{\small Gradient noise dominates when close to   local minimum.}
\label{fig:noise-dominates}
\vspace{-3mm}
\end{figure}

% \vspace{-2mm}
{\section{Non-convergence of GaLore}}
\label{sec:non-convergence}
In this section, we demonstrate why GaLore cannot guarantee exact convergence under Assumptions \ref{asp:lb}-\ref{asp:sg}. We first illustrate the insight, then present the formal conclusion. 

\begin{figure*}[ht]
\centering
\includegraphics[width=.83\textwidth]{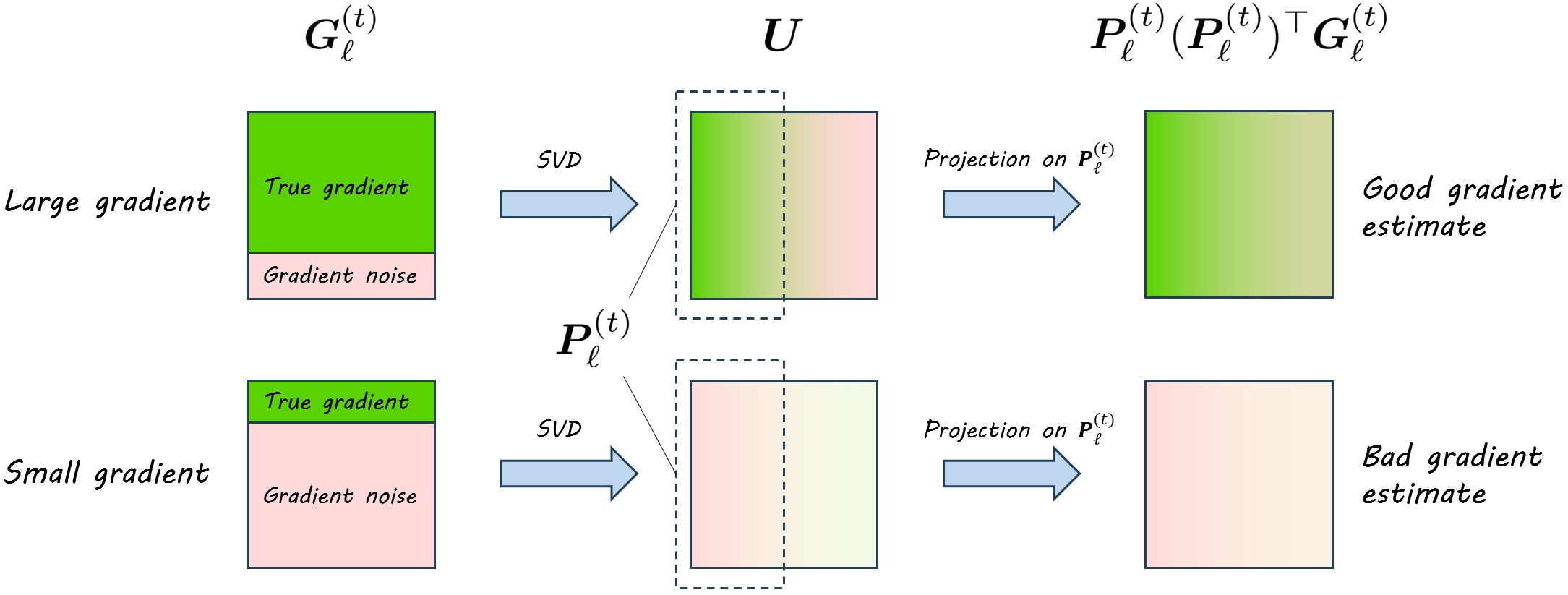}
\vspace{-3mm}
\caption{\small An illustration of the insight on why GaLore fails to converge in small-gradient scenarios. We use color green for true gradient and red for gradient noise.}
\label{fig:non-convergence}
\vspace{-3mm}
\end{figure*}

\textbf{Insight behind non-convergence.} As reviewed in Sec.~\ref{sec:preliminary}, GaLore performs SVD on stochastic gradient \(\mG = \mU \mSigma \mV^\top\) and obtains rank-\(r\) projection matrices \(\mP = \mU[:, :r] \in \mathbb{R}^{m \times r}\). GaLore projects \(\mG\) onto \(\mP\), yielding a low-rank gradient representation \(\mP^\top \mG \in \mathbb{R}^{r \times n}\). In other words, GaLore projects the stochastic gradient matrix onto a low-rank subspace spanned by the top \(r\) singular vectors, capturing the dominant components of the stochastic gradient matrix. However, the stochastic gradient comprises two components: the true gradient and gradient noise, as shown in Fig.~\ref{fig:non-convergence}. When the true gradient significantly exceeds the gradient noise, typically at the start of training (see Fig.~\ref{fig:noise-dominates}), the low-rank subspace obtained via SVD effectively preserves the true gradient information. As training progresses and the true gradient diminishes to zero, especially near a local minimum (see Fig.~\ref{fig:noise-dominates}), the subspace may become increasingly influenced by gradient noise. {When gradient noise is not isotropic, the noise-dominated subspace captured by SVD {\color{black}may} become orthogonal to the true gradient subspace due to its greedy nature, leading to non-convergence.} 
% In the extreme case, this {\color{blue}potentially anisotropic} noise-dominated subspace can become orthogonal to the true gradient subspace, leading to non-convergence.

\textbf{Counter-Example.} We 
consider the following  quadratic problem with gradient noise:
\begin{align}
    \begin{aligned}
    &f(\mX)=\frac{1}{2}\|\mA\mX\|_F^2+\langle \mB,\mX\rangle_F,\\
    &\nabla F(\mX;\xi)=\nabla f(\mX)+\xi\sigma \mC,
    \end{aligned}\label{eq:counterexample}
\end{align}
where $\mA=\begin{pmatrix}\mI_{n-r}&0\end{pmatrix}\in\R^{(n-r)\times n}$, $\mB=\begin{pmatrix}\mD & 0\\0&0\end{pmatrix}\in\R^{n\times n}$ with $\mD\in\R^{(n-r)\times(n-r)}$ generated randomly, $\mC=\begin{pmatrix}0 & 0\\0 & \mI_r\end{pmatrix}\in\R^{n\times n}$, $\xi$ is a random variable uniformly sampled from $\{1,-1\}$ per iteration, and $\sigma$ is used to control the gradient noise.  It is straightforward to verify that problem (\ref{eq:counterexample}) satisfies Assumptions \ref{asp:lb}-\ref{asp:sg}. Moreover, as \(\mX\) approaches the global minimum of \(f(\mX)\), the true gradient \(\nabla f(\mX) \to 0\), while the gradient noise persists with a variance on the order of \(\sigma^2\). Fig.~\ref{fig:qp} illustrates the performance of GaLore when solving problem (\ref{eq:counterexample}). It is observed that GaLore fails to converge to the optimal solution, regardless of whether the AdamW or MSGD optimizer is used.

% As illustrated in Fig.~\ref{fig:qp}, GaLore cannot converge to the exact solution while GoLore, large-batch GaLore and full-parameter training can converge equally well.

 \textbf{Non-convergence of GaLore. }{The following theorem depicts GaLore's non-convergence based on the above insight.}
 % Based on the aforementioned insight, we establish the following theorem regarding the non-convergence of GaLore.
 \begin{theorem}[Non-convergence of GaLore]\label{thm:non-converge}
     There exists an objective function $f:\R^d\rightarrow\R$ satisfying Assumptions \ref{asp:lb}, \ref{asp:ls}, a stochastic gradient oracle $(F,\gD)$ satisfying Assumption \ref{asp:sg}, an initial point $\vx^{(0)}\in\R^d$, a constant $\epsilon_0>0$ such that for any rank $r_{\ell}<\min\{m_{\ell},n_{\ell}\}$, subspace changing frequency $\tau$, any  optimizer $\rho$ {\color{black} that inputs a subspace gradient of shape $r_{\ell}\times n_{\ell}$ and outputs a subspace update direction of the same shape} and any $t>0$, it holds that
     \begin{align*}
         \|\nabla f(\vx^{(t)})\|_2^2\ge\epsilon_0.
     \end{align*}
 \end{theorem}
% \vspace{-2mm}
{\section{Conditions for GaLore to converge}}\label{sec:GaLore-convergence}
% \subsection{Convergence of GaLore in the deterministic setting}
\textbf{GaLore provably converges in the noise-free setting.} According to Sec.~\ref{sec:non-convergence}, GaLore fails to converge when gradient noise dominates the true gradient in magnitudes. This motivates us to examine the deterministic scenario where true gradient \( \nabla f(\vx) \) can be accessed without any gradient noise. GaLore with noise-free gradients is presented in Alg.~\ref{alg:main} (or Alg.~\ref{alg:det} in Appendix \ref{app:det}), where the true gradient oracle is highlighted with label \colorbox{springgreen}{(det.)}. {Without gradient noise},
the projection matrix $\mP_{\ell}^{(t)}$ obtained by SVD can effectively capture  the true gradient even when {approaching} a local minimum. For simplicity, we analyze GaLore with MSGD and the following momentum updating mechanism:
\begin{align}\label{eq:mmt-proj}
\mM_{\ell}^{(\hspace{-0.2mm}t\hspace{-0.2mm})}\hspace{-1mm}=\hspace{-1mm}\begin{cases}
    \hspace{-1mm}(\hspace{-0.5mm}1\hspace{-1mm}-\hspace{-1mm}\beta_1\hspace{-0.5mm})\mP_{\ell}^{(\hspace{-0.2mm}t\hspace{-0.2mm})\hspace{-0.2mm}\top}\hspace{-1mm}\mP_{\ell}^{(\hspace{-0.2mm}t\hspace{-0.5mm}-\hspace{-0.5mm}1\hspace{-0.2mm})}\hspace{-1mm}\mM_{\ell}^{(\hspace{-0.2mm}t\hspace{-0.5mm}-\hspace{-0.5mm}1\hspace{-0.2mm})}\hspace{-1mm}+\hspace{-1mm}\beta_1\mP_{\ell}^{(\hspace{-0.2mm}t\hspace{-0.2mm})\hspace{-0.2mm}\top}\hspace{-1mm}\mG_{\ell}^{(\hspace{-0.2mm}t\hspace{-0.2mm})},&\hspace{-3mm}m_{\ell}\hspace{-1mm}\le\hspace{-1mm} n_{\ell};\\
    \hspace{-1mm}(\hspace{-0.2mm}1\hspace{-1mm}-\hspace{-1mm}\beta_1\hspace{-0.5mm})\mM_{\ell}^{(\hspace{-0.2mm}t\hspace{-0.5mm}-\hspace{-0.5mm}1\hspace{-0.2mm})}\hspace{-0.5mm}\mQ_{\ell}^{(\hspace{-0.2mm}t\hspace{-0.5mm}-\hspace{-0.5mm}1\hspace{-0.2mm})\hspace{-0.2mm}\top}\hspace{-1mm}\mQ_{\ell}^{(\hspace{-0.2mm}t\hspace{-0.2mm})}\hspace{-1mm}+\hspace{-1mm}\beta_1\mM_{\ell}^{(\hspace{-0.2mm}t\hspace{-0.2mm})}\hspace{-0.5mm}\mQ_{\ell}^{(\hspace{-0.2mm}t\hspace{-0.2mm})},&\hspace{-3mm}m_{\ell}\hspace{-1mm}>\hspace{-1mm}n_{\ell};
\end{cases}
\end{align}
\setlength{\textfloatsep}{0pt}
\begin{algorithm}[t!]
\vspace{3pt}
\caption{{\colorbox{periwinkle}{GaLore}} / {\colorbox{highlight_color}{GoLore}} algorithm framework using {\colorbox{yellow}{stochastic} / {\colorbox{springgreen}{deterministic}} / {\colorbox{babyblue}{large-batch}}} gradients {\colorbox{orchid}{with}} / {\colorbox{limegreen}{without}} momentum projection}
\label{alg:main}
\renewcommand{\algorithmicrequire}{\textbf{Input:}}
\renewcommand{\algorithmicensure}{\textbf{Output:}}
\begin{algorithmic}
    \REQUIRE Initial point $\vx^{(0)}$, data distribution $\gD$, learning rate $\eta$, subspace changing frequency $\tau$, rank $\{r_{\ell}\}_{\ell=1}^{N_L}$, optimizer hyperparameters $\beta_1$, $\beta_2$, $\epsilon$, large batch size $\gB$.
    \ENSURE $\{\vx^{(t)}\}_{t=0}^T$.
    \STATE Initialize optimizer state $\{\mM_{\ell}^{(-1)}\}_{\ell=1}^{N_L}$ and $\{\mV_{\ell}^{(-1)}\}_{\ell=1}^{N_L}$ to zero;
    \FOR{$t=0,1,\cdots,T-1$}
    \FOR{$\ell=1,2,\cdots,N_L$}
    \IF{$t\equiv0$ (mod $\tau$)}
    \STATE{\colorbox{yellow}{$\mG_{\ell}^{(t)}\gets\nabla_{\ell}F(\vx^{(t)};\xi^{(t)})$;\quad (sto.)}}
    \STATE{\colorbox{springgreen}{$\mG_{\ell}^{(t)}\gets\nabla_{\ell}f(\vx^{(t)})$;\quad (det.)}}
    \STATE{\colorbox{babyblue}{$\mG_{\ell}^{(t)}\gets\frac{1}{\gB}\sum_{b=1}^{\gB}\nabla_{\ell}F(\vx^{(t)};\xi^{(t,b)})$;\quad (l.b.)}}
    \STATE{\colorbox{periwinkle}{$\mU,\mSigma,\mV\gets\mathrm{SVD}(\mG_{\ell}^{(t)})$, $\mP_{\ell}^{(t)}\gets\mU[:,:r_{\ell}]$,}}
    \vspace{-2pt}\STATE{\colorbox{periwinkle}{$\mQ_{\ell}^{(t)}\gets\mV[:,:r_{\ell}]$;\quad (GaLore)}}
    \STATE{\colorbox{highlight_color}{Sample $\mP_{\ell}^{(t)}\sim\gU(\mathrm{St}_{m_{\ell},r_{\ell}})$,}}   \vspace{-2pt}\STATE{\colorbox{highlight_color}{$\mQ_{\ell}^{(t)}\sim\gU(\mathrm{St}_{n_{\ell},r_{\ell}})$;\quad (GoLore)}}
    \ELSE
    \STATE{\colorbox{yellow}{$\mG_{\ell}^{(t)}\gets\nabla_{\ell}F(\vx^{(t)};\xi^{(t)})$;\quad (sto.)}}
    \STATE{\colorbox{springgreen}{$\mG_{\ell}^{(t)}\gets\nabla_{\ell}f(\vx^{(t)})$;\quad (det.)}}
    \STATE{\colorbox{babyblue}{$\mG_{\ell}^{(t)}\gets\nabla_{\ell}F(\vx^{(t)};\xi^{(t)})$;\quad (l.b.)}}
    \STATE{$\mP_{\ell}^{(t)}\gets\mP_{\ell}^{(t-1)}$, $\mQ_{\ell}^{(t)}\gets\mQ_{\ell}^{(t-1)}$;}
    \ENDIF
    \STATE{$\mR_{\ell}^{(t)}\gets\begin{cases}(\mP_{\ell}^{(t)})^\top\mG_{\ell}^{(t)},&\mbox{ if }m_{\ell}\le n_{\ell};\\\mG_{\ell}^{(t)}\mQ_{\ell}^{(t)},&\mbox{ if }m_{\ell}> n_{\ell};\end{cases}$}
    % \STATE{\colorbox{orchid}{$M_{\ell}^{(t)}\gets\begin{cases}(1-\beta_1)(\mP_{\ell}^{(t)})^\top\mP_{\ell}^{(t-1)}\mM_{\ell}^{(t-1)}+\beta_1\mR_{\ell}^{(t)},& \mbox{ if }m_{\ell}\le n_{\ell};\\(1-\beta_1)\mM_{\ell}^{(t-1)}(\mQ_{\ell}^{(t-1)})^\top\mQ_{\ell}^{(t)}+\beta_1\mR_{\ell}^{(t)},&\mbox{ if }m_{\ell}>n_{\ell};\end{cases}$\quad (with MP)}}
    \STATE{\colorbox{orchid}{Compute $M_{\ell}^{(t)}$ via \eqref{eq:mmt-proj}\quad (w/ MP)}}
    \STATE{\colorbox{limegreen}{$\mM_{\ell}^{(t)}\gets(1-\beta_1)\mM_{\ell}^{(t-1)}+\beta_1\mR_{\ell}^{(t)}$;\quad (w/o MP)}}
    \STATE{$\mV_{\ell}^{(t)}\gets(1-\beta_2)\mV_{\ell}^{(t-1)}+\beta_2\mR_{\ell}^{(t)}\odot\mR_{\ell}^{(t)}$;}
    \IF{using Adam}
    \STATE $\mM_{\ell}^{(t)}\gets\mM_{\ell}^{(t)}/(1-(1-\beta_1)^t)$;
    \STATE $\mV_{\ell}^{(t)}\gets\mV_{\ell}^{(t)}/(1-(1-\beta_2)^t)$;
    \STATE $\mN_{\ell}^{(t)}\gets\mM_{\ell}^{(t)}/(\sqrt{\mV_{\ell}^{(t)}}+\epsilon)$;
    \ELSIF{using MSGD}
    \STATE{$\mN_{\ell}^{(t)}\gets\mM_{\ell}^{(t)}$;}
    \ENDIF
    \STATE{$\mX_{\ell}^{(t+1)}\gets\begin{cases}\mX_{\ell}^{(t)}-\eta\mP_{\ell}^{(t)}\mN_{\ell}^{(t)},&\hspace{-1pt}\mbox{if }m_{\ell}\le n_{\ell};\\\mX_{\ell}^{(t)}-\eta\mN_{\ell}^{(t)}(\mQ_{\ell}^{(t)})^\top,&\hspace{-1pt}\mbox{if }m_{\ell}>n_{\ell};\end{cases}$}
    \ENDFOR
    \ENDFOR
    \STATE\textbf{return} $\{\vx^{(t)}\}_{t=0}^T$.
    \vspace{5pt}
\end{algorithmic}
% \vspace*{-5pt}
\end{algorithm}
If the subspace does not change at iteration $t$, it holds that $(\mP_{\ell}^{(t)})^\top\mP_{\ell}^{(t-1)}=(\mQ_{\ell}^{(t-1)})^\top\mQ_{\ell}^{(t)}=\mI_{r_{\ell}}$ and \eqref{eq:mmt-proj} reduces to regular momentum updates. If the subspace changes at iteration $t$, we inherit $\mM_{\ell}^{(t-1)}$ by first projecting back to the {original} space and then to the new subspace. We use \textit{momentum projection (MP)} to refer to mechanism \eqref{eq:mmt-proj}. When MP is used in the algorithm, we label the corresponding with {\colorbox{orchid}{(w/ MP)}} in Alg.~\ref{alg:main} otherwise {\colorbox{limegreen}{(w/o MP)}}. The following theorem provides convergence guarantees for GaLore using deterministic gradients and MSGD with MP.

\begin{theorem}[Convergence rate of deterministic GaLore]\label{thm:det}
Under Assumptions \ref{asp:lb}-\ref{asp:ls}, if the number of iterations $T\ge64/(3\underline{\delta})$ and we choose
{\color{black} hyperparameters $\beta_1$, $\tau$, $\eta$ according to Appendix \ref{app:det},}    
% \begin{align*}
%         &\beta_1=1,\quad \tau=\left\lceil\frac{64}{3\underline{\delta}\beta_1}\right\rceil,\quad\mbox{and}\\
%         &\eta=\left(4L+\sqrt{\frac{80L^2}{3\underline{\delta}\beta_1^2}}+\sqrt{\frac{80\tau^2L^2}{3\underline{\delta}}}+\sqrt{\frac{16\tau L^2}{3\beta_1}}\right)^{-1},
%     \end{align*}
    GaLore using deterministic gradient and momentum gradient descent with MP converges as
\begin{align*}
    \frac{1}{T}\sum_{t=0}^{T-1}\|\nabla f(\vx^{(t)})\|_2^2=\gO\left(\frac{L\Delta}{\underline{\delta}^{5/2}T}\right),
\end{align*}
where $\Delta=f(\vx^{(0)})-\inf_{\vx}f(\vx)$ and $\underline{\delta}:=\min_{\ell}\frac{r_{\ell}}{\min\{m_{\ell},n_{\ell}\}}$.    
\end{theorem}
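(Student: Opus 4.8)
The plan is to run a descent-lemma argument on the full objective $f$, carefully accounting for the fact that the effective update direction $\mP_\ell^{(t)}\mN_\ell^{(t)}$ (when $m_\ell\le n_\ell$) is the projected momentum, which lags behind the current true gradient both because of momentum accumulation and because the projection matrix is only refreshed every $\tau$ steps. First I would apply $L$-smoothness to get, for each layer and after summing,
\begin{align*}
    f(\vx^{(t+1)})\le f(\vx^{(t)})-\eta\sum_\ell\langle\nabla_\ell f(\vx^{(t)}),\mP_\ell^{(t)}\mM_\ell^{(t)}\rangle_F+\frac{L\eta^2}{2}\sum_\ell\|\mP_\ell^{(t)}\mM_\ell^{(t)}\|_F^2,
\end{align*}
using $\beta_1=1$ so that the Adam-specific rescaling is inactive and $\mN_\ell^{(t)}=\mM_\ell^{(t)}$. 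The central decomposition is to write $\mP_\ell^{(t)}\mM_\ell^{(t)}$ as $\nabla_\ell f(\vx^{(t)})$ minus an error term, and to control that error term. With $\beta_1=1$ the momentum recursion with MP actually collapses: $\mM_\ell^{(t)}=(\mP_\ell^{(t)})^\top\mG_\ell^{(t)}$ if the subspace just changed, and more generally $\mP_\ell^{(t)}\mM_\ell^{(t)}=\mP_\ell^{(t)}(\mP_\ell^{(t)})^\top\mG_\ell^{(t)}$, i.e. the orthogonal projection of the \emph{current deterministic} gradient onto the current subspace. So the only error is the projection error $\mG_\ell^{(t)}-\mP_\ell^{(t)}(\mP_\ell^{(t)})^\top\mG_\ell^{(t)}$, plus the staleness of $\mG_\ell^{(t)}$ relative to $\nabla_\ell f(\vx^{(t)})$ when $t\not\equiv 0\pmod\tau$ (recall $\mG_\ell^{(t)}=\nabla_\ell f(\vx^{(t)})$ in the deterministic algorithm even off the refresh steps, so actually $\mG_\ell^{(t)}=\nabla_\ell f(\vx^{(t)})$ always — the only staleness is that $\mP_\ell^{(t)}=\mP_\ell^{(t-1)}=\cdots$ was computed from an \emph{older} gradient). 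Let me re-examine: with $\beta_1 = 1$ the momentum term vanishes, so the subtlety is purely that the projector $\mP_\ell^{(t)}$ was fitted to $\nabla_\ell f(\vx^{(t_0)})$ for the last refresh time $t_0\le t$, not to $\nabla_\ell f(\vx^{(t)})$.

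Next I would exploit the key SVD property: when $\mP_\ell^{(t_0)}$ is taken as the top-$r_\ell$ left singular vectors of $\nabla_\ell f(\vx^{(t_0)})$, we have $\|(\mI-\mP_\ell^{(t_0)}(\mP_\ell^{(t_0)})^\top)\nabla_\ell f(\vx^{(t_0)})\|_F^2\le(1-\delta_\ell)\|\nabla_\ell f(\vx^{(t_0)})\|_F^2$ where $\delta_\ell=r_\ell/\min\{m_\ell,n_\ell\}$ — this is the "contraction at the refresh point" and is exactly why $\underline\delta$ appears. Then I would bound the drift $\|\nabla_\ell f(\vx^{(t)})-\nabla_\ell f(\vx^{(t_0)})\|$ over a window of length $\le\tau$ by $L$-smoothness times the accumulated step length, which is itself $O(\eta\tau\cdot\max_{s}\|\mP_\ell^{(s)}\mM_\ell^{(s)}\|)$, i.e. $O(\eta\tau\|\nabla f\|)$ up to the projection loss. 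Feeding this back, summing the descent inequality over $t=0,\dots,T-1$, telescoping $f(\vx^{(0)})-\inf f=\Delta$, and choosing $\tau=\Theta(1/\underline\delta)$ and $\eta$ as in the statement so that all the $O(\eta^2\tau^2 L^2/\underline\delta)$-type error terms are dominated by half the descent term, yields $\frac1T\sum_t\|\nabla f(\vx^{(t)})\|^2=O(L\Delta/(\underline\delta^{5/2}T))$; the powers of $\underline\delta$ accumulate from the contraction factor, from $\tau\sim1/\underline\delta$ appearing squared inside $\eta$, and from the window-averaging argument.

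The main obstacle I anticipate is handling the window in which the projector is stale: one must relate $\langle\nabla_\ell f(\vx^{(t)}),\mP_\ell^{(t)}(\mP_\ell^{(t)})^\top\nabla_\ell f(\vx^{(t)})\rangle_F$ — which could in principle be small if $\nabla_\ell f(\vx^{(t)})$ has rotated away from the subspace $\mP_\ell^{(t_0)}$ that was good for $\nabla_\ell f(\vx^{(t_0)})$ — back to a genuine decrease. The trick is an amortized/averaged bound over each length-$\tau$ block: at the start of the block the projected gradient captures a $\delta_\ell$-fraction of $\|\nabla_\ell f(\vx^{(t_0)})\|^2$, and as long as $\eta\tau L$ is small enough the gradient cannot drift far within the block, so on average over the block the projected component remains $\gtrsim\underline\delta\|\nabla_\ell f\|^2$; making this averaging rigorous (summing per-block, and bounding the within-block gradient variation by a discrete Grönwall-type estimate) is the technical heart. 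A secondary nuisance is bookkeeping the two cases $m_\ell\le n_\ell$ versus $m_\ell>n_\ell$ symmetrically and the cross-layer coupling, but since Assumption \ref{asp:sg} and smoothness are stated for the concatenated $\vx$, summing the per-layer inequalities is routine once the per-layer estimate is in hand.
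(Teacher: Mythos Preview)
Your proposal is correct and follows essentially the same route as the paper: a descent lemma in the form $f(\vx^{(t+1)})\le f(\vx^{(t)})-\frac{\eta}{2}\|\nabla f(\vx^{(t)})\|^2+\frac{\eta}{2}\|\tilde{\vm}^{(t)}-\nabla f(\vx^{(t)})\|^2-(\tfrac{1}{2\eta}-\tfrac{L}{2})\|\vx^{(t+1)}-\vx^{(t)}\|^2$, combined with the SVD contraction $\|(\mI-\mP\mP^\top)\nabla_\ell f\|_F^2\le(1-\delta_\ell)\|\nabla_\ell f\|_F^2$ at refresh points, plus a drift bound over each length-$\tau$ block via $L$-smoothness and a block-averaging argument (the paper's Lemma~\ref{lm:connection}).

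The one noteworthy difference is scope: you specialize to $\beta_1=1$ from the outset, which indeed collapses the momentum recursion to $\mP_\ell^{(t)}\mM_\ell^{(t)}=\mP_\ell^{(t)}(\mP_\ell^{(t)})^\top\nabla_\ell f(\vx^{(t)})$ and removes the momentum-lag term entirely, leaving only projector staleness. The paper instead proves an intermediate result (its Theorem~\ref{thmres:det}) valid for any $\beta_1\in(0,1]$ by tracking a contraction inequality for $\|\tilde{\mM}_\ell^{(t)}-\nabla_\ell f(\vx^{(t)})\|_F^2$ across the three regimes $t=0$, $t=k\tau$, $t=k\tau+r$ (Lemma~\ref{lm:det-mc}), then summing (Lemma~\ref{lm:det-me}) before plugging in $\beta_1=1$ at the end. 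Your shortcut is legitimate for the stated theorem and makes the argument cleaner; the paper's version buys reusability, since the identical lemma skeleton is recycled for the stochastic and large-batch cases where $\beta_1<1$ is essential.
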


% \begin{theorem}[Convergence of deterministic GaLore]\label{thm:det} Under Assumptions \ref{asp:lb}-\ref{asp:ls}, if hyperparameters
% \begin{align*}
%     0<\beta_1\le1,\quad\tau\ge\frac{64}{3\beta_1\underline{\delta}},\quad0<\eta\le\min\left\{\frac{1}{4L},\sqrt{\frac{3\underline{\delta}\beta_1^2}{80L^2}},\sqrt{\frac{3\underline{\delta}}{80\tau^2L^2}},\sqrt{\frac{3\beta_1}{16\tau L^2}}\right\},
% \end{align*}
% GaLore using deterministic gradients and MSGD with MP converges as
% \begin{align*}
%     \frac{1}{K\tau}\sum_{t=0}^{K\tau-1}\E[\|\nabla f(\vx^{(t)})\|_2^2]\le\frac{16\Delta}{\underline{\delta}\eta K\tau}
% \end{align*}
% for any $K\ge1$, where $\Delta=f(\vx^{(0)})-\inf_{\vx}f(\vx)$ and $\underline{\delta}:=\min_{\ell}\frac{r_{\ell}}{\min\{m_{\ell},n_{\ell}\}}$.    
% \end{theorem}
% \begin{corollary}[Convergence complexity of deterministic GaLore]\label{cor:det} Under Assumptions \ref{asp:lb}-\ref{asp:ls}, if $T\ge64/(3\underline{\delta})$ and we choose $\beta_1,\tau,\eta$ properly as in Appendix \ref{app:det}, GaLore using deterministic gradients and MSGD with MP converges as
% \begin{align*}
%     \frac{1}{T}\sum_{t=0}^{T-1}\E[\|\nabla f(\vx^{(t)})\|_2^2]=\gO\left(\frac{L\Delta}{\underline{\delta}^{5/2}T}\right),
% \end{align*}
% where $\Delta=f(\vx^{(0)})-\inf_{\vx}f(\vx)$ and $\underline{\delta}:=\min_{\ell}\frac{r_{\ell}}{\min\{m_{\ell},n_{\ell}\}}$.    
% \end{corollary}

\textbf{Remark 1.}
% {\color{blue} In fact, \textit{MSGD} here reduces to momentum gradient descent by using deterministic gradients.}
Theorem \ref{thm:det} demonstrates that GaLore converges at a rate of $\gO(1/T)$ in the deterministic scenario, which is on the same order as deterministic full-space gradient descent. {More details are presented in Theorem \ref{thmres:det} in Appendix \ref{app:det}.} However, in deep learning tasks with exceptionally large training datasets, computing the true gradient becomes impractical due to significant computational and memory costs. Therefore, we will next focus on the stochastic setting. 

{\textbf{Remark 2.} When $r_\ell\rightarrow\min\{m_{\ell},n_{\ell}\}$, $\underline{\delta}\rightarrow 1$, and the convergence rate in Theorem \ref{thm:det} reduces to that of full-parameter gradient descent. This implies the sharpness of our analysis.}
% \ky{Add a new remark claiming that $\underline{\delta}$ will approach to $1$ as $r$ goes large. When $r = m$, $\underline{\delta}=1$, reducing to the full-space GD. This implies the sharpness of our convergence.}

\textbf{GaLore provably converges with large-batch stochastic gradients.} Inspired by the insight presented in Sec.~\ref{sec:non-convergence}, GaLore converges in cases where the true gradient dominates the gradient noise. This convergence can be ensured by reducing the gradient noise through an increased batch size, particularly as the algorithm approaches a local minimum. Specifically, we replace the stochastic gradient  $\mG_{\ell}^{(t)}=\nabla_{\ell}F(\vx^{(t)};\xi^{(t)})$ with large-batch gradient $\mG_{\ell}^{(t)}=\frac{1}{\gB}\sum_{b=1}^{\gB}\nabla_{\ell}F(\vx^{(t)};\xi^{(t,b)})$, which reduces the variance of gradient noise by $\gB$ times. The GaLore algorithm with large-batch stochastic gradients is presented in Alg.~\ref{alg:main} (or Alg.~\ref{alg:lba} in Appendix \ref{app:lba}), where the large-batch stochastic gradient oracle is highlighted with the label \colorbox{babyblue}{(l.b.)}. It is worth noting that the non-convergence of GaLore primarily stems from the erroneous subspace dominated by gradient noise. Therefore, we compute a large-batch gradient only for the SVD step while maintaining a smaller batch size for other computations, see Alg.~\ref{alg:main}. As the batch size \(\gB\) increases with iteration \(T\), GaLore provably converge to  stationary solutions, as established in the following theorem:

\begin{theorem}[Convergence rate of large-batch GaLore]\label{thm:lba}
    Under Assumptions \ref{asp:lb}-\ref{asp:sg}, if the number of iterations $T\ge2+$ $256/(3\underline{\delta})+(256\sigma)^2/(9\sqrt{\underline{\delta}}L\Delta)$ and we choose 
    {\color{black} hyperparameters $\tau,\gB,\beta_1,\eta$ according to Appendix \ref{app:lba},}% $\tau=\left\lceil128/(3\underline{\delta}\beta_1)\right\rceil$, $\gB=\left\lceil1/(\underline{\delta}\beta_1)\right\rceil$,
%         \begin{align*}
%         &\beta_1=\left(1+\sqrt{\frac{\underline{\delta}^{3/2}\sigma^2T}{L\Delta}}\right)^{-1},\quad\mbox{and}\\
%         &\eta=\left(4L+\sqrt{\frac{80L^2}{3\underline{\delta}\beta_1^2}}+\sqrt{\frac{40\tau^2L^2}{\underline{\delta}}}+\sqrt{\frac{32\tau L^2}{3\beta_1}}\right)^{-1},
%     \end{align*}
    GaLore using large-batch {\color{black}MSGD} with MP converges as
    \begin{align*}
        \frac{1}{T}\sum_{t=0}^{T-1}\E[\|\nabla f(\vx^{(t)})\|_2^2]=\gO\left(\frac{L\Delta}{\underline{\delta}^{5/2}T}+\sqrt{\frac{L\Delta\sigma^2}{\underline{\delta}^{7/2}T}}\right),
    \end{align*}
    where $\Delta=f(\vx^{(0)})-\inf_{\vx}f(\vx)$ and $\underline{\delta}:=\min_{\ell}\frac{r_{\ell}}{\min\{m_{\ell},n_{\ell}\}}$. 
\end{theorem}

\textbf{Remark 3.} {A more detailed result is presented in Theorem \ref{thmres:lba} in Appendix \ref{app:lba}. The large batch size \(\gB = \Theta(\sqrt{T})\) grows with iteration \(T\), leading to increased memory overhead, making it less practical than small batch sizes.} With gradient accumulation, an additional variable is needed to track the gradient, complicating compatibility with per-layer weight updates. Otherwise, larger batch sizes raise the memory for activation values. {Therefore, exploring algorithms that converge with constant batch sizes becomes essential.}

\textbf{GaLore provably converges under isotropic noise assumptions.} In Appendix \ref{app:iso}, we demonstrate that under specific isotropic noise assumptions, the SVD-induced subspace reliably preserves the true gradient information. Consequently, GaLore, even with constant batch sizes, achieves a guaranteed convergence rate of \(\mathcal{O}(1/\sqrt{T})\). However, isotropic gradient noise is rarely considered in the convergence analysis of machine learning or deep learning algorithms \citep{zhu2018anisotropic,haochen2021shape,mori2022power,wu2022alignment,wang2310theoretical,koloskova2020unified}.

\textbf{Empirical validation.} Fig.~\ref{fig:qp} illustrates the convergence of large-batch GaLore (blue curve) in solving problem (\ref{eq:counterexample}). It demonstrates that large-batch GaLore effectively corrects the bias present in small-batch stochastic GaLore (green curve), achieving convergence to stationary solutions.

{\section{GoLore: random low-rank projection}}

\textbf{GoLore algorithm.} The main issue with SVD-based projection in GaLore is that it aims to capture the dominant component in the stochastic gradient matrix. Consequently, when gradient noise overshadows the true gradient as the algorithm approaches a local minimum, the SVD-based projection fails to identify valuable gradient information. 

To address this, we propose replacing the SVD-based projection with a random projection, which captures components of the stochastic gradient matrix randomly without any preference. This results in the GoLore algorithm presented in Alg.~\ref{alg:main} (or Alg.~\ref{alg:golore} in Appendix \ref{app:golore}). In Alg.~\ref{alg:main}, the GaLore method highlighted with the label \colorbox{periwinkle}{(GaLore)} samples the projection matrix $\mP_{\ell}^{(t)}$ via SVD. In contrast, the GoLore method  highlighted with the label {\colorbox{highlight_color}{(GoLore)}} samples $\mP_{\ell}^{(t)}$ from $\gU(\mathrm{St}_{m_{\ell},r_{\ell}})$, a uniform distribution on the $m_{\ell}\times r_{\ell}$ Stiefel manifold. The following proposition provides a practical strategy to sample from distribution $\gU(\mathrm{St}_{m,r})$. 

% \vspace{1mm}
\begin{proposition}[\citet{chikuse2012statistics}, Theorem 2.2.1]
    A random matrix $\mX$ uniformly distributed on $\mathrm{St}_{m,r}$ is expressed as
    $
        \mX=\mZ(\mZ^\top\mZ)^{-1/2},
    $
    where the elements of {$\mZ\in\RR^{m\times r}$} are independent and identically distributed as normal $\gN(0,1)$.
\end{proposition}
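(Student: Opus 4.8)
The plan is to verify that $\mX=\mZ(\mZ^\top\mZ)^{-1/2}$ is almost surely a well-defined element of $\mathrm{St}_{m,r}$ and that its law is invariant under the left action of the orthogonal group $O(m)$; since the uniform distribution $\gU(\mathrm{St}_{m,r})$ is characterized as the unique $O(m)$-invariant Borel probability measure on the Stiefel manifold, this identifies the law of $\mX$. First I would note that, because the entries of $\mZ$ are i.i.d.\ $\gN(0,1)$, the matrix $\mZ$ has full column rank $r$ with probability one (rank-deficient $m\times r$ matrices form a Lebesgue-null algebraic subset of $\R^{m\times r}$). Hence $\mZ^\top\mZ\in\R^{r\times r}$ is symmetric positive definite a.s., its unique symmetric positive-definite square root $(\mZ^\top\mZ)^{1/2}$ and the inverse thereof exist, and $\mX$ is well-defined. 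A one-line computation
\[
\mX^\top\mX=(\mZ^\top\mZ)^{-1/2}(\mZ^\top\mZ)(\mZ^\top\mZ)^{-1/2}=I_r
\]
shows $\mX\in\mathrm{St}_{m,r}$ a.s.

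Next I would exploit the rotational invariance of the Gaussian ensemble. The density of $\mZ$ on $\R^{m\times r}$ is proportional to $\exp(-\tfrac12\|\mZ\|_F^2)$, which depends on $\mZ$ only through $\|\mZ\|_F^2=\Tr(\mZ^\top\mZ)$; since $\|\mO\mZ\|_F=\|\mZ\|_F$ for every $\mO\in O(m)$, the matrix $\mO\mZ$ has the same distribution as $\mZ$. Because the polar-factor map commutes with left orthogonal multiplication,
\[
(\mO\mZ)\big((\mO\mZ)^\top(\mO\mZ)\big)^{-1/2}=\mO\mZ(\mZ^\top\mZ)^{-1/2}=\mO\mX,
\]
it follows that $\mO\mX\overset{d}{=}\mX$ for all $\mO\in O(m)$, i.e.\ the law of $\mX$ is an $O(m)$-invariant probability measure supported on $\mathrm{St}_{m,r}$.

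Finally I would invoke that $O(m)$ acts transitively on the compact manifold $\mathrm{St}_{m,r}$ (any orthonormal $r$-frame extends to an orthonormal basis of $\R^m$, and the associated change of basis is orthogonal), so $\mathrm{St}_{m,r}\cong O(m)/O(m-r)$ is a homogeneous space and carries a \emph{unique} $O(m)$-invariant probability measure, which is by definition $\gU(\mathrm{St}_{m,r})$; hence the law of $\mX$ equals $\gU(\mathrm{St}_{m,r})$. I expect the main obstacle to be this last step: one must pin down precisely what ``uniform distribution on $\mathrm{St}_{m,r}$'' means (the normalized invariant measure) and justify uniqueness. A fully self-contained alternative avoids the homogeneous-space machinery by performing the matrix-variate polar change of variables $\mZ\mapsto(\mX,\mS)$ with $\mS=(\mZ^\top\mZ)^{1/2}$ and checking that the Jacobian together with the Gaussian density factors into a part depending only on $\mS$ times a part independent of $\mX$, so $\mX$ and $\mS$ are independent with $\mX$ distributed according to the invariant measure; the delicate point there is computing the Jacobian of the polar decomposition on $\R^{m\times r}$, for which I would cite a standard matrix-variate reference.
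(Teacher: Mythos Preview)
The paper does not give its own proof of this proposition; it is stated with a citation to \citet{chikuse2012statistics}, Theorem 2.2.1, and used as a black box to justify the sampling strategy in GoLore. So there is no in-paper argument to compare against.

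That said, your proposal is correct and is essentially the standard argument one finds in the cited reference. The three ingredients---almost-sure well-definedness of the polar factor via full rank of $\mZ$, left $O(m)$-invariance inherited from the rotational invariance of the i.i.d.\ Gaussian ensemble, and uniqueness of the $O(m)$-invariant probability measure on the compact homogeneous space $\mathrm{St}_{m,r}\cong O(m)/O(m-r)$---are exactly what is needed, and your computations are right. Your anticipated ``obstacle'' is not really one: defining $\gU(\mathrm{St}_{m,r})$ as the normalized Haar-induced measure and invoking Weil's uniqueness theorem for invariant measures on homogeneous spaces of compact groups is entirely standard and is precisely how Chikuse sets things up. The alternative Jacobian route you sketch also works and is in fact closer to how Chikuse derives related density results, but for the bare distributional statement the invariance argument you gave first is cleaner and fully sufficient.
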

\textbf{Convergence guarantee.}  Unlike  SVD used in GaLore, the random sampling strategy in GoLore prevents the subspace from being dominated by gradient noise. The theorem below provides convergence guarantees for GoLore when using small-batch stochastic gradients and MSGD with MP.

\begin{theorem}[Convergence rate of GoLore]\label{thm:golore}
    Under Assumptions \ref{asp:lb}-\ref{asp:sg}, for any $T\ge2+128/(3\underline{\delta})+(128\sigma)^2/(9\sqrt{\underline{\delta}}L\Delta)$, if we choose {\color{black}hyperparameters $\beta_1$, $\tau$, $\eta$ according to Appendix \ref{app:golore},}
    % $\tau=\left\lceil64/(3\underline{\delta}\beta_1)\right\rceil$,
    % \begin{align*}
    %     &\beta_1=\left(1+\sqrt{\frac{\underline{\delta}^{3/2}\sigma^2T}{L\Delta}}\right)^{-1},\quad\mbox{and}\\
    %     &\eta=\left(4L+\sqrt{\frac{80L^2}{3\underline{\delta}\beta_1^2}}+\sqrt{\frac{80\tau^2L^2}{3\underline{\delta}}}+\sqrt{\frac{16\tau L^2}{3\beta_1}}\right)^{-1},
    % \end{align*}
    GoLore using small-batch stochastic gradients and MSGD with MP converges as
    \begin{align*}
        \frac{1}{T}\sum_{t=0}^{T-1}\E[\|\nabla f(\vx^{(t)})\|_2^2]=\gO\left(\frac{L\Delta}{\underline{\delta}^{5/2}T}+\sqrt{\frac{L\Delta\sigma^2}{\underline{\delta}^{7/2}T}}\right),
    \end{align*}
    where $\Delta=f(\vx^{(0)})-\inf_{\vx}f(\vx)$ and $\underline{\delta}:=\min_{\ell}\frac{r_{\ell}}{\min\{m_{\ell},n_{\ell}\}}$.  
\end{theorem}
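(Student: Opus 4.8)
\textbf{Proof proposal for Theorem \ref{thm:golore} (Convergence rate of GoLore).}

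The plan is to set up a descent-lemma argument on the full-parameter iterate $\vx^{(t)}$ but measured with respect to an appropriately ``lagged'' momentum surrogate, mirroring the structure of the proofs of Theorems \ref{thm:det} and \ref{thm:lba} but replacing every place where SVD-specific properties of $\mP_\ell^{(t)}$ were used by properties of the uniform distribution on the Stiefel manifold. Write, for $m_\ell\le n_\ell$, the effective update as $\mX_\ell^{(t+1)}=\mX_\ell^{(t)}-\eta\mP_\ell^{(t)}\mM_\ell^{(t)}$, and unroll the momentum recursion with the momentum-projection rule \eqref{eq:mmt-proj} to express $\mP_\ell^{(t)}\mM_\ell^{(t)}$ as a $\beta_1$-weighted sum of projected stochastic gradients $\mP_\ell^{(s)}(\mP_\ell^{(s)})^\top\mG_\ell^{(s)}$ over $s$ in the current subspace-block. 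The crucial fact to extract is that for $\mP\sim\gU(\mathrm{St}_{m,r})$ independent of a fixed matrix $\mG$, one has $\E[\mP\mP^\top]=\frac{r}{m}\mI_m$, hence $\E[\mP\mP^\top\mG]=\frac{r}{m}\mG$ and $\E\|\mP\mP^\top\mG\|_F^2\le\frac{r}{m}\|\mG\|_F^2$ (the projection is, in conditional expectation, an unbiased-up-to-scaling contraction). This is exactly the property GaLore's SVD projection lacks near a minimum, and it is what makes $\underline\delta=\min_\ell r_\ell/\min\{m_\ell,n_\ell\}$ the natural contraction factor; the $1/\underline\delta$ rescaling of the step is already baked into the choice of $\eta$.

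Concretely I would proceed in the following steps. (i) Apply $L$-smoothness to get $f(\vx^{(t+1)})\le f(\vx^{(t)})+\langle\nabla f(\vx^{(t)}),\vx^{(t+1)}-\vx^{(t)}\rangle+\tfrac{L}{2}\|\vx^{(t+1)}-\vx^{(t)}\|^2$ and take conditional expectation over the fresh randomness $\xi^{(t)}$ (and, on subspace-refresh steps, over $\mP^{(t)}$). (ii) Introduce an auxiliary sequence — e.g.\ the running average momentum reinterpreted in the full space, or a ``virtual iterate'' $\tilde\vx^{(t)}=\vx^{(t)}-\frac{\eta(1-\beta_1)}{\beta_1}\sum_\ell\mP_\ell^{(t)}\mM_\ell^{(t)}$-style shift — so that the momentum bias telescopes; bound the momentum deviation $\|\mP_\ell^{(t)}\mM_\ell^{(t)}-\frac{r_\ell}{m_\ell}\nabla_\ell f(\vx^{(t)})\|$ by a geometric-in-$\beta_1$ accumulation of (a) variance terms $\sigma_\ell^2$, (b) drift terms $L^2\|\vx^{(s)}-\vx^{(t)}\|^2$ over the block, and (c) a ``subspace-mismatch'' term measuring how $\mP_\ell^{(s)}(\mP_\ell^{(s)})^\top$ differs from $\frac{r_\ell}{m_\ell}\mI$ — this last term has zero conditional mean and bounded variance, which is where independence of the $\mP_\ell^{(s)}$ across refreshes is used. (iii) Use the block length $\tau=\lceil 64/(3\underline\delta\beta_1)\rceil$ to control how far the iterate drifts within one fixed subspace, so the drift terms are absorbable, and sum over $t=0,\dots,T-1$. (iv) Plug in the stated $\beta_1=(1+\sqrt{\underline\delta^{3/2}\sigma^2T/(L\Delta)})^{-1}$ and $\eta$ to balance the $L\Delta/T$-type term against the $\sqrt{L\Delta\sigma^2/T}$-type term, landing the advertised $\gO(L\Delta/(\underline\delta^{5/2}T)+\sqrt{L\Delta\sigma^2/(\underline\delta^{7/2}T)})$ bound; the condition on $T$ is just what is needed to keep $\beta_1\in(0,1]$ and $\tau,\gB$ sensible and to make the lower-order term genuinely lower-order.

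The main obstacle, I expect, is handling the momentum-projection interaction with the random subspace switches: after a refresh at step $t\equiv0\ (\mathrm{mod}\ \tau)$, the momentum $\mM_\ell^{(t)}$ carries the old projected history, re-expressed via $(\mP_\ell^{(t)})^\top\mP_\ell^{(t-1)}$, and one must show this re-projection does not amplify error — i.e.\ that $\|\mP_\ell^{(t)}(\mP_\ell^{(t)})^\top\mP_\ell^{(t-1)}\mM_\ell^{(t-1)}\|\le\|\mP_\ell^{(t-1)}\mM_\ell^{(t-1)}\|$ (true since $\mP_\ell^{(t)}(\mP_\ell^{(t)})^\top$ is an orthogonal projector) together with a careful accounting of the bias it injects relative to $\frac{r_\ell}{m_\ell}\nabla_\ell f$. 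A secondary subtlety is that the projected momentum is \emph{not} a martingale-difference-friendly object because a single $\mP_\ell^{(s)}$ is reused for $\tau$ consecutive steps and multiplies $\tau$ correlated stochastic gradients; the fix is to condition on the subspace and treat the within-block stochastic-gradient noise and the across-block subspace noise in two separate layers of expectation, exactly as the block structure of the algorithm invites. Once that bookkeeping is in place, the remaining estimates are routine and essentially identical to those already used for Theorem \ref{thm:lba}, with the large-batch variance reduction $\sigma^2/\gB$ there playing the role that the $r_\ell/m_\ell$ contraction from random projection plays here.
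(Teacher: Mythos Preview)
Your proposal is broadly correct and identifies the key random-projection identity $\E[\mP\mP^\top]=\tfrac{r}{m}\mI$, but the paper organizes the argument differently. Rather than comparing the lifted momentum $\tilde\mM_\ell^{(t)}:=\mP_\ell^{(t)}\mM_\ell^{(t)}$ to the \emph{scaled} gradient $\tfrac{r_\ell}{m_\ell}\nabla_\ell f(\vx^{(t)})$ and treating $(\mP\mP^\top-\tfrac{r}{m}\mI)\mG$ as a zero-mean ``subspace-mismatch'' noise, the paper tracks $\E\|\tilde\mM_\ell^{(t)}-\nabla_\ell f(\vx^{(t)})\|_F^2$ against the \emph{unscaled} gradient. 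Its descent lemma is written in the polarized form $f(\vx^{(t+1)})\le f(\vx^{(t)})-\tfrac{\eta}{2}\|\nabla f(\vx^{(t)})\|^2+\tfrac{\eta}{2}\|\tilde\vm^{(t)}-\nabla f(\vx^{(t)})\|^2-(\tfrac{1}{2\eta}-\tfrac{L}{2})\|\vx^{(t+1)}-\vx^{(t)}\|^2$, and the crux is a three-case momentum-contraction recursion (for $t=0$, $t=k\tau$, and $t=k\tau+r$ with $1\le r<\tau$) showing that $\sum_t\|\tilde\vm^{(t)}-\nabla f(\vx^{(t)})\|^2$ is at most roughly $(1-c\,\underline\delta)\sum_t\|\nabla f(\vx^{(t)})\|^2$ plus drift and variance terms---so the $\underline\delta$-gain comes from that coefficient being strictly below $1$, not from a rescaled comparison. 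At refresh steps the paper exploits the exact orthogonal split $\langle\mP\mP^\top\mA,(\mI-\mP\mP^\top)\mB\rangle_F=0$ together with $\E\|(\mI-\mP\mP^\top)\mG\|_F^2=(1-\tfrac{r}{m})\|\mG\|_F^2$ (and in fact $\E\|\mP\mP^\top\mA\|_F^2=\tfrac{r}{m}\|\mA\|_F^2$, which is stronger than the non-expansiveness you invoke for the momentum re-projection); no virtual iterate is introduced. Your scaled-unbiased-compressor viewpoint should also go through and is closer in spirit to standard analyses of unbiased sketches, but the paper's ``contractive'' viewpoint sidesteps tracking the subspace-mismatch variance as a separate object and reaches the advertised constants more directly.
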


\textbf{Remark 4.} Theorem \ref{thm:golore} demonstrates that GoLore converges at a rate of \( \mathcal{O}(1/\sqrt{T}) \), which is consistent with the convergence rate of full-parameter pre-training using standard MSGD.  {A more detailed result is presented in Theorem \ref{thmres:golore} in Appendix \ref{app:golore}, where we established convergence for more general hyperparameter choices.} Unlike deterministic GaLore and low-rank GaLore discussed in Sec.~\ref{sec:GaLore-convergence}, the newly proposed GoLore algorithm converges in the non-convex stochastic setting with constant batch sizes. Furthermore, GoLore converges without assuming isotropic gradient noise, and it remains effective whether the gradient noise is anisotropic or not, making it significantly more suitable for LLM pre-training and fine-tuning.

\begin{figure*}[t]
    \centering
    % \vspace{-5mm}
    \begin{minipage}[c]{.45\textwidth}
    \centering
        \includegraphics[width=.8\textwidth]{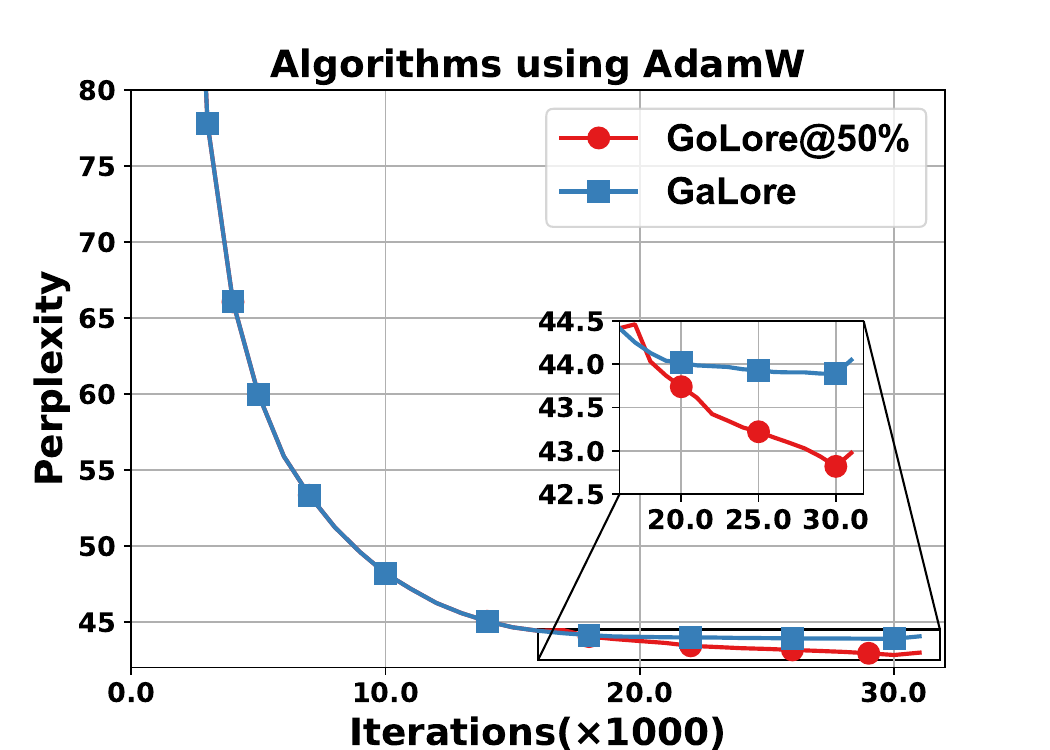}
        % \vspace{-3mm}
        \caption{\small Pre-training curves of various approaches using AdamW with BF16 precision.}
        \label{fig:C4}
    \end{minipage}
    \hspace{5pt}
    % \vspace{-5mm}
    \begin{minipage}[c]{.45\textwidth}
    \centering
    \includegraphics[width=.8\textwidth]{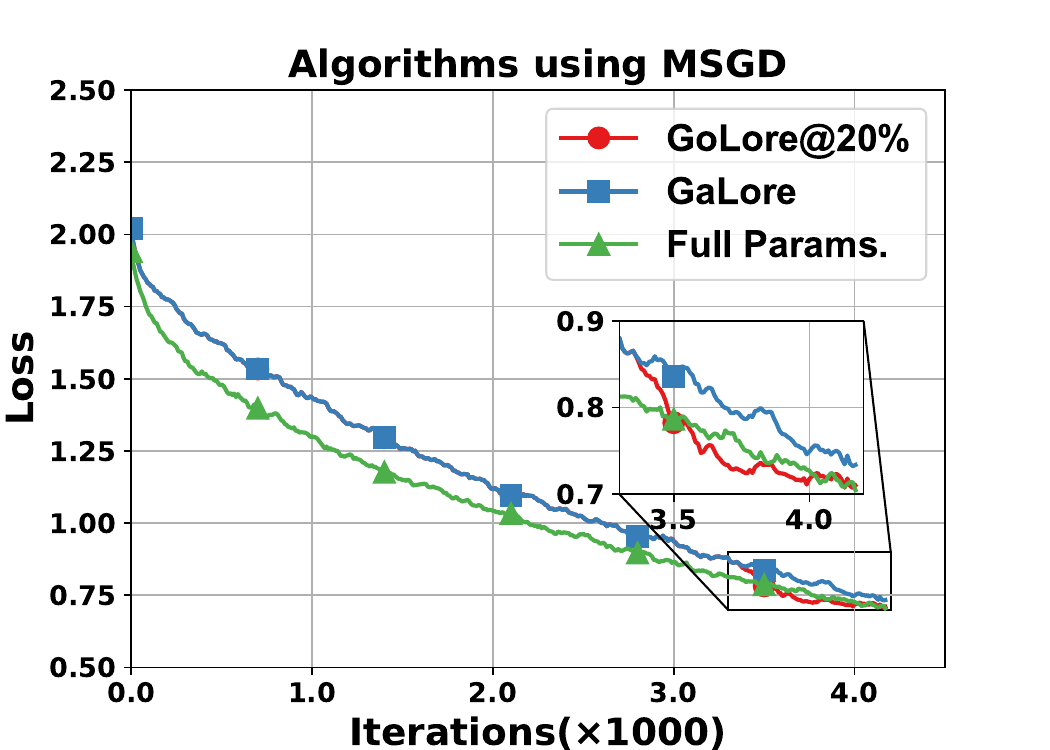}
    % \vspace{-3mm}
    \caption{\small Fine-tuning curves of various approaches using MSGD with BF16 precision.}
    \label{fig:winogrande}
    \end{minipage}
\end{figure*}

\begin{table*}[t]
    \centering
    % \vspace{-0.5cm}
    \caption{\small Fine-tuning results on GLUE benchmark using pre-trained RoBERTa-Base.  
    % \textit{GoLore@20\%} uses GaLore  for the first 80\% iterations and GoLore for the rest 20\%. \textit{Full Params.} denotes full-parameter training.
    }
    % \vspace{-2mm}
    \label{tab:GLUE}
    \vspace{1em}
    \begin{tabular}{c|cccccccc|c}
    \toprule
         \textbf{\small Algorithm} & \textbf{\small CoLA} & \textbf{\small STS-B} & \textbf{\small MRPC} & \textbf{\small RTE} & \textbf{\small SST2} & \textbf{\small MNLI} & \textbf{\small QNLI} & \textbf{\small QQP} & \textbf{\small Avg}\\
         \midrule
         \small Full Params. & 62.07 & 90.18 & 92.25 & 78.34 & 94.38 & 87.59 & 92.46 & 91.90 & 86.15\\
         \midrule
          \small GaLore & 61.32 & 90.24 & 92.55 & 77.62 & \textbf{94.61} & 86.92 & 92.06 & 90.84 & 85.77\\
          \small {\color{black}\textsc{Flora}} & {\color{black}57.71} & {\color{black}89.59} & {\color{black}91.96} & {\color{black}76.17} & {\color{black}94.50} & {\color{black}85.42} & {\color{black}91.93} & {\color{black}90.49} & {\color{black}84.72}\\
         \small GoLore@20\% & \textbf{61.66} & \textbf{90.55} & \textbf{92.93} & \textbf{78.34} & \textbf{94.61} & \textbf{87.02} & \textbf{92.20} & \textbf{90.91} & \textbf{86.03}\\
         % \hline
         % \small GoLore@20\% & \multirow{2}{*}{\textbf{62.09}} & \multirow{2}{*} {90.03} & \multirow{2}{*} {\textbf{92.82}} & \multirow{2}{*} {\textbf{79.42}} & \multirow{2}{*} {\textbf{94.84}} & \multirow{2}{*} {86.74} & \multirow{2}{*} {\textbf{92.31}} & \multirow{2}{*} {90.77} & \multirow{2}{*} {86.13}\\
         % \small +MSGD & & & & & & & & &\\
         \bottomrule
    \end{tabular}
    \vspace{-0.3cm}
\end{table*}

\begin{table}[t]
    \centering
        \caption{Results for fine-tuning pre-trained OPT-13B models on BoolQ. \textit{OOM} stands for "out of memory".}
        \vspace{1em}
    \label{tab:opt13B}
    \begin{tabular}{cccc}
    \toprule
        {\color{black}Algorithm} & {\color{black}Memory} & {\color{black}Accuracy}\\ 
    \midrule
    {\color{black}Full Params.} & {\color{black}OOM} & {\color{black}-} \\
    \midrule
    {\color{black}GaLore} & {\color{black}77.68 GB} & {\color{black}79.79}\\
    {\color{black}GoLore@30\%} & {\color{black}77.68 GB} & {\color{black}\textbf{81.96}}\\
    \bottomrule
    \end{tabular}
    \vspace{5mm}
\end{table}

\textbf{Remark 5.} Notably, this paper presents the first discrete-time convergence analysis for GaLore-like algorithms under standard assumptions. Among the few GaLore-like studies providing convergence guarantees, \citet{liang2024memory} establishes continuous-time convergence, while \citet{robert2024ldadam} demonstrates convergence using a more complex error feedback technique, relying on a contractive assumption on the projection matrix that becomes stronger as the subspace recomputing period \(\tau > 1\). In contrast, GoLore guarantees convergence by replacing the SVD projection with a random projection, without significantly altering the algorithmic structure or introducing restrictive assumptions.

\vspace{1mm}
\textbf{Practical application of GoLore in LLMs.} While GoLore have theoretical convergence guarantees, directly applying GoLore in LLM tasks may not be ideal. The advantage of using randomly sampled projection matrices becomes evident in the later stages of training, where stochastic gradients are primarily dominated by gradient noise. However, in the early stages, projection matrices derived from GaLore's SVD retain more gradient information, leading to more effective subspaces, see Fig.~\ref{fig:noise-dominates}. Therefore, we recommend a \textit{hybrid} approach: initially using GaLore to converge toward the neighborhood of the solution, then switching to GoLore for refinement and achieving more accurate results. 

\textbf{Empirical validation.} Fig.~\ref{fig:qp} shows the convergence of the hybrid algorithm (red curve) applied to problem (\ref{eq:counterexample}), which employs GaLore in the early training phase and switches to GoLore in the later stage. It is observed that the hybrid algorithm successfully converges to  stationary solutions.

% \vspace{-2mm}
\section{Experiments}\label{sec:exp}
% \vspace{-3mm}

\begin{table*}[t]
% \vspace{-5mm}
\caption{\small Memory and computation comparison between GaLore's original implementation and our ReLoRA-like version, both utilizing MSGD with batch size \( b \). We assume the weight \( \mW \in \mathbb{R}^{m \times n} \) satisfies \( m \le n \).}
\label{tab:cost}
% \vspace{-1mm}
\begin{center}
\begin{tabular}{c|c|c}
\toprule
GaLore Implementation & Memory & Computation\\
\midrule
\citep{zhao2024galore} & $mn+rm+rn+bm$ & $6bmn+4rmn+2mn+3rn$\\
\midrule
Our ReLoRA-like version & $mn+rm+2rn+bm+br$ & $4bmn+4brm+6brn+5rn$\\
\bottomrule
\end{tabular}
% \vspace{-5mm}
\end{center}
\vspace{-0.5cm}
\end{table*}

We evaluate GaLore and GoLore on several different tasks, including a counter-example problem (\ref{eq:counterexample}), pre-training and fine-tuning LLMs with real benchmarks. Throughout our experiments, \textit{GoLore@$x\%$} uses GaLore in the first $(100-x)$\% iterations and GoLore in the last $x$\% iterations, \textit{L.B. GaLore} denotes large-batch GaLore, and \textit{Full Params.} denotes full-parameter training. Further results and detailed experimental specifications including the hyperparameter choices and computing resources are deferred to Appendix \ref{app:exp} and \ref{app:ablation}. 
% \ky{What's in your appendix? Like setup? Hypter-praameters? If you do not mention them here, reviewers will not go to the appendix to find them!}

\textbf{GaLore's non-convergence. }In order to validate the non-convergence of GaLore and the convergence properties of GoLore and large-batch GaLore, we compare them with full-parameter training on the constructed quadratic problem defined in \eqref{eq:counterexample}. Fig.~\ref{fig:qp} shows that, regardless of whether AdamW or MSGD is employed as the subspace optimizer, GaLore does not converge to the desired solution. In contrast, both GoLore and large-batch GaLore, along with full-parameter training, achieve exact convergence, thereby validating our theoretical results. 
% As shown in Fig.~\ref{fig:qp}, regardless of whether AdamW or MSGD is used as the subspace optimizer, GaLore fails to converge to the desired solution while GoLore, large-batch GaLore, and full-parameter training do, supporting our theoretical findings.

\textbf{Pre-training. }To validate GoLore in LLM pre-training tasks, we pre-trained LLaMA-60M on the C4 \citep{raffel2020exploring} dataset for 30,000 iterations using various algorithms, including GaLore, GoLore and full-parameter training. All implementations utilized the AdamW optimizer in BF16 format. {\color{black} As illustrated in Fig.~\ref{fig:C4}, the perplexity dramatically decreases when shifting from GaLore to GoLore, demonstrating the effectiveness of our approach.} 
% As illustrated in Fig.~\ref{fig:C4}, there is a noticeable performance gap between GaLore/GoLore and full-parameter training, indicating that the parameters are away from local minima. However, GoLore still demonstrates slightly better training performance compared to GaLore.

\textbf{Fine-tuning. } {\color{black}To validate the efficiency of GoLore in LLM fine-tuning tasks, we fine-tuned pre-trained RoBERTa models \citep{liu2019roberta} on the GLUE benchmark \citep{wang2018glue}, LLaMA2-7B models \citep{touvron2023llama} on the WinoGrande dataset \citep{sakaguchi2021winogrande}, and OPT-13B models \citep{zhang2022opt} on the BoolQ dataset \citep{clark2019boolq}}
% To validate the efficiency of GoLore in LLM fine-tuning tasks, we fine-tuned pre-trained LLaMA2-7B models \citep{touvron2023llama} on the WinoGrande dataset \citep{sakaguchi2021winogrande} and pre-trained RoBERTa models \citep{liu2019roberta} on the GLUE benchmark \citep{wang2018glue} with AdamW optimizers. 
Fig.~\ref{fig:winogrande} displays the loss curves for fine-tuning on WinoGrande with rank 1024,
% the fine-tuning loss curves for GaLore and GoLore with  rank 1024, 
while Table~\ref{tab:GLUE} and \ref{tab:opt13B} present the task scores for GaLore/GoLore with rank 4. {\color{black} GoLore consistently outperforms GaLore in the above experiments.}
% In both experiments, GoLore outperforms GaLore.

{\color{black}
\vspace{-1mm}
\section{Connections with other algorithms}

\vspace{-0.5mm}
{\color{black}\textbf{Connection with ReLoRA. } Optimization algorithms like GaLore/GoLore that optimizes in periodically recomputed subspaces can be implemented in an equivalent yet potentially more computational efficient, ReLoRA-like way. Consider a linear layer $\vy=\mW\vx$ with $\mW\in\R^{m\times n}$, where $m\le n$, GaLore first computes the full-parameter gradient $\nabla_{\mW}\gL=(\nabla_{\vy}\gL)\vx^\top$ via back propagation and update $\mW$ in the subspace as $\mW\gets\mW+\mP\rho(\mP^\top(\nabla_{\mW}\gL))$, where $\mP\in\R^{m\times r}$ is a low-rank projection matrix. If we use LoRA adaptation $\mW=\mW_0+\mB\mA$ with $\mB\in\R^{m\times r}$ and $\mA\in\R^{r\times n}$, we compute $\mA$'s gradient $\nabla_{\mA}\gL=(\nabla_{\vz}\gL)\vx^\top=\mB^\top(\nabla_{\vy}\gL)\vx^\top$, where $\vz=\mB\vx$ is the additional activation. If we fix $\mB=\mP$, update $\mA\gets\mA+\rho(\nabla_{\mA}\gL)$ is equivalent to $\mW\gets\mW+\mP\rho(\mP^\top(\nabla_{\mW}\gL))$. The memory and computational costs of the two implementations are compared in Table \ref{tab:cost}, showing the potential of our ReLoRA-like implementation to reduce computation with little memory overhead. Detailed algorithm descriptions and calculations are in Appendix \ref{app:relora}. 

\vspace{-0.5mm}
\textbf{Connection with \textsc{Flora}. }Aware of the equivalence of the two (GaLore/ReLoRA-like) implementations, the main difference between GoLore and \textsc{Flora} lies in the choice of projection matrices. Though both algorithms sample $\mP\in\R^{m\times r}$ randomly, GoLore uses a uniform distribution on the Stiefel manifold $\gU(\mathrm{St}_{m,r})$, while \textsc{Flora} uses a random Gaussian distribution where each element in $\mP$ is independently sampled from $\gN(0,1/r)$, and thus $\mP$ may not belongs to $\mathrm{St}_{m,r}$.

\textbf{Connection with SIFT. }SIFT fine-tunes LLMs with sparsified gradients, which can also be viewed as {\color{black} subspace descent}. While GaLore projects gradient $\mG$ to $\mP^\top\mG$ via a projection matrix $\mP$, SIFT projects gradient $\mG$ to $\mS\odot\mG$ via a sparse mask matrix $\mS$. Our theoretical analysis can be directly transferred to sparse {\color{black} subspace descent} with little effort, implying similar results as in low-rank {\color{black} subspace descent}, see Appendix \ref{app:sift}.}

\textbf{Connection with zero-th order methods. } Zero-th order methods \citep{malladi2023fine,zhang2023dpzero,chen2024enhancing} are another line of works on memory-efficient training. While these algorithms randomly select a direction to estimate the directional derivatives by finite difference, GoLore computes subspace gradients via back propagation. The directions used in zero-th order methods change every iteration, while GoLore applies a more lazily strategy changing its subspace every $\tau$ iterations.

\textbf{Connection with gradient sketching methods. }Gradient sketching methods like \citet{hanzely2018sega} and \citet{wangcan} uses gradient sketches in algorithm iterates. These methods recover gradient estimates from projected gradients and retains full-size gradients and optimizer states. In comparison, GoLore directly updates with projected gradients and retains compressed gradients and optimizer states, which is more memory-efficient.
}

% \vspace{3mm}
\section{Conclusion and Limitations}
% \vspace{-3mm}
This paper investigates subspace optimization approaches for LLM pre-training and fine-tuning. We demonstrate that GaLore fails to converge to the desired solution under regular assumptions, as the SVD-based projection often generates {\color{black}potentially anisotropic} noise-dominated subspaces when the true gradient is relatively small. However, we establish that GaLore can achieve exact convergence when using deterministic or large-batch stochastic gradients. We further introduce GoLore—a variant of GaLore employing randomly sampled projection matrices—and establish its convergence rate even with small-batch stochastic gradients. 
A limitation of this paper is that our convergence analysis framework has not readily covered the use of the Adam optimizer and recent GaLore variants such as Fira.
% A limitation of this paper is that {\color{black} recent GaLore variants, such as Fira, are not readily covered by our analysis framework.}
% Although GoLore with AdamW performs well empirically, as shown in Table~\ref{tab:GLUE}, its theoretical convergence guarantees remain unknown and will be addressed in future work.
\section*{Acknowledgements}
The original submission is supported by the National Natural Science Foundation of China (No. 124B2017, 92370121, 12301392) and the National Key Research and Development Program of China (No. 2024YFA1012902). The final revisions are additionally supported by Zhongguancun Academy Project (No. C20250205). 
% Kun Yuan is also supported by AI for Science Institute, Beijing, China and National Engineering Laboratory for Big Data Analytics and Applications.

\section*{Impact Statement}

{\color{black} This paper studies the non-convergence and convergence properties of GaLore under different assumptions, whose goal is to gain better understanding of GaLore and design better memory-efficient optimization algorithms. Aiming to advance the field of Machine Learning, we do not feel any potential societal consequences of this work necessary to be discussed here.}

% Authors are \textbf{required} to include a statement of the potential 
% broader impact of their work, including its ethical aspects and future 
% societal consequences. This statement should be in an unnumbered 
% section at the end of the paper (co-located with Acknowledgements -- 
% the two may appear in either order, but both must be before References), 
% and does not count toward the paper page limit. In many cases, where 
% the ethical impacts and expected societal implications are those that 
% are well established when advancing the field of Machine Learning, 
% substantial discussion is not required, and a simple statement such 
% as the following will suffice:

% ``This paper presents work whose goal is to advance the field of 
% Machine Learning. There are many potential societal consequences 
% of our work, none which we feel must be specifically highlighted here.''

% The above statement can be used verbatim in such cases, but we 
% encourage authors to think about whether there is content which does 
% warrant further discussion, as this statement will be apparent if the 
% paper is later flagged for ethics review.

% In the unusual situation where you want a paper to appear in the
% references without citing it in the main text, use \nocite
% \nocite{langley00}

\bibliography{main}
\bibliographystyle{icml2025}

\newpage
\appendix
\onecolumn

% \vspace{-1mm}

\section{The ReLoRA-like implementation}\label{app:relora}
An equivalent, ReLoRA-like implementation of Alg.~\ref{alg:main} is as illustrated in Alg.~\ref{alg:relora}, where we only present the case with small-batch stochastic gradients for convenience. In fact, applying ReLoRA with a fixed $\mA$ or $\mB$ is not our contribution, as it has already been used in several previous works\citep{hao2024flora,loeschcke2024loqt}. While leading to the same results, this ReLoRA-like implementation (Alg.~\ref{alg:relora}) can potentially save computation as it computes the subspace gradient directly without computing the full-parameter one. Consider the case where $m\le n$ and we use MSGD and a batch size of $b$. The computation complexity of GaLore's original implementation is $2bmn$ for forward propagation, $4bmn$ for backward propagation, $4rmn$ for projection, $3rn$ for momentum update and $2mn$ for weight update. The computational complexity of our ReLoRA-like implementation is $2bmn+2brm+2brn$ for forward propagation, $2bmn+2brm+2brn$ for backward propagation, $3rn$ for momentum updates and $2rn$ for weight updates. As illustrated in Table \ref{tab:cost}, our implementation can potentially reduce computation with little memory overhead.

\begin{algorithm}[htbp]
\caption{ReLoRA-like implementation of {\colorbox{periwinkle}{GaLore}} / {\colorbox{highlight_color}{GoLore}} algorithm using stochastic gradients {\colorbox{orchid}{with}} / {\colorbox{limegreen}{without}} momentum projection}
\label{alg:relora}
\renewcommand{\algorithmicrequire}{\textbf{Input:}}
\renewcommand{\algorithmicensure}{\textbf{Output:}}
\begin{algorithmic}
    \REQUIRE Initial point $\vx^{(0)}$, data distribution $\gD$, learning rate $\eta$, subspace changing frequency $\tau$, rank $\{r_{\ell}\}_{\ell=1}^{N_L}$, optimizer hyperparameters $\beta_1$, $\beta_2$, $\epsilon$, large batch size $\gB$.
    \ENSURE $\{\vx^{(t)}\}_{t=0}^T$.
    \STATE Initialize LoRA adaptation $\mX_{\ell}=\mW_{\ell}+\mB_{\ell}\mA_{\ell}$ for $\ell=1,2,\cdots,N_L$, where $\mW_{\ell}^{(0)}=\mX_{\ell}^{(0)}$, $\mA_{\ell}^{(0)}=0$ and $\mB_{\ell}^{(0)}=0$;
    \STATE Initialize optimizer state $\{\mM_{\ell}^{(-1)}\}_{\ell=1}^{N_L}$ and $\{\mV_{\ell}^{(-1)}\}_{\ell=1}^{N_L}$ to zero;
    \FOR{$t=0,1,\cdots,T-1$}
    \FOR{$\ell=1,2,\cdots,N_L$}
    \IF{$t\equiv0$ (mod $\tau$)}
    \STATE{$\mG_{\ell}^{(t)}\gets\nabla_{\ell}F(\vx^{(t)};\xi^{(t)})$;}
    \STATE{\colorbox{periwinkle}{$\mU,\mSigma,\mV\gets\mathrm{SVD}(\mG_{\ell}^{(t)})$, $\mP_{\ell}^{(t)}\gets\mU[:,:r_{\ell}]$, $\mQ_{\ell}^{(t)}\gets\mV[:,:r_{\ell}]$;\quad (GaLore)}}
    \STATE{\colorbox{highlight_color}{Sample $\mP_{\ell}^{(t)}\sim\gU(\mathrm{St}_{m_{\ell},r_{\ell}})$,\quad   $\mQ_{\ell}^{(t)}\sim\gU(\mathrm{St}_{n_{\ell},r_{\ell}})$;\quad (GoLore)}}
    \STATE $\mR_{\ell}^{(t)}\gets\begin{cases}(\mP_{\ell}^{(t)})^\top\mG_{\ell}^{(t)},&\mbox{if }m_{\ell}\le n_{\ell};\\\mG_{\ell}^{(t)}\mQ_{\ell}^{(t)},&\mbox{if }m_{\ell}>n_{\ell};\end{cases}$
    \ELSE
    \STATE{$\mR_{\ell}^{(t)}\gets\begin{cases}\nabla_{\mA_{\ell}}F(\vx^{(t)};\xi^{(t)}),&\mbox{if }m_{\ell}\le n_{\ell};\\\nabla_{\mB_{\ell}}F(\vx^{(t)};\xi^{(t)}),&\mbox{if }m_{\ell}>n_{\ell};\end{cases}$}
    \ENDIF
    \STATE{\colorbox{orchid}{$M_{\ell}^{(t)}\gets\begin{cases}(1-\beta_1)(\mP_{\ell}^{(t)})^\top\mB_{\ell}^{(t)}\mM_{\ell}^{(t-1)}+\beta_1\mR_{\ell}^{(t)},& \mbox{ if }m_{\ell}\le n_{\ell};\\(1-\beta_1)\mM_{\ell}^{(t-1)}\mA_{\ell}^{(t)}\mQ_{\ell}^{(t)}+\beta_1\mR_{\ell}^{(t)},&\mbox{ if }m_{\ell}>n_{\ell};\end{cases}$\quad (with MP)}}
    \STATE{\colorbox{limegreen}{$\mM_{\ell}^{(t)}\gets(1-\beta_1)\mM_{\ell}^{(t-1)}+\beta_1\mR_{\ell}^{(t)}$;\quad (without MP)}}
    \STATE{$\mV_{\ell}^{(t)}\gets(1-\beta_2)\mV_{\ell}^{(t-1)}+\beta_2\mR_{\ell}^{(t)}\odot\mR_{\ell}^{(t)}$;}
    \IF{using Adam}
    \STATE{$\mM_{\ell}^{(t)}\gets\mM_{\ell}^{(t)}/(1-\beta_1^t)$,\quad $\mV_{\ell}^{(t)}\gets\mV_{\ell}^{(t)}/(1-\beta_2^t)$,\quad $\mN_{\ell}^{(t)}\gets\mM_{\ell}^{(t)}/(\sqrt{\mV_{\ell}^{(t)}}+\epsilon)$;}
    \ELSIF{using MSGD}
    \STATE{$\mN_{\ell}^{(t)}\gets\mM_{\ell}^{(t)}$;}
    \ENDIF
    \IF{$t\equiv0$ (mod $\tau$)}
    \STATE $\mW_{\ell}^{(t+1)}\gets\mW_{\ell}^{(t)}+\mB_{\ell}^{(t)}\mA_{\ell}^{(t)}$;
    \STATE $\mA_{\ell}^{(t+1)}\gets\begin{cases}-\eta\mN_{\ell}^{(t)},&\mbox{if }m_{\ell}\le n_{\ell};\\(\mQ_{\ell}^{(t)})^\top,&\mbox{if }m_{\ell}>n_{\ell};\end{cases}$
    \STATE $\mB_{\ell}^{(t+1)}\gets\begin{cases}\mP_{\ell}^{(t)},&\mbox{if }m_{\ell}\le n_{\ell};\\-\eta\mN_{\ell}^{(t)},&\mbox{if }m_{\ell}>n_{\ell};\end{cases}$
    \ELSE
    \STATE $\mW_{\ell}^{(t+1)}\gets\mW_{\ell}^{(t)}$;
    \STATE $\mA_{\ell}^{(t+1)}\gets\begin{cases}\mA_{\ell}^{(t)}-\eta\mN_{\ell}^{(t)},&\mbox{if }m_{\ell}\le n_{\ell};\\\mA_{\ell}^{(t)},&\mbox{if }m_{\ell}>n_{\ell};\end{cases}$
    \STATE $\mB_{\ell}^{(t+1)}\gets\begin{cases}\mB_{\ell}^{(t)},&\mbox{if }m_{\ell}\le n_{\ell};\\\mB_{\ell}^{(t)}-\eta\mN_{\ell}^{(t)},&\mbox{if }m_{\ell}>n_{\ell};\end{cases}$
    \ENDIF
    \ENDFOR
    \ENDFOR
\end{algorithmic}
\end{algorithm}

\section{Theoretical proofs}
\subsection{Notations and useful lemmas}
We assume the model parameters consist of $N_L$ weight matrices.
We use $\mX_{\ell}\in\R^{m_{\ell}\times n_{\ell}}$ to denote the $\ell$-th weight matrix and $\vx\in\R^d=(\mathrm{vec}(\mX_1)^\top,\cdots,\mathrm{vec}(\mX_{N_L})^\top)^\top$ to denote the vector collecting all the parameters, $d=\sum_{\ell=1}^{N_L}m_{\ell}n_{\ell}$. We assume GaLore/GoLore applies rank-$r_{\ell}$ projection to the $\ell$-th weight matrix and denote 
\begin{align*}
    \delta_{\ell}=\frac{r_{\ell}}{\min\{m_{\ell},n_{\ell}\}},\quad\underline{\delta}=\min_{1\le\ell\le N_L}\delta_{\ell},\quad\overline{\delta}=\max_{1\le\ell\le N_l}\delta_{\ell}.
\end{align*}

We define $\tilde{\mM}_{\ell}^{(t)}$ as
\begin{align*}
    \tilde{\mM}_{\ell}^{(t)}=\begin{cases}
        \mP_{\ell}^{(t)}\mM_{\ell}^{(t)},&\mbox{if }m_{\ell}\le n_{\ell},\\
        \mM_{\ell}^{(t)}(\mQ_{\ell}^{(t)})^\top,&\mbox{if }m_{\ell}>n_{\ell},
    \end{cases}
\end{align*}
and $\tilde{\vm}=(\mathrm{vec}(\tilde{\mM}_1)^\top,\cdots,\mathrm{vec}(\tilde{\mM}_{N_L})^\top)^\top$. 
While using Alg.~\ref{alg:main} with MSGD and MP, it holds for $m_{\ell}\le n_{\ell}$ that
\begin{align*}
    \tilde{\mM}_{\ell}^{(t)}=\begin{cases}
        \beta_1\mP_{\ell}^{(0)}(\mP_{\ell}^{(0)})^\top\mG_{\ell}^{(0)},&t=0;\\
        \mP_{\ell}^{(t)}(\mP_{\ell}^{(t)})^\top\left((1-\beta_1)\tilde{\mM}_{\ell}^{(t-1)}+\beta_1\mG_{\ell}^{(t)}\right),&t=k\tau,\ k\in\mathbb{N}^*;\\
        (1-\beta_1)\tilde{\mM}_{\ell}^{(t-1)}+\beta_1\mP_{\ell}^{(t)}(\mP_{\ell}^{(t)})^\top\mG_{\ell}^{(t)},&t=k\tau+r,\ k\in\mathbb{N},\ 1\le r<\tau;
    \end{cases}
\end{align*}
for $m_{\ell}>n_{\ell}$ that
\begin{align*}
    \tilde{\mM}_{\ell}^{(t)}=\begin{cases}
        \beta_1\mG_{\ell}^{(0)}\mQ_{\ell}^{(0)}(\mQ_{\ell}^{(0)})^\top,&t=0;\\
        \left((1-\beta_1)\tilde{\mM}_{\ell}^{(t-1)}+\beta_1\mG_{\ell}^{(t)}\right)\mQ_{\ell}^{(t)}(\mQ_{\ell}^{(t)})^\top,&t=k\tau,\ k\in\mathbb{N}^*;\\
        (1-\beta_1)\tilde{\mM}_{\ell}^{(t-1)}+\beta_1\mG_{\ell}^{(t)}\mQ_{\ell}^{(t)}(\mQ_{\ell}^{(t)})^\top,&t=k\tau+r,\ k\in\mathbb{N},\ 1\le r<\tau;
    \end{cases}
\end{align*}
and for both cases that
\begin{align*}
    \mX_{\ell}^{(t+1)}=\mX_{\ell}^{(t)}-\eta\tilde{\mM}_{\ell}^{(t)}.
\end{align*}

% The following lemma illustrates the property of GaLore's projection matrix:
\begin{lemma}[Error of GaLore's projection]\label{lm:SVD}
    Let $\mG=\mU\mSigma\mV^\top$ be the SVD of $\mG\in\R^{m\times n}$, projection matrix $\mP=\mU[:,:r]$, $\mQ=\mV[:,:r]$, $r<\min\{m,n\}$. It holds for $m\le n$ that
    \begin{align*}
        \|\mP\mP^\top\mG-\mG\|_F^2\le\left(1-\frac{r}{m}\right)\|\mG\|_F^2,
    \end{align*}
    and for $m>n$ that
    \begin{align*}
        \|\mG\mQ\mQ^\top-\mG\|_F^2\le\left(1-\frac{r}{n}\right)\|\mG\|_F^2.
    \end{align*}
\end{lemma}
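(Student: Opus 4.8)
\textbf{Proof proposal for Lemma \ref{lm:SVD}.}

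The plan is to reduce both inequalities to a single elementary statement about the tail of the squared singular values, and then to bound that tail by an averaging argument. First I would treat the case $m\le n$. Writing $\mG=\mU\mSigma\mV^\top$ with singular values $\sigma_1\ge\sigma_2\ge\cdots\ge\sigma_m\ge 0$ (padding with zeros if $\mathrm{rank}(\mG)<m$), the projector $\mP\mP^\top$ with $\mP=\mU[:,:r]$ is exactly the orthogonal projection onto the span of the top $r$ left singular vectors. A direct computation gives $\mP\mP^\top\mG=\mU[:,:r]\mSigma[:r,:]\mV^\top$, so that $\mP\mP^\top\mG-\mG$ keeps only the last $m-r$ singular components; hence $\|\mP\mP^\top\mG-\mG\|_F^2=\sum_{i=r+1}^{m}\sigma_i^2$ while $\|\mG\|_F^2=\sum_{i=1}^{m}\sigma_i^2$. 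The claim therefore becomes
\begin{align*}
    \sum_{i=r+1}^{m}\sigma_i^2\le\Bigl(1-\frac{r}{m}\Bigr)\sum_{i=1}^{m}\sigma_i^2,
\end{align*}
which I would rewrite as $m\sum_{i=r+1}^{m}\sigma_i^2\le(m-r)\sum_{i=1}^{m}\sigma_i^2$.

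The key step is to prove this last inequality purely from the ordering $\sigma_1^2\ge\cdots\ge\sigma_m^2\ge0$. The cleanest route is an averaging observation: since the $\sigma_i^2$ are nonincreasing, the average of the smallest $m-r$ of them is at most the average of all $m$ of them, i.e.
\begin{align*}
    \frac{1}{m-r}\sum_{i=r+1}^{m}\sigma_i^2\le\frac{1}{m}\sum_{i=1}^{m}\sigma_i^2.
\end{align*}
(Equivalently, each of the terms $\sigma_{r+1}^2,\dots,\sigma_m^2$ is $\le\sigma_i^2$ for every $i\le r$, so summing appropriately over the $r$ large indices and the $m-r$ small indices and comparing totals yields the bound; one can also phrase it via Chebyshev's sum inequality applied to the nonincreasing sequence and the indicator of the tail.) Multiplying through by $(m-r)$ gives exactly the desired $m\sum_{i=r+1}^{m}\sigma_i^2\le(m-r)\sum_{i=1}^{m}\sigma_i^2$, completing the first case.

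For the case $m>n$ the argument is the transpose of the above: $\mQ=\mV[:,:r]$ spans the top $r$ right singular vectors, $\mG\mQ\mQ^\top$ retains the top $r$ singular components, $\mG$ has $n$ singular values $\sigma_1\ge\cdots\ge\sigma_n\ge0$, and $\|\mG\mQ\mQ^\top-\mG\|_F^2=\sum_{i=r+1}^{n}\sigma_i^2$. The same averaging inequality with $n$ in place of $m$ yields $\sum_{i=r+1}^{n}\sigma_i^2\le(1-r/n)\sum_{i=1}^{n}\sigma_i^2$. I expect no real obstacle here; the only point requiring a little care is the bookkeeping that $\mP\mP^\top\mG$ (resp.\ $\mG\mQ\mQ^\top$) really does coincide with the truncated SVD — this follows from $\mU^\top\mU=\mI$ and the block structure of $\mSigma$, and the Frobenius norm is unitarily invariant so the cross terms vanish. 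If one wants to avoid assuming $\mG$ has full rank, simply pad $\mSigma$ with zero singular values; the inequalities are unaffected since added zeros only help the tail bound.
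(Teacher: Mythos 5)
Your proof is correct and follows essentially the same route as the paper's: both reduce the projection error to the tail sum of squared singular values, $\|\mP\mP^\top\mG-\mG\|_F^2=\sum_{i=r+1}^m\sigma_i^2$, and then bound this by $\frac{m-r}{m}\sum_{i=1}^m\sigma_i^2$ via the ordering of the singular values. The paper reaches the tail-sum identity through a more explicit trace computation with the complementary projector, while you invoke the truncated-SVD identity directly, but this is a cosmetic difference.
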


\begin{proof}
    Without loss of generality assume $m\le n$ (the other case can be proved similarly). Let $\mQ=\mU[:,(r+1):]$, It holds that $\mI=\mU\mU^\top=\mP\mP^\top+\mQ\mQ^\top$. Thus,
    \begin{align}
        \|\mP\mP^\top\mG-\mG\|_F^2=&\|(\mI-\mP\mP^\top)\mU\mSigma\mV^\top\|_F^2\nonumber\\
        =&\mathrm{tr}(\mV\mSigma^\top\mU^\top(\mI-\mP\mP^\top)^2\mU\mSigma\mV^\top)\nonumber\\
        =&\mathrm{tr}(\mSigma^\top\mU^\top\mQ\mQ^\top\mU\mSigma),\label{eq:pflm-SVD-1}
    \end{align}
    where the second equation uses $\|\mX\|_F^2=\mathrm{tr}(\mX^\top\mX)$ and the last equation uses $\mathrm{tr}(\mA\mB)=\mathrm{tr}(\mB\mA)$, $\mV^\top\mV=\mI$ and $\mQ^\top\mQ=\mI$.
    By $\mQ^\top\mP=0$ and $\mP^\top\mQ=0$, we have
    \begin{align}
        \mU^\top \mQ\mQ^\top \mU=&\begin{pmatrix}\mP^\top\\\mQ^\top\end{pmatrix}\mQ\mQ^\top\begin{pmatrix}\mP&\mQ\end{pmatrix}=\begin{pmatrix}0_{r\times r}&0_{r\times(m-r)}\\0_{(m-r)\times r}&\mI_{m-r}\end{pmatrix}.\label{eq:pflm-SVD-2}
    \end{align}
    Let $\sigma_1\ge\sigma_2\ge\cdots\ge\sigma_m\ge0$ denote the eigenvalues of $\mG$, \eqref{eq:pflm-SVD-2} implies
    \begin{align}
        \mSigma^\top \mU^\top \mQ\mQ^\top \mU\mSigma=&\begin{pmatrix}
            0_{r\times r}&0_{r\times(m-r)}&0_{r\times(n-m)}\\
            0_{(m-r)\times r}&\mathrm{diag}(\sigma_{r+1},\cdots,\sigma_m)&0_{(m-r)\times(n-m)}\\
            0_{(n-m)\times r}&0_{(n-m)\times(m-r)}&0_{(n-m)\times(n-m)}
        \end{pmatrix}.\label{eq:pflm-SVD-3}
    \end{align}
    Applying \eqref{eq:pflm-SVD-3} to \eqref{eq:pflm-SVD-1} yields
    \begin{align*}
        \|\mP\mP^\top\mG-\mG\|_F^2=&\mathrm{tr}(\mSigma^\top\mU^\top\mQ\mQ^\top\mU\mSigma)=\sum_{i=r+1}^m\sigma_i^2\le\frac{m-r}{m}\|\mG\|_F^2,
    \end{align*}
    where the inequality uses $\|\mG\|_F^2=\mathrm{tr}(\mG^\top\mG)=\mathrm{tr}(\mSigma^\top\mSigma)=\sum_{i=1}^m\sigma_i^2$.
\end{proof}

\begin{lemma}[Gradient connections]\label{lm:connection}
    It holds for any $t$, $\tau>0$ that
    \begin{align}
        \|\nabla_{\ell}f(\vx^{(0)})\|_F^2\le&\frac{2}{\tau}\sum_{r=0}^{\tau-1}\|\nabla_{\ell}f(\vx^{(t+r)})\|_F^2+(\tau-1)\sum_{r=0}^{\tau-2}\|\nabla_{\ell}f(\vx^{(t+r+1)})-\nabla_{\ell}f(\vx^{(t+r)})\|_F^2.\label{eq:lm-connection}
    \end{align}
\end{lemma}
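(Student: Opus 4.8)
The statement to prove is Lemma~\ref{lm:connection}, which bounds $\|\nabla_\ell f(\vx^{(0)})\|_F^2$ in terms of gradient values and consecutive gradient differences over a window of $\tau$ iterations starting at $t$. Wait---looking at the indexing more carefully, the left side is $\|\nabla_\ell f(\vx^{(0)})\|_F^2$ but the right side involves $\vx^{(t+r)}$ for $r=0,\dots,\tau-1$; presumably the intended statement is $\|\nabla_\ell f(\vx^{(t)})\|_F^2$ on the left (or the window starts at a point tied to $0$). In any case the mechanism is the same, so I will describe the argument treating the left-hand point as one endpoint $\vx^{(t)}$ of the window $\{\vx^{(t)},\dots,\vx^{(t+\tau-1)}\}$.

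The plan is to use a telescoping identity together with two elementary inequalities: the Cauchy--Schwarz inequality (or Jensen) in the form $\|\sum_{i=1}^k \va_i\|^2 \le k\sum_{i=1}^k\|\va_i\|^2$, and the averaging trick that lets us replace a single term by an average over the window. First I would write, for each $r\in\{0,\dots,\tau-1\}$, the telescoping sum $\nabla_\ell f(\vx^{(t)}) = \nabla_\ell f(\vx^{(t+r)}) - \sum_{j=0}^{r-1}\big(\nabla_\ell f(\vx^{(t+j+1)}) - \nabla_\ell f(\vx^{(t+j)})\big)$. Taking squared Frobenius norms and applying $\|\va+\vb\|_F^2 \le 2\|\va\|_F^2 + 2\|\vb\|_F^2$ gives
\begin{align*}
    \|\nabla_\ell f(\vx^{(t)})\|_F^2 \le 2\|\nabla_\ell f(\vx^{(t+r)})\|_F^2 + 2\Big\|\sum_{j=0}^{r-1}\big(\nabla_\ell f(\vx^{(t+j+1)})-\nabla_\ell f(\vx^{(t+j)})\big)\Big\|_F^2.
\end{align*}
Then I would bound the second term by Cauchy--Schwarz: it is at most $2r\sum_{j=0}^{r-1}\|\nabla_\ell f(\vx^{(t+j+1)})-\nabla_\ell f(\vx^{(t+j)})\|_F^2 \le 2(\tau-1)\sum_{j=0}^{\tau-2}\|\nabla_\ell f(\vx^{(t+j+1)})-\nabla_\ell f(\vx^{(t+j)})\|_F^2$, using $r\le\tau-1$ and extending the sum.

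Next, since the resulting bound holds for every $r\in\{0,\dots,\tau-1\}$, I would average it over all such $r$. The first term averages to $\frac{2}{\tau}\sum_{r=0}^{\tau-1}\|\nabla_\ell f(\vx^{(t+r)})\|_F^2$, which is exactly the first term on the right-hand side of \eqref{eq:lm-connection}; the second term does not depend on $r$, so averaging leaves it unchanged, but a factor of $2$ is absorbed---here I should be a bit careful about constants. To land exactly on the stated coefficient $(\tau-1)$ rather than $2(\tau-1)$, I would instead not use the $2$-in-front splitting for the difference part but rather observe that averaging the displayed inequality over $r$ and being slightly more economical (e.g. using that when $r=0$ the difference sum is empty, and using $\sum_{r=0}^{\tau-1} r \le (\tau-1)\cdot$something) yields the claimed constant; the precise bookkeeping of the constant is the only delicate point, and it is purely arithmetic.

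The main obstacle, such as it is, is minor: getting the constants to match the stated $\frac{2}{\tau}$ and $(\tau-1)$ exactly, which requires choosing the right elementary inequality at each split (for instance, applying $\|\va+\vb\|_F^2\le 2\|\va\|_F^2+2\|\vb\|_F^2$ versus distributing the averaging before squaring). There is no conceptual difficulty---no smoothness, no stochasticity, purely the triangle/Cauchy--Schwarz inequalities---so the proof is short. I would finish by noting the bound holds verbatim for each layer index $\ell$ and each $t,\tau>0$, since nothing in the argument used properties of $f$ beyond that $\nabla_\ell f$ is a well-defined matrix-valued function.
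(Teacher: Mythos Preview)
Your approach is correct and matches the paper's proof: telescope $\nabla_\ell f(\vx^{(t)})$ to $\nabla_\ell f(\vx^{(t+r)})$, apply $\|\va+\vb\|_F^2\le 2\|\va\|_F^2+2\|\vb\|_F^2$ and Cauchy--Schwarz on the telescoping sum, then sum over $r=0,\dots,\tau-1$ and divide by $\tau$. The only clarification needed on the constant is that you should \emph{not} drop the ``2-in-front'' splitting---instead, keep the coefficient $2r$ on the difference term, sum over $r$ \emph{before} bounding $r\le\tau-1$, swap the double sum, and use $\sum_{r=1}^{\tau-1}r=\tau(\tau-1)/2$; the $2$ and the $1/2$ then cancel to give exactly $(\tau-1)$.
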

\begin{proof}
For any $r=1,\cdots,\tau-1$, it holds that
\begin{align}
    \|\nabla_{\ell}f(\vx^{(t)})\|_F^2=&\|\nabla_{\ell}f(\vx^{(t+r)})-(\nabla_{\ell}f(\vx^{(t+r)})-\nabla_{\ell}f(\vx^{(t)}))\|_F^2\nonumber\\
    \le&2\|\nabla_{\ell}f(\vx^{(t+r)})\|_F^2+2\|\nabla_{\ell}f(\vx^{(t+r)})-\nabla_{\ell}f(\vx^{(t)})\|_F^2.\label{eq:pflm-connection-1}
\end{align}
For any $r=2,\cdots,\tau-1$, it holds that
\begin{align}
    \|\nabla_{\ell}f(\vx^{(t+r)})-\nabla_{\ell}f(\vx^{(t)})\|_F^2=&\left\|\sum_{i=1}^r\nabla_{\ell}f(\vx^{(t+i)})-\nabla_{\ell}f(\vx^{(t+i-1)})\right\|_F^2\nonumber\\
    \le&r\sum_{i=1}^r\|\nabla_{\ell}f(\vx^{(t+i)})-\nabla_{\ell}f(\vx^{(t+i-1)})\|_F^2,\label{eq:pflm-connection-2}
\end{align}
where the inequality uses Cauchy's inequality.
Summing \eqref{eq:pflm-connection-1} from $r=1$ to $\tau-1$ and applying \eqref{eq:pflm-connection-2} yields
    \begin{align*}
        \tau\|\nabla_{\ell}f(\vx^{(t)})\|_F^2\le&2\sum_{r=0}^{\tau-1}\|\nabla_{\ell}f(\vx^{(t+r)})\|_F^2+2\sum_{i=1}^{\tau-1}\sum_{j=1}^ii\|\nabla_{\ell}f(\vx^{(t+j)})-\nabla_{\ell}f(\vx^{(t+j-1)})\|_F^2\nonumber\\
        \le&2\sum_{r=0}^{\tau-1}\|\nabla_{\ell}f(\vx^{(t+r)})\|_F^2+2\sum_{j=1}^{\tau-1}\sum_{i=1}^{\tau-1}i\|\nabla_{\ell}f(\vx^{(t+j)})-\nabla_{\ell}f(\vx^{(t+j-1)})\|_F^2\nonumber\\
        =&2\sum_{r=0}^{\tau-1}\|\nabla_{\ell}f(\vx^{(t+r)})\|_F^2+\tau(\tau-1)\sum_{j=1}^{\tau-1}\|\nabla_{\ell}f(\vx^{(t+j)})-\nabla_{\ell}f(\vx^{(t+j-1)})\|_F^2,
    \end{align*}
    which is exactly \eqref{eq:lm-connection}.
\end{proof}

\begin{lemma}[Projection orthogonality]\label{lm:orthogonality}
    If $\mP\in\mathrm{St}_{m,r}$, it holds for any $\mA,\mB\in\R^{m\times n}$ that
    \begin{align}
        \|\mP\mP^\top\mA+(\mI-\mP\mP^\top)\mB\|_F^2=\|\mP\mP^\top\mA\|_F^2+\|(\mI-\mP\mP^\top)\mB\|_F^2.\label{eq:lm-orthogonality}
    \end{align}
\end{lemma}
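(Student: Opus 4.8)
\textbf{Proof plan for Lemma \ref{lm:orthogonality}.} The statement is a Pythagorean identity, so the plan is to show that the two matrices $\mP\mP^\top\mA$ and $(\mI-\mP\mP^\top)\mB$ are orthogonal with respect to the Frobenius inner product $\langle\mX,\mY\rangle_F=\mathrm{tr}(\mX^\top\mY)$, and then expand the squared norm on the left-hand side of \eqref{eq:lm-orthogonality}.

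First I would record the two algebraic facts about $\mP\mP^\top$ that follow from $\mP\in\mathrm{St}_{m,r}$, i.e.\ $\mP^\top\mP=\mI_r$: namely that $\mP\mP^\top$ is symmetric, $(\mP\mP^\top)^\top=\mP\mP^\top$, and idempotent, $(\mP\mP^\top)(\mP\mP^\top)=\mP(\mP^\top\mP)\mP^\top=\mP\mP^\top$. Consequently $\mP\mP^\top(\mI-\mP\mP^\top)=\mP\mP^\top-\mP\mP^\top=0$.

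Next I would compute the cross term:
\begin{align*}
\langle\mP\mP^\top\mA,(\mI-\mP\mP^\top)\mB\rangle_F
=\mathrm{tr}\big(\mA^\top\mP\mP^\top(\mI-\mP\mP^\top)\mB\big)
=\mathrm{tr}(\mA^\top\cdot 0\cdot\mB)=0,
\end{align*}
using symmetry of $\mP\mP^\top$ to move it onto $\mA^\top$ and then the vanishing product above. Finally, expanding $\|\mP\mP^\top\mA+(\mI-\mP\mP^\top)\mB\|_F^2=\|\mP\mP^\top\mA\|_F^2+2\langle\mP\mP^\top\mA,(\mI-\mP\mP^\top)\mB\rangle_F+\|(\mI-\mP\mP^\top)\mB\|_F^2$ and dropping the middle term yields \eqref{eq:lm-orthogonality}.

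There is no real obstacle here; the only point requiring a little care is justifying the manipulation of the trace (cyclicity and transpose invariance) and making explicit that $\mP^\top\mP=\mI_r$ is exactly what makes $\mP\mP^\top$ a genuine orthogonal projector. Everything else is a one-line expansion.
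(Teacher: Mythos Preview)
Your proposal is correct and follows essentially the same approach as the paper: both prove the Pythagorean identity by showing the Frobenius inner product of $\mP\mP^\top\mA$ and $(\mI-\mP\mP^\top)\mB$ vanishes via $\mP\mP^\top(\mI-\mP\mP^\top)=0$. The paper's version is simply more terse, jumping directly to $\mathrm{tr}(\mA^\top\mP\mP^\top(\mI-\mP\mP^\top)\mB)=\mathrm{tr}(0)=0$ without spelling out symmetry and idempotence as you do.
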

\begin{proof}
    By definition we have $\mP^\top\mP=\mI$. It suffices to note that
    \begin{align*}
        \langle\mP\mP^\top\mA,(\mI-\mP\mP^\top)\mB\rangle_F=\mathrm{tr}(\mA^\top\mP\mP^\top(\mI-\mP\mP^\top)\mB)=\mathrm{tr}(0)=0.
    \end{align*}
\end{proof}

\begin{lemma}[Descent lemma]\label{lm:descent}
    Under Assumption \ref{asp:ls}, for update 
    \begin{align*}
        \vx^{(t+1)}=\vx^{(t)}-\eta\tilde{\vm}^{(t)},
    \end{align*}
    it holds that
    \begin{align}
        f(\vx^{(t+1)})\le& f(\vx^{(t)})-\left(\frac{1}{2\eta}-\frac{L}{2}\right)\|\vx^{(t+1)}-\vx^{(t)}\|_2^2+\frac{\eta}{2}\|\tilde{\vm}^{(t)}-\nabla f(\vx^{(t)})\|_2^2\nonumber\\
        &-\frac{\eta}{2}\|\nabla f(\vx^{(t)})\|_2^2.\label{eq:lm-descent}
    \end{align}
\end{lemma}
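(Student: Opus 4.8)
\textbf{Proof proposal for Lemma~\ref{lm:descent}.} The plan is to use the standard descent lemma that follows from $L$-smoothness (Assumption~\ref{asp:ls}) and then complete a square to isolate the three terms that appear on the right-hand side of \eqref{eq:lm-descent}. First I would invoke the well-known consequence of Assumption~\ref{asp:ls}, namely
\begin{align*}
    f(\vx^{(t+1)})\le f(\vx^{(t)})+\langle\nabla f(\vx^{(t)}),\vx^{(t+1)}-\vx^{(t)}\rangle+\frac{L}{2}\|\vx^{(t+1)}-\vx^{(t)}\|_2^2.
\end{align*}
Then I would substitute $\vx^{(t+1)}-\vx^{(t)}=-\eta\tilde{\vm}^{(t)}$, so that the inner-product term becomes $-\eta\langle\nabla f(\vx^{(t)}),\tilde{\vm}^{(t)}\rangle$.

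The key algebraic step is to rewrite this cross term via the polarization identity $-\langle\va,\vb\rangle=\frac12\|\va-\vb\|_2^2-\frac12\|\va\|_2^2-\frac12\|\vb\|_2^2$ applied with $\va=\nabla f(\vx^{(t)})$ and $\vb=\tilde{\vm}^{(t)}$. This gives
\begin{align*}
    -\eta\langle\nabla f(\vx^{(t)}),\tilde{\vm}^{(t)}\rangle=\frac{\eta}{2}\|\tilde{\vm}^{(t)}-\nabla f(\vx^{(t)})\|_2^2-\frac{\eta}{2}\|\nabla f(\vx^{(t)})\|_2^2-\frac{\eta}{2}\|\tilde{\vm}^{(t)}\|_2^2.
\end{align*}
Finally I would note that $\|\tilde{\vm}^{(t)}\|_2^2=\frac{1}{\eta^2}\|\vx^{(t+1)}-\vx^{(t)}\|_2^2$, so $-\frac{\eta}{2}\|\tilde{\vm}^{(t)}\|_2^2=-\frac{1}{2\eta}\|\vx^{(t+1)}-\vx^{(t)}\|_2^2$. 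Collecting this with the $\frac{L}{2}\|\vx^{(t+1)}-\vx^{(t)}\|_2^2$ term from smoothness yields the coefficient $-(\frac{1}{2\eta}-\frac{L}{2})$ on $\|\vx^{(t+1)}-\vx^{(t)}\|_2^2$, which is exactly \eqref{eq:lm-descent}.

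This lemma is essentially routine; there is no real obstacle. The only thing to be careful about is bookkeeping: making sure the smoothness inequality is applied to the full parameter vector $\vx\in\R^d$ (which is legitimate since Assumption~\ref{asp:ls} is stated for $f:\R^d\to\R$), and tracking the factor of $\eta$ versus $1/\eta$ correctly when converting between $\|\tilde{\vm}^{(t)}\|_2^2$ and $\|\vx^{(t+1)}-\vx^{(t)}\|_2^2$. No conditioning or expectation is needed here — the statement is deterministic given the update — so the $\E$ that appears elsewhere in the paper plays no role in this particular lemma.
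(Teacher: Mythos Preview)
Your proposal is correct and matches the paper's proof: both start from the $L$-smoothness inequality and then use the polarization identity $-\eta\langle\nabla f(\vx^{(t)}),\tilde{\vm}^{(t)}\rangle=\frac{\eta}{2}\|\tilde{\vm}^{(t)}-\nabla f(\vx^{(t)})\|_2^2-\frac{\eta}{2}\|\nabla f(\vx^{(t)})\|_2^2-\frac{\eta}{2}\|\tilde{\vm}^{(t)}\|_2^2$ together with $\|\tilde{\vm}^{(t)}\|_2^2=\frac{1}{\eta^2}\|\vx^{(t+1)}-\vx^{(t)}\|_2^2$. The paper writes the inner-product step as a split $\langle\tfrac{\tilde{\vm}^{(t)}}{2},\vx^{(t+1)}-\vx^{(t)}\rangle+\langle\nabla f(\vx^{(t)})-\tfrac{\tilde{\vm}^{(t)}}{2},\vx^{(t+1)}-\vx^{(t)}\rangle$, but this is just a cosmetic repackaging of the same identity you use.
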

\begin{proof}
    By $L$-smoothness of $f$ (Assumption \ref{asp:ls}) we have
    \begin{align*}
        &f(\vx^{(t+1)})-f(\vx^{(t)})\nonumber\\
        \le&\langle\nabla f(\vx^{(t)}),\vx^{(t+1)}-\vx^{(t)}\rangle+\frac{L}{2}\|\vx^{(t+1)}-\vx^{(t)}\|_2^2\nonumber\\
        =&\left\langle\frac{\tilde{\vm}^{(t)}}{2},\vx^{(t+1)}-\vx^{(t)}\right\rangle+\left\langle\nabla f(\vx^{(t)})-\frac{\tilde{\vm}^{(t)}}{2},\vx^{(t+1)}-\vx^{(t)}\right\rangle+\frac{L}{2}\|\vx^{(t+1)}-\vx^{(t)}\|_2^2\nonumber\\
        =&-\left(\frac{1}{2\eta}-\frac{L}{2}\right)\|\vx^{(t+1)}-\vx^{(t)}\|_2^2+\frac{\eta}{2}\|\nabla f(\vx^{(t)})-\tilde{\vm}^{(t)}\|_2^2-\frac{\eta}{2}\|\nabla f(\vx^{(t)})\|_2^2,
    \end{align*}
    which is exactly \eqref{eq:lm-descent}.
\end{proof}

\begin{lemma}[Error of GoLore's projection]\label{lm:rand}
    Let $\mP\sim\gU(\mathrm{St}_{m,r})$, $\mQ\sim\gU(\mathrm{St}_{n,r})$, it holds for all $\mG\in\R^{m\times n}$ that 
    \begin{align}
        \E[\mP\mP^\top]=\frac{r}{m}\cdot\mI,\quad 
        \E[\mQ\mQ^\top]=&\frac{r}{n}\cdot\mI,\label{eq:lm-rand-expectation}
    \end{align}
    and 
    \begin{align}
        \E[\|\mP\mP^\top\mG-\mG\|_F^2]=\left(1-\frac{r}{m}\right)\|\mG\|_F^2,\quad\E[\|\mG\mQ\mQ^\top-\mG\|_F^2]=\left(1-\frac{r}{n}\right)\|\mG\|_F^2.\label{eq:lm-rand-error}
    \end{align}
\end{lemma}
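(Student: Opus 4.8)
\textbf{Proof plan for Lemma \ref{lm:rand}.}

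The plan is to exploit the rotational invariance of the uniform distribution on the Stiefel manifold. First I would establish the expectation formula \eqref{eq:lm-rand-expectation}. Consider $\mP \sim \gU(\mathrm{St}_{m,r})$ and let $\mO \in \R^{m \times m}$ be any orthogonal matrix. By the defining invariance property of the Haar/uniform measure on $\mathrm{St}_{m,r}$, $\mO\mP$ has the same distribution as $\mP$, hence $\E[\mO\mP\mP^\top\mO^\top] = \E[\mP\mP^\top]$, i.e.\ $\mO\,\E[\mP\mP^\top]\,\mO^\top = \E[\mP\mP^\top]$ for every orthogonal $\mO$. A symmetric matrix commuting with all orthogonal matrices must be a scalar multiple of the identity, so $\E[\mP\mP^\top] = c\,\mI_m$ for some scalar $c$. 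To pin down $c$, take traces: $\mathrm{tr}(\E[\mP\mP^\top]) = \E[\mathrm{tr}(\mP^\top\mP)] = \E[\mathrm{tr}(\mI_r)] = r$, while $\mathrm{tr}(c\,\mI_m) = cm$, giving $c = r/m$. The same argument with $n$ in place of $m$ gives $\E[\mQ\mQ^\top] = (r/n)\,\mI_n$.

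Next I would derive the error identity \eqref{eq:lm-rand-error} from the expectation formula. Fix $\mG \in \R^{m\times n}$. Since $\mP \in \mathrm{St}_{m,r}$ we have $\mP^\top\mP = \mI_r$, so $\mP\mP^\top$ is an orthogonal projector and $(\mI - \mP\mP^\top)$ is the complementary projector; in particular $(\mI-\mP\mP^\top)$ is symmetric and idempotent. Therefore, for any realization of $\mP$,
\begin{align*}
    \|\mP\mP^\top\mG - \mG\|_F^2 = \|(\mI-\mP\mP^\top)\mG\|_F^2 = \mathrm{tr}\!\left(\mG^\top(\mI-\mP\mP^\top)^2\mG\right) = \mathrm{tr}\!\left(\mG^\top(\mI-\mP\mP^\top)\mG\right).
\end{align*}
Taking expectations and using linearity of trace and expectation together with \eqref{eq:lm-rand-expectation},
\begin{align*}
    \E\!\left[\|\mP\mP^\top\mG - \mG\|_F^2\right] = \mathrm{tr}\!\left(\mG^\top\!\left(\mI - \tfrac{r}{m}\mI\right)\mG\right) = \left(1 - \tfrac{r}{m}\right)\mathrm{tr}(\mG^\top\mG) = \left(1 - \tfrac{r}{m}\right)\|\mG\|_F^2.
\end{align*}
For the second identity, $\|\mG\mQ\mQ^\top - \mG\|_F^2 = \|\mG(\mI - \mQ\mQ^\top)\|_F^2 = \mathrm{tr}\!\left((\mI-\mQ\mQ^\top)\mG^\top\mG\right)$ by the same idempotency/symmetry of $(\mI - \mQ\mQ^\top)$ and the cyclic property of trace; taking expectations and substituting $\E[\mQ\mQ^\top] = (r/n)\mI_n$ yields $\left(1 - r/n\right)\|\mG\|_F^2$.

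The only nontrivial ingredient is the invariance of $\gU(\mathrm{St}_{m,r})$ under left multiplication by orthogonal matrices, which is the defining property of the uniform (normalized Haar) measure on the Stiefel manifold; it can also be read off directly from the representation $\mX = \mZ(\mZ^\top\mZ)^{-1/2}$ in the cited Proposition, since the Gaussian matrix $\mZ$ satisfies $\mO\mZ \stackrel{d}{=} \mZ$ and the map $\mZ \mapsto \mZ(\mZ^\top\mZ)^{-1/2}$ is equivariant: $(\mO\mZ)((\mO\mZ)^\top(\mO\mZ))^{-1/2} = \mO\mZ(\mZ^\top\mZ)^{-1/2}$. Everything else is elementary linear algebra (projector idempotency, the cyclic trace identity, and the fact that the only symmetric matrices commuting with $O(m)$ are scalar multiples of $\mI$), so no step poses a real obstacle; the main thing to be careful about is invoking the correct invariance and the trace normalization that fixes the constant.
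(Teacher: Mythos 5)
Your proof is correct and matches the paper's argument for the error identity \eqref{eq:lm-rand-error} almost verbatim: both pass through idempotency of $\mI-\mP\mP^\top$, the trace identity $\|(\mI-\mP\mP^\top)\mG\|_F^2=\mathrm{tr}(\mG^\top(\mI-\mP\mP^\top)\mG)$, and linearity to reduce everything to $\E[\mP\mP^\top]$. The only departure is in \eqref{eq:lm-rand-expectation}: the paper simply cites Theorem 2.2.2 of \citet{chikuse2012statistics} for $\E[\mP\mP^\top]=(r/m)\mI$, whereas you prove it directly from left-invariance of the uniform measure (equivariance of $\mZ\mapsto\mZ(\mZ^\top\mZ)^{-1/2}$ under $\mZ\mapsto\mO\mZ$, then the fact that the only symmetric matrices commuting with $O(m)$ are scalar multiples of $\mI$, then a trace computation to fix the constant). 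Your self-contained derivation is valid and arguably makes the lemma more transparent, but it does not change the structure of the argument.
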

\begin{proof}
    We refer the proof of \eqref{eq:lm-rand-expectation} to Theorem 2.2.2 in \citet{chikuse2012statistics}. By $\mP^\top\mP=\mI$, we have
    \begin{align}
        \E[\|\mP\mP^\top\mG-\mG\|_F^2]=&\E[\mathrm{tr}(\mG^\top(\mI-\mP\mP^\top)^2\mG)]\nonumber\\
        =&\E[\mathrm{tr}(\mG^\top(\mI-\mP\mP^\top)\mG)]\nonumber\\
        =&\mathrm{tr}(\mG^\top(\mI-\E[\mP\mP^\top])\mG).\label{eq:pflm-rand-1}
    \end{align}
    Applying \eqref{eq:lm-rand-expectation} to \eqref{eq:pflm-rand-1} yields
    \begin{align*}
        \E[\|\mP\mP^\top\mG-\mG\|_F^2]=&\mathrm{tr}\left(\mG^\top\left(\mI-\frac{r}{m}\mI\right)\mG\right)\nonumber\\
        =&\left(1-\frac{r}{m}\right)\mathrm{tr}(\mG^\top\mG)\nonumber\\
        =&\left(1-\frac{r}{m}\right)\|\mG\|_F^2.
    \end{align*}
    The other part of \eqref{eq:lm-rand-error} can be proved similarly.
\end{proof}

\subsection{Non-convergence of GaLore}
In this subsection, we present the proof for Theorem \ref{thm:non-converge}. We first restate Theorem \ref{thm:non-converge} as follows:
 \begin{theorem}[Non-convergence of GaLore]\label{thmres:non-converge}
     There exists an objective function $f:\R^d\rightarrow\R$ satisfying Assumptions \ref{asp:lb}, \ref{asp:ls}, a stochastic gradient oracle $(F,\gD)$ satisfying Assumption \ref{asp:sg}, an initial point $\vx^{(0)}\in\R^d$, a constant $\epsilon_0>0$ such that for GaLore with any rank $r_{\ell}<\min\{m_{\ell},n_{\ell}\}$, subspace changing frequency $\tau$, any subspace optimizer $\rho$ with arbitrary hyperparameters and any $t>0$, it holds that
     \begin{align*}
         \|\nabla f(\vx^{(t)})\|_2^2\ge\epsilon_0.
     \end{align*}
 \end{theorem}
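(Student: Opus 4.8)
The plan is to make the intuition from Section~\ref{sec:non-convergence} rigorous using the counterexample~\eqref{eq:counterexample}. I would take $f(\mX)=\tfrac12\|\mA\mX\|_F^2+\langle\mB,\mX\rangle_F$ with $\mA=(\mI_{n-r}\ \ 0)$, $\mB=\mathrm{diag}(\mD,0)$, $\mC=\mathrm{diag}(0,\mI_r)$, a single weight matrix of size $n\times n$, and rank parameter exactly $r$ matching the block sizes. The first step is to verify the structural properties: compute $\nabla f(\mX)=\mA^\top\mA\mX+\mB$, which only has nonzero entries in its first $n-r$ rows; observe that $\nabla F(\mX;\xi)-\nabla f(\mX)=\xi\sigma\mC$ has nonzero entries only in its last $r$ rows and is unbiased with variance $\sigma^2 r$; and check Assumptions~\ref{asp:lb}--\ref{asp:sg} hold (lower boundedness since the quadratic form is PSD plus linear restricted to a subspace that still has a minimizer, $L$-smoothness with $L=\|\mA^\top\mA\|$). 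The key observation driving everything is the block separation: true gradient lives in the top-left block, noise lives in the bottom-right block.

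The second and central step is to show that GaLore's SVD step always picks the ``wrong'' subspace. Since $m=n$, GaLore computes $\mP^{(t)}$ from $\mathrm{SVD}(\mG^{(t)})=\mathrm{SVD}(\nabla f(\mX^{(t)})+\xi^{(t)}\sigma\mC)$. If I can arrange, by induction on $t$, that $\mX^{(t)}$ always has the form $\mathrm{diag}(\mX^{(t)}_{11},\ 0)$ where the bottom-right $r\times r$ block is exactly the zero of the noise-free dynamics and remains zero, then $\nabla f(\mX^{(t)})$ has its last $r$ rows and columns zero, so $\mG^{(t)}=\mathrm{diag}(\ast,\ \pm\sigma\mI_r)$ is block diagonal. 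The question is which $r$ singular vectors SVD selects. Here I would choose $\mD$ (equivalently $\mX^{(0)}$, which I can take to be $\vzero$ so that $\mX^{(0)}_{11}$ large means... actually) so that the singular values of the top-left block $\mA^\top\mA\mX^{(t)}_{11}+\mD$ never exceed $\sigma$ --- this needs care because the iterates move, so I would instead start at $\mX^{(0)}=\vzero$, track that the top block converges toward the noise-free minimizer $-\mD$ region but stays bounded, and scale $\sigma$ large enough (or $\mD$ small enough) that $\sigma>\|\mA^\top\mA\mX^{(t)}_{11}+\mD\|_2$ for all $t$ along the trajectory. Then the top $r$ singular vectors of $\mG^{(t)}$ are precisely the standard basis vectors $e_{n-r+1},\dots,e_n$, so $\mP^{(t)}=\mQ^{(t)}=(0;\mI_r)$, the projection onto the bottom block, for every $t$ regardless of the period $\tau$.

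The third step closes the induction and extracts the bound. With $\mP^{(t)}$ the projection onto the last $r$ coordinates, $(\mP^{(t)})^\top\mG^{(t)}=(\mP^{(t)})^\top(\nabla f(\mX^{(t)})+\xi^{(t)}\sigma\mC)=\pm\sigma(\mP^{(t)})^\top\mC$ involves only the bottom block, the subspace optimizer $\rho$ (whatever it is, Adam/MSGD/SGD, any hyperparameters) produces an update $\mP^{(t)}\rho^{(t)}(\cdot)$ that is supported in the last $r$ rows, and the update to $\mX^{(t)}$ only touches the bottom-right $r\times r$ block --- which is decoupled from the top block in both $f$ and the noise, so the top block never changes: $\mX^{(t)}_{11}=\mX^{(0)}_{11}$ for all $t$. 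Therefore $\nabla f(\mX^{(t)})$ is constant, equal to $\nabla f(\mX^{(0)})=\mathrm{diag}(\mA^\top\mA\mX^{(0)}_{11}+\mD,\ 0)$, and choosing $\mX^{(0)}_{11}$ so this is nonzero (e.g.\ $\mX^{(0)}=\vzero$ gives $\nabla f(\mX^{(0)})=\mB\neq0$ provided $\mD\neq0$) yields $\|\nabla f(\mX^{(t)})\|_2^2=\|\mD\|_F^2=:\epsilon_0>0$ for all $t>0$. The induction hypothesis ``bottom block is zero'' is maintained because the noise-free minimizer in that block is at zero and, starting from zero there with any reasonable optimizer... wait --- actually the bottom block \emph{does} move under the stochastic updates, so the cleaner claim is simply: the update never touches the top block, hence the top block stays at $\mX^{(0)}_{11}$, hence $\nabla f$ is constant; and separately I must still guarantee SVD keeps selecting the bottom block, which requires the \emph{top} block's contribution to $\mG^{(t)}$ to have operator norm below $\sigma$ --- this is automatic since the top block is frozen at $\mX^{(0)}_{11}$, so I just pick $\sigma>\|\mA^\top\mA\mX^{(0)}_{11}+\mD\|_2=\|\mD\|_2$ once and for all. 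The main obstacle is this interdependence: I need the frozen-top-block property and the SVD-selection property to bootstrap each other cleanly, and I must double-check the tie-breaking convention in SVD and the edge case where the bottom-block gradient magnitude $\sigma$ could be matched by a top singular value --- strict inequality $\sigma>\|\mD\|_2$ handles it, and the rest is routine verification that $\rho$'s output stays in the projected subspace, which follows directly from GaLore's update rule.
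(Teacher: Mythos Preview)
Your approach shares the paper's core mechanism---arrange for the true gradient and the noise to live in orthogonal subspaces, with the noise dominating in operator norm, so that SVD always selects the noise subspace and the gradient-relevant coordinates are never updated. For the specific rank matching your noise-block size, the induction you sketch is correct and clean: once $\sigma>\|\mD\|_2$, the first $n-r$ rows of $\mX^{(t)}$ are frozen at zero, $\nabla f(\mX^{(t)})\equiv\mB$, and you get $\epsilon_0=\|\mD\|_F^2$.

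The gap is the quantifier on the rank. The theorem asserts that one fixed instance $(f,F,\gD,\vx^{(0)})$ defeats GaLore for \emph{every} rank $r_\ell<\min\{m_\ell,n_\ell\}=n$, but your noise $\xi\sigma\mC=\xi\sigma\,\mathrm{diag}(0,\mI_r)$ has rank exactly $r$. If GaLore is run with rank $r'>r$, the top $r'$ singular vectors of $\mG^{(t)}=\mathrm{diag}(\mD,\xi^{(t)}\sigma\mI_r)$ necessarily include $r'-r$ directions from the $\mD$-block, the update then touches the first $n-r$ rows, and your induction breaks. So as written the construction only handles $r'\le r$.

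The paper fixes this by collapsing the true gradient to a single direction: it takes $f(\mX)=\tfrac{L}{2}\mathrm{tr}(\mX^\top\vp\vp^\top\mX)$ with $\vp=e_1$, so $\nabla f(\mX)=L\vp\vp^\top\mX$ is supported on the first row only, and the noise $\xi\tilde{\sigma}\mQ\mQ^\top=\xi\tilde{\sigma}\,\mathrm{diag}(0,1,\sqrt{2},\dots,\sqrt{n-1})$ occupies the remaining $n-1$ rows with \emph{distinct} singular values, all exceeding the gradient's (by choosing $L\lambda<\tilde{\sigma}$). Then for \emph{any} $r'<n$, the top $r'$ singular vectors are deterministically $e_n,\dots,e_{n-r'+1}$, never $e_1$, and the first row is frozen. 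You could repair your argument by specializing to $r=n-1$ (so the $\mD$-block is $1\times 1$); the distinct noise eigenvalues in the paper are an additional nicety that removes any SVD tie-breaking ambiguity, though even with ties the relevant projection stays off $e_1$.
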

\begin{proof}
    Consider target function $f(\mX)=\frac{L}{2}\mathrm{tr}(\mX^\top \vp\vp^\top \mX)$ where $L>0$, $\mX\in\R^{n\times n}$ with $n>1$  and $\vp=(1,0,\cdots,0)^\top\in\R^n$.
It holds that 
\begin{align*}
    f(\mX)=\frac{L}{2}\|\vp^\top\mX\|_2^2\ge0, 
\end{align*}
thus $f$ satisfies Assumption \ref{asp:lb}. Since $\nabla f(\mX)=L\vp\vp^\top\mX$, it holds that
\begin{align*}
    \|\nabla f(\mX)-\nabla f(\mY)\|_F=L\|\vp\vp^\top(\mX-\mY)\|_F\le L\|\vp\vp^\top\|_2\|\mX-\mY\|_F=L\|\mX-\mY\|_F,
\end{align*}
thus $f$ satisfies Assumption \ref{asp:ls}.

Consider the following stochastic gradient oracle:
\begin{align*}
    F(\mX;\xi)=&f(\mX)+\xi\tilde{\sigma}\cdot\mathrm{tr}(\mQ\mQ^\top\mX),\quad\mbox{and}\quad \mathbb{P}_{\xi\sim\gD}[\xi=1]=\mathbb{P}_{\xi\sim\gD}[\xi=-1]=0.5,
\end{align*}
where $\tilde{\sigma}=\sigma/\sqrt{(n-1)n/2}$ and
\begin{align*}
    \mQ=\begin{pmatrix}
        % 0 & 0 & \cdots & 0\\
        % \vdots & \vdots & \ddots & \vdots\\
        % 1 & 0 & \cdots & 0\\
        % 0 & 1 & \cdots & 0\\
        % \vdots & \vdots & \ddots & \vdots\\
        % 0 & 0 & \cdots & 1
        0\\
        \mathrm{diag}\left(1,\sqrt[4]{2},\cdots,\sqrt[4]{n-1}\right)
    \end{pmatrix}\in\R^{n\times(n-1)}.
\end{align*}
Note that $\nabla F(\mX;\xi)=\nabla f(\mX)+\xi\tilde{\sigma}\mQ\mQ^\top$, it holds for any $\mX\in\R^{n\times n}$ that
\begin{align*}
    \E_{\xi\sim\gD}[\nabla F(\mX;\xi)]=&\nabla f(\mX)\\
    \E_{\xi\sim\gD}[\|\nabla F(\mX;\xi)-\nabla f(\mX)\|_F^2]=&\tilde{\sigma}^2\|\mQ\mQ^\top\|_F^2=\frac{\sigma^2}{(n-1)n/2}\cdot\sum_{i=1}^{n-1}i=\sigma^2,
\end{align*}
thus oracle $(F,\gD)$ satisfies Assumption \ref{asp:sg}.

Consider the following initial point:
\begin{align*}
    \mX^{(0)}=\begin{pmatrix}\lambda\vp^\top\\\mLambda\end{pmatrix},
\end{align*}
where $0<\lambda<\tilde{\sigma}/L$ is a scalar and $\mLambda\in\R^{(n-1)\times n}$ is an arbitrary matrix. We show that GaLore with the above objective function $f$, stochastic gradient oracle $(F,\gD)$, initial point $\mX^{(0)}$, arbitrary rank $0<r<n$, arbitrary subspace changing frequency $\tau$ and arbitrary subspace optimizer $\rho$, can only output points $\mX^{(t)}$ with $\|\nabla f(\mX^{(t)})\|_F^2\ge\epsilon_0$ for $\epsilon_0=L^2\lambda^2>0$.

When $\tau\mid t$, GaLore recomputes the subspace projection matrix at iteration $t$. If the first row of $\mX^{(t)}$ equals $\lambda\vp^\top$, \ie, $\mX^{(t)}[1,:]=\lambda\vp^\top$, the stochastic gradient is given by
\begin{align*}
    \mG^{(t)}=L\vp\vp^\top\mX+\xi^{(t)}\tilde{\sigma}\mQ\mQ^\top=\mathrm{diag}\left(L\lambda,\xi^{(t)}\tilde{\sigma},\sqrt{2}\xi^{(t)}\tilde{\sigma},\cdots,\sqrt{n-1}\xi^{(t)}\tilde{\sigma}\right).
\end{align*}
since $L\lambda<\tilde{\sigma}$, computing SVD yields
\newcommand{\cddots}{\begin{rotate}{85}$\ddots$\end{rotate}}
\begin{align*}
    \mG^{(t)}=&\begin{pmatrix}
        L\lambda & 0 & \cdots & 0\\
        0 & \xi^{(t)}\tilde{\sigma}& \cdots & 0\\
        \vdots & \vdots & \ddots & \vdots\\
        0 & 0 & \cdots & \sqrt{n-1}\xi^{(t)}\tilde{\sigma}
    \end{pmatrix}\\
    =&\underbrace{\begin{pmatrix}
        0 & \cdots & 0 & \zeta_1\\
        0 & \cdots & \zeta_2 & 0\\
        \vdots & \cddots & \vdots & \vdots\\
        \zeta_n & \cdots & 0 & 0
    \end{pmatrix}}_{:=\mU}\underbrace{\begin{pmatrix}
        \sqrt{n-1}\tilde{\sigma} & \cdots & 0 & 0\\
        \vdots & \ddots & \vdots & \vdots\\
        0 & \cdots & \tilde{\sigma} & 0\\
        0 & \cdots & 0 & L\lambda
    \end{pmatrix}}_{:=\mSigma}\underbrace{\begin{pmatrix}
        0 & 0 & \cdots  & \zeta_n\xi^{(t)}\\
        \vdots & \vdots & \cddots & \vdots\\
        0 & \zeta_2\xi^{(t)} & \cdots & 0\\
        \zeta_1 & 0 & \cdots & 0
    \end{pmatrix}}_{:=\mV^\top},
\end{align*}
where $\zeta_1,\cdots,\zeta_n\in\{-1,1\}$. For any rank $r<n$, the projection matrix is
thus 
\begin{align*}
    \mP^{(t)}=\begin{pmatrix}
        0 & 0 & \cdots & 0\\
        \vdots & \vdots & \cddots & \vdots\\
        0 & 0 & \cdots & 0\\
        0 & 0 & \cdots & \zeta_{n-r+1}\\
        \vdots & \vdots & \cddots & \vdots\\
        0 & \zeta_{n-1} & \cdots & 0\\
        \zeta_n & 0 & \cdots & 0
    \end{pmatrix}\in\R^{n\times r}.
\end{align*}
Using this projection matrix, the subspace updates in the following $\tau$ iterations is as
\begin{align*}
    \mX^{(t+\Delta_t)}=\mX^{(t)}+\mP^{(t)}\sum_{s=0}^{\Delta_t-1}\rho^{(t+s)}((\mP^{(t)})^\top\mG^{(t)})\quad\Rightarrow\quad\mX^{(t+\Delta_t)}[1,:]=\mX^{(t)}[1,:]=\lambda\vp^\top,
\end{align*}
for $\Delta_t=1,2,\cdots,\tau$. Since $\mX^{(0)}[1,:]=\lambda\vp^\top$, it holds for all $t>0$ that $\mX^{(t)}[1,:]=\lambda\vp^\top$ and thus
\begin{align*}
    \|\nabla f(\mX^{(t)})\|_F^2=L^2\lambda^2=\epsilon_0.
\end{align*}
\end{proof}

\textbf{Remark 6. }When setting $\mB=0$ in the quadratic problem setting (Sec.~\ref{sec:exp}), the quadratic problem is equivalent to the counter-example we construct in the proof of Theorem \ref{thmres:non-converge}. The illustration in Fig.~\ref{fig:exp-non-convergence} displays the loss curves for this problem.

\begin{figure}[ht]
    \centering
    \includegraphics[width=.7\textwidth]{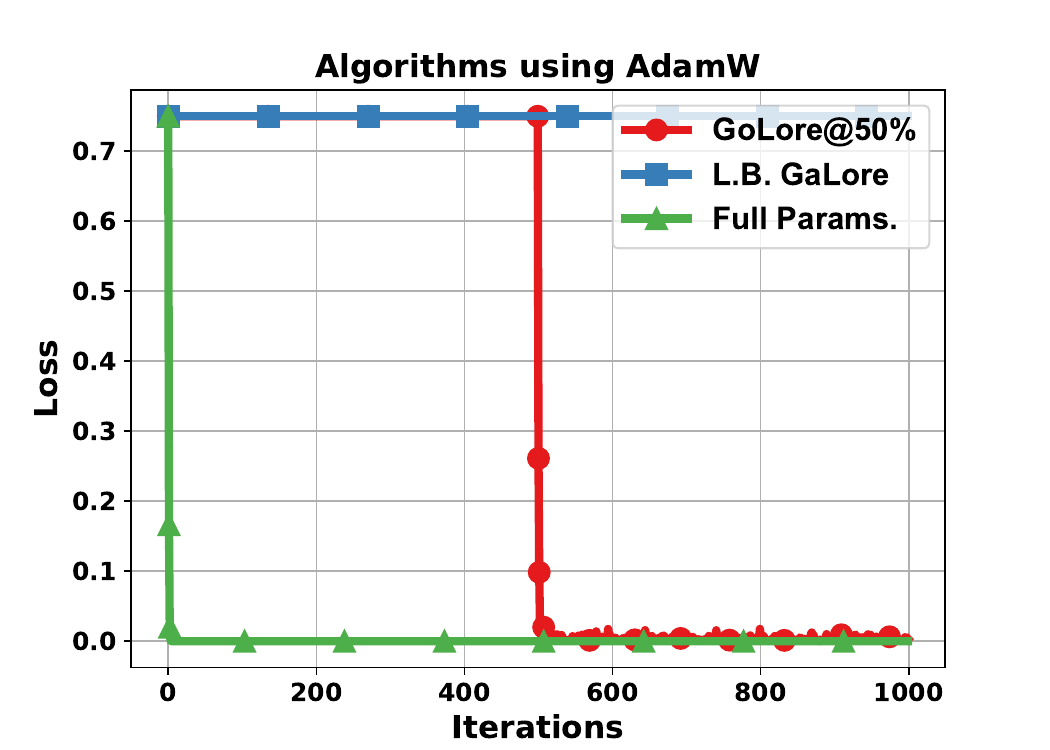}
    \caption{Loss curves of algorithms using AdamW. \textit{GoLore@50\%} uses GaLore in the first half and shifts to GoLore in the last half, \textit{Full Params.} denotes full-parameter training.}
    \label{fig:exp-non-convergence}
\end{figure}

\subsection{Convergence of deterministic GaLore}\label{app:det}
In this subsection, we present the proof for Theorem \ref{thm:det}. GaLore using deterministic gradients and MSGD with MP is specified as Alg.~\ref{alg:det}. 

\begin{algorithm}[t]
    \caption{GaLore using deterministic gradients and MSGD with MP}
    \label{alg:det}
    \renewcommand{\algorithmicrequire}{\textbf{Input:}}
\renewcommand{\algorithmicensure}{\textbf{Output:}}
\begin{algorithmic}
    \REQUIRE Initial point $\vx^{(0)}$, learning rate $\eta$, subspace changing frequency $\tau$, rank $\{r_{\ell}\}_{\ell=1}^{N_L}$, momentum parameter $\beta_1$.
    \ENSURE $\{\vx^{(t)}\}_{t=0}^T$.
    \STATE Initialize optimizer state $\{\mM_{\ell}^{(-1)}\}_{\ell=1}^{N_L}$  to zero;
    \FOR{$t=0,1,\cdots,T-1$}
    \FOR{$\ell=1,2,\cdots,N_L$}
    \STATE $\mG_{\ell}^{(t)}\gets\nabla_{\ell}f(\vx^{(t)})$;
    \IF{$t\equiv0$ (mod $\tau$)}
    \STATE $\mU,\mSigma,\mV\gets\mathrm{SVD}(\mG_{\ell}^{(t)})$; 
    \IF{$m_{\ell}\le n_{\ell}$}
    \STATE $\mP_{\ell}^{(t)}\gets\mU[:,:r_{\ell}]$;
    \STATE
    $\mM_{\ell}^{(t)}\gets(1-\beta_1)(\mP_{\ell}^{(t)})^\top\mP_{\ell}^{(t-1)}\mM_{\ell}^{(t-1)}+\beta_1(\mP_{\ell}^{(t)})^\top\mG_{\ell}^{(t)}$;
    \STATE $\mX_{\ell}^{(t+1)}\gets\mX_{\ell}^{(t)}-\eta\mP_{\ell}^{(t)}\mM_{\ell}^{(t)}$;
    \ELSE
    \STATE $\mQ_{\ell}^{(t)}\gets\mV[:,:r_{\ell}];$
    \STATE $\mM_{\ell}^{(t)}\gets(1-\beta_1)\mM_{\ell}^{(t-1)}(\mQ_{\ell}^{(t-1)})^\top\mQ_{\ell}^{(t)}+\beta_1\mG_{\ell}^{(t)}\mQ_{\ell}^{(t)}$;
    \STATE $\mX_{\ell}^{(t+1)}\gets\mX_{\ell}^{(t)}-\eta\mM_{\ell}^{(t)}(\mQ_{\ell}^{(t)})^\top$;
    \ENDIF
    \ELSE
    \IF{$m_{\ell}\le n_{\ell}$}
    \STATE $\mP_{\ell}^{(t)}\gets\mP_{\ell}^{(t-1)}$;
    \STATE $\mM_{\ell}^{(t)}\gets(1-\beta_1)\mM_{\ell}^{(t-1)}+\beta_1(\mP_{\ell}^{(t)})^\top\mG_{\ell}^{(t)}$;
    \STATE $\mX_{\ell}^{(t+1)}\gets\mX_{\ell}^{(t)}-\eta\mP_{\ell}^{(t)}\mM_{\ell}^{(t)}$;
    \ELSE
    \STATE $\mQ_{\ell}^{(t)}\gets\mQ_{\ell}^{(t-1)}$;
    \STATE $\mM_{\ell}^{(t)}\gets(1-\beta_1)\mM_{\ell}^{(t-1)}+\beta_1\mG_{\ell}^{(t)}\mQ_{\ell}^{(t)}$;
    \STATE $\mX_{\ell}^{(t+1)}\gets\mX_{\ell}^{(t)}-\eta\mM_{\ell}^{(t)}(\mQ_{\ell}^{(t)})^\top$;
    \ENDIF
    \ENDIF
    \ENDFOR
    \ENDFOR
    \end{algorithmic}
\end{algorithm}

\begin{lemma}[Momentum contraction]\label{lm:det-mc}
In deterministic GaLore using MSGD with MP (Alg.~\ref{alg:det}), if $0<\beta_1\le 1$, term $\tilde{\mM}_{\ell}^{(t)}$ has the following contraction properties:
\begin{itemize}
    \item When $t=0$, it holds that
    \begin{align}
        \|\tilde{\mM}_{\ell}^{(0)}-\nabla_{\ell}f(\mX^{(0)})\|_F^2\le&(\tau-1)(1-\delta_{\ell}\beta_1)\sum_{r=0}^{\tau-2}\|\nabla_{\ell}f(\vx^{(r+1)})-\nabla_{\ell}f(\vx^{(r)})\|_F^2\nonumber\\
        &+\frac{2(1-\delta_{\ell}\beta_1)}{\tau}\sum_{r=0}^{\tau-1}\|\nabla_{\ell}f(\vx^{(r)})\|_F^2;\label{eq:lm-det-mc-1}
    \end{align}
    \item When $t=k\tau$, $k\in\mathbb{N}^*$, it holds that
    \begin{align}
        &\|\tilde{\mM}_{\ell}^{(t)}-\nabla_{\ell}f(\vx^{(t)})\|_F^2-\left(1-\left(1-\frac{\delta_{\ell}}{4}\right)\beta_1\right)\|\tilde{\mM}_{\ell}^{(t-1)}-\nabla_{\ell}f(\vx^{(t-1)})\|_F^2\nonumber\\
        \le&\frac{2(1-\delta_{\ell})}{\tau}\sum_{r=0}^{\tau-1}\|\nabla_lf(\vx^{(k\tau+r)})\|_F^2+\frac{5(1-\beta_1)}{\delta_{\ell}\beta_1}\|\nabla_{\ell}f(\vx^{(t)})-\nabla_{\ell}f(\vx^{(t-1)})\|_F^2\nonumber\\
        &+(\tau-1)(1-\delta_{\ell})\sum_{r=0}^{\tau-2}\|\nabla_{\ell}f(\vx^{(k\tau+r+1)})-\nabla_{\ell}f(\vx^{(k\tau+r)})\|_F^2;\label{eq:lm-det-mc-2}
    \end{align}
    \item When $t=k\tau+r$, $k\in\mathbb{N}$, $1\le r<\tau$, it holds that
    \begin{align}
        &\|\tilde{\mM}_{\ell}^{(t)}-\nabla_{\ell}f(\vx^{(t)})\|_F^2-\left(1-\left(1-\frac{\delta_{\ell}}{4}\right)\beta_1\right)\|\tilde{\mM}_{\ell}^{(t-1)}-\nabla_{\ell}f(\vx^{(t-1)})\|_F^2\nonumber\\
        \le&\left(1-\frac{\delta_{\ell}}{2}\right)\beta_1\|\nabla_{\ell}f(\vx^{(t)})\|_F^2+\frac{5(1-\beta_1)}{\delta_{\ell}\beta_1}\|\nabla_{\ell}f(\vx^{(t)})-\nabla_{\ell}f(\vx^{(t-1)})\|_F^2\nonumber\\
        &+\frac{10r\beta_1}{\delta_{\ell}}\sum_{i=1}^r\|\nabla_{\ell}f(\vx^{(k\tau+i)})-\nabla_{\ell}f(\vx^{(k\tau+i-1)})\|_F^2.\label{eq:lm-det-mc-3}
    \end{align}
\end{itemize}
\end{lemma}
\begin{proof}
    Without loss of generality assume $m_{\ell}\le n_{\ell}$ (the other case can be proved similarly). When $t=0$, we have
    \begin{align}
        \|\tilde{\mM}_{\ell}^{(0)}-\nabla_{\ell}f(\vx^{(0)})\|_F^2=&\|\beta_1(\mP_{\ell}^{(0)}(\mP_{\ell}^{(0)})^\top-\mI)\nabla_{\ell}f(\vx^{(0)})-(1-\beta_1)\nabla_{\ell}f(\vx^{(0)})\|_F^2\nonumber\\
        \le&\beta_1(1-\delta_{\ell})\|\nabla_{\ell}f(\vx^{(0)})\|_F^2+(1-\beta_1)\|\nabla_{\ell}f(\vx^{(0)})\|_F^2\nonumber\\
        =&(1-\delta_{\ell}\beta_1)\|\nabla_{\ell}f(\vx^{(0)})\|_F^2,\label{eq:pflm-det-mc-1}
    \end{align}
    where the inequality uses Lemma \ref{lm:SVD} and Jensen's inequality. Applying Lemma \ref{lm:connection} to \eqref{eq:pflm-det-mc-1} yields \eqref{eq:lm-det-mc-1}.
    
    When $t=k\tau$, $k\in\mathbb{N}^*$, we have
    \begin{align}
        &\|\tilde{\mM}_{\ell}^{(t)}-\nabla_{\ell}f(\vx^{(t)})\|_F^2\nonumber\\
        =&\|\mP_{\ell}^{(t)}(\mP_{\ell}^{(t)})^\top[(1-\beta_1)\tilde{\mM}_{\ell}^{(t-1)}+\beta_1\mG_{\ell}^{(t)}-\nabla_{\ell}f(\vx^{(t)})]-(\mI-\mP_{\ell}^{(t)}(\mP_{\ell}^{(t)})^\top)\nabla_{\ell}f(\vx^{(t)})\|_F^2\nonumber\\
        =&\|\mP_{\ell}^{(t)}(\mP_{\ell}^{(t)})^\top[(1-\beta_1)(\tilde{\mM}_{\ell}^{(t-1)}-\nabla_{\ell}f(\vx^{(t)}))]\|_F^2+\|(\mI-\mP_{\ell}^{(t)}(\mP_{\ell}^{(t)})^\top)\nabla_{\ell}f(\vx^{(t)})\|_F^2\nonumber\\
        \le&\|(1-\beta_1)(\tilde{\mM}_{\ell}^{(t-1)}-\nabla_{\ell}f(\vx^{(t)}))\|_F^2+(1-\delta_{\ell})\|\nabla_{\ell}f(\vx^{(t)})\|_F^2,\label{eq:pflm-det-mc-2}
    \end{align}
    where the second equality uses Lemma \ref{lm:orthogonality} and $\mG_{\ell}^{(t)}=\nabla_{\ell}f(\vx^{(t)})$, the inequality uses Lemma \ref{lm:SVD} and $\|\mP_{\ell}^{(t)}(\mP_{\ell}^{(t)})^\top\|_2=1$. By Young's inequality, we have
    \begin{align}
        &\|\tilde{\mM}_{\ell}^{(t-1)}-\nabla_{\ell}f(\vx^{(t)})\|_F^2\nonumber\\
        =&\|(\tilde{\mM}_{\ell}^{(t-1)}-\nabla_{\ell}f(\vx^{(t-1)}))-(\nabla_{\ell}f(\vx^{(t)})-\nabla_{\ell}f(\vx^{(t-1)})\|_F^2\nonumber\\
        \le&\left(1+\frac{\delta_{\ell}\beta_1}{4}\right)\|\tilde{\mM}_{\ell}^{(t-1)}-\nabla_{\ell}f(\vx^{(t-1)})\|_F^2+\left(1+\frac{4}{\delta_{\ell}\beta_1}\right)\|\nabla_{\ell}f(\vx^{(t)})-\nabla_{\ell}f(\vx^{(t-1)})\|_F^2.\label{eq:pflm-det-mc-3}
    \end{align}
    Applying Lemma \ref{lm:connection} and \eqref{eq:pflm-det-mc-3} to \eqref{eq:pflm-det-mc-2} yields \eqref{eq:lm-det-mc-2}.

    When $t=k\tau+r$, $k\in\mathbb{N}$, $1\le r<\tau$, we have
    \begin{align}
        &\|\tilde{\mM}_{\ell}^{(t)}-\nabla_{\ell}f(\vx^{(t)})\|_F^2\nonumber\\
        =&\|(1-\beta_1)(\tilde{\mM}_{\ell}^{(t-1)}-\nabla_{\ell}f(\vx^{(t)}))+\beta_1(\mP_{\ell}^{(t)}(\mP_{\ell}^{(t)})^\top-\mI)\nabla_{\ell}f(\vx^{(t)})\|_F^2\nonumber\\
        \le&(1-\beta_1)\|\tilde{\mM}_{\ell}^{(t-1)}-\nabla_{\ell}f(\vx^{(t)})\|_F^2+\beta_1\|(\mI-\mP_{\ell}^{(k\tau)}(\mP_{\ell}^{(k\tau)})^\top)\nabla_{\ell}f(\vx^{(t)})\|_F^2,\label{eq:pflm-det-mc-4}
    \end{align}
    where the inequality uses Jensen's inequality and $\mP_{\ell}^{(t)}=\mP_{\ell}^{(t-1)}=\cdots=\mP_{\ell}^{(k\tau)}$.
    The first term can be similarly upper bounded as \eqref{eq:pflm-det-mc-3}. For the second term, we have
    \begin{align}
        &(\mI-\mP_{\ell}^{(k\tau)}(\mP_{\ell}^{(k\tau)})^\top)\nabla_{\ell}f(\vx^{(t)})\|_F^2\nonumber\\
        \le&\left(1+\frac{\delta_{\ell}}{4}\right)\|(\mI-\mP_{\ell}^{(k\tau)}(\mP_{\ell}^{(k\tau)})^\top)\nabla_{\ell}f(\vx^{(k\tau)})\|_F^2\nonumber\\
        &+\left(1+\frac{4}{\delta_{\ell}}\right)\|(\mI-\mP_{\ell}^{(k\tau)}(\mP_{\ell}^{(k\tau)})^\top)(\nabla_{\ell}f(\vx^{(t)})-\nabla_{\ell}f(\vx^{(k\tau)})\|_F^2\nonumber\\
        \le&\left(1+\frac{\delta_{\ell}}{4}\right)(1-\delta_{\ell})\|\nabla_{\ell}f(\vx^{(k\tau)})\|_F^2+\frac{5}{\delta_{\ell}}\|\nabla_{\ell}f(\vx^{(t)})-\nabla_{\ell}f(\vx^{(k\tau)})\|_F^2,\label{eq:pflm-det-mc-5}
    \end{align}
    where the first inequality uses Young's inequality and the second inequality uses Lemma \ref{lm:SVD}. 
    By Young's inequality, we have
    \begin{align}
        \|\nabla_{\ell}f(\vx^{(k\tau)})\|_F^2\le\left(1+\frac{\delta_{\ell}}{4}\right)\|\nabla_{\ell}f(\vx^{(t)})\|_F^2+\left(1+\frac{4}{\delta_{\ell}}\right)\|\nabla_{\ell}f(\vx^{(t)})-\nabla_{\ell}f(\vx^{(k\tau)})\|_F^2.\label{eq:pflm-det-mc-6}
    \end{align}
    Note that $t=k\tau+r$, we further have
    \begin{align}
        \|\nabla_{\ell}f(\vx^{(t)})-\nabla_{\ell}f(\vx^{(k\tau)})\|_F^2=&\left\|\sum_{i=1}^r\nabla_{\ell}f(\vx^{(k\tau+i)})-\nabla_{\ell}f(\vx^{(k\tau+i-1)})\right\|_F^2\nonumber\\
        \le&r\sum_{i=1}^r\|\nabla_{\ell}f(\vx^{(k\tau+i)})-\nabla_{\ell}f(\vx^{(k\tau+i-1)})\|_F^2,\label{eq:pflm-det-mc-7}
    \end{align}
    where the inequality uses Cauchy's inequality. Applying \eqref{eq:pflm-det-mc-6}\eqref{eq:pflm-det-mc-7} to \eqref{eq:pflm-det-mc-5} yields
    \begin{align}
        &(\mI-\mP_{\ell}^{(k\tau)}(\mP_{\ell}^{(k\tau)})^\top)\nabla_{\ell}f(\vx^{(t)})\|_F^2\nonumber\\
        \le&\left(1-\frac{\delta_{\ell}}{2}\right)\|\nabla_{\ell}f(\vx^{(t)})\|_F^2+\frac{10r}{\delta_{\ell}}\sum_{i=1}^r\|\nabla_{\ell}f(\vx^{(k\tau+i)})-\nabla_{\ell}f(\vx^{(k\tau+i-1)})\|_F^2.\label{eq:pflm-det-mc-8}
    \end{align}
    Applying \eqref{eq:pflm-det-mc-3}\eqref{eq:pflm-det-mc-8} to \eqref{eq:pflm-det-mc-4} yields \eqref{eq:lm-det-mc-3}.
\end{proof}
\begin{lemma}[Momentum error]\label{lm:det-me}
    Under Assumption \ref{asp:ls}, if $0<\beta_1\le1$ in deterministic GaLore using MSGD and MP (Alg.~\ref{alg:det}),  it holds for any $K\ge1$ that
    \begin{align}
        &\sum_{t=0}^{K\tau-1}\|\tilde{\vm}^{(t)}-\nabla f(\vx^{(t)})\|_2^2\nonumber\\
        \le&\left(\frac{5(1-\beta_1)}{(1-\underline{\delta}/4)\underline{\delta}\beta_1^2}+\frac{5\tau(\tau-1)}{(1-\underline{\delta}/4)\underline{\delta}}+\frac{\tau-1}{(1-\overline{\delta}/4)\beta_1}\right)L^2\sum_{t=0}^{K\tau-2}\|\vx^{(t+1)}-\vx^{(t)}\|_2^2\nonumber\\
        &+\left(\frac{1-\underline{\delta}/2}{1-\underline{\delta}/4}+\frac{2}{(1-\overline{\delta}/4)\tau\beta_1}\right)\sum_{t=0}^{K\tau-1}\|\nabla f(\vx^{(t)})\|_2^2.\label{eq:lm-det-me}
    \end{align}
\end{lemma}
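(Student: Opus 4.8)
The plan is to prove the estimate one layer at a time and then sum, since $\|\tilde{\vm}^{(t)}-\nabla f(\vx^{(t)})\|_2^2=\sum_{\ell}\|\tilde{\mM}_\ell^{(t)}-\nabla_\ell f(\vx^{(t)})\|_F^2$ and both $\|\vx^{(t+1)}-\vx^{(t)}\|_2^2$ and $\|\nabla f(\vx^{(t)})\|_2^2$ decompose the same way. Fix $\ell$ and write $a_t:=\E[\|\tilde{\mM}_\ell^{(t)}-\nabla_\ell f(\vx^{(t)})\|_F^2]$. Lemma~\ref{lm:det-mc} is precisely a one-step recursion $a_t\le\rho_\ell\,a_{t-1}+b_t$ for every $t\ge1$, with contraction factor $\rho_\ell=1-(1-\delta_\ell/4)\beta_1$, which lies in $[0,1)$ because $0<\beta_1\le1$ and $\delta_\ell\le1$, together with the base bound $a_0\le b_0$; here $b_0,b_{k\tau},b_{k\tau+r}$ denote the right-hand sides of \eqref{eq:lm-det-mc-1}, \eqref{eq:lm-det-mc-2}, \eqref{eq:lm-det-mc-3} respectively.

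First I would unroll the recursion to $a_t\le\sum_{s=0}^t\rho_\ell^{t-s}b_s$, sum over $t=0,\dots,K\tau-1$, interchange the two summations and use $\sum_{j\ge0}\rho_\ell^j=\frac{1}{1-\rho_\ell}$, obtaining $\sum_{t=0}^{K\tau-1}a_t\le\frac{1}{(1-\delta_\ell/4)\beta_1}\sum_{t=0}^{K\tau-1}b_t$. It then remains to bound $\sum_{t=0}^{K\tau-1}b_t$ by splitting the time index into $\{0\}$, the block heads $\{k\tau:1\le k\le K-1\}$ and the block interiors $\{k\tau+r:0\le k\le K-1,\ 1\le r\le\tau-1\}$, and collecting the coefficients of the three quantities that occur: the gradient norms $\E[\|\nabla_\ell f(\vx^{(t)})\|_F^2]$; the single consecutive-difference terms $\E[\|\nabla_\ell f(\vx^{(t)})-\nabla_\ell f(\vx^{(t-1)})\|_F^2]$; and the ``staircase'' sums $\frac{10r\beta_1}{\delta_\ell}\sum_{i=1}^r\E[\|\nabla_\ell f(\vx^{(k\tau+i)})-\nabla_\ell f(\vx^{(k\tau+i-1)})\|_F^2]$ coming from \eqref{eq:lm-det-mc-3}. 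For the staircase sums I would swap the $r$- and $i$-summations inside each block and bound $\sum_{r=i}^{\tau-1}r\le\tau(\tau-1)/2$, so each difference term in a block picks up at most $\frac{5\tau(\tau-1)\beta_1}{\delta_\ell}$. A direct count then shows every $\E[\|\nabla_\ell f(\vx^{(t)})\|_F^2]$ receives total coefficient at most $(1-\delta_\ell/2)\beta_1+\frac{2}{\tau}$ (using $1-\delta_\ell\le1-\delta_\ell\beta_1\le1$), and every $\E[\|\nabla_\ell f(\vx^{(t)})-\nabla_\ell f(\vx^{(t-1)})\|_F^2]$ total coefficient at most $\frac{5(1-\beta_1)}{\delta_\ell\beta_1}+\frac{5\tau(\tau-1)\beta_1}{\delta_\ell}+(\tau-1)$, the last $(\tau-1)$ absorbing both the $(\tau-1)(1-\delta_\ell)$ term from a block head and the $(\tau-1)(1-\delta_\ell\beta_1)$ term from $t=0$.

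The remaining steps are routine. I would use $L$-smoothness (Assumption~\ref{asp:ls}) in the form $\|\nabla_\ell f(\vx)-\nabla_\ell f(\vy)\|_F\le\|\nabla f(\vx)-\nabla f(\vy)\|_2\le L\|\vx-\vy\|_2$ to replace each consecutive-difference term by $L^2\E[\|\vx^{(t+1)}-\vx^{(t)}\|_2^2]$, multiply everything by $\frac{1}{(1-\delta_\ell/4)\beta_1}$, and then pass from the per-layer constants to the $\underline{\delta},\overline{\delta}$-constants in \eqref{eq:lm-det-me} using the elementary facts that $\delta\mapsto\delta(1-\delta/4)$ is increasing on $(0,1]$ whereas $\delta\mapsto\frac{1-\delta/2}{1-\delta/4}$ and $\delta\mapsto1-\delta/4$ are decreasing there. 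Summing the resulting per-layer inequalities over $\ell=1,\dots,N_L$ and recombining Frobenius norms into Euclidean norms finishes the proof.

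The part I expect to require the most care is the bookkeeping in the middle step: tracking exactly which iterations receive contributions from the block-head source term, the within-block source terms, or the $t=0$ source term, and resumming the staircase double sums in the direction that produces a $\tau(\tau-1)$ rather than a $\tau^2$ or $\tau^3$ factor. Everything else reduces either to the recursion supplied by Lemma~\ref{lm:det-mc} or to smoothness and scalar monotonicity.
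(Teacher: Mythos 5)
Your proposal is correct and follows essentially the same route as the paper: both sum the per-layer recursion from Lemma~\ref{lm:det-mc} over $t=0,\dots,K\tau-1$, bound $\sum_t b_t$ by collecting coefficients of the gradient-norm and consecutive-difference terms (with the staircase swap $\sum_{r\ge i} r\le\tau(\tau-1)/2$), divide by the contraction gap $(1-\delta_\ell/4)\beta_1$, apply $L$-smoothness, and then pass to $\underline{\delta},\overline{\delta}$ by the monotonicity of $\delta(1-\delta/4)$, $(1-\delta/2)/(1-\delta/4)$ and $1-\delta/4$. Your unroll-and-geometric-sum step is an equivalent rephrasing of the paper's telescoping-and-rearranging step; the bookkeeping and resulting constants agree.
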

\begin{proof}
    By Lemma \ref{lm:det-mc} we have
    \begin{align}
        &\sum_{t=0}^{K\tau-1}\|\tilde{\mM}_{\ell}^{(t)}-\nabla_{\ell}f(\vx^{(t)})\|_F^2-\left(1-\left(1-\frac{\delta_{\ell}}{4}\right)\beta_1\right)\sum_{t=0}^{K\tau-2}\|\tilde{\mM}_{\ell}^{(t)}-\nabla_{\ell}f(\vx^{(t)})\|_F^2\nonumber\\
        \le&\left(\frac{5(1-\beta_1)}{\delta_{\ell}\beta_1}+\frac{5\tau(\tau-1)\beta_1}{\delta_{\ell}}+(\tau-1)\right)\sum_{t=0}^{K\tau-2}\|\nabla_{\ell}f(\vx^{(t+1)})-\nabla_{\ell}f(\vx^{(t)})\|_F^2\nonumber\\
        &+\left(\frac{2}{\tau}+\left(1-\frac{\delta_{\ell}}{2}\right)\beta_1\right)\sum_{t=0}^{K\tau-1}\|\nabla_{\ell}f(\vx^{(t)})\|_F^2,\nonumber
    \end{align}
    which implies
    \begin{align}
        &\sum_{t=0}^{K\tau-1}\|\tilde{\mM}_{\ell}^{(t)}-\nabla_{\ell}f(\vx^{(t)})\|_F^2\nonumber\\
        \le&\left(\frac{5(1-\beta_1)}{(1-\delta_{\ell}/4)\delta_{\ell}\beta_1^2}+\frac{5\tau(\tau-1)}{(1-\delta_{\ell}/4)\delta_{\ell}}+\frac{\tau-1}{(1-\delta_{\ell}/4)\beta_1}\right)\sum_{t=0}^{K\tau-2}\|\nabla_{\ell}f(\vx^{(t+1)})-\nabla_{\ell}f(\vx^{(t)})\|_F^2\nonumber\\
        &+\left(\frac{1-\delta_{\ell}/2}{1-\delta_{\ell}/4}+\frac{2}{(1-\delta_{\ell}/4)\tau\beta_1}\right)\sum_{t=0}^{K\tau-1}\|\nabla_{\ell}f(\vx^{(t)})\|_F^2.\label{eq:pflm-det-me-1}
    \end{align}
    Summing \eqref{eq:pflm-det-me-1} for $\ell=1,\cdots,N_L$ and applying Assumption \ref{asp:ls} yields \eqref{eq:lm-det-me}.
\end{proof}

Now we are ready to prove the convergence of Alg.~\ref{alg:det}. 

\begin{theorem}[Convergence of deterministic GaLore]\label{thmres:det}
    Under Assumptions \ref{asp:lb}-\ref{asp:ls}, if hyperparameters
    \begin{align}
        0<\beta_1\le 1,\quad\tau\ge\frac{64}{3\beta_1\underline{\delta}},\quad0<\eta\le\min\left\{\frac{1}{4L},\sqrt{\frac{3\underline{\delta}\beta_1^2}{80L^2}},\sqrt{\frac{3\underline{\delta}}{80\tau^2L^2}},\sqrt{\frac{3\beta_1}{16\tau L^2}}\right\},\label{eq:thmres-det-hyperparameter}
    \end{align}
    GaLore using deterministic gradients and MSGD with MP (Alg.~\ref{alg:det}) converges as
    \begin{align}
        \frac{1}{K\tau}\sum_{t=0}^{K\tau-1}\|\nabla f(\vx^{(t)})\|_2^2\le\frac{16\Delta}{\underline{\delta}\eta K\tau}\label{eq:thmres-det-convergence}
    \end{align}
    for any $K\ge1$, where $\Delta=f(\vx^{(0)})-\inf_{\vx}f(\vx)$.
\end{theorem}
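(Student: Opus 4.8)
The plan is to couple the per-step descent estimate with the summed momentum-error bound and then absorb everything into the hyperparameter conditions. Since the gradients are exact and GaLore's SVD projections are deterministic, there is no randomness in Alg.~\ref{alg:det}, so every expectation in the statement is vacuous and I will work with deterministic quantities throughout; writing $\E[\cdot]$ only to match the phrasing of \eqref{eq:thmres-det-convergence}.

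First I would apply the descent lemma (Lemma~\ref{lm:descent}) at each iteration $t$, recalling that $\vx^{(t+1)}=\vx^{(t)}-\eta\tilde{\vm}^{(t)}$ here, and telescope over $t=0,\dots,K\tau-1$. Using Assumption~\ref{asp:lb} to write $f(\vx^{(0)})-f(\vx^{(K\tau)})\le\Delta$, this gives
\[
\left(\frac{1}{2\eta}-\frac{L}{2}\right)\sum_{t=0}^{K\tau-1}\|\vx^{(t+1)}-\vx^{(t)}\|_2^2+\frac{\eta}{2}\sum_{t=0}^{K\tau-1}\|\nabla f(\vx^{(t)})\|_2^2\le\Delta+\frac{\eta}{2}\sum_{t=0}^{K\tau-1}\|\tilde{\vm}^{(t)}-\nabla f(\vx^{(t)})\|_2^2.
\]
Next I would substitute Lemma~\ref{lm:det-me} for the last sum, writing $C_1$ for the bracket multiplying $L^2\sum\|\vx^{(t+1)}-\vx^{(t)}\|_2^2$ and $C_2$ for the bracket multiplying $\sum\|\nabla f(\vx^{(t)})\|_2^2$. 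Rearranging, the inequality becomes
\[
\sum_{t=0}^{K\tau-2}\!\left(\frac{1}{2\eta}-\frac{L}{2}-\frac{\eta C_1L^2}{2}\right)\!\|\vx^{(t+1)}-\vx^{(t)}\|_2^2+\left(\frac{1}{2\eta}-\frac{L}{2}\right)\!\|\vx^{(K\tau)}-\vx^{(K\tau-1)}\|_2^2+\frac{\eta(1-C_2)}{2}\sum_{t=0}^{K\tau-1}\!\|\nabla f(\vx^{(t)})\|_2^2\le\Delta.
\]

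The remaining work is the bookkeeping that turns the hyperparameter conditions \eqref{eq:thmres-det-hyperparameter} into sign/size control of the three coefficients. For $C_2$ I would use $\overline{\delta},\underline{\delta}\in(0,1]$ to get $\frac{1-\underline{\delta}/2}{1-\underline{\delta}/4}\le1-\frac{\underline{\delta}}{4}$ and $\frac{2}{(1-\overline{\delta}/4)\tau\beta_1}\le\frac{8}{3\tau\beta_1}\le\frac{\underline{\delta}}{8}$ by the bound $\tau\ge64/(3\beta_1\underline{\delta})$, hence $1-C_2\ge\underline{\delta}/8$. For the movement coefficient I would split $\eta^2C_1L^2$ into its three summands, bound each $1/(1-\underline{\delta}/4)$ and $1/(1-\overline{\delta}/4)$ factor by $4/3$, and check that the three step-size caps $\eta\le\sqrt{3\underline{\delta}\beta_1^2/(80L^2)}$, $\eta\le\sqrt{3\underline{\delta}/(80\tau^2L^2)}$, $\eta\le\sqrt{3\beta_1/(16\tau L^2)}$ each force the corresponding summand of $\tfrac12\eta^2C_1L^2$ to be $\le1/8$, so $\tfrac12\eta^2C_1L^2\le3/8$; combined with $\tfrac12\eta L\le1/8$ from $\eta\le1/(4L)$ this yields $\tfrac1{2\eta}-\tfrac{L}{2}-\tfrac{\eta C_1L^2}{2}=\tfrac1{\eta}(\tfrac12-\tfrac{\eta L}{2}-\tfrac{\eta^2C_1L^2}{2})\ge0$. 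The last movement term is nonnegative since $\eta\le1/(4L)<1/L$. Dropping all movement terms leaves $\frac{\eta\underline{\delta}}{16}\sum_{t=0}^{K\tau-1}\|\nabla f(\vx^{(t)})\|_2^2\le\Delta$, and dividing by $K\tau$ gives exactly \eqref{eq:thmres-det-convergence}.

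The main obstacle is the last step: verifying that the four step-size caps are precisely calibrated so that $\tfrac12\eta L+\tfrac12\eta^2C_1L^2\le\tfrac12$, where the three heterogeneous pieces of $C_1$ scale like $\beta_1^{-2}$, $\tau^2$, and $\tau\beta_1^{-1}$, and simultaneously that the condition on $\tau$ is strong enough to make $1-C_2$ bounded below by $\Theta(\underline{\delta})$. Everything else — the descent telescoping, plugging in Lemma~\ref{lm:det-me}, and the final rescaling — is routine.
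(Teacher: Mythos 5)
Your proposal is correct and follows essentially the same route as the paper's own proof: telescope the descent lemma (Lemma~\ref{lm:descent}), substitute the momentum-error bound (Lemma~\ref{lm:det-me}), and use the four hyperparameter caps to kill the movement term and lower-bound the gradient coefficient by $\underline{\delta}/8$. Your bookkeeping ($\tfrac12\eta L + \tfrac12\eta^2 C_1 L^2 \le 1/2$ and $1-C_2\ge\underline\delta/8$) is arithmetically equivalent to the paper's condition $\tfrac{1}{4\eta^2}\ge\max\{\cdots\}$ together with $\tfrac{\underline\delta}{4}-\tfrac{8}{3\tau\beta_1}\ge\tfrac{\underline\delta}{8}$, so this is the same proof modulo rearrangement.
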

\begin{proof}
    By Lemma \ref{lm:descent} we have
    \begin{align}
        \sum_{t=0}^{K\tau-1}\|\nabla f(\vx^{(t)})\|_2^2\le&\frac{2[f(\vx^{(0)})-f(\vx^{(K\tau)})]}{\eta}+\sum_{t=0}^{K\tau-1}\|\tilde{\vm}^{(t)}-\nabla f(\vx^{(t)})\|_2^2\nonumber\\
        &-\left(\frac{1}{\eta^2}-\frac{L}{\eta}\right)\sum_{t=0}^{K\tau-1}\|\vx^{(t+1)}-\vx^{(t)}\|_2^2.\label{eq:pfthmres-det-1}
    \end{align}
    Applying Lemma \ref{lm:det-me} to \eqref{eq:pfthmres-det-1} and using $\underline{\delta}\le\overline{\delta}<1$ yields
    \begin{align}
        &\left(\frac{\underline{\delta}}{4}-\frac{8}{3\tau\beta_1}\right)\sum_{t=0}^{K\tau-1}\|\nabla f(\vx^{(t)})\|_2^2\nonumber\\
        \le&\frac{2}{\eta}f(\vx^{(0)})-f(\vx^{(K\tau)})\nonumber\\
        &-\left(\frac{1}{\eta^2}-\frac{L}{\eta}-\frac{20(1-\beta_1)L^2}{3\underline{\delta}\beta_1^2}-\frac{20\tau(\tau-1)L^2}{3\underline{\delta}}-\frac{4(\tau-1)L^2}{3\beta_1}\right)\sum_{t=0}^{K\tau-1}\|\vx^{(t+1)}-\vx^{(t)}\|_2^2.\label{eq:pfthmres-det-2}
    \end{align}
By \eqref{eq:thmres-det-hyperparameter} we have
\begin{align}
    \frac{\underline{\delta}}{4}-\frac{8}{3\tau\beta_1}\ge\frac{\underline{\delta}}{8},\quad\mbox{and}\quad\frac{1}{4\eta^2}\ge\max\left\{\frac{L}{\eta},\frac{20(1-\beta_1)L^2}{3\underline{\delta}\beta_1^2},\frac{20\tau(\tau-1)L^2}{3\underline{\delta}},\frac{4(\tau-1)L^2}{3\beta_1}\right\}.\label{eq:pfthmres-det-3}
\end{align}
Applying \eqref{eq:pfthmres-det-3} to \eqref{eq:pfthmres-det-2} yields
\eqref{eq:thmres-det-convergence}.
\end{proof}
We now prove Theorem \ref{thm:det}, which is restated as follows.
\begin{corollary}[Convergence complexity of deterministic GaLore]\label{corres:det}
    Under Assumptions \ref{asp:lb}-\ref{asp:ls}, if $T\ge64/(3\underline{\delta})$ and we choose
    \begin{align*}
        \beta_1=&1\\
        \tau=&\left\lceil\frac{64}{3\underline{\delta}\beta_1}\right\rceil\\
        \eta=&\left(4L+\sqrt{\frac{80L^2}{3\underline{\delta}\beta_1^2}}+\sqrt{\frac{80\tau^2L^2}{3\underline{\delta}}}+\sqrt{\frac{16\tau L^2}{3\beta_1}}\right)^{-1},
    \end{align*}
    GaLore using deterministic gradients and MSGD with MP (Alg.~\ref{alg:det}) converges as
    \begin{align}
        \frac{1}{T}\sum_{t=0}^{T-1}\|\nabla f(\vx^{(t)})\|_2^2=\gO\left(\frac{L\Delta}{\underline{\delta}^{5/2}T}\right),\label{eq:corres-det}
    \end{align}
    where $\Delta=f(\vx^{(0)})-\inf_{\vx}f(\vx)$. Consequently, the computation complexity to reach an $\varepsilon$-accurate solution $\vx$ such that $\|\nabla f(\vx)\|_2^2\le\varepsilon$ is $\gO\left(\frac{L\Delta}{\underline{\delta}^{5/2}\varepsilon}+\frac{1}{\underline{\delta}}\right)$.
\end{corollary}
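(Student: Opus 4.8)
The plan is to obtain Corollary~\ref{corres:det} as an immediate consequence of Theorem~\ref{thmres:det}: substitute the stated hyperparameters, check that they meet the hypotheses \eqref{eq:thmres-det-hyperparameter}, pass from a multiple of $\tau$ to a general horizon $T$, and finally read off the order of $1/\eta$.

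First I would verify the hypotheses. The condition $0<\beta_1\le 1$ is trivial since $\beta_1=1$, and $\tau\ge 64/(3\beta_1\underline{\delta})$ holds because $\tau$ is defined as the ceiling of exactly that quantity. For the step size, observe that $\eta^{-1}=4L+\sqrt{80L^2/(3\underline{\delta}\beta_1^2)}+\sqrt{80\tau^2L^2/(3\underline{\delta})}+\sqrt{16\tau L^2/(3\beta_1)}$ is precisely the sum of the reciprocals of the four quantities appearing in the minimum in \eqref{eq:thmres-det-hyperparameter}. Hence $\eta^{-1}$ dominates each of them separately, which is the same as saying $\eta\le\min\{1/(4L),\sqrt{3\underline{\delta}\beta_1^2/(80L^2)},\sqrt{3\underline{\delta}/(80\tau^2L^2)},\sqrt{3\beta_1/(16\tau L^2)}\}$, as required. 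The standing assumption $T\ge 64/(3\underline{\delta})$ only serves to make $\tau$ and the index $K$ below well defined.

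Next I would invoke Theorem~\ref{thmres:det}. Its conclusion \eqref{eq:thmres-det-convergence} rearranges to $\sum_{t=0}^{K\tau-1}\E[\|\nabla f(\vx^{(t)})\|_2^2]\le 16\Delta/(\underline{\delta}\eta)$ for every $K\ge 1$, a bound that does not depend on $K$. Choosing $K=\lceil T/\tau\rceil$ so that $K\tau\ge T$, and using that all summands are nonnegative, gives $\sum_{t=0}^{T-1}\E[\|\nabla f(\vx^{(t)})\|_2^2]\le\sum_{t=0}^{K\tau-1}\E[\|\nabla f(\vx^{(t)})\|_2^2]\le 16\Delta/(\underline{\delta}\eta)$, hence $\frac1T\sum_{t=0}^{T-1}\E[\|\nabla f(\vx^{(t)})\|_2^2]\le 16\Delta/(\underline{\delta}\eta T)$.

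It then remains to estimate $1/\eta$. With $\beta_1=1$ we have $\tau=\lceil 64/(3\underline{\delta})\rceil=\Theta(1/\underline{\delta})$, so the four terms of $\eta^{-1}$ are of order $L$, $L/\sqrt{\underline{\delta}}$, $\tau L/\sqrt{\underline{\delta}}=\Theta(L/\underline{\delta}^{3/2})$, and $L\sqrt{\tau}=\Theta(L/\sqrt{\underline{\delta}})$, respectively; the third dominates, so $\eta^{-1}=\Theta(L/\underline{\delta}^{3/2})$. Plugging this into the bound above yields $\frac1T\sum_{t=0}^{T-1}\E[\|\nabla f(\vx^{(t)})\|_2^2]=\gO(L\Delta/(\underline{\delta}^{5/2}T))$, which is \eqref{eq:corres-det}; requiring the right-hand side to be at most $\varepsilon$ forces $T=\gO(L\Delta/(\underline{\delta}^{5/2}\varepsilon))$, and combining with the standing lower bound $T\ge 64/(3\underline{\delta})$ gives the total complexity $\gO(L\Delta/(\underline{\delta}^{5/2}\varepsilon)+1/\underline{\delta})$. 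Since all analytical work is already packaged inside Theorem~\ref{thmres:det}, there is no real obstacle here; the only points needing mild care are bookkeeping with the ceiling in $\tau$ and the reduction from $K\tau$ to a general $T$, both handled by the nonnegativity argument above.
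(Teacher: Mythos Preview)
Your proposal is correct and follows essentially the same route as the paper: verify that the chosen hyperparameters satisfy \eqref{eq:thmres-det-hyperparameter}, invoke Theorem~\ref{thmres:det}, pass from horizon $K\tau$ to a general $T$ via nonnegativity of the summands, and then read off the order of $1/\eta$. The paper's proof is terser (it writes $T=K\tau+r$, applies the theorem with $\tilde K=K+1$, and bounds $\tilde K\tau/T$ by a constant), whereas you rearrange \eqref{eq:thmres-det-convergence} into a $K$-independent bound on the partial sum before truncating, which is a slightly cleaner way to avoid the factor-of-two loss; but the two arguments are the same in substance.
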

\begin{proof}
    $T\ge1+64/(3\underline{\delta})$ guarantees $T\ge\tau$. Let $T=K\tau+r$, where $K\in\mathbb{N}^*$ and $0\le r<\tau$. If $r=0$, \eqref{eq:corres-det} is a direct result of Theorem \ref{thmres:det}. If $r>0$, applying Theorem \ref{thmres:det} to $\tilde{K}:=K+1$ yields
    \begin{align*}
        \frac{1}{T}\sum_{t=0}^{T-1}\|\nabla f(\vx^{(t)})\|_2^2\le\frac{\tilde{K}\tau}{T}\cdot\frac{1}{\tilde{K}\tau}\sum_{t=0}^{\tilde{K}\tau-1}\|\nabla f(\vx^{(t)})\|_2^2=\gO\left(\frac{L\Delta}{\underline{\delta}^{5/2}T}\right).
    \end{align*}
\end{proof}
\subsection{Convergence of large-batch GaLore}\label{app:lba}
In this subsection, we present the proof for Theorem \ref{thm:lba}. GaLore using large-batch stochastic gradients and MSGD with MP is specified as Alg.~\ref{alg:lba}.

\begin{algorithm}[t]
    \caption{GaLore using large-batch stochastic gradients and MSGD with MP}
    \label{alg:lba}
    \renewcommand{\algorithmicrequire}{\textbf{Input:}}
\renewcommand{\algorithmicensure}{\textbf{Output:}}
\begin{algorithmic}
    \REQUIRE Initial point $\vx^{(0)}$, data distribution $\gD$, learning rate $\eta$, subspace changing frequency $\tau$, rank $\{r_{\ell}\}_{\ell=1}^{N_L}$, momentum parameter $\beta_1$, large batch size $\gB$.
    \ENSURE $\{\vx^{(t)}\}_{t=0}^T$.
    \STATE Initialize optimizer state $\{\mM_{\ell}^{(-1)}\}_{\ell=1}^{N_L}$  to zero;
    \FOR{$t=0,1,\cdots,T-1$}
    \IF{$t\equiv0$ (mod $\tau$)}
    \STATE Sample $\{\xi^{(t,b)}\}_{b=1}^{\gB}\overset{\mathrm{i.i.d.}}{\sim}\gD$;
    \ELSE
    \STATE Sample $\xi^{(t)}\sim\gD$;
    \ENDIF
    \FOR{$\ell=1,2,\cdots,N_L$}
    \IF{$t\equiv0$ (mod $\tau$)}
    \STATE $\mG_{\ell}^{(t)}=\frac{1}{\gB}\sum_{b=1}^{\gB}\nabla_{\ell} F(\vx^{(t)};\xi^{(t,b)})$;
    \STATE $\mU,\mSigma,\mV\gets\mathrm{SVD}(\mG_{\ell}^{(t)})$; 
    \IF{$m_{\ell}\le n_{\ell}$}
    \STATE $\mP_{\ell}^{(t)}\gets\mU[:,:r_{\ell}]$;
    \STATE
    $\mM_{\ell}^{(t)}\gets(1-\beta_1)(\mP_{\ell}^{(t)})^\top\mP_{\ell}^{(t-1)}\mM_{\ell}^{(t-1)}+\beta_1(\mP_{\ell}^{(t)})^\top\mG_{\ell}^{(t)}$;
    \STATE $\mX_{\ell}^{(t+1)}\gets\mX_{\ell}^{(t)}-\eta\mP_{\ell}^{(t)}\mM_{\ell}^{(t)}$;
    \ELSE
    \STATE $\mQ_{\ell}^{(t)}\gets\mV[:,:r_{\ell}];$
    \STATE $\mM_{\ell}^{(t)}\gets(1-\beta_1)\mM_{\ell}^{(t-1)}(\mQ_{\ell}^{(t-1)})^\top\mQ_{\ell}^{(t)}+\beta_1\mG_{\ell}^{(t)}\mQ_{\ell}^{(t)}$;
    \STATE $\mX_{\ell}^{(t+1)}\gets\mX_{\ell}^{(t)}-\eta\mM_{\ell}^{(t)}(\mQ_{\ell}^{(t)})^\top$;
    \ENDIF
    \ELSE
    \STATE $\mG_{\ell}^{(t)}=\nabla_{\ell}F(\vx^{(t)};\xi^{(t)})$;
    \IF{$m_{\ell}\le n_{\ell}$}
    \STATE $\mP_{\ell}^{(t)}\gets\mP_{\ell}^{(t-1)}$;
    \STATE $\mM_{\ell}^{(t)}\gets(1-\beta_1)\mM_{\ell}^{(t-1)}+\beta_1(\mP_{\ell}^{(t)})^\top\mG_{\ell}^{(t)}$;
    \STATE $\mX_{\ell}^{(t+1)}\gets\mX_{\ell}^{(t)}-\eta\mP_{\ell}^{(t)}\mM_{\ell}^{(t)}$;
    \ELSE
    \STATE $\mQ_{\ell}^{(t)}\gets\mQ_{\ell}^{(t-1)}$;
    \STATE $\mM_{\ell}^{(t)}\gets(1-\beta_1)\mM_{\ell}^{(t-1)}+\beta_1\mG_{\ell}^{(t)}\mQ_{\ell}^{(t)}$;
    \STATE $\mX_{\ell}^{(t+1)}\gets\mX_{\ell}^{(t)}-\eta\mM_{\ell}^{(t)}(\mQ_{\ell}^{(t)})^\top$;
    \ENDIF
    \ENDIF
    \ENDFOR
    \ENDFOR
    \end{algorithmic}
\end{algorithm}

\begin{lemma}[Momentum contraction]\label{lm:lba-mc}
Under Assumption \ref{asp:sg}, in large-batch GaLore using MSGD with MP (Alg.~\ref{alg:lba}), if $0<\beta_1\le 1$, term $\tilde{\mM}_{\ell}^{(t)}$ has the following contraction properties:
\begin{itemize}
    \item When $t=0$, it holds that
    \begin{align}
        \E[\|\tilde{\mM}_{\ell}^{(0)}-\nabla_{\ell}f(\mX^{(0)})\|_F^2]\le&2(\tau-1)(1-\delta_{\ell}\beta_1)\sum_{r=0}^{\tau-2}\E[\|\nabla_{\ell}f(\vx^{(r+1)})-\nabla_{\ell}f(\vx^{(r)})\|_F^2]\nonumber\\
        &+\frac{4(1-\delta_{\ell}\beta_1)}{\tau}\sum_{r=0}^{\tau-1}\E[\|\nabla_{\ell}f(\vx^{(r)})\|_F^2]+\frac{4\beta_1\sigma_{\ell}^2}{\gB};\label{eq:lm-lba-mc-1}
    \end{align}
    \item When $t=k\tau$, $k\in\mathbb{N}^*$, it holds that
    \begin{align}
        &\E[\|\tilde{\mM}_{\ell}^{(t)}-\nabla_{\ell}f(\vx^{(t)})\|_F^2]-\left(1-\left(1-\frac{\delta_{\ell}}{4}\right)\beta_1\right)\E[\|\tilde{\mM}_{\ell}^{(t-1)}-\nabla_{\ell}f(\vx^{(t-1)})\|_F^2]\nonumber\\
        \le&\frac{4(1-\delta_{\ell})}{\tau}\sum_{r=0}^{\tau-1}\E[\|\nabla_lf(\vx^{(k\tau+r)})\|_F^2]+\frac{5(1-\beta_1)}{\delta_{\ell}\beta_1}\E[\|\nabla_{\ell}f(\vx^{(t)})-\nabla_{\ell}f(\vx^{(t-1)})\|_F^2]\nonumber\\
        &+2(\tau-1)(1-\delta_{\ell})\sum_{r=0}^{\tau-2}\E[\|\nabla_{\ell}f(\vx^{(k\tau+r+1)})-\nabla_{\ell}f(\vx^{(k\tau+r)})\|_F^2]+\frac{5\sigma_{\ell}^2}{\gB};\label{eq:lm-lba-mc-2}
    \end{align}
    \item When $t=k\tau+r$, $k\in\mathbb{N}$, $1\le r<\tau$, it holds that
    \begin{align}
        &\E[\|\tilde{\mM}_{\ell}^{(t)}-\nabla_{\ell}f(\vx^{(t)})\|_F^2]-\left(1-\left(1-\frac{\delta_{\ell}}{4}\right)\beta_1\right)\E[\|\tilde{\mM}_{\ell}^{(t-1)}-\nabla_{\ell}f(\vx^{(t-1)})\|_F^2]\nonumber\\
        \le&\left(1-\frac{\delta_{\ell}}{2}\right)\beta_1\E[\|\nabla_{\ell}f(\vx^{(t)})\|_F^2]+\frac{5(1-\beta_1)}{\delta_{\ell}\beta_1}\E[\|\nabla_{\ell}f(\vx^{(t)})-\nabla_{\ell}f(\vx^{(t-1)})\|_F^2]\nonumber\\
        &+\frac{15r\beta_1}{\delta_{\ell}}\sum_{i=1}^r\E[\|\nabla_{\ell}f(\vx^{(k\tau+i)})-\nabla_{\ell}f(\vx^{(k\tau+i-1)})\|_F^2]+\left(\frac{11\beta_1}{\delta_{\ell}\gB}+\beta_1^2\right)\sigma_{\ell}^2.\label{eq:lm-lba-mc-3}
    \end{align}
\end{itemize}
\end{lemma}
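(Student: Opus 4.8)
The plan is to follow the structure of the proof of Lemma~\ref{lm:det-mc}, handling the two new noise sources separately. Write $\mZ_\ell^{(s)}:=\mG_\ell^{(s)}-\nabla_\ell f(\vx^{(s)})$ for the noise at iteration $s$; by Assumption~\ref{asp:sg}, $\E[\mZ_\ell^{(s)}\mid\gF_{s-1}]=0$, where $\gF_{s-1}$ is the $\sigma$-algebra generated by all samples drawn through iteration $s-1$, with $\E\|\mZ_\ell^{(s)}\|_F^2\le\sigma_\ell^2/\gB$ when $\tau\mid s$ (large-batch SVD step) and $\E\|\mZ_\ell^{(s)}\|_F^2\le\sigma_\ell^2$ otherwise. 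The key measurability fact is that at a non-SVD iteration $t=k\tau+r$ the projection matrix $\mP_\ell^{(t)}=\mP_\ell^{(k\tau)}$ is $\gF_{t-1}$-measurable and independent of $\mZ_\ell^{(t)}$, so inner products between $\mZ_\ell^{(t)}$ and any $\gF_{t-1}$-measurable quantity vanish in expectation. I assume $m_\ell\le n_\ell$ (the other case is symmetric) and use the explicit recursion for $\tilde\mM_\ell^{(t)}$ recorded in the notation subsection.

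For the SVD iterations: when $t=0$, $\tilde\mM_\ell^{(0)}-\nabla_\ell f(\vx^{(0)})$ equals the convex combination $\beta_1[(\mP_\ell^{(0)}(\mP_\ell^{(0)})^\top-\mI)\mG_\ell^{(0)}+\mZ_\ell^{(0)}]+(1-\beta_1)(-\nabla_\ell f(\vx^{(0)}))$, so Jensen's inequality, $\|a+b\|_F^2\le2\|a\|_F^2+2\|b\|_F^2$, Lemma~\ref{lm:SVD}, and $\E\|\mG_\ell^{(0)}\|_F^2=\|\nabla_\ell f(\vx^{(0)})\|_F^2+\E\|\mZ_\ell^{(0)}\|_F^2$ give $\E\|\tilde\mM_\ell^{(0)}-\nabla_\ell f(\vx^{(0)})\|_F^2\le2(1-\delta_\ell\beta_1)\|\nabla_\ell f(\vx^{(0)})\|_F^2+4\beta_1\sigma_\ell^2/\gB$, and Lemma~\ref{lm:connection} with $t=0$ produces \eqref{eq:lm-lba-mc-1}. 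When $t=k\tau$ with $k\ge1$, decompose $\tilde\mM_\ell^{(t)}-\nabla_\ell f(\vx^{(t)})$ via Lemma~\ref{lm:orthogonality}; for the projected component bound $\|\mP_\ell^{(t)}(\mP_\ell^{(t)})^\top(\cdot)\|_F\le\|\cdot\|_F$ pathwise and then take expectation (the cross term between $(1-\beta_1)(\tilde\mM_\ell^{(t-1)}-\nabla_\ell f(\vx^{(t)}))$ and $\beta_1\mZ_\ell^{(t)}$ vanishes), obtaining $(1-\beta_1)^2\E\|\tilde\mM_\ell^{(t-1)}-\nabla_\ell f(\vx^{(t)})\|_F^2+\beta_1^2\sigma_\ell^2/\gB$, then apply Young's inequality with parameter $\delta_\ell\beta_1/4$ exactly as in \eqref{eq:pflm-det-mc-3}, using $(1-\beta_1)^2(1+\delta_\ell\beta_1/4)\le1-(1-\delta_\ell/4)\beta_1$; for the orthogonal component split off $\mZ_\ell^{(t)}$ with the factor-$2$ inequality, apply Lemma~\ref{lm:SVD} to $\mG_\ell^{(t)}$ together with $\E\|\mG_\ell^{(t)}\|_F^2\le\|\nabla_\ell f(\vx^{(t)})\|_F^2+\sigma_\ell^2/\gB$ to reach $2(1-\delta_\ell)\|\nabla_\ell f(\vx^{(t)})\|_F^2+4\sigma_\ell^2/\gB$, then Lemma~\ref{lm:connection}. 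Summing the noise contributions, $\beta_1^2\sigma_\ell^2/\gB+4\sigma_\ell^2/\gB\le5\sigma_\ell^2/\gB$, yields \eqref{eq:lm-lba-mc-2}.

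For the non-SVD iterations $t=k\tau+r$ with $1\le r<\tau$ (where $\mP_\ell^{(t)}=\mP_\ell^{(k\tau)}$ and $\mG_\ell^{(t)}$ is a single-sample gradient), write $\tilde\mM_\ell^{(t)}-\nabla_\ell f(\vx^{(t)})=(1-\beta_1)(\tilde\mM_\ell^{(t-1)}-\nabla_\ell f(\vx^{(t)}))+\beta_1(\mP_\ell^{(k\tau)}(\mP_\ell^{(k\tau)})^\top-\mI)\nabla_\ell f(\vx^{(t)})+\beta_1\mP_\ell^{(k\tau)}(\mP_\ell^{(k\tau)})^\top\mZ_\ell^{(t)}$. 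The cross terms with the last summand vanish in expectation by the measurability fact, and that summand contributes at most $\beta_1^2\sigma_\ell^2$; Jensen splits the remainder into $(1-\beta_1)\E\|\tilde\mM_\ell^{(t-1)}-\nabla_\ell f(\vx^{(t)})\|_F^2+\beta_1\E\|(\mP_\ell^{(k\tau)}(\mP_\ell^{(k\tau)})^\top-\mI)\nabla_\ell f(\vx^{(t)})\|_F^2$, the first handled again by Young with parameter $\delta_\ell\beta_1/4$. For the second term the crucial point — and the main obstacle — is that $\mP_\ell^{(k\tau)}$ is the SVD projection of the \emph{noisy} $\mG_\ell^{(k\tau)}$, not of $\nabla_\ell f(\vx^{(k\tau)})$, so Lemma~\ref{lm:SVD} cannot be invoked against $\nabla_\ell f(\vx^{(k\tau)})$ as in the deterministic proof. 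I will instead relate $\nabla_\ell f(\vx^{(t)})$ directly to $\mG_\ell^{(k\tau)}$ in one Young step (parameter $\delta_\ell/4$), apply Lemma~\ref{lm:SVD} to $\mG_\ell^{(k\tau)}$ for free, and convert $\|\mG_\ell^{(k\tau)}\|_F^2$ back to $\|\nabla_\ell f(\vx^{(t)})\|_F^2$ in a second Young step (parameter $\delta_\ell/4$); this uses only two lossy steps, so the leading coefficient stays at $(1+\delta_\ell/4)^2(1-\delta_\ell)\le1-\delta_\ell/2$, identical to Lemma~\ref{lm:det-mc}, while $\mZ_\ell^{(k\tau)}$ is absorbed into residual terms rather than multiplying $\|\nabla_\ell f(\vx^{(t)})\|_F^2$. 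Expanding those residuals $\|\nabla_\ell f(\vx^{(t)})-\mG_\ell^{(k\tau)}\|_F^2$ via the factor-$2$ inequality, Cauchy's inequality on $\nabla_\ell f(\vx^{(t)})-\nabla_\ell f(\vx^{(k\tau)})=\sum_{i=1}^r(\nabla_\ell f(\vx^{(k\tau+i)})-\nabla_\ell f(\vx^{(k\tau+i-1)}))$, and $\E\|\mZ_\ell^{(k\tau)}\|_F^2\le\sigma_\ell^2/\gB$ produces the $\frac{15r\beta_1}{\delta_\ell}\sum_i\|\cdot\|_F^2$ and $\frac{11\beta_1}{\delta_\ell\gB}\sigma_\ell^2$ terms; adding the $\beta_1^2\sigma_\ell^2$ contribution gives \eqref{eq:lm-lba-mc-3}. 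Beyond this structural point, the remaining work is purely bookkeeping: choosing the Young parameters and tracking the several $\sigma_\ell^2/\gB$ contributions so that they collapse onto the stated constants.
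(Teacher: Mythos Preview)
Your plan is essentially the paper's proof: the same Jensen/orthogonality/Young scaffolding, the same insight that Lemma~\ref{lm:SVD} must be applied to the noisy $\mG_\ell^{(k\tau)}$ rather than $\nabla_\ell f(\vx^{(k\tau)})$, and the same use of unbiasedness to kill cross terms after dropping the projection pathwise. One small bookkeeping correction in the third case: to obtain exactly the constants $15$ and $11$, the paper does \emph{not} take the second Young step directly from $\|\mG_\ell^{(k\tau)}\|_F^2$ to $\|\nabla_\ell f(\vx^{(t)})\|_F^2$; it first uses unbiasedness $\E\|\mG_\ell^{(k\tau)}\|_F^2=\E\|\nabla_\ell f(\vx^{(k\tau)})\|_F^2+\E\|\mZ_\ell^{(k\tau)}\|_F^2$ (an equality, no loss) and then applies Young between $\nabla_\ell f(\vx^{(k\tau)})$ and $\nabla_\ell f(\vx^{(t)})$. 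Your literal route---Young from $\mG_\ell^{(k\tau)}$ to $\nabla_\ell f(\vx^{(t)})$ followed by factor-$2$ on $\|\nabla_\ell f(\vx^{(t)})-\mG_\ell^{(k\tau)}\|_F^2$---picks up a coefficient $2(1+4/\delta_\ell)\bigl(2-\tfrac{3\delta_\ell}{4}-\tfrac{\delta_\ell^2}{4}\bigr)\approx 16/\delta_\ell$ on both the gradient-difference sum and the noise, not $15/\delta_\ell$ and $11/\delta_\ell$; the fix is the one-line reroute above and affects only constants.
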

\begin{proof}
    Without loss of generality assume $m_{\ell}\le n_{\ell}$ (the other case can be proved similarly). When $t=0$, we have
    \begin{align}
        &\E[\|\tilde{\mM}_{\ell}^{(0)}-\nabla_{\ell}f(\vx^{(0)})\|_F^2]\nonumber\\
        =&\E[\|\beta_1\mP_{\ell}^{(0)}(\mP_{\ell}^{(0)})^\top\mG_{\ell}^{(0)}-\nabla_{\ell}f(\vx^{(0)})\|_F^2]\nonumber\\
        =&\E[\|\beta_1(\mP_{\ell}^{(0)}(\mP_{\ell}^{(0)})^\top-\mI)\mG_{\ell}^{(0)}+\beta_1(\mG_{\ell}^{(0)}-\nabla_{\ell}f(\vx^{(0)}))-(1-\beta_1)\nabla_{\ell}f(\vx^{(0)})\|_F^2]\nonumber\\
        \le&\beta_1\E[\|(\mP_{\ell}^{(0)}(\mP_{\ell}^{(0)})^\top-\mI)\mG_{\ell}^{(0)}+\mG_{\ell}^{(0)}-\nabla_{\ell}f(\vx^{(0)})\|_F^2]+(1-\beta_1)\|\nabla_{\ell}f(\vx^{(0)})\|_F^2,\label{eq:pflm-lba-mc-1}
    \end{align}
    where the inequality uses Jensen's inequality. For the first term we have
    \begin{align}
        &\E[\|(\mP_{\ell}^{(0)}(\mP_{\ell}^{(0)})^\top-\mI)\mG_{\ell}^{(0)}+\mG_{\ell}^{(0)}-\nabla_{\ell}f(\vx^{(0)})\|_F^2]\nonumber\\
        \le&2\E[\|(\mI-\mP_{\ell}^{(0)}(\mP_{\ell}^{(0)})^\top)\mG_{\ell}^{(0)}\|_F^2]+2\E[\|\mG_{\ell}^{(0)}-\nabla_{\ell}f(\vx^{(0)})\|_F^2]\nonumber\\
        \le&2(1-\delta_{\ell})\E[\|\mG_{\ell}\|_F^2]+2\E[\|\mG_{\ell}^{(0)}-\nabla_{\ell}f(\vx^{(0)})\|_F^2]\nonumber\\
        \le&2(1-\delta_{\ell})\|\nabla_{\ell} f(\vx^{(0)})\|_F^2+\frac{(4-2\delta_{\ell})\sigma_{\ell}^2}{\gB},\label{eq:pflm-lba-mc-1.1}
    \end{align}
    where the first inequality uses Cauchy's inequality, the second inequality uses Lemma \ref{lm:SVD}, the third inequality uses $\E[\|\mG_{\ell}^{(0)}-\nabla_{\ell}f(\vx^{(0)})\|_F^2]\le\sigma_{\ell}^2/\gB$ (Assumption \ref{asp:sg}). Applying \eqref{eq:pflm-lba-mc-1.1} and Lemma \ref{lm:connection} to \eqref{eq:pflm-lba-mc-1} yields \eqref{eq:lm-lba-mc-1}.
    
    When $t=k\tau$, $k\in\mathbb{N}^*$, we have
    \begin{align}
        &\E[\|\tilde{\mM}_{\ell}^{(t)}-\nabla_{\ell}f(\vx^{(t)})\|_F^2]\nonumber\\
        =&\E[\|\mP_{\ell}^{(t)}(\mP_{\ell}^{(t)})^\top[(1-\beta_1)\tilde{\mM}_{\ell}^{(t-1)}+\beta_1\mG_{\ell}^{(t)}-\nabla_{\ell}f(\vx^{(t)})]-(\mI-\mP_{\ell}^{(t)}(\mP_{\ell}^{(t)})^\top)\nabla_{\ell}f(\vx^{(t)})\|_F^2]\nonumber\\
        =&\E[\|\mP_{\ell}^{(t)}(\mP_{\ell}^{(t)})^\top[(1-\beta_1)\tilde{\mM}_{\ell}^{(t-1)}+\beta_1\mG_{\ell}^{(t)}-\nabla_{\ell}f(\vx^{(t)})]\|_F^2]\nonumber\\
        &+\E[\|(\mI-\mP_{\ell}^{(t)}(\mP_{\ell}^{(t)})^\top)\nabla_{\ell}f(\vx^{(t)})\|_F^2],\label{eq:pflm-lba-mc-2}
    \end{align}
    where the second equality uses Lemma \ref{lm:orthogonality}. By $\|\mP_{\ell}^{(t)}(\mP_{\ell}^{(t)})^\top\|_2=1$, we have
    \begin{align}
        &\E[\|\mP_{\ell}^{(t)}(\mP_{\ell}^{(t)})^\top[(1-\beta_1)\tilde{\mM}_{\ell}^{(t-1)}+\beta_1\mG_{\ell}^{(t)}-\nabla_{\ell}f(\vx^{(t)})]\|_F^2]\nonumber\\
        \le&\E[\|(1-\beta_1)\tilde{\mM}_{\ell}^{(t-1)}+\beta_1\mG_{\ell}^{(t)}-\nabla_{\ell}f(\vx^{(t)})\|_F^2]\nonumber\\
        =&\E[\|(1-\beta_1)(\tilde{\mM}_{\ell}^{(t-1)}-\nabla_{\ell}f(\vx^{(t)}))+\beta_1(\mG_{\ell}^{(t)}-\nabla_{\ell}f(\vx^{(t)}))\|_F^2]\nonumber\\
        \le&\E[\|(1-\beta_1)(\tilde{\mM}_{\ell}^{(t-1)}-\nabla_{\ell}f(\vx^{(t)}))\|_F^2]+\beta_1^2\E[\|\mG_{\ell}^{(t)}-\nabla_{\ell}f(\vx^{(t)})\|_F^2],\label{eq:pflm-lba-mc-2.1}
    \end{align}
    where the last inequality uses the unbiasedness of $\mG_{\ell}^{(t)}$ (Assumption \ref{asp:sg}). 
    By Young's inequality, we have
    \begin{align}
        &\E[\|\tilde{\mM}_{\ell}^{(t-1)}-\nabla_{\ell}f(\vx^{(t)})\|_F^2]\nonumber\\
        =&\E[\|(\tilde{\mM}_{\ell}^{(t-1)}-\nabla_{\ell}f(\vx^{(t-1)}))-(\nabla_{\ell}f(\vx^{(t)})-\nabla_{\ell}f(\vx^{(t-1)})\|_F^2]\nonumber\\
        \le&\left(1+\frac{\delta_{\ell}\beta_1}{4}\right)\E[\|\tilde{\mM}_{\ell}^{(t-1)}-\nabla_{\ell}f(\vx^{(t-1)})\|_F^2]+\left(1+\frac{4}{\delta_{\ell}\beta_1}\right)\E[\|\nabla_{\ell}f(\vx^{(t)})-\nabla_{\ell}f(\vx^{(t-1)})\|_F^2].\label{eq:pflm-lba-mc-3}
    \end{align}
    Applying \eqref{eq:pflm-lba-mc-3} to \eqref{eq:pflm-lba-mc-2.1} yields
    \begin{align}
        &\E[\|\mP_{\ell}^{(t)}(\mP_{\ell}^{(t)})^\top[(1-\beta_1)\tilde{\mM}_{\ell}^{(t-1)}+\beta_1\mG_{\ell}^{(t)}-\nabla_{\ell}f(\vx^{(t)})]\|_F^2]\nonumber\\
        \le&\left(1-\left(1-\frac{\delta_{\ell}}{4}\right)\beta_1\right)\E[\|\tilde{\mM}_{\ell}^{(t-1)}-\nabla_{\ell}f(\vx^{(t-1)})\|_F^2]+\frac{\beta_1^2\sigma^2}{\gB}\nonumber\\
        &+\frac{5(1-\beta_1)}{\delta_{\ell}\beta_1}\E[\|\nabla_{\ell}f(\vx^{(t)})-\nabla_{\ell}f(\vx^{(t-1)})\|_F^2].\label{eq:pflm-lba-mc-3.1}
    \end{align}
    For the second term in \eqref{eq:pflm-lba-mc-2}, we have
    \begin{align}
        &\E[\|(\mI-\mP_{\ell}^{(t)}(\mP_{\ell}^{(t)})^\top)\nabla_{\ell}f(\vx^{(t)})\|_F^2]\nonumber\\
        \le&2\E[\|(\mI-\mP_{\ell}^{(t)}(\mP_{\ell}^{(t)})^\top)\mG_{\ell}^{(t)}\|_F^2]+2\E[\|(\mI-\mP_{\ell}^{(t)}(\mP_{\ell}^{(t)})^\top)(\mG_{\ell}^{(t)}-\nabla_{\ell}f(\vx^{(t)}))\|_F^2]\nonumber\\
        \le&2(1-\delta_{\ell})\E[\|\mG_{\ell}^{(t)}\|_F^2]+2\E[\|\mG_{\ell}^{(t)}-\nabla_{\ell}f(\vx^{(t)})\|_F^2]\nonumber\\
        \le&2(1-\delta_{\ell})\E[\|\nabla_{\ell}f(\vx^{(t)})\|_F^2]+\frac{4\sigma_{\ell}^2}{\gB},\label{eq:pflm-lba-mc-3.2}
    \end{align}
    where the first inequality uses Cauchy's inequality, the second inequality uses Lemma \ref{lm:SVD} and $\|\mI-\mP_{\ell}^{(t)}(\mP_{\ell}^{(t)})^\top\|_2=1$, the third inequality uses Assumption \ref{asp:sg}. Applying \eqref{eq:pflm-lba-mc-3.1}\eqref{eq:pflm-lba-mc-3.2} to \eqref{eq:pflm-lba-mc-2} and using Lemma \ref{lm:connection} yields \eqref{eq:lm-lba-mc-2}.

    When $t=k\tau+r$, $k\in\mathbb{N}$, $1\le r<\tau$, we have
    \begin{align}
        &\E[\|\tilde{\mM}_{\ell}^{(t)}-\nabla_{\ell}f(\vx^{(t)})\|_F^2]\nonumber\\
        =&\E[\|(1-\beta_1)(\tilde{\mM}_{\ell}^{(t-1)}-\nabla_{\ell}f(\vx^{(t)}))+\beta_1(\mP_{\ell}^{(t)}(\mP_{\ell}^{(t)})^\top\mG_{\ell}^{(t)}-\nabla_{\ell}f(\vx^{(t)}))\|_F^2]\nonumber\\
        =&\E[\|(1-\beta_1)(\tilde{\mM}_{\ell}^{(t-1)}-\nabla_{\ell}f(\vx^{(t)}))+\beta_1(\mP_{\ell}^{(t)}(\mP_{\ell}^{(t)})^\top-\mI)\nabla_{\ell}f(\vx^{(t)})\|_F^2]\nonumber\\
&+\beta_1^2\E[\mP_{\ell}^{(t)}(\mP_{\ell}^{(t)})^\top(\mG_{\ell}^{(t)}-\nabla_{\ell}f(\vx^{(t)}))\|_F^2]\nonumber\\
\le&(1-\beta_1)\E[\|\tilde{\mM}_{\ell}^{(t-1)}-\nabla_{\ell}f(\vx^{(t)})\|_F^2]+\beta_1\E[\|(\mI-\mP_{\ell}^{(t)}(\mP_{\ell}^{(t)})^\top)\nabla_{\ell}f(\vx^{(t)})\|_F^2\nonumber\\
&+\beta_1^2\E[\mP_{\ell}^{(t)}(\mP_{\ell}^{(t)})^\top(\mG_{\ell}^{(t)}-\nabla_{\ell}f(\vx^{(t)}))\|_F^2],\label{eq:pflm-lba-mc-4}
    \end{align}
where the second equality uses the unbiasedness of $\mG_{\ell}^{(t)}$ and the independence implied by $\mP_{\ell}^{(t)}=\mP_{\ell}^{(t-1)}$, the inequality uses Jensen's inequality. The first term is similarly bounded as \eqref{eq:pflm-lba-mc-3}. For the second term, we have
\begin{align}
    &\E[\|(\mI-\mP_{\ell}^{(k\tau)}(\mP_{\ell}^{(k\tau)})^\top)\nabla_{\ell}f(\vx^{(t)})\|_F^2]\nonumber\\
    \le&\left(1+\frac{\delta_{\ell}}{4}\right)\E[\|(\mI-\mP_{\ell}^{(k\tau)}(\mP_{\ell}^{(k\tau)})^\top)\mG_{\ell}^{(k\tau)}\|_F^2]\nonumber\\
    &+\left(1+\frac{4}{\delta_{\ell}}\right)\E[\|(\mI-\mP_{\ell}^{(k\tau)}(\mP_{\ell}^{(k\tau)})^\top)(\nabla_{\ell}f(\vx^{(t)})-\mG_{\ell}^{(k\tau)})\|_F^2]\nonumber\\
    \le&\left(1-\frac{3\delta_{\ell}}{4}\right)\E[\|\mG_{\ell}^{(k\tau)}\|_F^2]+2\left(1+\frac{4}{\delta_{\ell}}\right)\E[\|\mG_{\ell}^{(k\tau)}-\nabla_{\ell}f(\vx^{(k\tau)})\|_F^2]\nonumber\\
    &+2\left(1+\frac{4}{\delta_{\ell}}\right)\E[\|\nabla_{\ell}f(\vx^{(t)})-\nabla_{\ell}f(\vx^{(k\tau)})\|_F^2],\label{eq:pflm-lba-mc-5}
\end{align}
where the first inequality uses Young's inequality, the second inequality uses Lemma \ref{lm:SVD} and Cauchy's inequality. We further have
\begin{align}
    &\left(1-\frac{3\delta_{\ell}}{4}\right)\E[\|\mG_{\ell}^{(k\tau)}\|_F^2]+2\left(1+\frac{4}{\delta_{\ell}}\right)\E[\|\mG_{\ell}^{(k\tau)}-\nabla_{\ell}f(\vx^{(k\tau)})\|_F^2]\nonumber\\
    \le&\left(1-\frac{3\delta_{\ell}}{4}\right)\E[\|\nabla_{\ell}f(\vx^{(k\tau)})\|_F^2]+\frac{11}{\delta_{\ell}}\E[\|\mG_{\ell}^{(k\tau)}-\nabla_{\ell}f(\vx^{(k\tau)})\|_F^2]\nonumber\\
    \le&\left(1-\frac{3\delta_{\ell}}{4}\right)\E[\|\nabla_{\ell}f(\vx^{(k\tau)})\|_F^2]+\frac{11\sigma_{\ell}^2}{\delta_{\ell}\gB}\nonumber\\
    \le&\left(1-\frac{\delta_{\ell}}{2}\right)\E[\|\nabla_{\ell}f(\vx^{(t)})\|_F^2]+\left(1+\frac{4}{\delta_{\ell}}\right)\E[\|\nabla_{\ell}f(\vx^{(t)})-\nabla_{\ell}f(\vx^{(k\tau)})\|_F^2]+\frac{11\sigma_{\ell}^2}{\delta_{\ell}\gB},\label{eq:pflm-lba-mc-5.1}
\end{align}
where the first inequality uses unbiasedness of $\mG_{\ell}^{(k\tau)}$, the second inequality uses Assumption \ref{asp:sg}, the third inequality uses Young's inequality.

Applying \eqref{eq:pflm-lba-mc-5.1} to \eqref{eq:pflm-lba-mc-5} and applying Cauchy's inequality yields
\begin{align}
    &\E[\|(\mI-\mP_{\ell}^{(k\tau)}(\mP_{\ell}^{(k\tau)})^\top)\nabla_{\ell}f(\vx^{(t)})\|_F^2]\nonumber\\
    \le&\left(1-\frac{\delta_{\ell}}{2}\right)\E[\|\nabla_{\ell}f(\vx^{(t)})\|_F^2]+\frac{11\sigma_{\ell}^2}{\delta_{\ell}\gB}+\frac{15r}{\delta_{\ell}}\sum_{i=1}^r\E[\|\nabla_{\ell}f(\vx^{(k\tau+i)})-\nabla_{\ell}f(\vx^{(k\tau+i-1)})\|_F^2].\label{eq:pflm-lba-mc-5.2}
\end{align}
For the third term, we have
\begin{align}
    &\E[\|\mP_{\ell}^{(k\tau)}(\mP_{\ell}^{(k\tau)})^\top(\mG_{\ell}^{(t)}-\nabla_{\ell}f(\vx^{(t)}))\|_F^2]\le\E[\|\mG_{\ell}^{(t)}-\nabla_{\ell}f(\vx^{(t)})\|_F^2]\le\sigma_{\ell}^2,\label{eq:pflm-lba-mc-6}
\end{align}
where the first inequality uses $\|\mP_{\ell}^{(k\tau)}(\mP_{\ell}^{(k\tau)})^\top\|_2=1$, the second inequality uses Assumption \ref{asp:sg}.

 Applying \eqref{eq:pflm-lba-mc-3}\eqref{eq:pflm-lba-mc-5.2}\eqref{eq:pflm-lba-mc-6} to \eqref{eq:pflm-lba-mc-4} yields \eqref{eq:lm-lba-mc-3}.
\end{proof}

\begin{lemma}[Momentum error]\label{lm:lba-me}
    Under Assumption \ref{asp:ls}-\ref{asp:sg}, if $0<\beta_1\le1$ in large-batch GaLore using MSGD and MP (Alg.~\ref{alg:lba}),  it holds for any $K\ge1$ that
    \begin{align}
        &\sum_{t=0}^{K\tau-1}\E[\|\tilde{\vm}^{(t)}-\nabla f(\vx^{(t)})\|_2^2]\nonumber\\
        \le&\left(\frac{5(1-\beta_1)}{(1-\underline{\delta}/4)\underline{\delta}\beta_1^2}+\frac{15\tau(\tau-1)}{2(1-\underline{\delta}/4)\underline{\delta}}+\frac{2(\tau-1)}{(1-\overline{\delta}/4)\beta_1}\right)L^2\sum_{t=0}^{K\tau-2}\E[\|\vx^{(t+1)}-\vx^{(t)}\|_2^2]\nonumber\\
        &+\left(\frac{1-\underline{\delta}/2}{1-\underline{\delta}/4}+\frac{4}{(1-\overline{\delta}/4)\tau\beta_1}\right)\sum_{t=0}^{K\tau-1}\E[\|\nabla f(\vx^{(t)})\|_2^2]\nonumber\\
        &+\left(\frac{5K}{(1-\overline{\delta}/4)\beta_1\gB}+\frac{11K\tau}{(1-\underline{\delta}/4)\underline{\delta}\gB}+\frac{K\tau\beta_1}{1-\overline{\delta}/4}\right)\sigma^2.\label{eq:lm-lba-me}
    \end{align}
\end{lemma}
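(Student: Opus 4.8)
The plan is to follow the route of the proof of Lemma~\ref{lm:det-me}, now propagating the extra variance terms supplied by Lemma~\ref{lm:lba-mc}. Write $E_{\ell}^{(t)}:=\E[\|\tilde{\mM}_{\ell}^{(t)}-\nabla_{\ell}f(\vx^{(t)})\|_F^2]$. The three cases of Lemma~\ref{lm:lba-mc} can be read as a single recursion: for every $t\ge1$ we have $E_{\ell}^{(t)}\le\big(1-(1-\frac{\delta_{\ell}}{4})\beta_1\big)E_{\ell}^{(t-1)}+R_{\ell}^{(t)}$, where $R_{\ell}^{(t)}$ is the right-hand side of \eqref{eq:lm-lba-mc-2} when $\tau\mid t$ and of \eqref{eq:lm-lba-mc-3} otherwise, while the base case $t=0$ gives $E_{\ell}^{(0)}\le R_{\ell}^{(0)}$ with $R_{\ell}^{(0)}$ the right-hand side of \eqref{eq:lm-lba-mc-1}.

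First I would sum this recursion over $t=0,\dots,K\tau-1$. Since the contraction factor $1-(1-\frac{\delta_{\ell}}{4})\beta_1$ lies in $[0,1)$, telescoping (dropping the nonpositive boundary term) gives $\sum_{t=0}^{K\tau-1}E_{\ell}^{(t)}\le\big((1-\frac{\delta_{\ell}}{4})\beta_1\big)^{-1}\sum_{t=0}^{K\tau-1}R_{\ell}^{(t)}$, exactly as in Lemma~\ref{lm:det-me}. The bulk of the work is to sum $\sum_t R_{\ell}^{(t)}$, and three bookkeeping facts do this. (i) The block averages $\frac{1}{\tau}\sum_{r=0}^{\tau-1}\|\nabla_{\ell}f(\vx^{(k\tau+r)})\|_F^2$ coming from the $t=0$ case (block $k=0$) and the $t=k\tau$ cases (blocks $k=1,\dots,K-1$) tile $\{0,\dots,K\tau-1\}$ and collapse to $\frac{1}{\tau}\sum_{t=0}^{K\tau-1}\|\nabla_{\ell}f(\vx^{(t)})\|_F^2$; the pointwise terms $(1-\frac{\delta_{\ell}}{2})\beta_1\|\nabla_{\ell}f(\vx^{(t)})\|_F^2$ from the non-switching case sum (after over-counting to all $t$) to $(1-\frac{\delta_{\ell}}{2})\beta_1\sum_{t=0}^{K\tau-1}\|\nabla_{\ell}f(\vx^{(t)})\|_F^2$. (ii) The consecutive-gradient-difference contributions — the $\frac{5(1-\beta_1)}{\delta_{\ell}\beta_1}\|\nabla_{\ell}f(\vx^{(t)})-\nabla_{\ell}f(\vx^{(t-1)})\|_F^2$ present at every step, the within-block telescoping differences in the switching cases, and the double sums $\frac{15r\beta_1}{\delta_{\ell}}\sum_{i=1}^{r}\|\nabla_{\ell}f(\vx^{(k\tau+i)})-\nabla_{\ell}f(\vx^{(k\tau+i-1)})\|_F^2$ in \eqref{eq:lm-lba-mc-3} — all sum to at most a constant multiple of $\sum_{t=0}^{K\tau-2}\|\nabla_{\ell}f(\vx^{(t+1)})-\nabla_{\ell}f(\vx^{(t)})\|_F^2$; here the double sums are flattened via $\sum_{r=1}^{\tau-1}r\sum_{i=1}^{r}g_i\le\frac{\tau(\tau-1)}{2}\sum_{i=1}^{\tau-1}g_i$ by swapping the order of summation, which is what produces the $\frac{15\tau(\tau-1)}{2}$ coefficient. (iii) The variance terms $\frac{4\beta_1\sigma_{\ell}^2}{\gB}$ and $\frac{5\sigma_{\ell}^2}{\gB}$ occur once per switching step (there are $K$), and $(\frac{11\beta_1}{\delta_{\ell}\gB}+\beta_1^2)\sigma_{\ell}^2$ once per non-switching step (there are $K(\tau-1)$), so after dividing by $(1-\frac{\delta_{\ell}}{4})\beta_1$ they yield the $\frac{K}{\beta_1\gB}$, $\frac{K\tau}{\delta_{\ell}\gB}$, and $K\tau\beta_1$ noise contributions.

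Finally I would invoke $L$-smoothness (Assumption~\ref{asp:ls}) to replace $\sum_{\ell}\|\nabla_{\ell}f(\vx^{(t+1)})-\nabla_{\ell}f(\vx^{(t)})\|_F^2$ by $L^2\|\vx^{(t+1)}-\vx^{(t)}\|_2^2$, sum the per-layer bounds over $\ell=1,\dots,N_L$ using $\sum_{\ell}\|\nabla_{\ell}f(\vx^{(t)})\|_F^2=\|\nabla f(\vx^{(t)})\|_2^2$ and $\sum_{\ell}\sigma_{\ell}^2=\sigma^2$, and bound each coefficient uniformly in $\ell$ by substituting $\underline{\delta}$ or $\overline{\delta}$ for $\delta_{\ell}$ according to whether the coefficient is decreasing or increasing in $\delta_{\ell}$ (e.g.\ $\frac{1-\delta_{\ell}/2}{1-\delta_{\ell}/4}\le\frac{1-\underline{\delta}/2}{1-\underline{\delta}/4}$ and $\frac{1}{1-\delta_{\ell}/4}\le\frac{1}{1-\overline{\delta}/4}$), which gives \eqref{eq:lm-lba-me}.

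The step I expect to be the main obstacle is the constant tracking in (ii)--(iii): one must verify that the union of within-block difference sums omits only the $K-1$ inter-block differences (so the collapse to $\sum_{t=0}^{K\tau-2}$ is valid), that the order-swap in $\sum_{r}r\sum_{i}$ genuinely gives the $\frac{\tau(\tau-1)}{2}$ factor matching $\frac{15\tau(\tau-1)}{2(1-\underline{\delta}/4)\underline{\delta}}$ in the claim, and that the multiplicities $K$ versus $K(\tau-1)$ are attached to the correct variance terms. None of this is deep, but combining the subspace-switching index arithmetic with the propagated $\sigma^2/\gB$ terms makes it the most error-prone part; everything else is a routine adaptation of Lemma~\ref{lm:det-me}.
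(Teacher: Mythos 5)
Your proposal is correct and follows essentially the same route as the paper: sum the per-iteration contraction inequalities from Lemma~\ref{lm:lba-mc} over $t=0,\dots,K\tau-1$, telescope using the contraction factor $1-(1-\delta_\ell/4)\beta_1$ to isolate $((1-\delta_\ell/4)\beta_1)\sum_t E_\ell^{(t)}$ on the left, collect the block-averaged and pointwise gradient-norm terms, flatten the double difference sums via the order swap yielding the $\tau(\tau-1)/2$ factor, count the $K$ switching versus $K(\tau-1)$ non-switching variance contributions, divide by $(1-\delta_\ell/4)\beta_1$, and finally sum over $\ell$ with $L$-smoothness and the monotone substitution of $\underline{\delta}$ or $\overline{\delta}$. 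The bookkeeping points you flag as the main obstacle (within-block differences being a subset of all consecutive differences, the $\frac{\tau(\tau-1)}{2}$ factor from swapping, the $K$ vs.\ $K(\tau-1)$ multiplicities) are exactly the places where the paper's calculation is concentrated, and your description of how they resolve matches the paper's coefficients.
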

\begin{proof}
    By Lemma \ref{lm:lba-mc} we have
    \begin{align}
        &\sum_{t=0}^{K\tau-1}\E[\|\tilde{\mM}_{\ell}^{(t)}-\nabla_{\ell}f(\vx^{(t)})\|_F^2]-\left(1-\left(1-\frac{\delta_{\ell}}{4}\right)\beta_1\right)\sum_{t=0}^{K\tau-2}\E[\|\tilde{\mM}_{\ell}^{(t)}-\nabla_{\ell}f(\vx^{(t)})\|_F^2]\nonumber\\
        \le&\left(\frac{5(1-\beta_1)}{\delta_{\ell}\beta_1}+\frac{15\tau(\tau-1)\beta_1}{2\delta_{\ell}}+2(\tau-1)\right)\sum_{t=0}^{K\tau-2}\E[\|\nabla_{\ell}f(\vx^{(t+1)})-\nabla_{\ell}f(\vx^{(t)})\|_F^2]\nonumber\\
        &+\left(\frac{4}{\tau}+\left(1-\frac{\delta_{\ell}}{2}\right)\beta_1\right)\sum_{t=0}^{K\tau-1}\E[\|\nabla_{\ell}f(\vx^{(t)})\|_F^2]+\left(\frac{5K}{\gB}+\frac{11K\tau\beta_1}{\delta_{\ell}\gB}+K\tau\beta_1^2\right)\sigma_{\ell}^2,\nonumber
    \end{align}
    which implies
    \begin{align}
        &\sum_{t=0}^{K\tau-1}\E[\|\tilde{\mM}_{\ell}^{(t)}-\nabla_{\ell}f(\vx^{(t)})\|_F^2]\nonumber\\
        \le&\left(\frac{5(1-\beta_1)}{(1-\delta_{\ell}/4)\delta_{\ell}\beta_1^2}+\frac{15\tau(\tau-1)}{2(1-\delta_{\ell}/4)\delta_{\ell}}+\frac{2(\tau-1)}{(1-\delta_{\ell}/4)\beta_1}\right)\sum_{t=0}^{K\tau-2}\E[\|\nabla_{\ell}f(\vx^{(t+1)})-\nabla_{\ell}f(\vx^{(t)})\|_F^2]\nonumber\\
        &+\left(\frac{1-\delta_{\ell}/2}{1-\delta_{\ell}/4}+\frac{4}{(1-\delta_{\ell}/4)\tau\beta_1}\right)\sum_{t=0}^{K\tau-1}\E[\|\nabla_{\ell}f(\vx^{(t)})\|_F^2]\nonumber\\
        &+\left(\frac{5K}{(1-\delta_{\ell}/4)\beta_1\gB}+\frac{11K\tau}{(1-\delta_{\ell}/4)\delta_{\ell}\gB}+\frac{K\tau\beta_1}{1-\delta_{\ell}/4}\right)\sigma_{\ell}^2.\label{eq:pflm-lba-me-1}
    \end{align}
    Summing \eqref{eq:pflm-lba-me-1} for $\ell=1,\cdots,N_L$ and applying Assumption \ref{asp:ls}-\ref{asp:sg} yields \eqref{eq:lm-lba-me}.
\end{proof}
Now we are ready to prove the convergence of Alg.~\ref{alg:lba}.

\begin{theorem}[Convergence of large-batch GaLore]\label{thmres:lba}
    Under Assumptions \ref{asp:lb}-\ref{asp:sg}, if hyperparameters
    \begin{align}
        0<\beta_1\le 1,\quad\tau\ge\frac{128}{3\beta_1\underline{\delta}},\quad0<\eta\le\min\left\{\frac{1}{4L},\sqrt{\frac{3\underline{\delta}\beta_1^2}{80L^2}},\sqrt{\frac{\underline{\delta}}{40\tau^2L^2}},\sqrt{\frac{3\beta_1}{32\tau L^2}}\right\},\label{eq:thmres-lba-hyperparameter}
    \end{align}
    GaLore using large-batch stochastic gradients and MSGD with MP (Alg.~\ref{alg:lba}) converges as
    \begin{align}
        \frac{1}{K\tau}\sum_{t=0}^{K\tau-1}\E{\|\nabla f(\vx^{(t)})\|_2^2]\le\frac{16\Delta}{\underline{\delta}\eta K\tau}}+\left(\frac{160}{3\beta_1\underline{\delta}\tau\gB}+\frac{352}{3\underline{\delta}^2\gB}+\frac{32\beta_1}{3\underline{\delta}}\right)\sigma^2\label{eq:thmres-lba-convergence}
    \end{align}
    for any $K\ge1$, where $\Delta=f(\vx^{(0)})-\inf_{\vx}f(\vx)$.
\end{theorem}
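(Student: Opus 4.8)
The plan is to follow the same template as the proof of Theorem~\ref{thmres:det}, with the stochastic large-batch setting adding only $\sigma^2$ bookkeeping. Everything genuinely delicate about the noise-corrupted subspace and the periodic momentum re-projection has already been packaged into Lemma~\ref{lm:lba-mc} and its telescoped form Lemma~\ref{lm:lba-me}; in particular, the fact that the large batch at the SVD step shrinks the noise-induced subspace error by a factor $\gB$ is already encapsulated there. So what remains is an application of the descent lemma plus constant-chasing.

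First I would sum the descent inequality \eqref{eq:lm-descent} over $t=0,\dots,K\tau-1$. Since $\vx^{(t+1)}-\vx^{(t)}=-\eta\tilde{\vm}^{(t)}$, multiplying by $2/\eta$, telescoping the function values, and using Assumption~\ref{asp:lb} to bound $f(\vx^{(0)})-\E[f(\vx^{(K\tau)})]\le\Delta$ gives
\begin{align*}
\sum_{t=0}^{K\tau-1}\E[\|\nabla f(\vx^{(t)})\|_2^2]\le\frac{2\Delta}{\eta}+\sum_{t=0}^{K\tau-1}\E[\|\tilde{\vm}^{(t)}-\nabla f(\vx^{(t)})\|_2^2]-\Big(\frac{1}{\eta^2}-\frac{L}{\eta}\Big)\sum_{t=0}^{K\tau-1}\E[\|\vx^{(t+1)}-\vx^{(t)}\|_2^2].
\end{align*}
Next I would substitute the momentum-error bound \eqref{eq:lm-lba-me} for the middle sum. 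This yields, on the right, a displacement term $C_1\sum\E[\|\vx^{(t+1)}-\vx^{(t)}\|_2^2]$ with explicit $L^2$-coefficient, a gradient term $C_2\sum\E[\|\nabla f(\vx^{(t)})\|_2^2]$, and a noise term $C_3\sigma^2$ with $C_3$ proportional to $K$. Moving the $C_2$-term to the left leaves $(1-C_2)\sum\E[\|\nabla f\|_2^2]\le\tfrac{2\Delta}{\eta}+C_3\sigma^2+(C_1-\tfrac{1}{\eta^2}+\tfrac{L}{\eta})\sum\E[\|\vx^{(t+1)}-\vx^{(t)}\|_2^2]$.

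Then I would check that the hyperparameter conditions \eqref{eq:thmres-lba-hyperparameter} are exactly calibrated for this. Using $\underline{\delta}\le\overline{\delta}<1$ so that $1/(1-\overline{\delta}/4)$ and $1/(1-\underline{\delta}/4)$ are at most $4/3$, and $1-\beta_1\le1$, each of the four caps on $\eta$ forces $\tfrac{1}{4\eta^2}$ to dominate, respectively, $\tfrac{L}{\eta}$ and the three $L^2$-pieces of $C_1$; hence $C_1-\tfrac{1}{\eta^2}+\tfrac{L}{\eta}\le0$ and the displacement sum is dropped. Similarly, $\tau\ge64/(3\beta_1\underline{\delta})$ together with $\tfrac{1-\underline{\delta}/2}{1-\underline{\delta}/4}\le1-\tfrac{\underline{\delta}}{4}$ makes $1-C_2\ge\tfrac{\underline{\delta}}{8}$. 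Dividing through by $(1-C_2)K\tau$, reading off $C_3=\Theta\!\big(\tfrac{K}{(1-\overline{\delta}/4)\beta_1\gB}+\tfrac{K\tau}{(1-\underline{\delta}/4)\underline{\delta}\gB}+\tfrac{K\tau\beta_1}{1-\overline{\delta}/4}\big)$ from \eqref{eq:lm-lba-me}, and applying the same $4/3$ bounds, converts $\tfrac{8C_3\sigma^2}{\underline{\delta}K\tau}$ into the stated $\big(\tfrac{160}{3\beta_1\underline{\delta}\tau\gB}+\tfrac{352}{3\underline{\delta}^2\gB}+\tfrac{32\beta_1}{3\underline{\delta}}\big)\sigma^2$, giving \eqref{eq:thmres-lba-convergence}.

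I expect the only real obstacle to be arithmetic discipline: keeping the chain of Young's-inequality constants consistent so that each of the four $\eta$-caps neutralizes exactly one term in $C_1$, and verifying that the $\tau$-threshold is tight enough to yield $1-C_2\ge\underline{\delta}/8$ rather than some smaller fraction, which is what controls the leading constant in the $\Delta$-term. The deterministic proof (Theorem~\ref{thmres:det}) already provides the exact template for these manipulations; the large-batch version differs only by the additive $\sigma^2$ terms, which never interact with the $\|\nabla f\|_2^2$ or displacement coefficients and therefore just ride along through the rearrangement.
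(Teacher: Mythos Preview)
Your proposal is correct and follows essentially the same route as the paper's own proof: sum the descent lemma~\eqref{eq:lm-descent}, plug in Lemma~\ref{lm:lba-me}, use $\underline{\delta}\le\overline{\delta}<1$ to replace $1/(1-\overline{\delta}/4)$ and $1/(1-\underline{\delta}/4)$ by $4/3$, verify that each of the four caps in \eqref{eq:thmres-lba-hyperparameter} kills one piece of the displacement coefficient while the $\tau$-threshold forces $1-C_2\ge\underline{\delta}/8$, and then divide through by $(\underline{\delta}/8)K\tau$. The paper's proof is literally the same three displayed inequalities you outline, with no additional ideas.
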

\begin{proof}
    By Lemma \ref{lm:descent} we have
    \begin{align}
        \sum_{t=0}^{K\tau-1}\E[\|\nabla f(\vx^{(t)})\|_2^2]\le&\frac{2[f(\vx^{(0)})-\E[f(\vx^{(K\tau)})]}{\eta}+\sum_{t=0}^{K\tau-1}\E[\|\tilde{\vm}^{(t)}-\nabla f(\vx^{(t)})\|_2^2]\nonumber\\
        &-\left(\frac{1}{\eta^2}-\frac{L}{\eta}\right)\sum_{t=0}^{K\tau-1}\E[\|\vx^{(t+1)}-\vx^{(t)}\|_2^2].\label{eq:pfthmres-lba-1}
    \end{align}
    Applying Lemma \ref{lm:lba-me} to \eqref{eq:pfthmres-lba-1} and using $\underline{\delta}\le\overline{\delta}<1$ yields
    \begin{align}
        &\left(\frac{\underline{\delta}}{4}-\frac{16}{3\tau\beta_1}\right)\sum_{t=0}^{K\tau-1}\E[\|\nabla f(\vx^{(t)})\|_2^2]\nonumber\\
        \le&\frac{2}{\eta}\E[f(\vx^{(0)})-f(\vx^{(K\tau)})]+\left(\frac{20K}{3\beta_1\gB}+\frac{44K\tau}{3\underline{\delta}\gB}+\frac{4K\tau\beta_1}{3}\right)\sigma^2\nonumber\\
        &-\left(\frac{1}{\eta^2}-\frac{L}{\eta}-\frac{20(1-\beta_1)L^2}{3\underline{\delta}\beta_1^2}-\frac{10\tau(\tau-1)L^2}{\underline{\delta}}-\frac{8(\tau-1)L^2}{3\beta_1}\right)\sum_{t=0}^{K\tau-1}\E[\|\vx^{(t+1)}-\vx^{(t)}\|_2^2].\label{eq:pfthmres-lba-2}
    \end{align}
By \eqref{eq:thmres-lba-hyperparameter} we have
\begin{align}
    \frac{\underline{\delta}}{4}-\frac{16}{3\tau\beta_1}\ge\frac{\underline{\delta}}{8},\quad\mbox{and}\quad\frac{1}{4\eta^2}\ge\max\left\{\frac{L}{\eta},\frac{20(1-\beta_1)L^2}{3\underline{\delta}\beta_1^2},\frac{10\tau(\tau-1)L^2}{\underline{\delta}},\frac{8(\tau-1)L^2}{3\beta_1}\right\}.\label{eq:pfthmres-lba-3}
\end{align}
Applying \eqref{eq:pfthmres-lba-3} to \eqref{eq:pfthmres-lba-2} yields
\eqref{eq:thmres-lba-convergence}.
\end{proof}
We now prove Theorem \ref{thm:lba}, which is restated as follows.
\begin{corollary}[Convergence complexity of large-batch GaLore]\label{corres:lba}
    Under Assumptions \ref{asp:lb}-\ref{asp:sg}, if $T\ge2+256/(3\underline{\delta})+(256\sigma)^2/(9\sqrt{\underline{\delta}}L\Delta)$ and we choose
    \begin{align*}
        \beta_1=&\left(1+\sqrt{\frac{\underline{\delta}^{3/2}\sigma^2T}{L\Delta}}\right)^{-1},\\
        \tau=&\left\lceil\frac{128}{3\underline{\delta}\beta_1}\right\rceil,\\
        \eta=&\left(4L+\sqrt{\frac{80L^2}{3\underline{\delta}\beta_1^2}}+\sqrt{\frac{40\tau^2L^2}{\underline{\delta}}}+\sqrt{\frac{32\tau L^2}{3\beta_1}}\right)^{-1},\\
        \gB=&\left\lceil\frac{1}{\underline{\delta}\beta_1}\right\rceil,
    \end{align*}
    GaLore using large-batch stochastic gradients and MSGD with MP (Alg.~\ref{alg:lba}) converges as
    \begin{align}
        \frac{1}{T}\sum_{t=0}^{T-1}\E[\|\nabla f(\vx^{(t)})\|_2^2]=\gO\left(\frac{L\Delta}{\underline{\delta}^{5/2}T}+\sqrt{\frac{L\Delta\sigma^2}{\underline{\delta}^{7/2}T}}\right),\label{eq:corres-lba}
    \end{align}
    where $\Delta=f(\vx^{(0)})-\inf_{\vx}f(\vx)$. Consequently, the computation complexity to reach an $\varepsilon$-accurate solution $\vx$ such that $\|\nabla f(\vx)\|_2^2\le\varepsilon$ is $\gO\left(\frac{L\Delta\sigma^2}{\underline{\delta}^{7/2}\varepsilon^2}+\frac{L\Delta}{\underline{\delta}^{5/2}\varepsilon}+\frac{\sigma^2}{\underline{\delta}^{1/2}L\Delta}+\frac{1}{\underline{\delta}}\right)$.
\end{corollary}
\begin{proof}
    $T\ge2+128/(3\underline{\delta})+(128\sigma)^2/(9\sqrt{\underline{\delta}}L\Delta)$ guarantees $T\ge\tau$. Let $T=K\tau+r$, where $K\in\mathbb{N}^*$ and $0\le r<\tau$. If $r=0$, \eqref{eq:corres-lba} is a direct result of Theorem \ref{thmres:lba}. If $r>0$, applying Theorem \ref{thmres:lba} to $\tilde{K}:=K+1$ yields
    \begin{align*}
        \frac{1}{T}\sum_{t=0}^{T-1}\E[\|\nabla f(\vx^{(t)})\|_2^2]\le\frac{\tilde{K}\tau}{T}\cdot\frac{1}{\tilde{K}\tau}\sum_{t=0}^{\tilde{K}\tau-1}\E[\|\nabla f(\vx^{(t)})\|_2^2]=\gO\left(\frac{L\Delta}{\underline{\delta}^{5/2}T}+\sqrt{\frac{L\Delta\sigma^2}{\underline{\delta}^{7/2}T}}\right).
    \end{align*}
\end{proof}

\subsection{Convergence of GoLore}\label{app:golore}
In this subsection, we present the proof for Theorem \ref{thm:golore}. GoLore using small-batch stochastic gradients and MSGD with MP is specified as Alg.~\ref{alg:golore}.

\begin{algorithm}[t]
    \caption{GoLore using small-batch stochastic gradients and MSGD with MP}
    \label{alg:golore}
    \renewcommand{\algorithmicrequire}{\textbf{Input:}}
\renewcommand{\algorithmicensure}{\textbf{Output:}}
\begin{algorithmic}
    \REQUIRE Initial point $\vx^{(0)}$, data distribution $\gD$, learning rate $\eta$, subspace changing frequency $\tau$, rank $\{r_{\ell}\}_{\ell=1}^{N_L}$, momentum parameter $\beta_1$.
    \ENSURE $\{\vx^{(t)}\}_{t=0}^T$.
    \STATE Initialize optimizer state $\{\mM_{\ell}^{(-1)}\}_{\ell=1}^{N_L}$  to zero;
    \FOR{$t=0,1,\cdots,T-1$}
    \STATE Sample $\xi^{(t)}\sim\gD$;
    \STATE $\mG_{\ell}^{(t)}=\nabla_{\ell}F(\vx^{(t)};\xi^{(t)})$;
    \FOR{$\ell=1,2,\cdots,N_L$}
    \IF{$t\equiv0$ (mod $\tau$)}
    \IF{$m_{\ell}\le n_{\ell}$}
    \STATE Sample $\mP_{\ell}^{(t)}\sim\gU(\mathrm{St}_{m_{\ell},r_{\ell}})$;
    \STATE
    $\mM_{\ell}^{(t)}\gets(1-\beta_1)(\mP_{\ell}^{(t)})^\top\mP_{\ell}^{(t-1)}\mM_{\ell}^{(t-1)}+\beta_1(\mP_{\ell}^{(t)})^\top\mG_{\ell}^{(t)}$;
    \STATE $\mX_{\ell}^{(t+1)}\gets\mX_{\ell}^{(t)}-\eta\mP_{\ell}^{(t)}\mM_{\ell}^{(t)}$;
    \ELSE
    \STATE Sample $\mQ_{\ell}^{(t)}\sim\gU(\mathrm{St}_{n_{\ell},r_{\ell}})$;
    \STATE $\mM_{\ell}^{(t)}\gets(1-\beta_1)\mM_{\ell}^{(t-1)}(\mQ_{\ell}^{(t-1)})^\top\mQ_{\ell}^{(t)}+\beta_1\mG_{\ell}^{(t)}\mQ_{\ell}^{(t)}$;
    \STATE $\mX_{\ell}^{(t+1)}\gets\mX_{\ell}^{(t)}-\eta\mM_{\ell}^{(t)}(\mQ_{\ell}^{(t)})^\top$;
    \ENDIF
    \ELSE
    \IF{$m_{\ell}\le n_{\ell}$}
    \STATE $\mP_{\ell}^{(t)}\gets\mP_{\ell}^{(t-1)}$;
    \STATE $\mM_{\ell}^{(t)}\gets(1-\beta_1)\mM_{\ell}^{(t-1)}+\beta_1(\mP_{\ell}^{(t)})^\top\mG_{\ell}^{(t)}$;
    \STATE $\mX_{\ell}^{(t+1)}\gets\mX_{\ell}^{(t)}-\eta\mP_{\ell}^{(t)}\mM_{\ell}^{(t)}$;
    \ELSE
    \STATE $\mQ_{\ell}^{(t)}\gets\mQ_{\ell}^{(t-1)}$;
    \STATE $\mM_{\ell}^{(t)}\gets(1-\beta_1)\mM_{\ell}^{(t-1)}+\beta_1\mG_{\ell}^{(t)}\mQ_{\ell}^{(t)}$;
    \STATE $\mX_{\ell}^{(t+1)}\gets\mX_{\ell}^{(t)}-\eta\mM_{\ell}^{(t)}(\mQ_{\ell}^{(t)})^\top$;
    \ENDIF
    \ENDIF
    \ENDFOR
    \ENDFOR
    \end{algorithmic}
\end{algorithm}

\begin{lemma}[Momentum contraction]\label{lm:golore-mc}
Under Assumption \ref{asp:sg}, in large-batch GoLore using MSGD with MP (Alg.~\ref{alg:golore}), if $0<\beta_1\le 1$, term $\tilde{\mM}_{\ell}^{(t)}$ has the following contraction properties:
\begin{itemize}
    \item When $t=0$, it holds that
    \begin{align}
        \E[\|\tilde{\mM}_{\ell}^{(0)}-\nabla_{\ell}f(\mX^{(0)})\|_F^2]\le&(\tau-1)(1-\delta_{\ell}\beta_1)\sum_{r=0}^{\tau-2}\E[\|\nabla_{\ell}f(\vx^{(r+1)})-\nabla_{\ell}f(\vx^{(r)})\|_F^2]\nonumber\\
        &+\frac{2(1-\delta_{\ell}\beta_1)}{\tau}\sum_{r=0}^{\tau-1}\E[\|\nabla_{\ell}f(\vx^{(r)})\|_F^2]+\delta_{\ell}\beta_1^2\sigma_{\ell}^2;\label{eq:lm-golore-mc-1}
    \end{align}
    \item When $t=k\tau$, $k\in\mathbb{N}^*$, it holds that
    \begin{align}
        &\E[\|\tilde{\mM}_{\ell}^{(t)}-\nabla_{\ell}f(\vx^{(t)})\|_F^2]-\delta_{\ell}\left(1-\left(1-\frac{\delta_{\ell}}{4}\right)\beta_1\right)\E[\|\tilde{\mM}_{\ell}^{(t-1)}-\nabla_{\ell}f(\vx^{(t-1)})\|_F^2]\nonumber\\
        \le&\frac{2(1-\delta_{\ell})}{\tau}\sum_{r=0}^{\tau-1}\E[\|\nabla_lf(\vx^{(k\tau+r)})\|_F^2]+\frac{5(1-\beta_1)}{\beta_1}\E[\|\nabla_{\ell}f(\vx^{(t)})-\nabla_{\ell}f(\vx^{(t-1)})\|_F^2]\nonumber\\
        &+(\tau-1)(1-\delta_{\ell})\sum_{r=0}^{\tau-2}\E[\|\nabla_{\ell}f(\vx^{(k\tau+r+1)})-\nabla_{\ell}f(\vx^{(k\tau+r)})\|_F^2]+\delta_{\ell}\beta_1^2\sigma_{\ell}^2;\label{eq:lm-golore-mc-2}
    \end{align}
    \item When $t=k\tau+r$, $k\in\mathbb{N}$, $1\le r<\tau$, it holds that
    \begin{align}
        &\E[\|\tilde{\mM}_{\ell}^{(t)}-\nabla_{\ell}f(\vx^{(t)})\|_F^2]-\left(1-\left(1-\frac{\delta_{\ell}}{4}\right)\beta_1\right)\E[\|\tilde{\mM}_{\ell}^{(t-1)}-\nabla_{\ell}f(\vx^{(t-1)})\|_F^2]\nonumber\\
        \le&\left(1-\frac{\delta_{\ell}}{2}\right)\beta_1\E[\|\nabla_{\ell}f(\vx^{(t)})\|_F^2]+\frac{5(1-\beta_1)}{\delta_{\ell}\beta_1}\E[\|\nabla_{\ell}f(\vx^{(t)})-\nabla_{\ell}f(\vx^{(t-1)})\|_F^2]\nonumber\\
        &+\frac{10r\beta_1}{\delta_{\ell}}\sum_{i=1}^r\E[\|\nabla_{\ell}f(\vx^{(k\tau+i)})-\nabla_{\ell}f(\vx^{(k\tau+i-1)})\|_F^2]+\beta_1^2\sigma_{\ell}^2.\label{eq:lm-golore-mc-3}
    \end{align}
\end{itemize}
\end{lemma}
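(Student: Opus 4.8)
The plan is to mirror the structure of the proofs of Lemma~\ref{lm:det-mc} and Lemma~\ref{lm:lba-mc}, replacing the deterministic spectral bound of Lemma~\ref{lm:SVD} by the random-projection identities of Lemma~\ref{lm:rand}, and exploiting throughout that in GoLore the projection matrix $\mP_\ell^{(t)}$ is sampled \emph{independently of all gradient information}. I will assume without loss of generality $m_\ell\le n_\ell$ and write $\mathbf{n}_\ell^{(t)}:=\mG_\ell^{(t)}-\nabla_\ell f(\vx^{(t)})$, so that $\E[\mathbf{n}_\ell^{(t)}\mid\mathcal{F}_{t-1}]=0$ and $\E[\|\mathbf{n}_\ell^{(t)}\|_F^2\mid\mathcal{F}_{t-1}]\le\sigma_\ell^2$ by Assumption~\ref{asp:sg}. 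The three facts doing the work are: (i) when the subspace is refreshed at $t=k\tau$, $\mP_\ell^{(t)}$ is independent of $\tilde{\mM}_\ell^{(t-1)}$, $\nabla_\ell f(\vx^{(t)})$ and $\mathbf{n}_\ell^{(t)}$, so taking $\E_{\mP}$ of any $\|\mP\mP^\top W\|_F^2$ yields $\delta_\ell\|W\|_F^2$ via $\E[\mP\mP^\top]=\delta_\ell\mI$ and $\mP^\top\mP=\mI$; (ii) $\|\mP\mP^\top\|_2=1$ always; (iii) for the reused matrix $\mP_\ell^{(k\tau)}$, it is still independent of $\nabla_\ell f(\vx^{(k\tau)})$, so $\E\|(\mI-\mP_\ell^{(k\tau)}(\mP_\ell^{(k\tau)})^\top)\nabla_\ell f(\vx^{(k\tau)})\|_F^2=(1-\delta_\ell)\E\|\nabla_\ell f(\vx^{(k\tau)})\|_F^2$ by Lemma~\ref{lm:rand}.

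For $t=0$, one has $\tilde{\mM}_\ell^{(0)}-\nabla_\ell f(\vx^{(0)})=\big(\beta_1\mP\mP^\top-\mI\big)\nabla_\ell f(\vx^{(0)})+\beta_1\mP\mP^\top\mathbf{n}_\ell^{(0)}$, and the two summands are uncorrelated because $\mathbf{n}_\ell^{(0)}$ is mean zero and independent of $\mP=\mP_\ell^{(0)}$. The first summand is bounded by $(1-\delta_\ell\beta_1)\|\nabla_\ell f(\vx^{(0)})\|_F^2$ via convexity of $\|\cdot\|_F^2$ in the weights $\beta_1,1-\beta_1$ and Lemma~\ref{lm:rand}, while $\E\|\beta_1\mP\mP^\top\mathbf{n}_\ell^{(0)}\|_F^2=\beta_1^2\delta_\ell\E\|\mathbf{n}_\ell^{(0)}\|_F^2\le\delta_\ell\beta_1^2\sigma_\ell^2$ by fact (i); applying Lemma~\ref{lm:connection} to $\|\nabla_\ell f(\vx^{(0)})\|_F^2$ gives \eqref{eq:lm-golore-mc-1}. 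For $t=k\tau$ with $k\ge1$, I decompose $\tilde{\mM}_\ell^{(t)}-\nabla_\ell f(\vx^{(t)})$ into its $\mP$-range and $\mP$-kernel parts and invoke Lemma~\ref{lm:orthogonality}. The range part equals $\mP\mP^\top\big((1-\beta_1)(\tilde{\mM}_\ell^{(t-1)}-\nabla_\ell f(\vx^{(t)}))+\beta_1\mathbf{n}_\ell^{(t)}\big)$; taking $\E_{\mP}$ (fact (i)) and expanding the square (the noise is conditionally mean zero) gives $\delta_\ell(1-\beta_1)^2\E\|\tilde{\mM}_\ell^{(t-1)}-\nabla_\ell f(\vx^{(t)})\|_F^2+\delta_\ell\beta_1^2\sigma_\ell^2$, and a Young's inequality with parameter $\delta_\ell\beta_1/(4(1-\beta_1))$ — chosen so that the $\delta_\ell$ prefactor cancels the $1/\delta_\ell$ blow-up — produces the $\delta_\ell\big(1-(1-\delta_\ell/4)\beta_1\big)\E\|\tilde{\mM}_\ell^{(t-1)}-\nabla_\ell f(\vx^{(t-1)})\|_F^2+\tfrac{5(1-\beta_1)}{\beta_1}\E\|\nabla_\ell f(\vx^{(t)})-\nabla_\ell f(\vx^{(t-1)})\|_F^2$ in \eqref{eq:lm-golore-mc-2}. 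The kernel part is $(\mI-\mP\mP^\top)\nabla_\ell f(\vx^{(t)})$, which equals $(1-\delta_\ell)\E\|\nabla_\ell f(\vx^{(t)})\|_F^2$ in expectation since $\mP$ is fresh (Lemma~\ref{lm:rand}); Lemma~\ref{lm:connection} then completes \eqref{eq:lm-golore-mc-2}.

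For $t=k\tau+r$ with $1\le r<\tau$ the subspace is frozen ($\mP_\ell^{(t)}=\mP_\ell^{(k\tau)}$), so I would follow the deterministic argument of Lemma~\ref{lm:det-mc} almost verbatim, carrying the one extra noise term: write $\tilde{\mM}_\ell^{(t)}-\nabla_\ell f(\vx^{(t)})=\big[(1-\beta_1)(\tilde{\mM}_\ell^{(t-1)}-\nabla_\ell f(\vx^{(t)}))+\beta_1(\mP\mP^\top-\mI)\nabla_\ell f(\vx^{(t)})\big]+\beta_1\mP\mP^\top\mathbf{n}_\ell^{(t)}$, the bracket being $\mathcal{F}_{t-1}$-measurable and the last term conditionally mean zero with $\E\|\beta_1\mP\mP^\top\mathbf{n}_\ell^{(t)}\|_F^2\le\beta_1^2\sigma_\ell^2$ by fact (ii). Convexity in $\beta_1$ splits the bracket; the $\tilde{\mM}_\ell^{(t-1)}-\nabla_\ell f(\vx^{(t)})$ piece is treated by the Young's step of \eqref{eq:pflm-det-mc-3}, and the $(\mI-\mP\mP^\top)\nabla_\ell f(\vx^{(t)})$ piece is recentered to iteration $k\tau$ by a Young's inequality, after which fact (iii) (Lemma~\ref{lm:rand} for the fresh $\mP_\ell^{(k\tau)}$, in place of Lemma~\ref{lm:SVD}) reproduces \eqref{eq:pflm-det-mc-5}–\eqref{eq:pflm-det-mc-8}; collecting terms yields \eqref{eq:lm-golore-mc-3}. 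Summing over $\ell$ is deferred to the companion momentum-error lemma.

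The main obstacle is keeping the conditioning straight: one must take the expectation over $\mP_\ell^{(t)}$ \emph{before} that over $\xi^{(t)}$ exactly when $t\equiv0\ (\mathrm{mod}\ \tau)$, so that the sharp $\delta_\ell$ contraction of \eqref{eq:lm-golore-mc-2} is available, while for $r\ge1$ the matrix $\mP_\ell^{(k\tau)}$ has already been "spent" and is correlated with $\tilde{\mM}_\ell^{(t-1)}$, so only $\|\mP\mP^\top\|_2=1$ and independence from the \emph{current} noise $\mathbf{n}_\ell^{(t)}$ may be used — this is precisely the structural gap between \eqref{eq:lm-golore-mc-2} and \eqref{eq:lm-golore-mc-3}. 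A secondary nuisance is tuning the Young's parameters so the stated constants ($5(1-\beta_1)/\beta_1$ versus $5(1-\beta_1)/(\delta_\ell\beta_1)$, and the $1-(1-\delta_\ell/4)\beta_1$ factors) emerge exactly, using $\delta_\ell\le\overline{\delta}<1$ and $(1-\beta_1)^2\le1-\beta_1$.
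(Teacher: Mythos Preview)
Your proposal is correct and follows essentially the same approach as the paper's proof: the same orthogonal range/kernel decomposition via Lemma~\ref{lm:orthogonality}, the same use of Lemma~\ref{lm:rand} in place of Lemma~\ref{lm:SVD} at fresh-subspace iterations, the same Young's step \eqref{eq:pflm-golore-mc-3}, and the same recentering to $k\tau$ for reused projections. The only cosmetic difference is that at $t=0$ the paper computes $\E[(\beta_1\mP\mP^\top-\mI)^2]$ directly rather than invoking Jensen, but this yields the identical bound $(1-\delta_\ell\beta_1)\|\nabla_\ell f(\vx^{(0)})\|_F^2$.
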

\begin{proof}
    Without loss of generality assume $m_{\ell}\le n_{\ell}$ (the other case can be proved similarly). When $t=0$, we have
    \begin{align}
        &\E[\|\tilde{\mM}_{\ell}^{(0)}-\nabla_{\ell}f(\vx^{(0)})\|_F^2]\nonumber\\
        =&\E[\|\beta_1\mP_{\ell}^{(0)}(\mP_{\ell}^{(0)})^\top\mG_{\ell}^{(0)}-\nabla_{\ell}f(\vx^{(0)})\|_F^2]\nonumber\\
        =&\E[\|(\beta_1\mP_{\ell}^{(0)}(\mP_{\ell}^{(0)})^\top-\mI)\nabla_{\ell}f(\vx^{(0)})\|_F^2]+\beta_1^2\E[\|\mP_{\ell}^{(0)}(\mP_{\ell}^{(0)})^\top(\mG_{\ell}^{(0)}-\nabla_{\ell}f(\vx^{(0)}))\|_F^2]\nonumber\\
        =&\mathrm{tr}((\nabla_{\ell}f(\vx^{(0)}))^\top\E[(\beta_1\mP_{\ell}^{(0)}(\mP_{\ell}^{(0)})^\top-\mI)^2]\nabla_{\ell}f(\vx^{(0)}))\nonumber\\
        &+\beta_1^2\mathrm{tr}(\E_{\xi^{(0)}\sim\gD}[(\mG_{\ell}^{(0)}-\nabla_{\ell}f(\vx^{(0)}))^\top\E_{\mP\sim\gU(\mathrm{St}_{m_{\ell},r_{\ell}})}[(\mP\mP^\top)^2](\mG_{\ell}^{(0)}-\nabla_{\ell}f(\vx^{(0)}))]),\label{eq:pflm-golore-mc-1}
    \end{align}
    where the second equality uses unbiasedness of $\mG_{\ell}^{(0)}$.
    By Lemma \ref{lm:rand} we have
    \begin{align*}
        \E[(\beta\mP_{\ell}^{(0)}(\mP_{\ell}^{(0)})^\top-\mI)^2]=&\mI-(2\beta_1-\beta_1^2)\E[\mP_{\ell}^{(0)}(\mP_{\ell}^{(0)})^\top]\nonumber\\
        =&\mI-(2\beta_1-\beta_1^2)\delta_{\ell}\mI,
    \end{align*}
    thus
    \begin{align}
        \mathrm{tr}((\nabla_{\ell}f(\vx^{(0)}))^\top\E[(\beta_1\mP_{\ell}^{(0)}(\mP_{\ell}^{(0)})^\top-\mI)^2]\nabla_{\ell}f(\vx^{(0)}))=&(1-\delta_{\ell}(2\beta_1-\beta_1^2))\|\nabla_{\ell}f(\vx^{(0)})\|_F^2\nonumber\\
        \le&(1-\delta_{\ell}\beta_1)\|\nabla_{\ell}f(\vx^{(0)})\|_F^2.\label{eq:pflm-golore-mc-1.1}
    \end{align}
     Similarly, by Lemma \ref{lm:rand} we have
    \begin{align}
        &\mathrm{tr}(\E_{\xi^{(0)}\sim\gD}[(\mG_{\ell}^{(0)}-\nabla_{\ell}f(\vx^{(0)}))^\top\E_{\mP\sim\gU(\mathrm{St}_{m_{\ell},r_{\ell}})}[(\mP\mP^\top)^2](\mG_{\ell}^{(0)}-\nabla_{\ell}f(\vx^{(0)}))])\nonumber\\
        =&\mathrm{tr}\left(\E\left[(\mG_{\ell}^{(0)}-\nabla_{\ell}f(\vx^{(0)}))^\top\left(\frac{r_{\ell}}{m_{\ell}}\cdot\mI\right)(\mG_{\ell}^{(0)}-\nabla_{\ell}f(\vx^{(0)}))\right]\right)\nonumber\\
        =&\delta_{\ell}\E[\|\mG_{\ell}^{(0)}-\nabla_{\ell}f(\vx^{(0)})\|_F^2]\nonumber\\
        \le&\delta_{\ell}\sigma_{\ell}^2,\label{eq:pflm-golore-mc-1.2}
    \end{align}
    where the inequality uses Assumption \ref{asp:sg}. Applying \eqref{eq:pflm-golore-mc-1.1}\eqref{eq:pflm-golore-mc-1.2} and Lemma \ref{lm:connection} to \eqref{eq:pflm-golore-mc-1} yields \eqref{eq:lm-golore-mc-1}.
    
    When $t=k\tau$, $k\in\mathbb{N}^*$, we have
    \begin{align}
        &\E[\|\tilde{\mM}_{\ell}^{(t)}-\nabla_{\ell}f(\vx^{(t)})\|_F^2]\nonumber\\
        =&\E[\|\mP_{\ell}^{(t)}(\mP_{\ell}^{(t)})^\top[(1-\beta_1)\tilde{\mM}_{\ell}^{(t-1)}+\beta_1\mG_{\ell}^{(t)}-\nabla_{\ell}f(\vx^{(t)})]-(\mI-\mP_{\ell}^{(t)}(\mP_{\ell}^{(t)})^\top)\nabla_{\ell}f(\vx^{(t)})\|_F^2]\nonumber\\
        =&\delta_{\ell}\E[\|(1-\beta_1)\tilde{\mM}_{\ell}^{(t-1)}+\beta_1\mG_{\ell}^{(t)}-\nabla_{\ell}f(\vx^{(t)})\|_F^2]+(1-\delta_{\ell})\E[\|\nabla_{\ell}f(\vx^{(t)})\|_F^2],\label{eq:pflm-golore-mc-2}
    \end{align}
    where the second equality uses Lemma \ref{lm:orthogonality} and Lemma \ref{lm:rand}. For the first term, we have
    \begin{align}
        &\E[\|(1-\beta_1)\tilde{\mM}_{\ell}^{(t-1)}+\beta_1\mG_{\ell}^{(t)}-\nabla_{\ell}f(\vx^{(t)})\|_F^2]\nonumber\\
        =&\E[\|(1-\beta_1)(\tilde{\mM}_{\ell}^{(t-1)}-\nabla_{\ell}f(\vx^{(t)}))+\beta_1(\mG_{\ell}^{(t)}-\nabla_{\ell}f(\vx^{(t)}))\|_F^2]\nonumber\\
        \le&\E[\|(1-\beta_1)(\tilde{\mM}_{\ell}^{(t-1)}-\nabla_{\ell}f(\vx^{(t)}))\|_F^2]+\beta_1^2\E[\|\mG_{\ell}^{(t)}-\nabla_{\ell}f(\vx^{(t)})\|_F^2]\nonumber\\
        \le&(1-\beta_1)\E[\|\tilde{\mM}_{\ell}^{(t-1)}-\nabla_{\ell}f(\vx^{(t)})\|_F^2]+\beta_1^2\sigma_{\ell}^2,\label{eq:pflm-golore-mc-2.1}
    \end{align}
    where both inequalities use Assumption \ref{asp:sg}. By Young's inequality, we have
    \begin{align}
        &\E[\|\tilde{\mM}_{\ell}^{(t-1)}-\nabla_{\ell}f(\vx^{(t)})\|_F^2]\nonumber\\
        =&\E[\|(\tilde{\mM}_{\ell}^{(t-1)}-\nabla_{\ell}f(\vx^{(t-1)}))-(\nabla_{\ell}f(\vx^{(t)})-\nabla_{\ell}f(\vx^{(t-1)})\|_F^2]\nonumber\\
        \le&\left(1+\frac{\delta_{\ell}\beta_1}{4}\right)\E[\|\tilde{\mM}_{\ell}^{(t-1)}-\nabla_{\ell}f(\vx^{(t-1)})\|_F^2]+\left(1+\frac{4}{\delta_{\ell}\beta_1}\right)\E[\|\nabla_{\ell}f(\vx^{(t)})-\nabla_{\ell}f(\vx^{(t-1)})\|_F^2].\label{eq:pflm-golore-mc-3}
    \end{align}
     Applying \eqref{eq:pflm-golore-mc-2.1}\eqref{eq:pflm-golore-mc-3} and Lemma \ref{lm:connection} to \eqref{eq:pflm-golore-mc-2} yields \eqref{eq:lm-golore-mc-2}.

    When $t=k\tau+r$, $k\in\mathbb{N}$, $1\le r<\tau$, we have
    \begin{align}
        &\E[\|\tilde{\mM}_{\ell}^{(t)}-\nabla_{\ell}f(\vx^{(t)})\|_F^2]\nonumber\\
        =&\E[\|(1-\beta_1)(\tilde{\mM}_{\ell}^{(t-1)}-\nabla_{\ell}f(\vx^{(t)}))+\beta_1(\mP_{\ell}^{(t)}(\mP_{\ell}^{(t)})^\top\mG_{\ell}^{(t)}-\nabla_{\ell}f(\vx^{(t)}))\|_F^2]\nonumber\\
        =&\E[\|(1-\beta_1)(\tilde{\mM}_{\ell}^{(t-1)}-\nabla_{\ell}f(\vx^{(t)}))+\beta_1(\mP_{\ell}^{(t)}(\mP_{\ell}^{(t)})^\top-\mI)\nabla_{\ell}f(\vx^{(t)})\|_F^2]\nonumber\\
&+\beta_1^2\E[\mP_{\ell}^{(t)}(\mP_{\ell}^{(t)})^\top(\mG_{\ell}^{(t)}-\nabla_{\ell}f(\vx^{(t)}))\|_F^2]\nonumber\\
\le&(1-\beta_1)\E[\|\tilde{\mM}_{\ell}^{(t-1)}-\nabla_{\ell}f(\vx^{(t)})\|_F^2]+\beta_1\E[\|(\mI-\mP_{\ell}^{(t)}(\mP_{\ell}^{(t)})^\top)\nabla_{\ell}f(\vx^{(t)})\|_F^2\nonumber\\
&+\beta_1^2\E[\mP_{\ell}^{(t)}(\mP_{\ell}^{(t)})^\top(\mG_{\ell}^{(t)}-\nabla_{\ell}f(\vx^{(t)}))\|_F^2],\label{eq:pflm-golore-mc-4}
    \end{align}
where the second equality uses the unbiasedness of $\mG_{\ell}^{(t)}$ and the independence implied by $\mP_{\ell}^{(t)}=\mP_{\ell}^{(t-1)}$, the inequality uses Jensen's inequality. The first term is similarly bounded as \eqref{eq:pflm-golore-mc-3}. For the second term, we have
\begin{align}
    &\E[\|(\mI-\mP_{\ell}^{(k\tau)}(\mP_{\ell}^{(k\tau)})^\top)\nabla_{\ell}f(\vx^{(t)})\|_F^2]\nonumber\\
    \le&\left(1+\frac{\delta_{\ell}}{4}\right)\E[\|(\mI-\mP_{\ell}^{(k\tau)}(\mP_{\ell}^{(k\tau)})^\top)\nabla_{\ell}f(\vx^{(k\tau)})\|_F^2]\nonumber\\
    &+\left(1+\frac{4}{\delta_{\ell}}\right)\E[\|(\mI-\mP_{\ell}^{(k\tau)}(\mP_{\ell}^{(k\tau)})^\top)(\nabla_{\ell}f(\vx^{(t)})-\nabla_{\ell}f(\vx^{(k\tau)}))\|_F^2]\nonumber\\
    \le&\left(1-\frac{3\delta_{\ell}}{4}\right)\E[\|\nabla_{\ell}f(\vx^{(k\tau)})\|_F^2]+\left(1+\frac{4}{\delta_{\ell}}\right)\E[\|\nabla_{\ell}f(\vx^{(t)})-\nabla_{\ell}f(\vx^{(k\tau)})\|_F^2],\label{eq:pflm-golore-mc-5}
\end{align}
where the first inequality uses Young's inequality, the second inequality uses Lemma \ref{lm:rand} and $\|\mI-\mP_{\ell}^{(k\tau)}(\mP_{\ell}^{(k\tau)})^\top\|_2=1$. By Young's inequality, we have
\begin{align}
\E[\|\nabla_{\ell}f(\vx^{(k\tau)})\|_F^2]\le&\left(1+\frac{\delta_{\ell}}{4}\right)\E[\|\nabla_{\ell}f(\vx^{(t)})\|_F^2]+\left(1+\frac{4}{\delta_{\ell}}\right)\E[\|\nabla_{\ell}f(\vx^{(t)})-\nabla_{\ell}f(\vx^{(k\tau)})\|_F^2].
    \label{eq:pflm-golore-mc-5.1}
\end{align}

Applying \eqref{eq:pflm-golore-mc-5.1} to \eqref{eq:pflm-golore-mc-5} and applying Cauchy's inequality yields
\begin{align}
    &\E[\|(\mI-\mP_{\ell}^{(k\tau)}(\mP_{\ell}^{(k\tau)})^\top)\nabla_{\ell}f(\vx^{(t)})\|_F^2]\nonumber\\
    \le&\left(1-\frac{\delta_{\ell}}{2}\right)\E[\|\nabla_{\ell}f(\vx^{(t)})\|_F^2]+\frac{10r}{\delta_{\ell}}\sum_{i=1}^r\E[\|\nabla_{\ell}f(\vx^{(k\tau+i)})-\nabla_{\ell}f(\vx^{(k\tau+i-1)})\|_F^2].\label{eq:pflm-golore-mc-5.2}
\end{align}
For the third term, we have
\begin{align}
    &\E[\|\mP_{\ell}^{(k\tau)}(\mP_{\ell}^{(k\tau)})^\top(\mG_{\ell}^{(t)}-\nabla_{\ell}f(\vx^{(t)}))\|_F^2]\le\E[\|\mG_{\ell}^{(t)}-\nabla_{\ell}f(\vx^{(t)})\|_F^2]\le\sigma_{\ell}^2,\label{eq:pflm-golore-mc-6}
\end{align}
where the first inequality uses $\|\mP_{\ell}^{(k\tau)}(\mP_{\ell}^{(k\tau)})^\top\|_2=1$, the second inequality uses Assumption \ref{asp:sg}.

 Applying \eqref{eq:pflm-golore-mc-3}\eqref{eq:pflm-golore-mc-5.2}\eqref{eq:pflm-golore-mc-6} to \eqref{eq:pflm-golore-mc-4} yields \eqref{eq:lm-golore-mc-3}.
\end{proof}

\begin{lemma}[Momentum error]\label{lm:golore-me}
    Under Assumption \ref{asp:ls}-\ref{asp:sg}, if $0<\beta_1\le1$ in GoLore using MSGD and MP (Alg.~\ref{alg:golore}),  it holds for any $K\ge1$ that
    \begin{align}
        &\sum_{t=0}^{K\tau-1}\E[\|\tilde{\vm}^{(t)}-\nabla f(\vx^{(t)})\|_2^2]\nonumber\\
        \le&\left(\frac{5(1-\beta_1)}{(1-\underline{\delta}/4)\underline{\delta}\beta_1^2}+\frac{5\tau(\tau-1)}{(1-\underline{\delta}/4)\underline{\delta}}+\frac{\tau-1}{(1-\overline{\delta}/4)\beta_1}\right)L^2\sum_{t=0}^{K\tau-2}\E[\|\vx^{(t+1)}-\vx^{(t)}\|_2^2]\nonumber\\
        &+\left(\frac{1-\underline{\delta}/2}{1-\underline{\delta}/4}+\frac{2}{(1-\overline{\delta}/4)\tau\beta_1}\right)\sum_{t=0}^{K\tau-1}\E[\|\nabla f(\vx^{(t)})\|_2^2]+\frac{K\tau\beta_1\sigma^2}{1-\overline{\delta}/4}.\label{eq:lm-golore-me}
    \end{align}
\end{lemma}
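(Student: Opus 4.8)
The plan is to follow the proof of Lemma~\ref{lm:det-me} almost verbatim, since Lemma~\ref{lm:golore-mc} produces exactly the same gradient-norm and gradient-increment terms as the deterministic contraction lemma (with coefficients never larger), and the only genuinely new ingredient is the additive noise $\delta_\ell\beta_1^2\sigma_\ell^2$ at restart steps $t\equiv 0\pmod{\tau}$ and $\beta_1^2\sigma_\ell^2$ at interior steps. Abbreviate $e_\ell^{(t)}:=\E[\|\tilde{\mM}_\ell^{(t)}-\nabla_\ell f(\vx^{(t)})\|_F^2]$. I would first fix $\ell$ and rewrite the three cases of Lemma~\ref{lm:golore-mc} as a single recursion in $t$: the interior-step bound already has the form $e_\ell^{(t)}\le(1-(1-\delta_\ell/4)\beta_1)\,e_\ell^{(t-1)}+\mathrm{RHS}_t$, and the restart-step bound, which only subtracts the smaller quantity $\delta_\ell(1-(1-\delta_\ell/4)\beta_1)\,e_\ell^{(t-1)}$, can be relaxed to the same form because $0\le\delta_\ell\le1$ and $e_\ell^{(t-1)}\ge0$. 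This makes the contraction coefficient $c^*:=1-(1-\delta_\ell/4)\beta_1$ uniform across all steps, with the $t=0$ case being simply $e_\ell^{(0)}\le\mathrm{RHS}_0$.

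Summing over $t=0,\dots,K\tau-1$ then telescopes to
\begin{align*}
    \sum_{t=0}^{K\tau-1}e_\ell^{(t)}-c^*\sum_{t=0}^{K\tau-2}e_\ell^{(t)}\le\sum_{t=0}^{K\tau-1}\mathrm{RHS}_t,
\end{align*}
and since the boundary term $e_\ell^{(K\tau-1)}$ carries coefficient $1\ge c^*$, the left-hand side dominates $(1-c^*)\sum_{t=0}^{K\tau-1}e_\ell^{(t)}=(1-\delta_\ell/4)\beta_1\sum_{t=0}^{K\tau-1}e_\ell^{(t)}$; dividing by $(1-\delta_\ell/4)\beta_1$ gives the layerwise estimate. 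On the right-hand side, the non-noise terms aggregate exactly as in Lemma~\ref{lm:det-me}: each $\E[\|\nabla_\ell f(\vx^{(s)})\|_F^2]$ picks up total weight at most $\tfrac{2}{\tau}+(1-\delta_\ell/2)\beta_1$, and each increment $\E[\|\nabla_\ell f(\vx^{(t)})-\nabla_\ell f(\vx^{(t-1)})\|_F^2]$ picks up weight at most $\tfrac{5(1-\beta_1)}{\delta_\ell\beta_1}+\tfrac{5\tau(\tau-1)\beta_1}{\delta_\ell}+(\tau-1)$ (here one uses $\sum_{r=i}^{\tau-1} r\le\tau(\tau-1)/2$ and bounds the restart coefficient $5(1-\beta_1)/\beta_1$ by $5(1-\beta_1)/(\delta_\ell\beta_1)$). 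The new contribution is the noise: among $t\in\{0,\dots,K\tau-1\}$ there are exactly $K$ restart steps and $K(\tau-1)$ interior steps, so the $\sigma_\ell^2$ terms sum to $K\delta_\ell\beta_1^2\sigma_\ell^2+K(\tau-1)\beta_1^2\sigma_\ell^2\le K\tau\beta_1^2\sigma_\ell^2$, yielding $\tfrac{K\tau\beta_1\sigma_\ell^2}{1-\delta_\ell/4}$ after the division.

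Finally I would sum over $\ell=1,\dots,N_L$, use $\underline{\delta}\le\delta_\ell\le\overline{\delta}$ to replace the $\delta_\ell$-dependent constants, apply $L$-smoothness (Assumption~\ref{asp:ls}) through $\sum_\ell\E[\|\nabla_\ell f(\vx^{(t+1)})-\nabla_\ell f(\vx^{(t)})\|_F^2]=\E[\|\nabla f(\vx^{(t+1)})-\nabla f(\vx^{(t)})\|_2^2]\le L^2\E[\|\vx^{(t+1)}-\vx^{(t)}\|_2^2]$, and use $\sum_\ell\sigma_\ell^2=\sigma^2$ (Assumption~\ref{asp:sg}) to obtain \eqref{eq:lm-golore-me}. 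I do not expect a serious obstacle; the only step requiring attention is the uniformization of the contraction coefficient at restart steps --- verifying that relaxing $\delta_\ell(1-(1-\delta_\ell/4)\beta_1)$ up to $1-(1-\delta_\ell/4)\beta_1$ keeps the inequality valid (it does, by nonnegativity of $e_\ell^{(t-1)}$), so that the telescoped sum leaves only nonnegative boundary terms. Everything else is routine bookkeeping mirroring Lemmas~\ref{lm:det-me} and~\ref{lm:lba-me}.
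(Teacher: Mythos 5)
Your proposal is correct and essentially reproduces the paper's proof: sum the per-$t$ contraction bounds of Lemma~\ref{lm:golore-mc} over $t=0,\dots,K\tau-1$ with a uniformized contraction coefficient $c^*=1-(1-\delta_\ell/4)\beta_1$ (exactly the relaxation you identify at restart steps, which is valid by nonnegativity of $e_\ell^{(t-1)}$ and $\delta_\ell\le1$), drop the nonnegative excess from the boundary term $e_\ell^{(K\tau-1)}$, divide by $(1-\delta_\ell/4)\beta_1$, do the same per-index bookkeeping of gradient, increment, and noise weights as in Lemma~\ref{lm:det-me}, and finally sum over $\ell$ using $L$-smoothness and $\sum_\ell\sigma_\ell^2=\sigma^2$. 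One tiny slip: the condition you need on the boundary term is $1\ge 1-c^*$ (i.e.\ $c^*\ge0$), not $1\ge c^*$; both hold here, so it doesn't affect correctness.
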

\begin{proof}
    By Lemma \ref{lm:golore-mc} we have
    \begin{align}
        &\sum_{t=0}^{K\tau-1}\E[\|\tilde{\mM}_{\ell}^{(t)}-\nabla_{\ell}f(\vx^{(t)})\|_F^2]-\left(1-\left(1-\frac{\delta_{\ell}}{4}\right)\beta_1\right)\sum_{t=0}^{K\tau-2}\E[\|\tilde{\mM}_{\ell}^{(t)}-\nabla_{\ell}f(\vx^{(t)})\|_F^2]\nonumber\\
        \le&\left(\frac{5(1-\beta_1)}{\delta_{\ell}\beta_1}+\frac{5\tau(\tau-1)\beta_1}{\delta_{\ell}}+(\tau-1)\right)\sum_{t=0}^{K\tau-2}\E[\|\nabla_{\ell}f(\vx^{(t+1)})-\nabla_{\ell}f(\vx^{(t)})\|_F^2]\nonumber\\
        &+\left(\frac{2}{\tau}+\left(1-\frac{\delta_{\ell}}{2}\right)\beta_1\right)\sum_{t=0}^{K\tau-1}\E[\|\nabla_{\ell}f(\vx^{(t)})\|_F^2]+K\tau\beta_1^2\sigma_{\ell}^2,\nonumber
    \end{align}
    which implies
    \begin{align}
        &\sum_{t=0}^{K\tau-1}\E[\|\tilde{\mM}_{\ell}^{(t)}-\nabla_{\ell}f(\vx^{(t)})\|_F^2]\nonumber\\
        \le&\left(\frac{5(1-\beta_1)}{(1-\delta_{\ell}/4)\delta_{\ell}\beta_1^2}+\frac{5\tau(\tau-1)}{(1-\delta_{\ell}/4)\delta_{\ell}}+\frac{\tau-1}{(1-\delta_{\ell}/4)\beta_1}\right)\sum_{t=0}^{K\tau-2}\E[\|\nabla_{\ell}f(\vx^{(t+1)})-\nabla_{\ell}f(\vx^{(t)})\|_F^2]\nonumber\\
        &+\left(\frac{1-\delta_{\ell}/2}{1-\delta_{\ell}/4}+\frac{2}{(1-\delta_{\ell}/4)\tau\beta_1}\right)\sum_{t=0}^{K\tau-1}\E[\|\nabla_{\ell}f(\vx^{(t)})\|_F^2]+\frac{K\tau\beta_1\sigma_{\ell}^2}{1-\delta_{\ell}/4}.\label{eq:pflm-golore-me-1}
    \end{align}
    Summing \eqref{eq:pflm-golore-me-1} for $\ell=1,\cdots,N_L$ and applying Assumption \ref{asp:ls}-\ref{asp:sg} yields \eqref{eq:lm-golore-me}.
\end{proof}
Now we are ready to prove the convergence of Alg.~\ref{alg:golore}.
\begin{theorem}[Convergence of Golore]\label{thmres:golore}
    Under Assumptions \ref{asp:lb}-\ref{asp:sg}, if hyperparameters
    \begin{align}
        0<\beta_1\le 1,\quad\tau\ge\frac{64}{3\beta_1\underline{\delta}},\quad0<\eta\le\min\left\{\frac{1}{4L},\sqrt{\frac{3\underline{\delta}\beta_1^2}{80L^2}},\sqrt{\frac{3\underline{\delta}}{80\tau^2L^2}},\sqrt{\frac{3\beta_1}{16\tau L^2}}\right\},\label{eq:thmres-golore-hyperparameter}
    \end{align}
    GoLore using small-batch stochastic gradients and MSGD with MP (Alg.~\ref{alg:golore}) converges as
    \begin{align}
        \frac{1}{K\tau}\sum_{t=0}^{K\tau-1}\E{\|\nabla f(\vx^{(t)})\|_2^2]\le\frac{16\Delta}{\underline{\delta}\eta K\tau}}+\frac{32\beta_1\sigma^2}{3\underline{\delta}}\label{eq:thmres-golore-convergence}
    \end{align}
    for any $K\ge1$, where $\Delta=f(\vx^{(0)})-\inf_{\vx}f(\vx)$.
\end{theorem}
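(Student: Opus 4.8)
The plan is to mirror the proofs of Theorems~\ref{thmres:det} and~\ref{thmres:lba}, combining the generic descent estimate (Lemma~\ref{lm:descent}) with the GoLore-specific momentum-error bound (Lemma~\ref{lm:golore-me}). First I would sum the descent inequality~\eqref{eq:lm-descent} over $t=0,\dots,K\tau-1$, take expectations, and multiply through by $2/\eta$; together with $f(\vx^{(0)})-\E[f(\vx^{(K\tau)})]\le\Delta$ (Assumption~\ref{asp:lb}) this yields
\begin{align*}
\sum_{t=0}^{K\tau-1}\E[\|\nabla f(\vx^{(t)})\|_2^2]\le\frac{2\Delta}{\eta}-\left(\frac{1}{\eta^2}-\frac{L}{\eta}\right)\sum_{t=0}^{K\tau-1}\E[\|\vx^{(t+1)}-\vx^{(t)}\|_2^2]+\sum_{t=0}^{K\tau-1}\E[\|\tilde{\vm}^{(t)}-\nabla f(\vx^{(t)})\|_2^2].
\end{align*}

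Next I would substitute Lemma~\ref{lm:golore-me} for the last sum, move its $\E[\|\nabla f(\vx^{(t)})\|_2^2]$ contribution to the left, and collect the displacement terms, obtaining an inequality of the form
\begin{align*}
\left(1-\frac{1-\underline{\delta}/2}{1-\underline{\delta}/4}-\frac{2}{(1-\overline{\delta}/4)\tau\beta_1}\right)\sum_{t=0}^{K\tau-1}\E[\|\nabla f(\vx^{(t)})\|_2^2]\le\frac{2\Delta}{\eta}+\frac{K\tau\beta_1\sigma^2}{1-\overline{\delta}/4}-C\sum_{t=0}^{K\tau-1}\E[\|\vx^{(t+1)}-\vx^{(t)}\|_2^2],
\end{align*}
where $C=\frac{1}{\eta^2}-\frac{L}{\eta}-\left(\frac{5(1-\beta_1)}{(1-\underline{\delta}/4)\underline{\delta}\beta_1^2}+\frac{5\tau(\tau-1)}{(1-\underline{\delta}/4)\underline{\delta}}+\frac{\tau-1}{(1-\overline{\delta}/4)\beta_1}\right)L^2$. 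Using $\underline{\delta}\le\overline{\delta}<1$ so that $(1-\underline{\delta}/4)^{-1},(1-\overline{\delta}/4)^{-1}\le\tfrac43$, together with $1-\frac{1-\underline{\delta}/2}{1-\underline{\delta}/4}=\frac{\underline{\delta}/4}{1-\underline{\delta}/4}\ge\frac{\underline{\delta}}{4}$, the coefficient on the left is at least $\frac{\underline{\delta}}{4}-\frac{8}{3\tau\beta_1}$, which is $\ge\frac{\underline{\delta}}{8}$ by the choice $\tau\ge\frac{64}{3\beta_1\underline{\delta}}$. For $C$, I would bound $1-\beta_1\le1$ and $\tau-1\le\tau$ and then use the four entries of the $\min$ in~\eqref{eq:thmres-golore-hyperparameter} to show each of the three constants multiplying $L^2$, as well as $\frac{L}{\eta}$, is at most $\frac{1}{4\eta^2}$; hence $C\ge0$ and the displacement sum may be discarded.

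Finally I would divide by $K\tau$ and by $\frac{\underline{\delta}}{8}$, applying $(1-\overline{\delta}/4)^{-1}\le\tfrac43$ once more to the noise term, to reach $\frac{1}{K\tau}\sum_{t=0}^{K\tau-1}\E[\|\nabla f(\vx^{(t)})\|_2^2]\le\frac{16\Delta}{\underline{\delta}\eta K\tau}+\frac{32\beta_1\sigma^2}{3\underline{\delta}}$, which is~\eqref{eq:thmres-golore-convergence}. I do not expect any serious obstacle at this level: the argument is the same telescoping-and-parameter-matching bookkeeping as in the deterministic and large-batch cases, the only genuinely new input being upstream in Lemma~\ref{lm:golore-mc}, where $\E[\mP\mP^\top]=\frac{r}{m}\mI$ from Lemma~\ref{lm:rand} replaces the worst-case projection estimate of Lemma~\ref{lm:SVD} and produces the clean $\beta_1\sigma^2$ noise term rather than a $\sigma^2/\gB$ term requiring a growing batch size. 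The main thing to watch is carefully tracking the $(1-\underline{\delta}/4)^{-1}$ and $(1-\overline{\delta}/4)^{-1}$ factors when passing from the per-layer bounds of Lemma~\ref{lm:golore-mc} to the aggregate bound of Lemma~\ref{lm:golore-me}.
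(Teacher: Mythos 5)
Your proposal follows exactly the same route as the paper's proof: sum the descent lemma (Lemma~\ref{lm:descent}) over $t$, substitute the GoLore momentum-error bound (Lemma~\ref{lm:golore-me}), rearrange, and use $\underline{\delta}\le\overline{\delta}<1$ together with the hyperparameter conditions to get the lower bound $\frac{\underline{\delta}}{8}$ on the gradient coefficient and to make the displacement coefficient $C$ nonnegative so the quadratic-displacement sum can be dropped. The bookkeeping, the identification of the four entries of the $\min$ with the four quantities bounded by $\frac{1}{4\eta^2}$, and the final division by $K\tau$ and $\frac{\underline{\delta}}{8}$ all match the paper's proof of Theorem~\ref{thmres:golore}.
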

\begin{proof}
    By Lemma \ref{lm:descent} we have
    \begin{align}
        \sum_{t=0}^{K\tau-1}\E[\|\nabla f(\vx^{(t)})\|_2^2]\le&\frac{2[f(\vx^{(0)})-\E[f(\vx^{(K\tau)})]}{\eta}+\sum_{t=0}^{K\tau-1}\E[\|\tilde{\vm}^{(t)}-\nabla f(\vx^{(t)})\|_2^2]\nonumber\\
        &-\left(\frac{1}{\eta^2}-\frac{L}{\eta}\right)\sum_{t=0}^{K\tau-1}\E[\|\vx^{(t+1)}-\vx^{(t)}\|_2^2].\label{eq:pfthmres-golore-1}
    \end{align}
    Applying Lemma \ref{lm:golore-me} to \eqref{eq:pfthmres-golore-1} and using $\underline{\delta}\le\overline{\delta}<1$ yields
    \begin{align}
        &\left(\frac{\underline{\delta}}{4}-\frac{8}{3\tau\beta_1}\right)\sum_{t=0}^{K\tau-1}\E[\|\nabla f(\vx^{(t)})\|_2^2]\nonumber\\
        \le&\frac{2}{\eta}\E[f(\vx^{(0)})-f(\vx^{(K\tau)})]+\frac{4K\tau\beta_1\sigma^2}{3}\nonumber\\
        &-\left(\frac{1}{\eta^2}-\frac{L}{\eta}-\frac{20(1-\beta_1)L^2}{3\underline{\delta}\beta_1^2}-\frac{20\tau(\tau-1)L^2}{3\underline{\delta}}-\frac{4(\tau-1)L^2}{3\beta_1}\right)\sum_{t=0}^{K\tau-1}\E[\|\vx^{(t+1)}-\vx^{(t)}\|_2^2].\label{eq:pfthmres-golore-2}
    \end{align}
By \eqref{eq:thmres-golore-hyperparameter} we have
\begin{align}
    \frac{\underline{\delta}}{4}-\frac{8}{3\tau\beta_1}\ge\frac{\underline{\delta}}{8},\quad\mbox{and}\quad\frac{1}{4\eta^2}\ge\max\left\{\frac{L}{\eta},\frac{20(1-\beta_1)L^2}{3\underline{\delta}\beta_1^2},\frac{20\tau(\tau-1)L^2}{3\underline{\delta}},\frac{4(\tau-1)L^2}{3\beta_1}\right\}.\label{eq:pfthmres-golore-3}
\end{align}
Applying \eqref{eq:pfthmres-golore-3} to \eqref{eq:pfthmres-golore-2} yields
\eqref{eq:thmres-golore-convergence}.
\end{proof}
We now prove Theorem \ref{thm:golore}, which is restated as follows.
\begin{corollary}[Convergence complexity of GoLore]\label{corres:golore}
    Under Assumptions \ref{asp:lb}-\ref{asp:sg}, if $T\ge2+128/(3\underline{\delta})+(128\sigma)^2/(9\sqrt{\underline{\delta}}L\Delta)$ and we choose
    \begin{align*}
        \beta_1=&\left(1+\sqrt{\frac{\underline{\delta}^{3/2}\sigma^2T}{L\Delta}}\right)^{-1},\\
        \tau=&\left\lceil\frac{64}{3\underline{\delta}\beta_1}\right\rceil,\\
        \eta=&\left(4L+\sqrt{\frac{80L^2}{3\underline{\delta}\beta_1^2}}+\sqrt{\frac{80\tau^2L^2}{3\underline{\delta}}}+\sqrt{\frac{16\tau L^2}{3\beta_1}}\right)^{-1},
    \end{align*}
    GoLore using small-batch stochastic gradients and MSGD with MP (Alg.~\ref{alg:golore}) converges as
    \begin{align}
        \frac{1}{T}\sum_{t=0}^{T-1}\E[\|\nabla f(\vx^{(t)})\|_2^2]=\gO\left(\frac{L\Delta}{\underline{\delta}^{5/2}T}+\sqrt{\frac{L\Delta\sigma^2}{\underline{\delta}^{7/2}T}}\right),\label{eq:corres-golore}
    \end{align}
    where $\Delta=f(\vx^{(0)})-\inf_{\vx}f(\vx)$. Consequently, the computation complexity to reach an $\varepsilon$-accurate solution $\vx$ such that $\|\nabla f(\vx)\|_2^2\le\varepsilon$ is $\gO\left(\frac{L\Delta\sigma^2}{\underline{\delta}^{7/2}\varepsilon^2}+\frac{L\Delta}{\underline{\delta}^{5/2}\varepsilon}+\frac{\sigma^2}{\underline{\delta}^{1/2}L\Delta}+\frac{1}{\underline{\delta}}\right)$.
\end{corollary}
\begin{proof}
    $T\ge2+128/(3\underline{\delta})+(128\sigma)^2/(9\sqrt{\underline{\delta}}L\Delta)$ guarantees $T\ge\tau$. Let $T=K\tau+r$, where $K\in\mathbb{N}^*$ and $0\le r<\tau$. If $r=0$, \eqref{eq:corres-golore} is a direct result of Theorem \ref{thmres:golore}. If $r>0$, applying Theorem \ref{thmres:golore} to $\tilde{K}:=K+1$ yields
    \begin{align*}
        \frac{1}{T}\sum_{t=0}^{T-1}\E[\|\nabla f(\vx^{(t)})\|_2^2]\le\frac{\tilde{K}\tau}{T}\cdot\frac{1}{\tilde{K}\tau}\sum_{t=0}^{\tilde{K}\tau-1}\E[\|\nabla f(\vx^{(t)})\|_2^2]=\gO\left(\frac{L\Delta}{\underline{\delta}^{5/2}T}+\sqrt{\frac{L\Delta\sigma^2}{\underline{\delta}^{7/2}T}}\right).
    \end{align*}
\end{proof}

{\color{black}
\section{Convergence of GaLore under isotropic noise assumptions}\label{app:iso}

Based on the anisotropic gradient noise we use to construct the counter-example in the proof of GaLore's non-convergence under standard assumptions, an interesting open question is whether GaLore is guaranteed to converge if the noise are further assumed isotropic. In this section, we consider the following additional assumption:

\begin{assumption}[Isotropic noise]\label{asp:isotropy}
The distribution of stochastic noise for each gradient matrix is invariant under orthogonal transformations, \ie, it holds for any layer $\ell=1,\cdots,N_L$, parameter $\vx\in\R^d$ and orthogonal matrix $\mO_1\in\R^{m_\ell\times m_\ell}$, $\mO_2\in\R^{n_\ell\times n_\ell}$ that 
\begin{align*}
    \nabla_\ell F(\vx;\xi)-\nabla_{\ell}f(\vx)\overset{\mathrm{dist}}{=}\mO_1[\nabla_\ell F(\vx;\xi)-\nabla_{\ell}f(\vx)]\mO_2,
\end{align*}
where $A\overset{\mathrm{dist}}{=}B$ represents $A$ and $B$ shares the same distribution. 
\end{assumption}
\textbf{Remark 7. }The property in Assumption \ref{asp:isotropy} can be satisfied by multivariate Gaussian distribution, \eg, $\mathrm{vec}(\nabla_{\ell}F(\vx;\xi)-\nabla_{\ell}f(\vx))\sim\mathcal{N}(0,\frac{\sigma^2_{\ell}}{m_{\ell}n_{\ell}}\cdot\mI_{m_\ell\times n_\ell})$.

Besides Assumption \ref{asp:isotropy}, we consider an additional assumption, which is crucial in analyzing the projection error.

\begin{assumption}[Leading property]\label{asp:leading}
    Let $\mathcal{D}_{\ell}(\vx)$ denotes the distribution of gradient noise $\nabla_{\ell} F(\vx;\xi)-\nabla_{\ell}f(\vx)$. We assume $\mathcal{D}_{\ell}(\vx)$ satisfies the following "leading property": if $\mA\sim\mathcal{D}_{\ell}(\vx)$, $\mB\in\R^{m_{\ell}\times n_{\ell}}$ satisfies $\emB_{11}\ge\emB_{22}\ge\cdots\ge\emB_{\min\{m_{\ell},n_{\ell}\},\min\{m_{\ell},n_{\ell}\}}\ge0$ and $\emB_{ij}=0$ for $i\ne j$, the SVD decomposition $\mU\mSigma\mV^\top$ satisfies
    \begin{align*}
        \begin{cases}\frac{1}{r}\sum_{i=1}^k\sum_{j=1}^{r}\E[\emU_{ij}^2]\ge\frac{k}{n},\quad\forall 1\le k,r\le m, &\mbox{ if }\ m_{\ell}\le n_{\ell};\\
        \frac{1}{r}\sum_{i=1}^k\sum_{j=1}^r\E[\emV_{ij}^2]\ge\frac{k}{n}, \quad\forall 1\le k,r\le n, &\mbox{ if }\ m_{\ell}> n_{\ell}.
    \end{cases}
    \end{align*}
\end{assumption}

Though not fully established in theory, we can empirically validate that multivariate Gaussian distribution may satisfy Assumption \ref{asp:leading}.

Specifically, we consider the following experiment setup. Let $\mathrm{vec}(\mA)\sim\mathcal{N}(0,\sigma^2\cdot\mI_{32\times 32})$ for some noise scale $\sigma>0$ and select a fixed matrix $\mB$ with $B_{11}\ge B_{22}\ge\cdots\ge B_{32,32}\ge0$. In order to validate the properties in expectation, we sample matrix $\mA$ for 200,000 times and uses the empirical expectations $\hat{\E}[U_{ij}]$'s to estimate the true expectations $\E[U_{ij}]$'s. Figures \ref{fig:leading0.1}, \ref{fig:leading1.0}, \ref{fig:leading10.0} represent results under different noise scales $\sigma=10,1,0.1$, respectively, where "$r=r_0$" in each figure plots the line connecting points $(k,\frac{1}{r_0}\sum_{i=1}^{k}\sum_{j=1}^{r_0}\hat{\E}[U_{ij}^2]$) for $k=1,2,\cdots,32$. As presented, all lines "$r=r_0$" with $r_0<32$ are above the line "$r=32$", which is guaranteed to pass through the points $(k,\frac{k}{32})$, $k=1,2,\cdots,32$,  in theory. Consequently, we have good reason to believe that multivariate Gaussian distribution can empirically satisfy Assumption \ref{asp:leading}.

\begin{figure}
    \centering
    \includegraphics[width=0.8\textwidth]{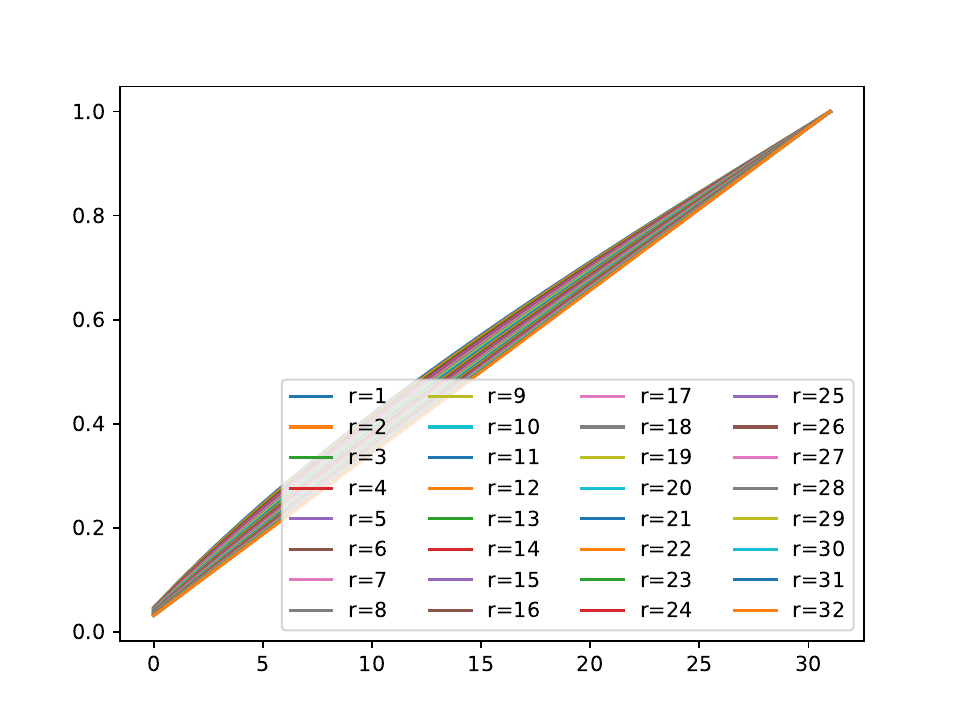}
    \caption{Observations with a small noise scale $\sigma=0.1$.}
    \label{fig:leading0.1}
\end{figure}
\begin{figure}
    \centering
    \includegraphics[width=0.8\textwidth]{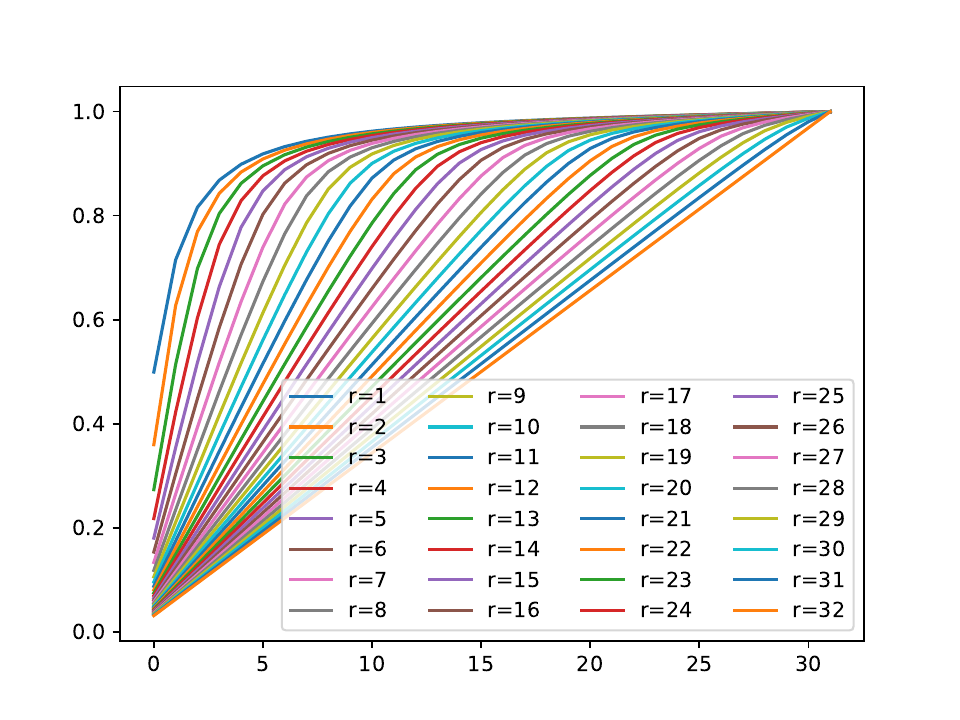}
    \caption{Observations with a medium noise scale $\sigma=1$.}
    \label{fig:leading1.0}
\end{figure}
\begin{figure}
    \centering
    \includegraphics[width=0.8\textwidth]{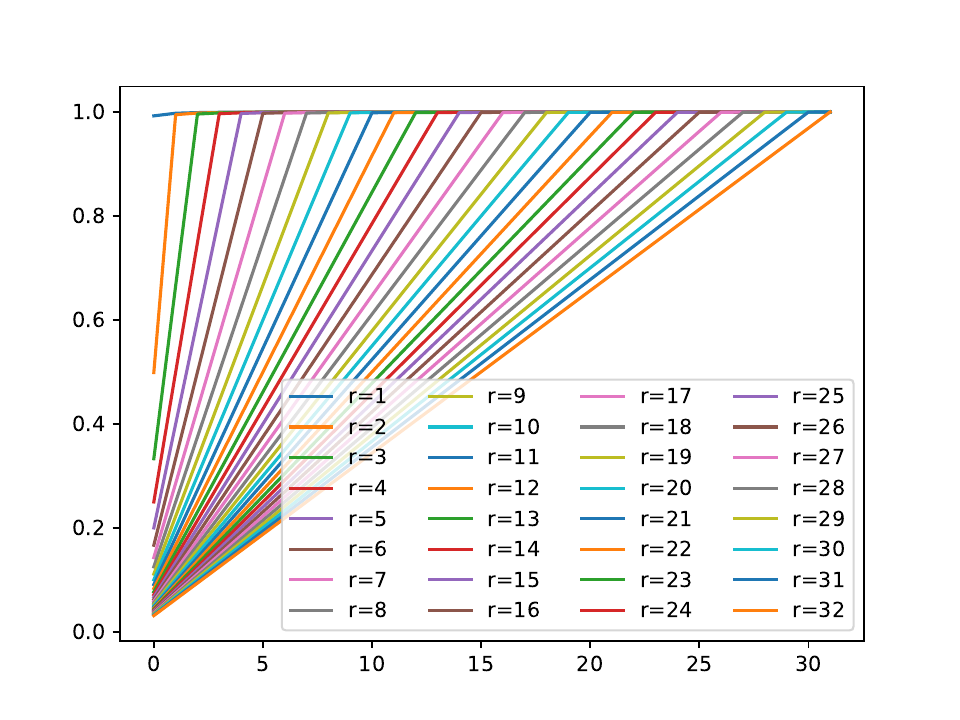}
    \caption{Observations with a large noise scale $\sigma=10$.}
    \label{fig:leading10.0}
\end{figure}

With Assumptions \ref{asp:isotropy} and \ref{asp:leading}, we can establish new error bounds for GaLore's SVD projection.

\begin{lemma}[Error of GaLore's projection under isotropic noise]\label{lm:iso}
    Let $\mG=\nabla_{\ell} f(\vx)$ and $\mE=\nabla_{\ell} F(\vx;\xi)-\nabla_{\ell}f(\vx)$, projection matrix $\mP=\mU[:,:r_{\ell}]$, $\mQ=\mV[:,:r_{\ell}]$ where $\mU\mSigma\mV^\top=\mG+\mE$ is the SVD of stochastic gradient $\nabla_{\ell} F(\vx;\xi)$, it holds under Assumptions \ref{asp:isotropy} and \ref{asp:leading} for $m_{\ell}\le n_{\ell}$ that
    \begin{align*}
        \E[\|\mP\mP^\top\mG-\mG\|_F^2]\le\left(1-\frac{r_\ell}{m_{\ell}}\right)\|\mG\|_F^2,
    \end{align*}
    and for $m_{\ell}>n_{\ell}$ that
    \begin{align*}
    \E[\|\mG\mQ\mQ^\top-\mG\|_F^2]\le\left(1-\frac{r_{\ell}}{n_{\ell}}\right)\|\mG\|_F^2.
    \end{align*}
\end{lemma}
\begin{proof}
    We only consider the case where $m_{\ell}<n_{\ell}$, as the proof for the other case is similar. We first conduct SVD of $\mG$ and get $\mG=\mU_0\mSigma_0\mV_0^\top$. It holds that
    \begin{align}
        \|\mP\mP^\top\mG\|_F^2=&\mathrm{tr}(\mG^\top\mP\mP^\top\mG)\nonumber\\
        =&\mathrm{tr}(\mV_0\mSigma_0^\top\mU_0^\top\mP\mP^\top\mU_0\mSigma_0\mV_0^\top)\nonumber\\
        =&\mathrm{tr}(\mSigma_0\mSigma_0^\top\mU_0^\top\mP\mP^\top \mU_0).\label{eq:pflm-iso-1}
    \end{align}
    
    Denote $\tilde{\mU}=\mU_0^\top\mU$ and $\tilde{\mV}=\mV_0^\top\mV$, it holds that $\tilde{\mU}\mSigma_0\tilde{\mV}^\top=(\mU_0^\top \mU)\mSigma_0(\mV_0^\top \mV)^\top$ is SVD of $\mU_0^\top(\mG+\mE)\mV_0=\mU_0^\top\mE\mV_0+\mSigma_0\overset{\mathrm{dist}}{=}\mE+\mSigma_0$. By Assumption \ref{asp:leading} we have
    \begin{align}
        \frac{1}{r_{\ell}}\sum_{i=1}^{k}\sum_{j=1}^{r_\ell}\E[\tilde{\emU}_{ij}^2]\ge\frac{k}{m_{\ell}},\quad k=1,2,\cdots,m_{\ell}.\label{eq:pflm-iso-2}
    \end{align}
    Let $\sigma_1\ge\sigma_2\ge\cdots\sigma_{m_\ell}\ge0$ represent the singular values of $\mG$, taking expectations of \eqref{eq:pflm-iso-1} yields
    \begin{align}
        \E[\|\mP\mP^\top\mG\|_F^2]=&\mathrm{tr}(\mSigma_0\mSigma_0^\top\E[\mU_0^\top\mP\mP^\top\mU_0])\nonumber\\
        =&\sum_{i=1}^{m_{\ell}}\sigma_i^2\sum_{j=1}^{r_{\ell}}\E[\tilde{\emU}_{ij}^2]\nonumber\\
        \ge&\sum_{i=1}^{m_{\ell}}\sigma_i^2\cdot\frac{r_\ell}{m_{\ell}}=\frac{r_{\ell}}{m_{\ell}}\cdot\|\mG\|_F^2,\label{eq:pflm-iso-3}
    \end{align}
    where the inequality applies $\sigma_1^2\ge\sigma_2^2\ge\cdots\sigma_{m_{\ell}}^2$ and \eqref{eq:pflm-iso-2}. Based on \eqref{eq:pflm-iso-3}, we have
    \begin{align*}
        \E[\|\mP\mP^\top\mG-\mG\|_F^2]=\|\mG\|_F^2-\E[\|\mP\mP^\top\mG\|_F^2]\le\left(1-\frac{r_{\ell}}{m_{\ell}}\right)\|\mG\|_F^2,
    \end{align*}
    which completes the proof.    
\end{proof}

\begin{lemma}[Momentum contraction]\label{lm:iso-mc}
Under Assumption \ref{asp:sg}-\ref{asp:leading}, in GaLore using MSGD with MP, if $0<\beta_1\le 1$, term $\tilde{\mM}_{\ell}^{(t)}$ has the following contraction properties:
\begin{itemize}
    \item When $t=0$, it holds that
    \begin{align}
        \E[\|\tilde{\mM}_{\ell}^{(0)}-\nabla_{\ell}f(\mX^{(0)})\|_F^2]\le&(\tau-1)(2-\delta_{\ell})\sum_{r=0}^{\tau-2}\E[\|\nabla_{\ell}f(\vx^{(r+1)})-\nabla_{\ell}f(\vx^{(r)})\|_F^2]\nonumber\\
        &+\frac{2(2-\delta_{\ell})}{\tau}\sum_{r=0}^{\tau-1}\E[\|\nabla_{\ell}f(\vx^{(r)})\|_F^2]+\beta_1^2\sigma_{\ell}^2;\label{eq:lm-iso-mc-1}
    \end{align}
    \item When $t=k\tau$, $k\in\mathbb{N}^*$, it holds that
    \begin{align}
        &\E[\|\tilde{\mM}_{\ell}^{(t)}-\nabla_{\ell}f(\vx^{(t)})\|_F^2]-\left(1-\left(1-\frac{\delta_{\ell}}{4}\right)\beta_1\right)\E[\|\tilde{\mM}_{\ell}^{(t-1)}-\nabla_{\ell}f(\vx^{(t-1)})\|_F^2]\nonumber\\
        \le&\frac{2(1-\delta_{\ell})}{\tau}\sum_{r=0}^{\tau-1}\E[\|\nabla_lf(\vx^{(k\tau+r)})\|_F^2]+\frac{5(1-\beta_1)}{\delta_{\ell}\beta_1}\E[\|\nabla_{\ell}f(\vx^{(t)})-\nabla_{\ell}f(\vx^{(t-1)})\|_F^2]\nonumber\\
        &+(\tau-1)(1-\delta_{\ell})\sum_{r=0}^{\tau-2}\E[\|\nabla_{\ell}f(\vx^{(k\tau+r+1)})-\nabla_{\ell}f(\vx^{(k\tau+r)})\|_F^2]+\beta_1^2\sigma_{\ell}^2;\label{eq:lm-iso-mc-2}
    \end{align}
    \item When $t=k\tau+r$, $k\in\mathbb{N}$, $1\le r<\tau$, it holds that
    \begin{align}
        &\E[\|\tilde{\mM}_{\ell}^{(t)}-\nabla_{\ell}f(\vx^{(t)})\|_F^2]-\left(1-\left(1-\frac{\delta_{\ell}}{4}\right)\beta_1\right)\E[\|\tilde{\mM}_{\ell}^{(t-1)}-\nabla_{\ell}f(\vx^{(t-1)})\|_F^2]\nonumber\\
        \le&\left(1-\frac{\delta_{\ell}}{2}\right)\beta_1\E[\|\nabla_{\ell}f(\vx^{(t)})\|_F^2]+\frac{5(1-\beta_1)}{\delta_{\ell}\beta_1}\E[\|\nabla_{\ell}f(\vx^{(t)})-\nabla_{\ell}f(\vx^{(t-1)})\|_F^2]\nonumber\\
        &+\frac{10r\beta_1}{\delta_{\ell}}\sum_{i=1}^r\E[\|\nabla_{\ell}f(\vx^{(k\tau+i)})-\nabla_{\ell}f(\vx^{(k\tau+i-1)})\|_F^2]+\beta_1^2\sigma_{\ell}^2.\label{eq:lm-iso-mc-3}
    \end{align}
\end{itemize}
\end{lemma}
\begin{proof}
    Without loss of generality assume $m_{\ell}\le n_{\ell}$ (the other case can be proved similarly). When $t=0$, \eqref{eq:lm-iso-mc-1} is direct result of Lemma \ref{lm:lba-mc} by letting $\gB=1$. 
    When $t=0$, we have
     \begin{align}
        &\E[\|\tilde{\mM}_{\ell}^{(0)}-\nabla_{\ell}f(\vx^{(0)})\|_F^2]\nonumber\\
        =&\E[\|\beta_1\mP_{\ell}^{(0)}(\mP_{\ell}^{(0)})^\top\mG_{\ell}^{(0)}-\nabla_{\ell}f(\vx^{(0)})\|_F^2]\nonumber\\
        =&\E[\|(\beta_1\mP_{\ell}^{(0)}(\mP_{\ell}^{(0)})^\top-\mI)\nabla_{\ell}f(\vx^{(0)})\|_F^2]+\beta_1^2\E[\|\mP_{\ell}^{(0)}(\mP_{\ell}^{(0)})^\top(\mG_{\ell}^{(0)}-\nabla_{\ell}f(\vx^{(0)}))\|_F^2],\label{eq:pflm-iso-mc-0.01}
    \end{align}
    For the first term, we have
    \begin{align}
        &\E[\|(\beta_1\mP_{\ell}^{(0)}(\mP_{\ell}^{(0)})^\top-\mI)\nabla_{\ell}f(\vx^{(0)})\|_F^2]\nonumber\\
        =&(1-\beta_1)^2\E[\|\mP_{\ell}^{(0)}(\mP_{\ell}^{(0)})^\top\nabla_{\ell}f(\vx^{(0)})\|_F^2]+\E[\|(\mI-\mP_{\ell}^{(0)}(\mP_{\ell}^{(0)})^\top)\nabla_{\ell}f(\vx^{(0)})\|_F^2]\nonumber\\
        \le&\left((1-\beta_1)^2+(1-\delta_\ell)\right)\|\nabla_\ell f(\vx^{(0)})\|_F^2\le(2-\delta_\ell)\|\nabla_\ell f(\vx^{(0)})\|_F^2,\label{eq:pflm-iso-mc-0.02}
    \end{align}
    where the first inequality uses Lemma \ref{lm:iso}.
    For the second term, we have
    \begin{align}
        \E[\|\mP_{\ell}^{(0)}(\mP_{\ell}^{(0)})^\top(\mG_{\ell}^{(0)}-\nabla_{\ell}f(\vx^{(0)}))\|_F^2]\le\E[\|\mG_{\ell}^{(0)}-\nabla_{\ell}f(\vx^{(0)})\|_F^2]\le\sigma_\ell^2.\label{eq:pflm-iso-mc-0.03}
    \end{align}
    Applying \eqref{eq:pflm-iso-mc-0.02}\eqref{eq:pflm-iso-mc-0.03} to \eqref{eq:pflm-iso-mc-0.01} and using Lemma \ref{lm:connection} yields \eqref{eq:lm-iso-mc-1}.
    
    When $t=k\tau$, $k\in\mathbb{N}^*$, according to the proof of Lemma \ref{lm:lba-mc}, we have
    \begin{align}
        &\E[\|\tilde{\mM}_{\ell}^{(t-1)}-\nabla_{\ell}f(\vx^{(t)})\|_F^2]\nonumber\\
        \le&\left(1+\frac{\delta_{\ell}\beta_1}{4}\right)\E[\|\tilde{\mM}_{\ell}^{(t-1)}-\nabla_{\ell}f(\vx^{(t-1)})\|_F^2]+\left(1+\frac{4}{\delta_{\ell}\beta_1}\right)\E[\|\nabla_{\ell}f(\vx^{(t)})-\nabla_{\ell}f(\vx^{(t-1)})\|_F^2],\label{eq:pflm-iso-mc-1.5}
    \end{align}
    and
    \begin{align}
        &\E[\|\tilde{\mM}_{\ell}^{(t)}-\nabla_{\ell}f(\vx^{(t)})\|_F^2]\nonumber\\
        =&\E[\|\mP_{\ell}^{(t)}(\mP_{\ell}^{(t)})^\top[(1-\beta_1)\tilde{\mM}_{\ell}^{(t-1)}+\beta_1\mG_{\ell}^{(t)}-\nabla_{\ell}f(\vx^{(t)})]\|_F^2]\nonumber\\
        &+\E[\|(\mI-\mP_{\ell}^{(t)}(\mP_{\ell}^{(t)})^\top)\nabla_{\ell}f(\vx^{(t)})\|_F^2]\nonumber\\
        \le&\E[\|(1-\beta_1)(\tilde{\mM}_{\ell}^{(t-1)}-\nabla_{\ell}f(\vx^{(t)}))\|_F^2]+\beta_1^2\sigma_\ell^2+(1-\delta_\ell)\E[\|\nabla_{\ell}f(\vx^{(t)})\|_F^2],\label{eq:pflm-iso-mc-0.1}
    \end{align}
    where the last inequality applies Lemma \ref{lm:iso}. Applying \eqref{eq:pflm-iso-mc-1.5} to \eqref{eq:pflm-iso-mc-0.1} and using Lemma \ref{lm:connection} yields \eqref{eq:lm-iso-mc-2}.

    When $t=k\tau+r$, $k\in\mathbb{N}$, $1\le r<\tau$, we have the following results according to the proof of Lemma \ref{lm:lba-mc}:
    \begin{align}
        &\E[\|\tilde{\mM}_{\ell}^{(t)}-\nabla_{\ell}f(\vx^{(t)})\|_F^2]\nonumber\\
        \le&(1-\beta_1)\E[\|\tilde{\mM}_{\ell}^{(t-1)}-\nabla_{\ell}f(\vx^{(t)})\|_F^2]+\beta_1\E[\|(\mI-\mP_{\ell}^{(t)}(\mP_{\ell}^{(t)})^\top)\nabla_{\ell}f(\vx^{(t)})\|_F^2\nonumber\\
        &+\beta_1^2\E[\mP_{\ell}^{(t)}(\mP_{\ell}^{(t)})^\top(\mG_{\ell}^{(t)}-\nabla_{\ell}f(\vx^{(t)}))\|_F^2]\nonumber\\
        \le&(1-\beta_1)\E[\|\tilde{\mM}_{\ell}^{(t-1)}-\nabla_{\ell}f(\vx^{(t)})\|_F^2]+\beta_1\E[\|(\mI-\mP_{\ell}^{(t)}(\mP_{\ell}^{(t)})^\top)\nabla_{\ell}f(\vx^{(t)})\|_F^2+\beta_1^2\sigma_{\ell}^2,\label{eq:pflm-iso-mc-1}
    \end{align}
    For the second term, we have
    \begin{align}
    &\E[\|(\mI-\mP_{\ell}^{(k\tau)}(\mP_{\ell}^{(k\tau)})^\top)\nabla_{\ell}f(\vx^{(t)})\|_F^2]\nonumber\\
    \le&\left(1+\frac{\delta_{\ell}}{4}\right)\E[\|(\mI-\mP_{\ell}^{(k\tau)}(\mP_{\ell}^{(k\tau)})^\top)\nabla_{\ell}f(\vx^{(k\tau)})\|_F^2]\nonumber\\
    &+\left(1+\frac{4}{\delta_{\ell}}\right)\E[\|(\mI-\mP_{\ell}^{(k\tau)}(\mP_{\ell}^{(k\tau)})^\top)(\nabla_{\ell}f(\vx^{(t)})-\nabla_{\ell}f(\vx^{(k\tau)})\|_F^2]\nonumber\\
    \le&\left(1-\frac{3\delta_{\ell}}{4}\right)\E[\|\nabla_{\ell}f(\vx^{(k\tau)})\|_F^2]+\left(1+\frac{4}{\delta_{\ell}}\right)\E[\|\nabla_{\ell}f(\vx^{(t)})-\nabla_{\ell}f(\vx^{(k\tau)})\|_F^2]\nonumber\\
    \le&\left(1-\frac{\delta_\ell}{2}\right)\E[\|\nabla_{\ell}f(\vx^{(t)})\|_F^2]+2\left(1+\frac{4}{\delta_\ell}\right)\E[\|\nabla_{\ell}f(\vx^{(t)})-\nabla_{\ell}f(\vx^{(k\tau)})\|_F^2]\nonumber\\
    \le&\left(1-\frac{\delta_\ell}{2}\right)\E[\|\nabla_{\ell}f(\vx^{(t)})\|_F^2]+\frac{10r}{\delta_\ell}\sum_{i=1}^r\E[\|\nabla_{\ell}f(\vx^{(k\tau+i)})-\nabla_{\ell}f(\vx^{(k\tau+i-1)})\|_F^2],\label{eq:pflm-iso-mc-2}
\end{align}
where the first inequality applies Young's inequality, the second inequality applies Lemma \ref{lm:iso}, the third inequality applies Young's inequality, the last inequality applies Cauchy's inequality. Applying \eqref{eq:pflm-iso-mc-1.5}\eqref{eq:pflm-iso-mc-2} to \eqref{eq:pflm-iso-mc-1} yields \eqref{eq:lm-iso-mc-3}.
\end{proof}

\begin{lemma}[Momentum error]\label{lm:iso-me}
    Under Assumption \ref{asp:ls}-\ref{asp:leading}, if $0<\beta_1\le1$ in GaLore using MSGD and MP, it holds for any $K\ge1$ that
    \begin{align}
        &\sum_{t=0}^{K\tau-1}\E[\|\tilde{\vm}^{(t)}-\nabla f(\vx^{(t)})\|_2^2]\nonumber\\
        \le&\left(\frac{5(1-\beta_1)}{(1-\underline{\delta}/4)\underline{\delta}\beta_1^2}+\frac{5\tau(\tau-1)}{(1-\underline{\delta}/4)\underline{\delta}}+\frac{2(\tau-1)}{(1-\overline{\delta}/4)\beta_1}\right)L^2\sum_{t=0}^{K\tau-2}\E[\|\vx^{(t+1)}-\vx^{(t)}\|_2^2]\nonumber\\
        &+\left(\frac{1-\underline{\delta}/2}{1-\underline{\delta}/4}+\frac{4}{(1-\overline{\delta}/4)\tau\beta_1}\right)\sum_{t=0}^{K\tau-1}\E[\|\nabla f(\vx^{(t)})\|_2^2]+\frac{K\tau\beta_1\sigma^2}{1-\overline{\delta}/4}.\label{eq:lm-iso-me}
    \end{align}
\end{lemma}
\begin{proof}
    By Lemma \ref{lm:iso-mc} we have
    \begin{align}
        &\sum_{t=0}^{K\tau-1}\E[\|\tilde{\mM}_{\ell}^{(t)}-\nabla_{\ell}f(\vx^{(t)})\|_F^2]-\left(1-\left(1-\frac{\delta_{\ell}}{4}\right)\beta_1\right)\sum_{t=0}^{K\tau-2}\E[\|\tilde{\mM}_{\ell}^{(t)}-\nabla_{\ell}f(\vx^{(t)})\|_F^2]\nonumber\\
        \le&\left(\frac{5(1-\beta_1)}{\delta_{\ell}\beta_1}+\frac{5\tau(\tau-1)\beta_1}{\delta_{\ell}}+2(\tau-1)\right)\sum_{t=0}^{K\tau-2}\E[\|\nabla_{\ell}f(\vx^{(t+1)})-\nabla_{\ell}f(\vx^{(t)})\|_F^2]\nonumber\\
        &+\left(\frac{4}{\tau}+\left(1-\frac{\delta_{\ell}}{2}\right)\beta_1\right)\sum_{t=0}^{K\tau-1}\E[\|\nabla_{\ell}f(\vx^{(t)})\|_F^2]+K\tau\beta_1^2\sigma_{\ell}^2,\nonumber
    \end{align}
    which implies
    \begin{align}
        &\sum_{t=0}^{K\tau-1}\E[\|\tilde{\mM}_{\ell}^{(t)}-\nabla_{\ell}f(\vx^{(t)})\|_F^2]\nonumber\\
        \le&\left(\frac{5(1-\beta_1)}{(1-\delta_{\ell}/4)\delta_{\ell}\beta_1^2}+\frac{5\tau(\tau-1)}{(1-\delta_{\ell}/4)\delta_{\ell}}+\frac{2(\tau-1)}{(1-\delta_{\ell}/4)\beta_1}\right)\sum_{t=0}^{K\tau-2}\E[\|\nabla_{\ell}f(\vx^{(t+1)})-\nabla_{\ell}f(\vx^{(t)})\|_F^2]\nonumber\\
        &+\left(\frac{1-\delta_{\ell}/2}{1-\delta_{\ell}/4}+\frac{4}{(1-\delta_{\ell}/4)\tau\beta_1}\right)\sum_{t=0}^{K\tau-1}\E[\|\nabla_{\ell}f(\vx^{(t)})\|_F^2]+\frac{K\tau\beta_1\sigma_\ell^2}{1-\delta_{\ell}/4}.\label{eq:pflm-iso-me-1}
    \end{align}
    Summing \eqref{eq:pflm-iso-me-1} for $\ell=1,\cdots,N_L$ and applying Assumption \ref{asp:ls}-\ref{asp:sg} yields \eqref{eq:lm-iso-me}.
\end{proof}
Now we are ready to prove the convergence of GaLore with small-batch stochastic gradients under isotropic noise assumptions.

\begin{theorem}[Convergence of Galore under isotropic noise assumptions]\label{thmres:iso}
    Under Assumptions \ref{asp:lb}-\ref{asp:leading}, if hyperparameters
    \begin{align}
        0<\beta_1\le 1,\quad\tau\ge\frac{128}{3\beta_1\underline{\delta}},\quad0<\eta\le\min\left\{\frac{1}{4L},\sqrt{\frac{3\underline{\delta}\beta_1^2}{80L^2}},\sqrt{\frac{3\underline{\delta}}{80\tau^2L^2}},\sqrt{\frac{3\beta_1}{32\tau L^2}}\right\},\label{eq:thmres-iso-hyperparameter}
    \end{align}
    GaLore using small-batch stochastic gradients and MSGD with MP converges as
    \begin{align}
        \frac{1}{K\tau}\sum_{t=0}^{K\tau-1}\E{\|\nabla f(\vx^{(t)})\|_2^2]\le\frac{16\Delta}{\underline{\delta}\eta K\tau}}+\frac{32\beta_1\sigma^2}{3\underline{\delta}}\label{eq:thmres-iso-convergence}
    \end{align}
    for any $K\ge1$, where $\Delta=f(\vx^{(0)})-\inf_{\vx}f(\vx)$.
\end{theorem}
\begin{proof}
    By Lemma \ref{lm:descent} we have
    \begin{align}
        \sum_{t=0}^{K\tau-1}\E[\|\nabla f(\vx^{(t)})\|_2^2]\le&\frac{2[f(\vx^{(0)})-\E[f(\vx^{(K\tau)})]}{\eta}+\sum_{t=0}^{K\tau-1}\E[\|\tilde{\vm}^{(t)}-\nabla f(\vx^{(t)})\|_2^2]\nonumber\\
        &-\left(\frac{1}{\eta^2}-\frac{L}{\eta}\right)\sum_{t=0}^{K\tau-1}\E[\|\vx^{(t+1)}-\vx^{(t)}\|_2^2].\label{eq:pfthmres-iso-1}
    \end{align}
    Applying Lemma \ref{lm:iso-me} to \eqref{eq:pfthmres-iso-1} and using $\underline{\delta}\le\overline{\delta}<1$ yields
    \begin{align}
        &\left(\frac{\underline{\delta}}{4}-\frac{16}{3\tau\beta_1}\right)\sum_{t=0}^{K\tau-1}\E[\|\nabla f(\vx^{(t)})\|_2^2]\nonumber\\
        \le&\frac{2}{\eta}\E[f(\vx^{(0)})-f(\vx^{(K\tau)})]+\frac{4K\tau\beta_1\sigma^2}{3}\nonumber\\
        &-\left(\frac{1}{\eta^2}-\frac{L}{\eta}-\frac{20(1-\beta_1)L^2}{3\underline{\delta}\beta_1^2}-\frac{20\tau(\tau-1)L^2}{3\underline{\delta}}-\frac{8(\tau-1)L^2}{3\beta_1}\right)\sum_{t=0}^{K\tau-1}\E[\|\vx^{(t+1)}-\vx^{(t)}\|_2^2].\label{eq:pfthmres-iso-2}
    \end{align}
By \eqref{eq:thmres-iso-hyperparameter} we have
\begin{align}
    \frac{\underline{\delta}}{4}-\frac{16}{3\tau\beta_1}\ge\frac{\underline{\delta}}{8},\quad\mbox{and}\quad\frac{1}{4\eta^2}\ge\max\left\{\frac{L}{\eta},\frac{20(1-\beta_1)L^2}{3\underline{\delta}\beta_1^2},\frac{20\tau(\tau-1)L^2}{3\underline{\delta}},\frac{8(\tau-1)L^2}{3\beta_1}\right\}.\label{eq:pfthmres-iso-3}
\end{align}
Applying \eqref{eq:pfthmres-iso-3} to \eqref{eq:pfthmres-iso-2} yields
\eqref{eq:thmres-iso-convergence}.
\end{proof}

\begin{corollary}[Convergence complexity of GaLore under isotropic noise assumptions]\label{corres:iso}
    Under Assumptions \ref{asp:lb}-\ref{asp:leading}, if $T\ge2+256/(3\underline{\delta})+(256\sigma)^2/(9\sqrt{\underline{\delta}}L\Delta)$ and we choose
    \begin{align*}
        \beta_1=&\left(1+\sqrt{\frac{\underline{\delta}^{3/2}\sigma^2T}{L\Delta}}\right)^{-1},\\
        \tau=&\left\lceil\frac{128}{3\underline{\delta}\beta_1}\right\rceil,\\
        \eta=&\left(4L+\sqrt{\frac{80L^2}{3\underline{\delta}\beta_1^2}}+\sqrt{\frac{80\tau^2L^2}{3\underline{\delta}}}+\sqrt{\frac{32\tau L^2}{3\beta_1}}\right)^{-1},
    \end{align*}
    GaLore using small-batch stochastic gradients and MSGD with MP converges as
    \begin{align}
        \frac{1}{T}\sum_{t=0}^{T-1}\E[\|\nabla f(\vx^{(t)})\|_2^2]=\gO\left(\frac{L\Delta}{\underline{\delta}^{5/2}T}+\sqrt{\frac{L\Delta\sigma^2}{\underline{\delta}^{7/2}T}}\right),\label{eq:corres-iso}
    \end{align}
    where $\Delta=f(\vx^{(0)})-\inf_{\vx}f(\vx)$. Consequently, the computation complexity to reach an $\varepsilon$-accurate solution $\vx$ such that $\|\nabla f(\vx)\|_2^2\le\varepsilon$ is $\gO\left(\frac{L\Delta\sigma^2}{\underline{\delta}^{7/2}\varepsilon^2}+\frac{L\Delta}{\underline{\delta}^{5/2}\varepsilon}+\frac{\sigma^2}{\underline{\delta}^{1/2}L\Delta}+\frac{1}{\underline{\delta}}\right)$.
\end{corollary}
\begin{proof}
    $T\ge2+256/(3\underline{\delta})+(256\sigma)^2/(9\sqrt{\underline{\delta}}L\Delta)$ guarantees $T\ge\tau$. Let $T=K\tau+r$, where $K\in\mathbb{N}^*$ and $0\le r<\tau$. If $r=0$, \eqref{eq:corres-iso} is a direct result of Theorem \ref{thmres:iso}. If $r>0$, applying Theorem \ref{thmres:iso} to $\tilde{K}:=K+1$ yields
    \begin{align*}
        \frac{1}{T}\sum_{t=0}^{T-1}\E[\|\nabla f(\vx^{(t)})\|_2^2]\le\frac{\tilde{K}\tau}{T}\cdot\frac{1}{\tilde{K}\tau}\sum_{t=0}^{\tilde{K}\tau-1}\E[\|\nabla f(\vx^{(t)})\|_2^2]=\gO\left(\frac{L\Delta}{\underline{\delta}^{5/2}T}+\sqrt{\frac{L\Delta\sigma^2}{\underline{\delta}^{7/2}T}}\right).
    \end{align*}
\end{proof}

}

\section{Results for sparse subspace optimization}\label{app:sift}

In this section, we illustrate how to transfer the main results of this paper to sparse subspace optimization algorithms. We first present the detailed algorithm formulation, then present the theoretical results corresponding to GaLore/GoLore. Although it only requires little effort to transfer results in GaLore/GoLore to sparse subspace optimization, we still include proofs for completeness.

\subsection{Algorithm design}
While low-rank subspace optimization algorithms like GaLore/GoLore project full-parameter gradient $\mG\in\R^{(m\times n)}$ into low-rank subspaces via projection like $\mP^\top\mG$, sparse subspace optimization algorithms use a sparse mask $\mS$ to get $\mS\odot\mG$. Specifically, consider the following set
\begin{align*}
    \mathrm{Sp}_{m,n}^{k}=\{\mS\in\{0,1\}^{m\times n}\mid\|\mS\|_F^2=k\},
\end{align*}
\ie, a set of $m\times n$ matrices contains $k$ ones and $(mn-k)$ zeros. Corresponding to the subspace selecting strategy in GaLore, we consider a Top-$k$ strategy which places the $k$ ones at indices corresponding to $\mG$'s elements with the $k$ largest absolute values. We also consider a Rand-$k$ strategy which samples the sparse mask matrix $\mS$ from the uniform distribution on $\mathrm{SP}_{m,n}^k$ corresponding to GoLore. For convenience, we name the algorithm using Top-$k$ strategy as GaSare (\textub{G}r\textub{a}dient \textub{S}p\textub{ar}se proj\textub{e}ction), and the one using Rand-$k$ strategy as GoSare (\textub{G}radient rand\textub{o}m \textub{S}p\textub{ar}se pro\textub{j}ection). The concerned sparse {\color{black} subspace descent} algorithms are described as in Alg.~\ref{alg:sift}

\begin{algorithm}[t]
\caption{{\colorbox{periwinkle}{GaSare}} / {\colorbox{highlight_color}{GoSare}} algorithms using {\colorbox{yellow}{stochastic} / {\colorbox{springgreen}{deterministic}} / {\colorbox{babyblue}{large-batch}}} gradients}
\label{alg:sift}
\renewcommand{\algorithmicrequire}{\textbf{Input:}}
\renewcommand{\algorithmicensure}{\textbf{Output:}}
\begin{algorithmic}
    \REQUIRE Initial point $\vx^{(0)}$, data distribution $\gD$, learning rate $\eta$, subspace changing frequency $\tau$, rank $\{r_{\ell}\}_{\ell=1}^{N_L}$, optimizer hyperparameters $\beta_1$, $\beta_2$, $\epsilon$, large batch size $\gB$.
    \ENSURE $\{\vx^{(t)}\}_{t=0}^T$.
    \STATE Initialize optimizer state $\{\mM_{\ell}^{(-1)}\}_{\ell=1}^{N_L}$ and $\{\mV_{\ell}^{(-1)}\}_{\ell=1}^{N_L}$ to zero;
    \FOR{$t=0,1,\cdots,T-1$}
    \FOR{$\ell=1,2,\cdots,N_L$}
    \IF{$t\equiv0$ (mod $\tau$)}
    \STATE{\colorbox{yellow}{$\mG_{\ell}^{(t)}\gets\nabla_{\ell}F(\vx^{(t)};\xi^{(t)})$;\quad (stochastic)}}
    \STATE{\colorbox{springgreen}{$\mG_{\ell}^{(t)}\gets\nabla_{\ell}f(\vx^{(t)})$;\quad (deterministic)}}
    \STATE{\colorbox{babyblue}{$\mG_{\ell}^{(t)}\gets\frac{1}{\gB}\sum_{b=1}^{\gB}\nabla_{\ell}F(\vx^{(t)};\xi^{(t,b)})$;\quad (large-batch)}}
    \STATE{\colorbox{periwinkle}{$\mS_{\ell}^{(t)}\gets\mathrm{Top}_k(\mG_{\ell}^{(t)})$;\quad (GaSare)}}
    \STATE{\colorbox{highlight_color}{Sample $\mS_{\ell}^{(t)}\sim\gU(\mathrm{Sp}_{m_{\ell},n_{\ell}}^{k_{\ell}})$;\quad (GoSare)}}
    \ELSE
    \STATE{\colorbox{yellow}{$\mG_{\ell}^{(t)}\gets\nabla_{\ell}F(\vx^{(t)};\xi^{(t)})$;\quad (stochastic)}}
    \STATE{\colorbox{springgreen}{$\mG_{\ell}^{(t)}\gets\nabla_{\ell}f(\vx^{(t)})$;\quad (deterministic)}}
    \STATE{\colorbox{babyblue}{$\mG_{\ell}^{(t)}\gets\nabla_{\ell}F(\vx^{(t)};\xi^{(t)})$;\quad (large-batch)}}
    \STATE{$\mS_{\ell}^{(t)}\gets\mS_{\ell}^{(t-1)}$;}
    \ENDIF
    \STATE{$\mR_{\ell}^{(t)}\gets\mS_{\ell}^{(t)}\odot\mG_{\ell}^{(t)}$;}
    \STATE{$\mM_{\ell}^{(t)}\gets(1-\beta_1)\mS_{\ell}^{(t)}\odot\mM_{\ell}^{(t-1)}+\beta_1\mR_{\ell}^{(t)}$;}
    \STATE{$\mV_{\ell}^{(t)}\gets(1-\beta_2)\mS_{\ell}^{(t)}\odot\mV_{\ell}^{(t-1)}+\beta_2\mR_{\ell}^{(t)}\odot\mR_{\ell}^{(t)}$;}
    \IF{using Adam}
    \STATE{$\mM_{\ell}^{(t)}\gets\mM_{\ell}^{(t)}/(1-\beta_1^t)$,\quad $\mV_{\ell}^{(t)}\gets\mV_{\ell}^{(t)}/(1-\beta_2^t)$,\quad $\mN_{\ell}^{(t)}\gets\mM_{\ell}^{(t)}/(\sqrt{\mV_{\ell}^{(t)}}+\epsilon)$;}
    \ELSIF{using MSGD}
    \STATE{$\mN_{\ell}^{(t)}\gets\mM_{\ell}^{(t)}$;}
    \ENDIF
    \STATE{$\mX_{\ell}^{(t+1)}\gets\mX_{\ell}^{(t)}-\eta\mS_{\ell}^{(t)}\odot\mN_{\ell}^{(t)}$;}
    \ENDFOR
    \ENDFOR
\end{algorithmic}
\end{algorithm}

\subsection{Notations and useful lemmas}
We assume the model parameters consist of $N_L$ weight matrices.
We use $\mX_{\ell}\in\R^{m_{\ell}\times n_{\ell}}$ to denote the $\ell$-th weight matrix and $\vx\in\R^d=(\mathrm{vec}(\mX_1)^\top,\cdots,\mathrm{vec}(\mX_{N_L})^\top)^\top$ to denote the vector collecting all the parameters, $d=\sum_{\ell=1}^{N_L}m_{\ell}n_{\ell}$. We assume GaSare/GoSare applies sparse mask in $\mathrm{Sp}_{m_{\ell},n_{\ell}}^{k_{\ell}}$ to the $\ell$-th weight matrix and denote 
\begin{align*}
    \delta_{\ell}=\frac{k_{\ell}}{m_{\ell}n_{\ell}},\quad\underline{\delta}=\min_{1\le\ell\le N_L}\delta_{\ell},\quad\overline{\delta}=\max_{1\le\ell\le N_l}\delta_{\ell}.
\end{align*}

We define $\tilde{\mM}_{\ell}^{(t)}=\mS_{\ell}^{(t)}\odot\mM_{\ell}^{(t)}$  
and $\tilde{\vm}=(\mathrm{vec}(\tilde{\mM}_1)^\top,\cdots,\mathrm{vec}(\tilde{\mM}_{N_L})^\top)^\top$. 
While using Alg.~\ref{alg:sift} with MSGD, it holds that
\begin{align*}
    \tilde{\mM}_{\ell}^{(t)}=\begin{cases}
        \beta_1\mS_{\ell}^{(0)}\odot\mG_{\ell}^{(0)},&t=0;\\
        \mS_{\ell}^{(t)}\odot\left((1-\beta_1)\tilde{\mM}_{\ell}^{(t-1)}+\beta_1\mG_{\ell}^{(t)}\right),&t=k\tau,\ k\in\mathbb{N}^*;\\
        (1-\beta_1)\tilde{\mM}_{\ell}^{(t-1)}+\beta_1\mS_{\ell}^{(t)}\odot\mG_{\ell}^{(t)},&t=k\tau+r,\ k\in\mathbb{N},\ 1\le r<\tau;
    \end{cases}
\end{align*}
and that
\begin{align*}
    \mX_{\ell}^{(t+1)}=\mX_{\ell}^{(t)}-\eta\tilde{\mM}_{\ell}^{(t)}.
\end{align*}
We use $\mE_{m,n}$ to denote the all-one $m\times n$ matrix, \ie,
\begin{align*}
    \mE_{m,n}=\begin{pmatrix}
        1 & 1 & \cdots & 1\\
        1 & 1 & \cdots & 1\\
        \vdots & \vdots & \ddots & \vdots\\
        1 & 1 & \cdots & 1
    \end{pmatrix}\in\R^{m\times n}.
\end{align*}

\begin{lemma}[Error of GaSare's projection]\label{lm:Topk}
    Let $\mS$ be the Top-$k$ mask of  $\mG\in\R^{m\times n}$, it holds that
    \begin{align*}
        \|\mS\odot\mG-\mG\|_F^2\le\left(1-\frac{k}{mn}\right)\|\mG\|_F^2.
    \end{align*}
\end{lemma}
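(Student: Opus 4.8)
The plan is to reduce the claim to an elementary averaging inequality over the $mn$ squared entries of $\mG$, mirroring the SVD-based argument in Lemma~\ref{lm:SVD}. First I would flatten $\mG$ into a vector and let $g_{(1)}^2\ge g_{(2)}^2\ge\cdots\ge g_{(mn)}^2$ denote its squared entries in non-increasing order. By definition of the Top-$k$ mask, $\mS\odot\mG$ retains exactly the $k$ entries whose squared magnitudes are $g_{(1)}^2,\dots,g_{(k)}^2$ (ties broken arbitrarily), while the complementary matrix $\mG-\mS\odot\mG=(\mE_{m,n}-\mS)\odot\mG$ retains the remaining $mn-k$ entries. Since $\mS\odot\mG$ and $\mG-\mS\odot\mG$ have disjoint supports, $\|\mG\|_F^2=\|\mS\odot\mG\|_F^2+\|\mS\odot\mG-\mG\|_F^2$, with $\|\mS\odot\mG\|_F^2=\sum_{i=1}^{k}g_{(i)}^2$ and $\|\mS\odot\mG-\mG\|_F^2=\sum_{i=k+1}^{mn}g_{(i)}^2$.

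The key step is that every retained squared entry dominates every discarded one: $g_{(j)}^2\ge g_{(i)}^2$ whenever $j\le k<i$. Hence the average of the discarded entries is at most the average of the retained entries, and the overall average $\frac{1}{mn}\|\mG\|_F^2$—being a convex combination of these two group averages—lies between them; in particular $\frac{1}{mn-k}\sum_{i=k+1}^{mn}g_{(i)}^2\le\frac{1}{mn}\|\mG\|_F^2$. Multiplying through by $mn-k$ yields $\|\mS\odot\mG-\mG\|_F^2\le\frac{mn-k}{mn}\|\mG\|_F^2=\left(1-\frac{k}{mn}\right)\|\mG\|_F^2$, which is precisely the stated bound. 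Equivalently, one can bound each discarded term by $g_{(i)}^2\le\frac{1}{k}\sum_{j=1}^{k}g_{(j)}^2=\frac{1}{k}\|\mS\odot\mG\|_F^2$, sum over $i=k+1,\dots,mn$, and rearrange using $\|\mG\|_F^2=\|\mS\odot\mG\|_F^2+\|\mS\odot\mG-\mG\|_F^2$.

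I expect no real obstacle: the argument is purely combinatorial and is the sparse counterpart of Lemma~\ref{lm:SVD}, with the role of the dropped singular values played by the dropped coordinates. The only point needing mild care is the handling of ties in the Top-$k$ selection, which is harmless since the comparison $g_{(j)}^2\ge g_{(i)}^2$ for $j\le k<i$ holds for any valid choice of the $k$ largest-magnitude coordinates.
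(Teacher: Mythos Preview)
Your proposal is correct and follows essentially the same approach as the paper: order the squared entries, identify the Top-$k$ mask as retaining the $k$ largest, and use the averaging inequality that the mean of the discarded entries is at most the overall mean (the paper phrases this as $\frac{1}{mn-k}\sum_{i=k+1}^{mn}g_i^2\le\frac{1}{k}\sum_{i=1}^{k}g_i^2$, which you also mention as your alternative route). Your write-up is in fact a bit cleaner than the paper's, which contains a few index typos.
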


\begin{proof}
Let $g_1,g_2,\cdots,g_{mn}$ be elements of $\mG$ such that $|g_1|\ge|g_2|\ge\cdots\ge|g_{mn}|$. It holds that
\begin{align*}
    \|\mS\odot\mG-\mG\|_F^2=&\sum_{i=1}^{k}(g_k-g_k)^2+\sum_{i=k+1}^{mn}(0-g_k)^2\nonumber\\
    =&\sum_{i=k+1}^{mn}g_k^2\nonumber\\
    \le&\left(1-\frac{k}{mn}\right)\sum_{i=1}^{mn}g_k^2\nonumber\\
    =&\left(1-\frac{k}{mn}\right)\|\mG\|_F^2,
\end{align*}
where the inequality uses $\frac{1}{mn-k}\sum_{i=k+1}^{mn}g_i^2\le\frac{1}{k}\sum_{i=1}^kg_i^2$.
\end{proof}

\begin{lemma}[Error of GoSare's projection]\label{lm:Randk}
    Let $\mS\sim\gU(\mathrm{Sp}_{m,n}^k)$, it holds for all $\mG\in\R^{m\times n}$ that 
    \begin{align}
        \E[\mS]=\frac{k}{mn}\cdot\mE_{m,n},\label{eq:lm-randk-1}
    \end{align}
    and
    \begin{align}
        \E[\|\mS\odot\mG-\mG\|_F^2]=\left(1-\frac{k}{mn}\right)\|\mG\|_F^2.\label{eq:lm-randk-2}
    \end{align}
\end{lemma}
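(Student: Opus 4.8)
The plan is to mirror the argument for Lemma~\ref{lm:rand}, exploiting that a Top-$k$--free (purely combinatorial) uniform sparse mask behaves entrywise like a Bernoulli pattern. First I would establish \eqref{eq:lm-randk-1} by a counting argument: the uniform distribution on $\mathrm{Sp}_{m,n}^{k}$ assigns equal probability $1/\binom{mn}{k}$ to each of the $\binom{mn}{k}$ binary matrices with exactly $k$ ones, so for any fixed index $(i,j)$ the event $\mS_{ij}=1$ has probability $\binom{mn-1}{k-1}/\binom{mn}{k}=k/(mn)$. Hence $\E[\mS_{ij}]=k/(mn)$ for every $(i,j)$, which is exactly $\E[\mS]=\frac{k}{mn}\mE_{m,n}$.

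Next I would prove \eqref{eq:lm-randk-2}. Writing the error in coordinates, $\|\mS\odot\mG-\mG\|_F^2=\sum_{i,j}(1-\mS_{ij})^2 \mG_{ij}^2$, and since $\mS_{ij}\in\{0,1\}$ we have $(1-\mS_{ij})^2=1-\mS_{ij}$, so the error equals $\sum_{i,j}(1-\mS_{ij})\mG_{ij}^2$. Taking expectations and substituting $\E[\mS_{ij}]=k/(mn)$ from the first part gives
\begin{align*}
\E[\|\mS\odot\mG-\mG\|_F^2]=\sum_{i,j}\Bigl(1-\tfrac{k}{mn}\Bigr)\mG_{ij}^2=\Bigl(1-\tfrac{k}{mn}\Bigr)\|\mG\|_F^2,
\end{align*}
which is the claim. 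An alternative, slightly slicker route is to note $\|\mS\odot\mG-\mG\|_F^2=\langle(\mE_{m,n}-\mS)\odot\mG,\ (\mE_{m,n}-\mS)\odot\mG\rangle_F=\langle\mE_{m,n}-\mS,\ \mG\odot\mG\rangle_F$ again using idempotency of the $0/1$ entries, then take expectations and apply \eqref{eq:lm-randk-1}; either presentation is fine.

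There is essentially no substantive obstacle here: the only point requiring a little care is justifying the marginal probability $\P[\mS_{ij}=1]=k/(mn)$, which follows from a one-line binomial-coefficient identity (or from the obvious symmetry of the uniform distribution under permuting the $mn$ coordinate positions). The rest is a bookkeeping computation with Hadamard products, entirely analogous to the Stiefel-manifold case in Lemma~\ref{lm:rand}, so I would keep the write-up short and cite the symmetry argument rather than belabor the counting.
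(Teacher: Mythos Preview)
Your proposal is correct and matches the paper's proof essentially line for line: the paper computes $\mathbb{P}[\emS_{i,j}=1]$ via the same binomial-coefficient identity $\binom{mn-1}{k-1}/\binom{mn}{k}=k/(mn)$ (written out in factorials), and then obtains \eqref{eq:lm-randk-2} by the same coordinatewise expansion $\E[\|\mS\odot\mG-\mG\|_F^2]=\sum_{i,j}\mathbb{P}[\emS_{i,j}=0]\,\mG_{i,j}^2$. Your observation that $(1-\mS_{ij})^2=1-\mS_{ij}$ for $0/1$ entries is exactly the step the paper leaves implicit.
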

\begin{proof}
    To prove \eqref{eq:lm-randk-1}, it suffices to note that for any element $\emS_{i,j}$ in $\mS$, it holds that
    \begin{align*}
        \E[\emS_{i,j}]=\mathbb{P}[\emS_{i,j}=1]=\frac{(mn-1)!/[(mn-k)!(k-1)!]}{(mn)!/[(mn-k)!k!]}=\frac{k}{mn}.
    \end{align*}
    To prove \eqref{eq:lm-randk-2}, we have
    \begin{align*}
        \E[\|\mS\odot\mG-\mG\|_F^2]=\sum_{1\le i\le m, 1\le j\le n}\mathbb{P}[S_{i,j}=0]\mG_{i,j}^2=\left(1-\frac{k}{mn}\right)\|\mG\|_F^2.
    \end{align*}
\end{proof}
\subsection{Non-convergence of GaSare}
In this subsection, we present the non-convergence result of GaSare, similar to that of GaLore. 

 \begin{theorem}[Non-convergence of GaSare]\label{thmres:non-converge-sift}
     There exists an objective function $f:\R^d\rightarrow\R$ satisfying Assumptions \ref{asp:lb}, \ref{asp:ls}, a stochastic gradient oracle $(F,\gD)$ satisfying Assumption \ref{asp:sg}, an initial point $\vx^{(0)}$, a constant $\epsilon_0>0$ such that for GaSare with any sparsity level $k_{\ell}<m_{\ell}n_{\ell}$, subspace changing frequency $\tau$ and any subspace optimizer $\rho$ with arbitrary hyperparameters and any $t>0$, it holds that
     \begin{align*}
         \|\nabla f(\vx^{(t)})\|_2^2\ge\epsilon_0.
     \end{align*}
 \end{theorem}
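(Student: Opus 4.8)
The plan is to reuse the skeleton of the proof of Theorem~\ref{thmres:non-converge}, but adapt it to the $\mathrm{Top}_k$ selection rule and, crucially, to the requirement that the conclusion hold \emph{simultaneously} for every sparsity level $k_\ell<m_\ell n_\ell$. The extreme case $k_\ell=m_\ell n_\ell-1$, in which the mask discards only a single coordinate, forbids freezing an entire row of $\mX$ as was done for GaLore, since outside any fixed row there are too few coordinates to occupy the ``bottom $k$'' when $k$ is close to $m_\ell n_\ell$. I will therefore localize the objective to a single coordinate. Concretely, take one weight matrix $\mX\in\R^{m\times n}$ (so $d=mn\ge2$) and set
\[
 f(\mX)=\frac{L}{2}\,\mX_{11}^2,\qquad \nabla F(\mX;\xi)=\nabla f(\mX)+\xi\tilde\sigma\,\mC,
\]
where $\nabla f(\mX)$ has the single nonzero entry $L\mX_{11}$ in position $(1,1)$, $\mC\in\R^{m\times n}$ has every entry equal to $1$ except a $0$ in position $(1,1)$, $\tilde\sigma=\sigma/\|\mC\|_F=\sigma/\sqrt{mn-1}$, and $\xi$ is a Rademacher variable drawn afresh each iteration. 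Then $f\ge0$, $\|\nabla f(\mX)-\nabla f(\mY)\|_F=L|\mX_{11}-\mY_{11}|\le L\|\mX-\mY\|_F$, $\E_\xi[\xi\tilde\sigma\mC]=0$ and $\E_\xi[\|\xi\tilde\sigma\mC\|_F^2]=\sigma^2$, so Assumptions~\ref{asp:lb}--\ref{asp:sg} hold. Choose any initial point $\vx^{(0)}$ with $\mX^{(0)}_{11}=\lambda$ for some $0<\lambda<\tilde\sigma/L$ (e.g.\ $\lambda=\tilde\sigma/(2L)$) and set $\epsilon_0:=L^2\lambda^2>0$.

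The core step is the invariant: for every $t\ge0$ one has $\mX^{(t)}_{11}=\lambda$ and $(\mS^{(t)})_{11}=0$, proved by induction with the base case being the initialization. For the step, note that $\nabla f(\mX^{(t)})$ equals $L\lambda$ in position $(1,1)$ and $0$ elsewhere, so
\[
 \mG^{(t)}_{11}=L\lambda,\qquad |\mG^{(t)}_{ij}|=\tilde\sigma>L\lambda\quad\text{for all }(i,j)\ne(1,1).
\]
Thus $(1,1)$ is the \emph{unique} entry of smallest absolute value in $\mG^{(t)}$, so for every $k<mn$ the mask $\mathrm{Top}_k(\mG^{(t)})$ excludes $(1,1)$ (the remaining $mn-1$ entries are tied at $\tilde\sigma$, which is harmless to this conclusion). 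If $\tau\nmid t$ the mask is copied from the previous iteration, which has a $0$ in $(1,1)$ by the induction hypothesis. Either way $(\mS^{(t)})_{11}=0$, and since $\mR^{(t)}$, $\mM^{(t)}$, $\mV^{(t)}$ all inherit a $0$ in $(1,1)$ and the GaSare update $\mX^{(t+1)}=\mX^{(t)}-\eta\,\mS^{(t)}\odot\mN^{(t)}$ multiplies the optimizer output entrywise by $\mS^{(t)}$, entry $(1,1)$ is untouched regardless of whether $\rho$ is MSGD or Adam and regardless of its hyperparameters. Hence $\mX^{(t+1)}_{11}=\lambda$, closing the induction, and therefore $\|\nabla f(\mX^{(t)})\|_2^2=\|\nabla f(\mX^{(t)})\|_F^2=L^2\lambda^2=\epsilon_0$ for all $t$.

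The only genuine obstacle is the design decision in the first paragraph: one must arrange that a \emph{single, fixed} coordinate is the strict argmin of $|\mG^{(t)}|$ along the whole trajectory, which is exactly what forces $f$ to depend on that coordinate alone and the noise to be spread, with uniform magnitude $\tilde\sigma>L\lambda$, over every other coordinate. No distinct-singular-value bookkeeping (as in Theorem~\ref{thmres:non-converge}) is needed here, since $\mathrm{Top}_k$ compares magnitudes directly rather than through an SVD, so in fact Lemma~\ref{lm:Topk} is not even invoked. The remaining work is routine: the assumption checks recorded above, and rephrasing the construction so that it literally instantiates the $N_L=1$ case of the model in Section~\ref{sec:preliminary}.
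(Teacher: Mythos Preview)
Your proposal is correct and follows essentially the same construction as the paper: the paper also localizes $f$ to the single coordinate $(1,1)$ via $f(\mX)=\tfrac{L}{2}\mX_{1,1}^2$, adds Rademacher noise supported off $(1,1)$, and chooses $\lambda<\tilde\sigma/L$ so that $(1,1)$ is always the strict argmin and hence excluded by $\mathrm{Top}_k$ for every $k<mn$. The only cosmetic difference is that the paper makes the off-$(1,1)$ noise magnitudes distinct (using $\mQ_{i,j}=\sqrt{(j-1)n+i-1}$) to avoid ties in the $\mathrm{Top}_k$ selection, whereas you use a uniform $\mC$ and correctly observe that ties among the non-$(1,1)$ entries are harmless.
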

\begin{proof}
    Consider target function $f(\mX)=\frac{L}{2}\|(\vp\vp^\top)\odot\mX\|_F^2$ where $L>0$, $\mX\in\R^{n\times n}$ with $n>1$  and $\vp=(1,0,\cdots,0)^\top\in\R^n$.
It holds that 
\begin{align*}
    f(\mX)=\frac{L\emX_{1,1}^2}{2}\ge0, 
\end{align*}
thus $f$ satisfies Assumption \ref{asp:lb}. Since $\nabla f(\mX)=L(\vp\vp^\top)\odot\mX$, it holds that
\begin{align*}
    \|\nabla f(\mX)-\nabla f(\mY)\|_F=L\|(\vp\vp^\top)\odot(\mX-\mY)\|_F\le L\|\mX-\mY\|_F,
\end{align*}
thus $f$ satisfies Assumption \ref{asp:ls}.

Consider the following stochastic gradient oracle:
\begin{align*}
    F(\mX;\xi)=&f(\mX)+\xi\tilde{\sigma}\cdot\mathrm{tr}(\mQ\mX),\quad\mbox{and}\quad \mathbb{P}_{\xi\sim\gD}[\xi=1]=\mathbb{P}_{\xi\sim\gD}[\xi=-1]=0.5,
\end{align*}
where $\tilde{\sigma}=\sigma/\sqrt{n^2(n^2-1)/2}$ and
\begin{align*}
    \mQ=\begin{pmatrix}
        0 & \sqrt{n} & \cdots & \sqrt{n^2-n}\\
        \sqrt{1} & \sqrt{n+1} & \cdots & \sqrt{n^2-n+1}\\
        \vdots & \vdots & \ddots & \vdots\\
        \sqrt{n-1} & \sqrt{2n-1} & \cdots & 
        \sqrt{n^2-1}
    \end{pmatrix}\in\R^{n\times n}.
\end{align*}
Note that $\nabla F(\mX;\xi)=\nabla f(\mX)+\xi\tilde{\sigma}\mQ$, it holds for any $\mX\in\R^{n\times n}$ that
\begin{align*}
    \E_{\xi\sim\gD}[\nabla F(\mX;\xi)]=&\nabla f(\mX)\\
    \E_{\xi\sim\gD}[\|\nabla F(\mX;\xi)-\nabla f(\mX)\|_F^2]=&\tilde{\sigma}^2\|\mQ\|_F^2=\frac{\sigma^2}{n^2(n^2-1)/2}\cdot\sum_{i=1}^{n^2-1}i=\sigma^2,
\end{align*}
thus oracle $(F,\gD)$ satisfies Assumption \ref{asp:sg}.

Consider the initial point $\mX^{(0)}$ with $\emX_{1,1}^{(0)}=\lambda$,
where $0<\lambda<\tilde{\sigma}/L$ is a scalar. We show that GaSare with the above objective function $f$, stochastic gradient oracle $(F,\gD)$, initial point $\mX^{(0)}$, arbitrary sparsity level $0<k<n^2$, arbitrary subspace changing frequency $\tau$ and arbitrary subspace optimizer $\rho$, can only output points $\mX^{(t)}$ with $\|\nabla f(\mX^{(t)})\|_F^2\ge\epsilon_0$ for $\epsilon_0=L^2\lambda^2>0$.

When $\tau\mid t$, GaSare recomputes the sparse mask matrix at iteration $t$. If $\emX^{(t)}_{1,1}=\lambda$, the stochastic gradient is given by
\begin{align*}
    \mG^{(t)}=L(\vp\vp^\top)\odot\mX+\xi^{(t)}\tilde{\sigma}\mQ.
\end{align*}
since $L\lambda<\tilde{\sigma}$, the Top-$k$ mask $\mS\in\R^{n\times n}$ satisfies
\begin{align*}
    \mathrm{vec}(\mS)=(\underbrace{0,0,\cdots,0}_{(n^2-k)\times},\underbrace{1,1,\cdots,1}_{k\times})^\top\in\R^{n^2},
\end{align*}

Using this mask matrix, the subspace updates in the following $\tau$ iterations is as
\begin{align*}
    \mX^{(t+\Delta_t)}=\mX^{(t)}+\mS^{(t)}\odot\left(\sum_{s=0}^{\Delta_t-1}\rho^{(t+s)}(\mS^{(t)}\odot\mG^{(t)})\right)\quad\Rightarrow\quad\emX^{(t+\Delta_t)}_{1,1}=\emX^{(t)}_{1,1}=\lambda,
\end{align*}
for $\Delta_t=1,2,\cdots,\tau$. Since $\emX^{(0)}_{1,1}=\lambda$, it holds for all $t>0$ that $\mX^{(t)}_{1,1}=\lambda$ and thus
\begin{align*}
    \|\nabla f(\mX^{(t)})\|_F^2=L^2\lambda^2=\epsilon_0.
\end{align*}
\end{proof}

\subsection{Convergence of deterministic GaSare}
In this subsection, we prove the convergence properties of GaSare with deterministic gradients. The results and proofs are similar to those of deterministic GaLore in Appendix \ref{app:det}.

\begin{lemma}[Momentum contraction]\label{lm:sift-det-mc}
In deterministic GaSare using MSGD (Alg.~\ref{alg:sift}), if $0<\beta_1\le 1$, term $\tilde{\mM}_{\ell}^{(t)}$ has the following contraction properties:
\begin{itemize}
    \item When $t=0$, it holds that
    \begin{align}
        \|\tilde{\mM}_{\ell}^{(0)}-\nabla_{\ell}f(\mX^{(0)})\|_F^2\le&(\tau-1)(1-\delta_{\ell}\beta_1)\sum_{r=0}^{\tau-2}\|\nabla_{\ell}f(\vx^{(r+1)})-\nabla_{\ell}f(\vx^{(r)})\|_F^2\nonumber\\
        &+\frac{2(1-\delta_{\ell}\beta_1)}{\tau}\sum_{r=0}^{\tau-1}\|\nabla_{\ell}f(\vx^{(r)})\|_F^2;\label{eq:lm-sift-det-mc-1}
    \end{align}
    \item When $t=k\tau$, $k\in\mathbb{N}^*$, it holds that
    \begin{align}
        &\|\tilde{\mM}_{\ell}^{(t)}-\nabla_{\ell}f(\vx^{(t)})\|_F^2-\left(1-\left(1-\frac{\delta_{\ell}}{4}\right)\beta_1\right)\|\tilde{\mM}_{\ell}^{(t-1)}-\nabla_{\ell}f(\vx^{(t-1)})\|_F^2\nonumber\\
        \le&\frac{2(1-\delta_{\ell})}{\tau}\sum_{r=0}^{\tau-1}\|\nabla_lf(\vx^{(k\tau+r)})\|_F^2+\frac{5(1-\beta_1)}{\delta_{\ell}\beta_1}\|\nabla_{\ell}f(\vx^{(t)})-\nabla_{\ell}f(\vx^{(t-1)})\|_F^2\nonumber\\
        &+(\tau-1)(1-\delta_{\ell})\sum_{r=0}^{\tau-2}\|\nabla_{\ell}f(\vx^{(k\tau+r+1)})-\nabla_{\ell}f(\vx^{(k\tau+r)})\|_F^2;\label{eq:lm-sift-det-mc-2}
    \end{align}
    \item When $t=k\tau+r$, $k\in\mathbb{N}$, $1\le r<\tau$, it holds that
    \begin{align}
        &\E[\|\tilde{\mM}_{\ell}^{(t)}-\nabla_{\ell}f(\vx^{(t)})\|_F^2]-\left(1-\left(1-\frac{\delta_{\ell}}{4}\right)\beta_1\right)\E[\|\tilde{\mM}_{\ell}^{(t-1)}-\nabla_{\ell}f(\vx^{(t-1)})\|_F^2]\nonumber\\
        \le&\left(1-\frac{\delta_{\ell}}{2}\right)\beta_1\E[\|\nabla_{\ell}f(\vx^{(t)})\|_F^2]+\frac{5(1-\beta_1)}{\delta_{\ell}\beta_1}\E[\|\nabla_{\ell}f(\vx^{(t)})-\nabla_{\ell}f(\vx^{(t-1)})\|_F^2]\nonumber\\
        &+\frac{10r\beta_1}{\delta_{\ell}}\sum_{i=1}^r\E[\|\nabla_{\ell}f(\vx^{(k\tau+i)})-\nabla_{\ell}f(\vx^{(k\tau+i-1)})\|_F^2].\label{eq:lm-sift-det-mc-3}
    \end{align}
\end{itemize}
\end{lemma}
\begin{proof}
    For convenience we use $\mE$ to denote $\mE_{m_{\ell},n_{\ell}}$. When $t=0$, we have
    \begin{align}
        \|\tilde{\mM}_{\ell}^{(0)}-\nabla_{\ell}f(\vx^{(0)})\|_F^2=&\|\beta_1(\mS_{\ell}^{(0)}-\mE)\odot\nabla_{\ell}f(\vx^{(0)})-(1-\beta_1)\nabla_{\ell}f(\vx^{(0)})\|_F^2\nonumber\\
        \le&\beta_1(1-\delta_{\ell})\|\nabla_{\ell}f(\vx^{(0)})\|_F^2+(1-\beta_1)\|\nabla_{\ell}f(\vx^{(0)})\|_F^2\nonumber\\
        =&(1-\delta_{\ell}\beta_1)\|\nabla_{\ell}f(\vx^{(0)})\|_F^2,\label{eq:pflm-sift-det-mc-1}
    \end{align}
    where the inequality uses Lemma \ref{lm:Topk} and Jensen's inequality. Applying Lemma \ref{lm:connection} to \eqref{eq:pflm-sift-det-mc-1} yields \eqref{eq:lm-sift-det-mc-1}.
    
    When $t=k\tau$, $k\in\mathbb{N}^*$, we have
    \begin{align}
        &\|\tilde{\mM}_{\ell}^{(t)}-\nabla_{\ell}f(\vx^{(t)})\|_F^2\nonumber\\
        =&\|\mS_{\ell}^{(t)}\odot[(1-\beta_1)\tilde{\mM}_{\ell}^{(t-1)}+\beta_1\mG_{\ell}^{(t)}-\nabla_{\ell}f(\vx^{(t)})]-(\mE-\mS_{\ell}^{(t)})\odot\nabla_{\ell}f(\vx^{(t)})\|_F^2\nonumber\\
        =&\|\mS_{\ell}^{(t)}\odot[(1-\beta_1)(\tilde{\mM}_{\ell}^{(t-1)}-\nabla_{\ell}f(\vx^{(t)}))]\|_F^2+\|(\mE-\mS_{\ell}^{(t)})\odot\nabla_{\ell}f(\vx^{(t)})\|_F^2\nonumber\\
        \le&\|(1-\beta_1)(\tilde{\mM}_{\ell}^{(t-1)}-\nabla_{\ell}f(\vx^{(t)}))\|_F^2+(1-\delta_{\ell})\|\nabla_{\ell}f(\vx^{(t)})\|_F^2,\label{eq:pflm-sift-det-mc-2}
    \end{align}
    where the inequality uses Lemma \ref{lm:Topk}. By Young's inequality, we have
    \begin{align}
        &\|\tilde{\mM}_{\ell}^{(t-1)}-\nabla_{\ell}f(\vx^{(t)})\|_F^2\nonumber\\
        =&\|(\tilde{\mM}_{\ell}^{(t-1)}-\nabla_{\ell}f(\vx^{(t-1)}))-(\nabla_{\ell}f(\vx^{(t)})-\nabla_{\ell}f(\vx^{(t-1)})\|_F^2\nonumber\\
        \le&\left(1+\frac{\delta_{\ell}\beta_1}{4}\right)\|\tilde{\mM}_{\ell}^{(t-1)}-\nabla_{\ell}f(\vx^{(t-1)})\|_F^2+\left(1+\frac{4}{\delta_{\ell}\beta_1}\right)\|\nabla_{\ell}f(\vx^{(t)})-\nabla_{\ell}f(\vx^{(t-1)})\|_F^2.\label{eq:pflm-sift-det-mc-3}
    \end{align}
    Applying Lemma \ref{lm:connection} and \eqref{eq:pflm-sift-det-mc-3} to \eqref{eq:pflm-sift-det-mc-2} yields \eqref{eq:lm-sift-det-mc-2}.

    When $t=k\tau+r$, $k\in\mathbb{N}$, $1\le r<\tau$, we have
    \begin{align}
        &\|\tilde{\mM}_{\ell}^{(t)}-\nabla_{\ell}f(\vx^{(t)})\|_F^2\nonumber\\
        =&\|(1-\beta_1)(\tilde{\mM}_{\ell}^{(t-1)}-\nabla_{\ell}f(\vx^{(t)}))+\beta_1(\mS_{\ell}^{(t)}-\mE)\odot\nabla_{\ell}f(\vx^{(t)})\|_F^2\nonumber\\
        \le&(1-\beta_1)\|\tilde{\mM}_{\ell}^{(t-1)}-\nabla_{\ell}f(\vx^{(t)})\|_F^2+\beta_1\|(\mE-\mS_{\ell}^{(k\tau)})\odot\nabla_{\ell}f(\vx^{(t)})\|_F^2,\label{eq:pflm-sift-det-mc-4}
    \end{align}
    where the inequality uses Jensen's inequality and $\mS_{\ell}^{(t)}=\mS_{\ell}^{(t-1)}=\cdots=\mS_{\ell}^{(k\tau)}$.
    The first term can be similarly upper bounded as \eqref{eq:pflm-sift-det-mc-3}. For the second term, we have
    \begin{align}
        &(\mE-\mS_{\ell}^{(k\tau)})\odot\nabla_{\ell}f(\vx^{(t)})\|_F^2\nonumber\\
        \le&\left(1+\frac{\delta_{\ell}}{4}\right)\|(\mE-\mS_{\ell}^{(k\tau)})\odot\nabla_{\ell}f(\vx^{(k\tau)})\|_F^2\nonumber\\
        &+\left(1+\frac{4}{\delta_{\ell}}\right)\|(\mE-\mS_{\ell}^{(k\tau)})\odot(\nabla_{\ell}f(\vx^{(t)})-\nabla_{\ell}f(\vx^{(k\tau)})\|_F^2\nonumber\\
        \le&\left(1+\frac{\delta_{\ell}}{4}\right)(1-\delta_{\ell})\|\nabla_{\ell}f(\vx^{(k\tau)})\|_F^2+\frac{5}{\delta_{\ell}}\|\nabla_{\ell}f(\vx^{(t)})-\nabla_{\ell}f(\vx^{(k\tau)})\|_F^2,\label{eq:pflm-sift-det-mc-5}
    \end{align}
    where the first inequality uses Young's inequality and the second inequality uses Lemma \ref{lm:Topk}. 
    By Young's inequality, we have
    \begin{align}
        \|\nabla_{\ell}f(\vx^{(k\tau)})\|_F^2\le\left(1+\frac{\delta_{\ell}}{4}\right)\|\nabla_{\ell}f(\vx^{(t)})\|_F^2+\left(1+\frac{4}{\delta_{\ell}}\right)\|\nabla_{\ell}f(\vx^{(t)})-\nabla_{\ell}f(\vx^{(k\tau)})\|_F^2.\label{eq:pflm-sift-det-mc-6}
    \end{align}
    Note that $t=k\tau+r$, we further have
    \begin{align}
        \|\nabla_{\ell}f(\vx^{(t)})-\nabla_{\ell}f(\vx^{(k\tau)})\|_F^2=&\left\|\sum_{i=1}^r\nabla_{\ell}f(\vx^{(k\tau+i)})-\nabla_{\ell}f(\vx^{(k\tau+i-1)})\right\|_F^2\nonumber\\
        \le&r\sum_{i=1}^r\|\nabla_{\ell}f(\vx^{(k\tau+i)})-\nabla_{\ell}f(\vx^{(k\tau+i-1)})\|_F^2,\label{eq:pflm-sift-det-mc-7}
    \end{align}
    where the inequality uses Cauchy's inequality. Applying \eqref{eq:pflm-sift-det-mc-6}\eqref{eq:pflm-sift-det-mc-7} to \eqref{eq:pflm-sift-det-mc-5} yields
    \begin{align}
        &(\mE-\mS_{\ell}^{(k\tau)})\odot\nabla_{\ell}f(\vx^{(t)})\|_F^2\nonumber\\
        \le&\left(1-\frac{\delta_{\ell}}{2}\right)\|\nabla_{\ell}f(\vx^{(t)})\|_F^2+\frac{10r}{\delta_{\ell}}\sum_{i=1}^r\|\nabla_{\ell}f(\vx^{(k\tau+i)})-\nabla_{\ell}f(\vx^{(k\tau+i-1)})\|_F^2.\label{eq:pflm-sift-det-mc-8}
    \end{align}
    Applying \eqref{eq:pflm-sift-det-mc-3}\eqref{eq:pflm-sift-det-mc-8} to \eqref{eq:pflm-sift-det-mc-4} yields \eqref{eq:lm-sift-det-mc-3}.
\end{proof}

Based on Lemma \ref{lm:sift-det-mc}, we can prove the convergence properties of deterministic GaSare similarly as the proofs of Lemma \ref{lm:det-me}, Theorem \ref{thmres:det} and Corollary \ref{corres:det}. Below we directly present the final convergence results.

\begin{theorem}[Convergence of deterministic GaSare]
     Under Assumptions \ref{asp:lb}-\ref{asp:ls}, if hyperparameters
    \begin{align}
        0<\beta_1\le 1,\quad\tau\ge\frac{64}{3\beta_1\underline{\delta}},\quad0<\eta\le\min\left\{\frac{1}{4L},\sqrt{\frac{3\underline{\delta}\beta_1^2}{80L^2}},\sqrt{\frac{3\underline{\delta}}{80\tau^2L^2}},\sqrt{\frac{3\beta_1}{16\tau L^2}}\right\},\nonumber
    \end{align}
    GaSare using deterministic gradients and MSGD (Alg.~\ref{alg:sift}) converges as
    \begin{align}
        \frac{1}{K\tau}\sum_{t=0}^{K\tau-1}\|\nabla f(\vx^{(t)})\|_2^2\le\frac{16\Delta}{\underline{\delta}\eta K\tau}\nonumber
    \end{align}
    for any $K\ge1$, where $\Delta=f(\vx^{(0)})-\inf_{\vx}f(\vx)$. If $T\ge64/(3\underline{\delta})$ and we further choose
    \begin{align*}
        \beta_1=&1\\
        \tau=&\left\lceil\frac{64}{3\underline{\delta}\beta_1}\right\rceil\\
        \eta=&\left(4L+\sqrt{\frac{80L^2}{3\underline{\delta}\beta_1^2}}+\sqrt{\frac{80\tau^2L^2}{3\underline{\delta}}}+\sqrt{\frac{16\tau L^2}{3\beta_1}}\right)^{-1},
    \end{align*}
    GaSare using deterministic gradients and MSGD (Alg.~\ref{alg:sift}) converges as
    \begin{align}
        \frac{1}{T}\sum_{t=0}^{T-1}\|\nabla f(\vx^{(t)})\|_2^2=\gO\left(\frac{L\Delta}{\underline{\delta}^{5/2}T}\right).\nonumber
    \end{align}
    Consequently, the computation complexity to reach an $\varepsilon$-accurate solution $\vx$ such that $\|\nabla f(\vx)\|_2^2\le\varepsilon$ is $\gO\left(\frac{L\Delta}{\underline{\delta}^{5/2}\varepsilon}+\frac{1}{\underline{\delta}}\right)$.
\end{theorem}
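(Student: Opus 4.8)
The plan is to transcribe, essentially line for line, the deterministic-GaLore argument (Lemma~\ref{lm:det-me}, Theorem~\ref{thmres:det}, Corollary~\ref{corres:det}), substituting the sparse counterparts of each building block: Lemma~\ref{lm:Topk} in place of Lemma~\ref{lm:SVD}, the disjoint-support identity $\langle\mS\odot\mA,(\mE-\mS)\odot\mB\rangle_F=0$ in place of the projector orthogonality of Lemma~\ref{lm:orthogonality}, and the already-proven momentum-contraction bound Lemma~\ref{lm:sift-det-mc} in place of Lemma~\ref{lm:det-mc}. The key observation is that, once the contraction step is granted, the sparse and low-rank analyses are literally identical: every constant on the right-hand side of Lemma~\ref{lm:sift-det-mc} matches the corresponding constant in Lemma~\ref{lm:det-mc}, the only difference being that $\underline{\delta}$ and $\overline{\delta}$ are now defined through $\delta_\ell=k_\ell/(m_\ell n_\ell)$ rather than $\delta_\ell=r_\ell/\min\{m_\ell,n_\ell\}$. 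Hence everything downstream carries over with no change in constants.

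First I would establish the sparse analogue of the momentum-error bound. Starting from the three cases of Lemma~\ref{lm:sift-det-mc}, I sum over $t=0,\dots,K\tau-1$ exactly as in the proof of Lemma~\ref{lm:det-me}: telescoping the $\big(1-(1-\delta_\ell/4)\beta_1\big)$-contraction produces the factor $(1-\delta_\ell/4)\beta_1$ in the denominator, and collecting the per-layer terms while applying $L$-smoothness (Assumption~\ref{asp:ls}) to turn consecutive-gradient differences into $\|\vx^{(t+1)}-\vx^{(t)}\|_2^2$ yields precisely the inequality~\eqref{eq:lm-det-me}.

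Next I would feed this into the descent lemma. Lemma~\ref{lm:descent} applies verbatim because, as recorded in the notation subsection, the GaSare update is $\vx^{(t+1)}=\vx^{(t)}-\eta\tilde{\vm}^{(t)}$ with $\tilde{\vm}^{(t)}$ the vectorization of $\tilde{\mM}_\ell^{(t)}=\mS_\ell^{(t)}\odot\mM_\ell^{(t)}$. Summing~\eqref{eq:lm-descent} over an epoch and substituting the momentum-error bound reproduces the inequality~\eqref{eq:pfthmres-det-2}; the stated hyperparameter constraints then make the coefficient of $\sum\|\vx^{(t+1)}-\vx^{(t)}\|_2^2$ nonpositive and force the coefficient of $\sum\|\nabla f(\vx^{(t)})\|_2^2$ to be at least $\underline{\delta}/8$ (this is~\eqref{eq:pfthmres-det-3}), giving the per-epoch rate $\frac{1}{K\tau}\sum_{t=0}^{K\tau-1}\E[\|\nabla f(\vx^{(t)})\|_2^2]\le 16\Delta/(\underline{\delta}\eta K\tau)$. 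For the $\gO(1/T)$ statement I would substitute $\beta_1=1$, $\tau=\lceil 64/(3\underline{\delta}\beta_1)\rceil$ and the prescribed $\eta$, check that this $\eta$ indeed satisfies~\eqref{eq:thmres-det-hyperparameter} (a sum of reciprocals dominates each summand), note that $\eta=\Theta(\underline{\delta}^{3/2}/L)$ because the $\sqrt{80\tau^2L^2/(3\underline{\delta})}$ term dominates the reciprocal-sum given $\tau=\Theta(1/\underline{\delta})$, and finally reduce a non-multiple $T=K\tau+r$ to the previous case by applying the per-epoch bound with $\tilde{K}=K+1$ and using $\tilde{K}\tau/T\le 2$, exactly as in Corollary~\ref{corres:det}; the complexity claim then follows from $T=\Theta(L\Delta/(\underline{\delta}^{5/2}\varepsilon))$ together with the requirement $T\ge\tau=\Theta(1/\underline{\delta})$.

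The argument presents no genuine obstacle; the one point deserving care—already dispatched inside the proof of Lemma~\ref{lm:sift-det-mc}—is that the $0/1$ mask enjoys the three algebraic properties used implicitly throughout: the disjoint-support orthogonality $\langle\mS\odot\mA,(\mE-\mS)\odot\mB\rangle_F=0$, the nonexpansiveness $\|\mS\odot\mA\|_F\le\|\mA\|_F$ and $\|(\mE-\mS)\odot\mA\|_F\le\|\mA\|_F$ (playing the role of $\|\mP\mP^\top\|_2=1$), and the Top-$k$ error estimate $\|(\mE-\mS)\odot\mG\|_F^2\le(1-\delta_\ell)\|\mG\|_F^2$ of Lemma~\ref{lm:Topk}, the sparse counterpart of Lemma~\ref{lm:SVD}. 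Since all of these already appear in the established contraction lemma, no new estimates are required and the remaining work is purely the bookkeeping described above.
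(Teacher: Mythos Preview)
Your proposal is correct and mirrors the paper's approach exactly: the paper itself does not write out a separate proof for this theorem but simply states that, given Lemma~\ref{lm:sift-det-mc} (whose right-hand side constants coincide with those of Lemma~\ref{lm:det-mc}), the argument proceeds identically to the proofs of Lemma~\ref{lm:det-me}, Theorem~\ref{thmres:det}, and Corollary~\ref{corres:det}. Your identification of the three mask properties (disjoint-support orthogonality, nonexpansiveness, and the Top-$k$ error bound of Lemma~\ref{lm:Topk}) as the sparse substitutes for the projector properties is exactly the translation the paper has in mind.
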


\subsection{Convergence of large-batch GaSare}\label{app:sift-lba}
In this subsection, we present the convergence properties of GaSare with large-batch stochastic gradients. The results and proofs are similar to those of large-batch GaLore in Appendix \ref{app:lba}.

\begin{lemma}[Momentum contraction]\label{lm:sift-lba-mc}
Under Assumption \ref{asp:sg}, in large-batch GaSare using MSGD (Alg.~\ref{alg:sift}), if $0<\beta_1\le 1$, term $\tilde{\mM}_{\ell}^{(t)}$ has the following contraction properties:
\begin{itemize}
    \item When $t=0$, it holds that
    \begin{align}
        \E[\|\tilde{\mM}_{\ell}^{(0)}-\nabla_{\ell}f(\mX^{(0)})\|_F^2]\le&2(\tau-1)(1-\delta_{\ell}\beta_1)\sum_{r=0}^{\tau-2}\E[\|\nabla_{\ell}f(\vx^{(r+1)})-\nabla_{\ell}f(\vx^{(r)})\|_F^2]\nonumber\\
        &+\frac{4(1-\delta_{\ell}\beta_1)}{\tau}\sum_{r=0}^{\tau-1}\E[\|\nabla_{\ell}f(\vx^{(r)})\|_F^2]+\frac{4\beta_1\sigma_{\ell}^2}{\gB};\label{eq:lm-sift-lba-mc-1}
    \end{align}
    \item When $t=k\tau$, $k\in\mathbb{N}^*$, it holds that
    \begin{align}
        &\E[\|\tilde{\mM}_{\ell}^{(t)}-\nabla_{\ell}f(\vx^{(t)})\|_F^2]-\left(1-\left(1-\frac{\delta_{\ell}}{4}\right)\beta_1\right)\E[\|\tilde{\mM}_{\ell}^{(t-1)}-\nabla_{\ell}f(\vx^{(t-1)})\|_F^2]\nonumber\\
        \le&\frac{4(1-\delta_{\ell})}{\tau}\sum_{r=0}^{\tau-1}\E[\|\nabla_lf(\vx^{(k\tau+r)})\|_F^2]+\frac{5(1-\beta_1)}{\delta_{\ell}\beta_1}\E[\|\nabla_{\ell}f(\vx^{(t)})-\nabla_{\ell}f(\vx^{(t-1)})\|_F^2]\nonumber\\
        &+2(\tau-1)(1-\delta_{\ell})\sum_{r=0}^{\tau-2}\E[\|\nabla_{\ell}f(\vx^{(k\tau+r+1)})-\nabla_{\ell}f(\vx^{(k\tau+r)})\|_F^2]+\frac{5\sigma_{\ell}^2}{\gB};\label{eq:lm-sift-lba-mc-2}
    \end{align}
    \item When $t=k\tau+r$, $k\in\mathbb{N}$, $1\le r<\tau$, it holds that
    \begin{align}
        &\E[\|\tilde{\mM}_{\ell}^{(t)}-\nabla_{\ell}f(\vx^{(t)})\|_F^2]-\left(1-\left(1-\frac{\delta_{\ell}}{4}\right)\beta_1\right)\E[\|\tilde{\mM}_{\ell}^{(t-1)}-\nabla_{\ell}f(\vx^{(t-1)})\|_F^2]\nonumber\\
        \le&\left(1-\frac{\delta_{\ell}}{2}\right)\beta_1\E[\|\nabla_{\ell}f(\vx^{(t)})\|_F^2]+\frac{5(1-\beta_1)}{\delta_{\ell}\beta_1}\E[\|\nabla_{\ell}f(\vx^{(t)})-\nabla_{\ell}f(\vx^{(t-1)})\|_F^2]\nonumber\\
        &+\frac{15r\beta_1}{\delta_{\ell}}\sum_{i=1}^r\E[\|\nabla_{\ell}f(\vx^{(k\tau+i)})-\nabla_{\ell}f(\vx^{(k\tau+i-1)})\|_F^2]+\left(\frac{11\beta_1}{\delta_{\ell}\gB}+\beta_1^2\right)\sigma_{\ell}^2.\label{eq:lm-sift-lba-mc-3}
    \end{align}
\end{itemize}
\end{lemma}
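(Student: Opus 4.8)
The plan is to replay the proof of Lemma~\ref{lm:lba-mc} almost line for line, with the orthogonal projection $\mP_\ell^{(t)}(\mP_\ell^{(t)})^\top$ replaced everywhere by the Hadamard mask $\mS_\ell^{(t)}\odot(\cdot)$ and Lemma~\ref{lm:SVD} replaced by Lemma~\ref{lm:Topk}. Two elementary properties of a mask $\mS\in\mathrm{Sp}_{m,n}^{k}$ take the place of the projection identities from the GaLore argument: (i) masking is non-expansive, $\|\mS\odot\mA\|_F\le\|\mA\|_F$ and $\|(\mE-\mS)\odot\mA\|_F\le\|\mA\|_F$; and (ii) because every entry of $\mS$ is $0$ or $1$, the supports of $\mS$ and $\mE-\mS$ are disjoint, so $\langle\mS\odot\mA,(\mE-\mS)\odot\mB\rangle_F=0$ and hence $\|\mS\odot\mA+(\mE-\mS)\odot\mB\|_F^2=\|\mS\odot\mA\|_F^2+\|(\mE-\mS)\odot\mB\|_F^2$ — the Hadamard analogue of Lemma~\ref{lm:orthogonality}. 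Unlike the GaLore proof there is no left/right case split, since the mask acts entrywise; I would simply write $\mE$ for $\mE_{m_\ell,n_\ell}$ and handle the three time regimes directly.

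For $t=0$ I would use $\tilde\mM_\ell^{(0)}=\beta_1\mS_\ell^{(0)}\odot\mG_\ell^{(0)}$, write $\tilde\mM_\ell^{(0)}-\nabla_\ell f(\vx^{(0)})=\beta_1\big[-(\mE-\mS_\ell^{(0)})\odot\mG_\ell^{(0)}+(\mG_\ell^{(0)}-\nabla_\ell f(\vx^{(0)}))\big]-(1-\beta_1)\nabla_\ell f(\vx^{(0)})$, apply Jensen with weights $\beta_1,1-\beta_1$, then Cauchy inside the bracket, then Lemma~\ref{lm:Topk} to get $\|(\mE-\mS_\ell^{(0)})\odot\mG_\ell^{(0)}\|_F^2\le(1-\delta_\ell)\|\mG_\ell^{(0)}\|_F^2$ and Assumption~\ref{asp:sg} to get $\E\|\mG_\ell^{(0)}-\nabla_\ell f(\vx^{(0)})\|_F^2\le\sigma_\ell^2/\gB$ for the large batch; bounding $\E\|\mG_\ell^{(0)}\|_F^2$ above by $\|\nabla_\ell f(\vx^{(0)})\|_F^2+\sigma_\ell^2/\gB$ and applying Lemma~\ref{lm:connection} to $\|\nabla_\ell f(\vx^{(0)})\|_F^2$ yields \eqref{eq:lm-sift-lba-mc-1}. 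For $t=k\tau$ the mask is recomputed, so I would write $\tilde\mM_\ell^{(t)}-\nabla_\ell f(\vx^{(t)})=\mS_\ell^{(t)}\odot\big[(1-\beta_1)\tilde\mM_\ell^{(t-1)}+\beta_1\mG_\ell^{(t)}-\nabla_\ell f(\vx^{(t)})\big]-(\mE-\mS_\ell^{(t)})\odot\nabla_\ell f(\vx^{(t)})$ and split via (ii). The masked part is controlled by non-expansiveness, unbiasedness of the large-batch gradient (which removes the cross term between $(1-\beta_1)(\tilde\mM_\ell^{(t-1)}-\nabla_\ell f(\vx^{(t)}))$ and $\beta_1(\mG_\ell^{(t)}-\nabla_\ell f(\vx^{(t)}))$), and a Young step trading $\nabla_\ell f(\vx^{(t)})$ for $\nabla_\ell f(\vx^{(t-1)})$ — the transcription of \eqref{eq:pflm-lba-mc-2.1}--\eqref{eq:pflm-lba-mc-3.1}. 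The $(\mE-\mS)$ part is bounded by $2\|(\mE-\mS_\ell^{(t)})\odot\mG_\ell^{(t)}\|_F^2+2\|\mG_\ell^{(t)}-\nabla_\ell f(\vx^{(t)})\|_F^2\le 2(1-\delta_\ell)\|\nabla_\ell f(\vx^{(t)})\|_F^2+4\sigma_\ell^2/\gB$ via Cauchy, Lemma~\ref{lm:Topk}, non-expansiveness and Assumption~\ref{asp:sg}; Lemma~\ref{lm:connection} then gives \eqref{eq:lm-sift-lba-mc-2}.

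The third regime $t=k\tau+r$, $1\le r<\tau$, uses $\mS_\ell^{(t)}=\mS_\ell^{(k\tau)}$. I would decompose $\tilde\mM_\ell^{(t)}-\nabla_\ell f(\vx^{(t)})=(1-\beta_1)(\tilde\mM_\ell^{(t-1)}-\nabla_\ell f(\vx^{(t)}))+\beta_1(\mS_\ell^{(t)}-\mE)\odot\nabla_\ell f(\vx^{(t)})+\beta_1\mS_\ell^{(t)}\odot(\mG_\ell^{(t)}-\nabla_\ell f(\vx^{(t)}))$; conditioning on the $\sigma$-field up to time $t$ — with respect to which $\mS_\ell^{(k\tau)}$, $\vx^{(t)}$ and $\tilde\mM_\ell^{(t-1)}$ are measurable while $\mG_\ell^{(t)}$ is conditionally unbiased with variance $\le\sigma_\ell^2$ — makes the last term orthogonal to the first two, leaving $\E\|(1-\beta_1)(\tilde\mM_\ell^{(t-1)}-\nabla_\ell f(\vx^{(t)}))+\beta_1(\mS_\ell^{(k\tau)}-\mE)\odot\nabla_\ell f(\vx^{(t)})\|_F^2+\beta_1^2\sigma_\ell^2$. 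A Jensen split of the first term, a Young step on the momentum piece, and the connect-through-$\mG_\ell^{(k\tau)}$ chain on $\|(\mE-\mS_\ell^{(k\tau)})\odot\nabla_\ell f(\vx^{(t)})\|_F^2$ (Young, Lemma~\ref{lm:Topk}, Assumption~\ref{asp:sg}, Young, Cauchy — the analogue of \eqref{eq:pflm-lba-mc-5}--\eqref{eq:pflm-lba-mc-5.2}) then reproduce \eqref{eq:lm-sift-lba-mc-3} with the same constants. The only point needing genuine care — a mild obstacle rather than a real difficulty — is confirming that the Hadamard structure faithfully reproduces the orthogonal-projection algebra: property (ii) (immediate from $\mS_{ij}(1-\mS_{ij})=0$), the fact that $\mS_\ell^{(k\tau)}=\mathrm{Top}_k(\mG_\ell^{(k\tau)})$ is measurable with respect to the batch $\{\xi^{(k\tau,b)}\}_{b=1}^{\gB}$ so the fresh gradient remains conditionally unbiased given the frozen mask, and bookkeeping that the $1/\gB$ factor attaches exactly to the terms arising from the large-batch mask-recomputation step while a plain $\sigma_\ell^2$ appears only where a small-batch gradient enters a momentum update. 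Granting these, no new estimates beyond those in Lemma~\ref{lm:lba-mc} are required.
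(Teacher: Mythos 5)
Your proposal is correct and follows essentially the same route as the paper: replace the orthogonal projection $\mP\mP^\top$ by the Hadamard mask $\mS\odot(\cdot)$, use the disjoint-support identity $\|\mS\odot\mA+(\mE-\mS)\odot\mB\|_F^2=\|\mS\odot\mA\|_F^2+\|(\mE-\mS)\odot\mB\|_F^2$ in place of Lemma~\ref{lm:orthogonality}, substitute Lemma~\ref{lm:Topk} for Lemma~\ref{lm:SVD}, and replay the three time regimes of Lemma~\ref{lm:lba-mc} with identical Jensen/Cauchy/Young steps, the chain through $\mG_\ell^{(k\tau)}$ for the $(\mE-\mS_\ell^{(k\tau)})$ term, and the same bookkeeping of where the $1/\gB$ factor attaches. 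Nothing substantive differs from the paper's proof.
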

\begin{proof}
    For convenience we use $\mE$ to denote $\mE_{m_{\ell},n_{\ell}}$. When $t=0$, we have
    \begin{align}
        &\E[\|\tilde{\mM}_{\ell}^{(0)}-\nabla_{\ell}f(\vx^{(0)})\|_F^2]\nonumber\\
        =&\E[\|\beta_1\mS_{\ell}^{(0)}\odot\mG_{\ell}^{(0)}-\nabla_{\ell}f(\vx^{(0)})\|_F^2]\nonumber\\
        =&\E[\|\beta_1(\mS_{\ell}^{(0)}-\mE)\odot\mG_{\ell}^{(0)}+\beta_1(\mG_{\ell}^{(0)}-\nabla_{\ell}f(\vx^{(0)}))-(1-\beta_1)\nabla_{\ell}f(\vx^{(0)})\|_F^2]\nonumber\\
        \le&\beta_1\E[\|(\mS_{\ell}^{(0)}-\mE)\odot\mG_{\ell}^{(0)}+\mG_{\ell}^{(0)}-\nabla_{\ell}f(\vx^{(0)})\|_F^2]+(1-\beta_1)\|\nabla_{\ell}f(\vx^{(0)})\|_F^2,\label{eq:pflm-sift-lba-mc-1}
    \end{align}
    where the inequality uses Jensen's inequality. For the first term we have
    \begin{align}
        &\E[\|(\mS_{\ell}^{(0)}-\mE)\odot\mG_{\ell}^{(0)}+\mG_{\ell}^{(0)}-\nabla_{\ell}f(\vx^{(0)})\|_F^2]\nonumber\\
        \le&2\E[\|(\mE-\mS_{\ell}^{(0)})\odot\mG_{\ell}^{(0)}\|_F^2]+2\E[\|\mG_{\ell}^{(0)}-\nabla_{\ell}f(\vx^{(0)})\|_F^2]\nonumber\\
        \le&2(1-\delta_{\ell})\E[\|\mG_{\ell}\|_F^2]+2\E[\|\mG_{\ell}^{(0)}-\nabla_{\ell}f(\vx^{(0)})\|_F^2]\nonumber\\
        \le&2(1-\delta_{\ell})\|\nabla_{\ell} f(\vx^{(0)})\|_F^2+\frac{(4-2\delta_{\ell})\sigma_{\ell}^2}{\gB},\label{eq:pflm-sift-lba-mc-1.1}
    \end{align}
    where the first inequality uses Cauchy's inequality, the second inequality uses Lemma \ref{lm:Topk}, the third inequality uses $\E[\|\mG_{\ell}^{(0)}-\nabla_{\ell}f(\vx^{(0)})\|_F^2]\le\sigma_{\ell}^2/\gB$ (Assumption \ref{asp:sg}). Applying \eqref{eq:pflm-sift-lba-mc-1.1} and Lemma \ref{lm:connection} to \eqref{eq:pflm-sift-lba-mc-1} yields \eqref{eq:lm-sift-lba-mc-1}.
    
    When $t=k\tau$, $k\in\mathbb{N}^*$, we have
    \begin{align}
        &\E[\|\tilde{\mM}_{\ell}^{(t)}-\nabla_{\ell}f(\vx^{(t)})\|_F^2]\nonumber\\
        =&\E[\|\mS_{\ell}^{(t)}\odot[(1-\beta_1)\tilde{\mM}_{\ell}^{(t-1)}+\beta_1\mG_{\ell}^{(t)}-\nabla_{\ell}f(\vx^{(t)})]-(\mE-\mS_{\ell}^{(t)})\odot\nabla_{\ell}f(\vx^{(t)})\|_F^2]\nonumber\\
        =&\E[\|\mS_{\ell}^{(t)}\odot[(1-\beta_1)\tilde{\mM}_{\ell}^{(t-1)}+\beta_1\mG_{\ell}^{(t)}-\nabla_{\ell}f(\vx^{(t)})]\|_F^2]+\E[\|(\mE-\mS_{\ell}^{(t)})\odot\nabla_{\ell}f(\vx^{(t)})\|_F^2].\label{eq:pflm-sift-lba-mc-2}
    \end{align}
   We further have
    \begin{align}
        &\E[\|\mS_{\ell}^{(t)}\odot[(1-\beta_1)\tilde{\mM}_{\ell}^{(t-1)}+\beta_1\mG_{\ell}^{(t)}-\nabla_{\ell}f(\vx^{(t)})]\|_F^2]\nonumber\\
        \le&\E[\|(1-\beta_1)\tilde{\mM}_{\ell}^{(t-1)}+\beta_1\mG_{\ell}^{(t)}-\nabla_{\ell}f(\vx^{(t)})\|_F^2]\nonumber\\
        =&\E[\|(1-\beta_1)(\tilde{\mM}_{\ell}^{(t-1)}-\nabla_{\ell}f(\vx^{(t)}))+\beta_1(\mG_{\ell}^{(t)}-\nabla_{\ell}f(\vx^{(t)}))\|_F^2]\nonumber\\
        \le&\E[\|(1-\beta_1)(\tilde{\mM}_{\ell}^{(t-1)}-\nabla_{\ell}f(\vx^{(t)}))\|_F^2]+\beta_1^2\E[\|\mG_{\ell}^{(t)}-\nabla_{\ell}f(\vx^{(t)})\|_F^2],\label{eq:pflm-sift-lba-mc-2.1}
    \end{align}
    where the last inequality uses the unbiasedness of $\mG_{\ell}^{(t)}$ (Assumption \ref{asp:sg}). 
    By Young's inequality, we have
    \begin{align}
        &\E[\|\tilde{\mM}_{\ell}^{(t-1)}-\nabla_{\ell}f(\vx^{(t)})\|_F^2]\nonumber\\
        =&\E[\|(\tilde{\mM}_{\ell}^{(t-1)}-\nabla_{\ell}f(\vx^{(t-1)}))-(\nabla_{\ell}f(\vx^{(t)})-\nabla_{\ell}f(\vx^{(t-1)})\|_F^2]\nonumber\\
        \le&\left(1+\frac{\delta_{\ell}\beta_1}{4}\right)\E[\|\tilde{\mM}_{\ell}^{(t-1)}-\nabla_{\ell}f(\vx^{(t-1)})\|_F^2]+\left(1+\frac{4}{\delta_{\ell}\beta_1}\right)\E[\|\nabla_{\ell}f(\vx^{(t)})-\nabla_{\ell}f(\vx^{(t-1)})\|_F^2].\label{eq:pflm-sift-lba-mc-3}
    \end{align}
    Applying \eqref{eq:pflm-sift-lba-mc-3} to \eqref{eq:pflm-sift-lba-mc-2.1} yields
    \begin{align}
        &\E[\|\mS_{\ell}^{(t)}\odot[(1-\beta_1)\tilde{\mM}_{\ell}^{(t-1)}+\beta_1\mG_{\ell}^{(t)}-\nabla_{\ell}f(\vx^{(t)})]\|_F^2]\nonumber\\
        \le&\left(1-\left(1-\frac{\delta_{\ell}}{4}\right)\beta_1\right)\E[\|\tilde{\mM}_{\ell}^{(t-1)}-\nabla_{\ell}f(\vx^{(t-1)})\|_F^2]+\frac{\beta_1^2\sigma^2}{\gB}\nonumber\\
        &+\frac{5(1-\beta_1)}{\delta_{\ell}\beta_1}\E[\|\nabla_{\ell}f(\vx^{(t)})-\nabla_{\ell}f(\vx^{(t-1)})\|_F^2].\label{eq:pflm-sift-lba-mc-3.1}
    \end{align}
    For the second term in \eqref{eq:pflm-sift-lba-mc-2}, we have
    \begin{align}
        &\E[\|(\mE-\mS_{\ell}^{(t)})\odot\nabla_{\ell}f(\vx^{(t)})\|_F^2]\nonumber\\
        \le&2\E[\|(\mE-\mS_{\ell}^{(t)})\odot\mG_{\ell}^{(t)}\|_F^2]+2\E[\|(\mE-\mS_{\ell}^{(t)})\odot(\mG_{\ell}^{(t)}-\nabla_{\ell}f(\vx^{(t)}))\|_F^2]\nonumber\\
        \le&2(1-\delta_{\ell})\E[\|\mG_{\ell}^{(t)}\|_F^2]+2\E[\|\mG_{\ell}^{(t)}-\nabla_{\ell}f(\vx^{(t)})\|_F^2]\nonumber\\
        \le&2(1-\delta_{\ell})\E[\|\nabla_{\ell}f(\vx^{(t)})\|_F^2]+\frac{4\sigma_{\ell}^2}{\gB},\label{eq:pflm-sift-lba-mc-3.2}
    \end{align}
    where the first inequality uses Cauchy's inequality, the second inequality uses Lemma \ref{lm:Topk}, the third inequality uses Assumption \ref{asp:sg}. Applying \eqref{eq:pflm-sift-lba-mc-3.1}\eqref{eq:pflm-sift-lba-mc-3.2} to \eqref{eq:pflm-sift-lba-mc-2} and using Lemma \ref{lm:connection} yields \eqref{eq:lm-sift-lba-mc-2}.

    When $t=k\tau+r$, $k\in\mathbb{N}$, $1\le r<\tau$, we have
    \begin{align}
        &\E[\|\tilde{\mM}_{\ell}^{(t)}-\nabla_{\ell}f(\vx^{(t)})\|_F^2]\nonumber\\
        =&\E[\|(1-\beta_1)(\tilde{\mM}_{\ell}^{(t-1)}-\nabla_{\ell}f(\vx^{(t)}))+\beta_1(\mS_{\ell}^{(t)}\odot\mG_{\ell}^{(t)}-\nabla_{\ell}f(\vx^{(t)}))\|_F^2]\nonumber\\
        =&\E[\|(1-\beta_1)(\tilde{\mM}_{\ell}^{(t-1)}-\nabla_{\ell}f(\vx^{(t)}))+\beta_1(\mS_{\ell}^{(t)}-\mE)\odot\nabla_{\ell}f(\vx^{(t)})\|_F^2]\nonumber\\
&+\beta_1^2\E[\mS_{\ell}^{(t)}\odot(\mG_{\ell}^{(t)}-\nabla_{\ell}f(\vx^{(t)}))\|_F^2]\nonumber\\
\le&(1-\beta_1)\E[\|\tilde{\mM}_{\ell}^{(t-1)}-\nabla_{\ell}f(\vx^{(t)})\|_F^2]+\beta_1\E[\|(\mE-\mS_{\ell}^{(t)})\odot\nabla_{\ell}f(\vx^{(t)})\|_F^2\nonumber\\
&+\beta_1^2\E[\mS_{\ell}^{(t)}\odot(\mG_{\ell}^{(t)}-\nabla_{\ell}f(\vx^{(t)}))\|_F^2],\label{eq:pflm-sift-lba-mc-4}
    \end{align}
where the second equality uses the unbiasedness of $\mG_{\ell}^{(t)}$ and the independence implied by $\mS_{\ell}^{(t)}=\mS_{\ell}^{(t-1)}$, the inequality uses Jensen's inequality. The first term is similarly bounded as \eqref{eq:pflm-sift-lba-mc-3}. For the second term, we have
\begin{align}
    &\E[\|(\mE-\mS_{\ell}^{(k\tau)})\odot\nabla_{\ell}f(\vx^{(t)})\|_F^2]\nonumber\\
    \le&\left(1+\frac{\delta_{\ell}}{4}\right)\E[\|(\mE-\mS_{\ell}^{(k\tau)})\odot\mG_{\ell}^{(k\tau)}\|_F^2]\nonumber\\
    &+\left(1+\frac{4}{\delta_{\ell}}\right)\E[\|(\mE-\mS_{\ell}^{(k\tau)})\odot(\nabla_{\ell}f(\vx^{(t)})-\mG_{\ell}^{(k\tau)})\|_F^2]\nonumber\\
    \le&\left(1-\frac{3\delta_{\ell}}{4}\right)\E[\|\mG_{\ell}^{(k\tau)}\|_F^2]+2\left(1+\frac{4}{\delta_{\ell}}\right)\E[\|\mG_{\ell}^{(k\tau)}-\nabla_{\ell}f(\vx^{(k\tau)})\|_F^2]\nonumber\\
    &+2\left(1+\frac{4}{\delta_{\ell}}\right)\E[\|\nabla_{\ell}f(\vx^{(t)})-\nabla_{\ell}f(\vx^{(k\tau)})\|_F^2],\label{eq:pflm-sift-lba-mc-5}
\end{align}
where the first inequality uses Young's inequality, the second inequality uses Lemma \ref{lm:Topk} and Cauchy's inequality. We further have
\begin{align}
    &\left(1-\frac{3\delta_{\ell}}{4}\right)\E[\|\mG_{\ell}^{(k\tau)}\|_F^2]+2\left(1+\frac{4}{\delta_{\ell}}\right)\E[\|\mG_{\ell}^{(k\tau)}-\nabla_{\ell}f(\vx^{(k\tau)})\|_F^2]\nonumber\\
    \le&\left(1-\frac{3\delta_{\ell}}{4}\right)\E[\|\nabla_{\ell}f(\vx^{(k\tau)})\|_F^2]+\frac{11}{\delta_{\ell}}\E[\|\mG_{\ell}^{(k\tau)}-\nabla_{\ell}f(\vx^{(k\tau)})\|_F^2]\nonumber\\
    \le&\left(1-\frac{3\delta_{\ell}}{4}\right)\E[\|\nabla_{\ell}f(\vx^{(k\tau)})\|_F^2]+\frac{11\sigma_{\ell}^2}{\delta_{\ell}\gB}\nonumber\\
    \le&\left(1-\frac{\delta_{\ell}}{2}\right)\E[\|\nabla_{\ell}f(\vx^{(t)})\|_F^2]+\left(1+\frac{4}{\delta_{\ell}}\right)\E[\|\nabla_{\ell}f(\vx^{(t)})-\nabla_{\ell}f(\vx^{(k\tau)})\|_F^2]+\frac{11\sigma_{\ell}^2}{\delta_{\ell}\gB},\label{eq:pflm-sift-lba-mc-5.1}
\end{align}
where the first inequality uses unbiasedness of $\mG_{\ell}^{(k\tau)}$, the second inequality uses Assumption \ref{asp:sg}, the third inequality uses Young's inequality.

Applying \eqref{eq:pflm-sift-lba-mc-5.1} to \eqref{eq:pflm-sift-lba-mc-5} and applying Cauchy's inequality yields
\begin{align}
    &\E[\|(\mE-\mS_{\ell}^{(k\tau)})\odot\nabla_{\ell}f(\vx^{(t)})\|_F^2]\nonumber\\
    \le&\left(1-\frac{\delta_{\ell}}{2}\right)\E[\|\nabla_{\ell}f(\vx^{(t)})\|_F^2]+\frac{11\sigma_{\ell}^2}{\delta_{\ell}\gB}+\frac{15r}{\delta_{\ell}}\sum_{i=1}^r\E[\|\nabla_{\ell}f(\vx^{(k\tau+i)})-\nabla_{\ell}f(\vx^{(k\tau+i-1)})\|_F^2].\label{eq:pflm-sift-lba-mc-5.2}
\end{align}
For the third term, we have
\begin{align}
    &\E[\|\mS_{\ell}^{(k\tau)}\odot(\mG_{\ell}^{(t)}-\nabla_{\ell}f(\vx^{(t)}))\|_F^2]\le\E[\|\mG_{\ell}^{(t)}-\nabla_{\ell}f(\vx^{(t)})\|_F^2]\le\sigma_{\ell}^2,\label{eq:pflm-sift-lba-mc-6}
\end{align}
where the second inequality uses Assumption \ref{asp:sg}.

 Applying \eqref{eq:pflm-sift-lba-mc-3}\eqref{eq:pflm-sift-lba-mc-5.2}\eqref{eq:pflm-sift-lba-mc-6} to \eqref{eq:pflm-sift-lba-mc-4} yields \eqref{eq:lm-sift-lba-mc-3}.
\end{proof}

Based on Lemma \ref{lm:sift-lba-mc}, we can prove the convergence properties of large-batch GaSare similarly as the proofs of Lemma \ref{lm:lba-me}, Theorem \ref{thmres:lba} and Corollary \ref{corres:lba}. Below we directly present the final convergence results.

\begin{theorem}[Convergence of large-batch GaSare]
    Under Assumptions \ref{asp:lb}-\ref{asp:sg}, if hyperparameters
    \begin{align}
        0<\beta_1\le 1,\quad\tau\ge\frac{128}{3\beta_1\underline{\delta}},\quad0<\eta\le\min\left\{\frac{1}{4L},\sqrt{\frac{3\underline{\delta}\beta_1^2}{80L^2}},\sqrt{\frac{\underline{\delta}}{40\tau^2L^2}},\sqrt{\frac{3\beta_1}{32\tau L^2}}\right\},\nonumber
    \end{align}
    GaSare using large-batch stochastic gradients and MSGD (Alg.~\ref{alg:sift}) converges as
    \begin{align}
        \frac{1}{K\tau}\sum_{t=0}^{K\tau-1}\E{\|\nabla f(\vx^{(t)})\|_2^2]\le\frac{16\Delta}{\underline{\delta}\eta K\tau}}+\left(\frac{160}{3\beta_1\underline{\delta}\tau\gB}+\frac{352}{3\underline{\delta}^2\gB}+\frac{32\beta_1}{3\underline{\delta}}\right)\sigma^2\nonumber
    \end{align}
    for any $K\ge1$, where $\Delta=f(\vx^{(0)})-\inf_{\vx}f(\vx)$. If $T\ge2+256/(3\underline{\delta})+(256\sigma)^2/(9\sqrt{\underline{\delta}}L\Delta)$ and we further choose
    \begin{align*}
        \beta_1=&\left(1+\sqrt{\frac{\underline{\delta}^{3/2}\sigma^2T}{L\Delta}}\right)^{-1},\\
        \tau=&\left\lceil\frac{128}{3\underline{\delta}\beta_1}\right\rceil,\\
        \eta=&\left(4L+\sqrt{\frac{80L^2}{3\underline{\delta}\beta_1^2}}+\sqrt{\frac{40\tau^2L^2}{\underline{\delta}}}+\sqrt{\frac{32\tau L^2}{3\beta_1}}\right)^{-1},\\
        \gB=&\left\lceil\frac{1}{\underline{\delta}\beta_1}\right\rceil,
    \end{align*}
    GaSare using large-batch stochastic gradients and MSGD (Alg.~\ref{alg:sift}) converges as
    \begin{align}
        \frac{1}{T}\sum_{t=0}^{T-1}\E[\|\nabla f(\vx^{(t)})\|_2^2]=\gO\left(\frac{L\Delta}{\underline{\delta}^{5/2}T}+\sqrt{\frac{L\Delta\sigma^2}{\underline{\delta}^{7/2}T}}\right).\nonumber
    \end{align}
    Consequently, the computation complexity to reach an $\varepsilon$-accurate solution $\vx$ such that $\|\nabla f(\vx)\|_2^2\le\varepsilon$ is given by $\gO\left(\frac{L\Delta\sigma^2}{\underline{\delta}^{7/2}\varepsilon^2}+\frac{L\Delta}{\underline{\delta}^{5/2}\varepsilon}+\frac{\sigma^2}{\underline{\delta}^{1/2}L\Delta}+\frac{1}{\underline{\delta}}\right)$.
\end{theorem}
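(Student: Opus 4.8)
\emph{Overall strategy.} The proof mirrors that of Theorem~\ref{thmres:lba} (large-batch GaLore), with Lemma~\ref{lm:sift-lba-mc} replacing Lemma~\ref{lm:lba-mc} and Lemma~\ref{lm:Topk} replacing Lemma~\ref{lm:SVD}. Concretely, I would (i) telescope the three-case momentum contraction of Lemma~\ref{lm:sift-lba-mc} into a summed momentum-error bound, i.e. the GaSare analogue of Lemma~\ref{lm:lba-me}; (ii) plug that into the summed descent lemma (Lemma~\ref{lm:descent}) and choose the hyperparameter windows so that the per-step displacement term is absorbed; and (iii) tune $\beta_1,\tau,\gB,\eta$ to read off the $\gO(1/\sqrt T)$ rate and the complexity bound, handling the non-divisible case $T=K\tau+r$ with $r>0$ exactly as in Corollary~\ref{corres:lba}.

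\emph{Step 1: momentum-error lemma.} Fix $\ell$ and set $a_t:=\E[\|\tilde{\mM}_\ell^{(t)}-\nabla_\ell f(\vx^{(t)})\|_F^2]$. Each bullet of Lemma~\ref{lm:sift-lba-mc} has the shape $a_t-(1-(1-\delta_\ell/4)\beta_1)a_{t-1}\le(\mathrm{RHS})_t$; summing over $t=0,\dots,K\tau-1$ with $a_{-1}=0$ telescopes to $\sum_t a_t\le\frac{1}{(1-\delta_\ell/4)\beta_1}\sum_t(\mathrm{RHS})_t$. On the right, the gradient-difference terms $\|\nabla_\ell f(\vx^{(t+1)})-\nabla_\ell f(\vx^{(t)})\|_F^2$ are converted to $L^2\|\vx^{(t+1)}-\vx^{(t)}\|_2^2$ by Assumption~\ref{asp:ls}, the intra-block double sums $\sum_{i=1}^r(\cdot)$ are re-indexed by swapping the order of summation (as in the proof of Lemma~\ref{lm:lba-me}), the true-gradient terms accumulate with a coefficient of the form $\frac{1-\delta_\ell/2}{1-\delta_\ell/4}+\frac{4}{(1-\overline{\delta}/4)\tau\beta_1}$, and the $\sigma_\ell^2/\gB$ pieces, of which there are $\gO(K)$ boundary terms, $\gO(K\tau)$ interior terms, plus the direct $\gO(K\tau\beta_1^2)\sigma_\ell^2$ term, give the noise contribution. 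Summing over $\ell=1,\dots,N_L$ and using $\sum_\ell\sigma_\ell^2=\sigma^2$ and $\underline{\delta}\le\delta_\ell\le\overline{\delta}<1$ yields the GaSare counterpart of \eqref{eq:lm-lba-me}.

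\emph{Step 2: descent plus rearrangement.} Summing Lemma~\ref{lm:descent} over $t=0,\dots,K\tau-1$ gives $\sum_t\E\|\nabla f(\vx^{(t)})\|_2^2\le\frac{2\Delta}{\eta}+\sum_t\E\|\tilde{\vm}^{(t)}-\nabla f(\vx^{(t)})\|_2^2-(\frac{1}{\eta^2}-\frac{L}{\eta})\sum_t\E\|\vx^{(t+1)}-\vx^{(t)}\|_2^2$. Substituting Step~1 and collecting, the coefficient of $\sum_t\E\|\nabla f(\vx^{(t)})\|_2^2$ becomes at least $\frac{\underline{\delta}}{4}-\frac{8}{3\tau\beta_1}$, and the coefficient of $\sum_t\E\|\vx^{(t+1)}-\vx^{(t)}\|_2^2$ becomes $-(\frac{1}{\eta^2}-\frac{L}{\eta}-\frac{20(1-\beta_1)L^2}{3\underline{\delta}\beta_1^2}-\frac{10\tau(\tau-1)L^2}{\underline{\delta}}-\frac{8(\tau-1)L^2}{3\beta_1})$. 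The condition $\tau\ge\frac{64}{3\beta_1\underline{\delta}}$ makes the former $\ge\frac{\underline{\delta}}{8}$, and the four upper bounds on $\eta$ make $\frac{1}{4\eta^2}$ dominate each of the four remaining terms, so the displacement sum is non-positive and is dropped. Dividing through by $\frac{\underline{\delta}}{8}K\tau$ produces the first displayed bound, the $\sigma^2$ prefactor $\frac{160}{3\beta_1\underline{\delta}\tau\gB}+\frac{352}{3\underline{\delta}^2\gB}+\frac{32\beta_1}{3\underline{\delta}}$ being exactly the noise terms of Step~1 after that division.

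\emph{Step 3: tuning, and the main obstacle.} With $\beta_1=(1+\sqrt{\underline{\delta}^{3/2}\sigma^2T/(L\Delta)})^{-1}$, $\tau=\lceil64/(3\underline{\delta}\beta_1)\rceil$, $\gB=\lceil1/(\underline{\delta}\beta_1)\rceil$ and the stated $\eta$ (which satisfies the bounds in the first part by construction), the hypothesis $T\ge2+128/(3\underline{\delta})+(128\sigma)^2/(9\sqrt{\underline{\delta}}L\Delta)$ ensures $T\ge\tau$; writing $T=K\tau+r$ and applying the finite-time bound to $K$ if $r=0$, or to $\tilde K=K+1$ if $r>0$ (using $\tilde K\tau/T=\gO(1)$), estimating $\eta^{-1}=\gO(L/(\underline{\delta}\beta_1))$ and $K\tau=\Theta(T)$, and substituting $\beta_1$, gives $\gO(L\Delta/(\underline{\delta}^{5/2}T)+\sqrt{L\Delta\sigma^2/(\underline{\delta}^{7/2}T)})$ for each group of terms; inverting the rate at accuracy $\varepsilon$ yields the stated complexity. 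The conceptually delicate part of the argument — tracking how masking the carried-over momentum by the \emph{new} sparse mask corrupts it across a subspace switch — is already discharged inside Lemma~\ref{lm:sift-lba-mc}, and the sparse update $\mS\odot\mM$ is if anything more transparent than $\mP\mP^\top\mM$; so the main obstacle here is purely bookkeeping: verifying that the explicit constants in the $\tau$- and $\eta$-windows are simultaneously tight enough to keep the gradient coefficient $\ge\underline{\delta}/8$ and to kill the displacement sum, and that the three $\sigma^2$ sources aggregate to the claimed prefactor. No idea beyond the GaLore analysis is required.
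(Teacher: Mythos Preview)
Your proposal is correct and takes essentially the same approach as the paper, which explicitly states that the result follows from Lemma~\ref{lm:sift-lba-mc} by mirroring the proofs of Lemma~\ref{lm:lba-me}, Theorem~\ref{thmres:lba}, and Corollary~\ref{corres:lba}. One minor slip in Step~3: since $\tau=\Theta(1/(\underline{\delta}\beta_1))$ the term $\sqrt{40\tau^2L^2/\underline{\delta}}$ dominates, so $\eta^{-1}=\gO(L/(\underline{\delta}^{3/2}\beta_1))$ rather than $\gO(L/(\underline{\delta}\beta_1))$, but this is exactly the power of $\underline{\delta}$ needed to land on the stated $\gO(L\Delta/(\underline{\delta}^{5/2}T))$ deterministic term.
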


\subsection{Convergence of GoSare}\label{app:gosare}
In this subsection, we present the convergence properties of GoSare with small-batch stochastic gradients. The results and proofs are similar to those of GoLore in Appendix \ref{app:golore}. 

\begin{lemma}[Momentum contraction]\label{lm:gosare-mc}
Under Assumption \ref{asp:sg}, in GoSare using MSGD (Alg.~\ref{alg:sift}), if $0<\beta_1\le 1$, term $\tilde{\mM}_{\ell}^{(t)}$ has the following contraction properties:
\begin{itemize}
    \item When $t=0$, it holds that
    \begin{align}
        \E[\|\tilde{\mM}_{\ell}^{(0)}-\nabla_{\ell}f(\mX^{(0)})\|_F^2]\le&(\tau-1)(1-\delta_{\ell}\beta_1)\sum_{r=0}^{\tau-2}\E[\|\nabla_{\ell}f(\vx^{(r+1)})-\nabla_{\ell}f(\vx^{(r)})\|_F^2]\nonumber\\
        &+\frac{2(1-\delta_{\ell}\beta_1)}{\tau}\sum_{r=0}^{\tau-1}\E[\|\nabla_{\ell}f(\vx^{(r)})\|_F^2]+\delta_{\ell}\beta_1^2\sigma_{\ell}^2;\label{eq:lm-gosare-mc-1}
    \end{align}
    \item When $t=k\tau$, $k\in\mathbb{N}^*$, it holds that
    \begin{align}
        &\E[\|\tilde{\mM}_{\ell}^{(t)}-\nabla_{\ell}f(\vx^{(t)})\|_F^2]-\delta_{\ell}\left(1-\left(1-\frac{\delta_{\ell}}{4}\right)\beta_1\right)\E[\|\tilde{\mM}_{\ell}^{(t-1)}-\nabla_{\ell}f(\vx^{(t-1)})\|_F^2]\nonumber\\
        \le&\frac{2(1-\delta_{\ell})}{\tau}\sum_{r=0}^{\tau-1}\E[\|\nabla_lf(\vx^{(k\tau+r)})\|_F^2]+\frac{5(1-\beta_1)}{\beta_1}\E[\|\nabla_{\ell}f(\vx^{(t)})-\nabla_{\ell}f(\vx^{(t-1)})\|_F^2]\nonumber\\
        &+(\tau-1)(1-\delta_{\ell})\sum_{r=0}^{\tau-2}\E[\|\nabla_{\ell}f(\vx^{(k\tau+r+1)})-\nabla_{\ell}f(\vx^{(k\tau+r)})\|_F^2]+\delta_{\ell}\beta_1^2\sigma_{\ell}^2;\label{eq:lm-gosare-mc-2}
    \end{align}
    \item When $t=k\tau+r$, $k\in\mathbb{N}$, $1\le r<\tau$, it holds that
    \begin{align}
        &\E[\|\tilde{\mM}_{\ell}^{(t)}-\nabla_{\ell}f(\vx^{(t)})\|_F^2]-\left(1-\left(1-\frac{\delta_{\ell}}{4}\right)\beta_1\right)\E[\|\tilde{\mM}_{\ell}^{(t-1)}-\nabla_{\ell}f(\vx^{(t-1)})\|_F^2]\nonumber\\
        \le&\left(1-\frac{\delta_{\ell}}{2}\right)\beta_1\E[\|\nabla_{\ell}f(\vx^{(t)})\|_F^2]+\frac{5(1-\beta_1)}{\delta_{\ell}\beta_1}\E[\|\nabla_{\ell}f(\vx^{(t)})-\nabla_{\ell}f(\vx^{(t-1)})\|_F^2]\nonumber\\
        &+\frac{10r\beta_1}{\delta_{\ell}}\sum_{i=1}^r\E[\|\nabla_{\ell}f(\vx^{(k\tau+i)})-\nabla_{\ell}f(\vx^{(k\tau+i-1)})\|_F^2]+\beta_1^2\sigma_{\ell}^2.\label{eq:lm-gosare-mc-3}
    \end{align}
\end{itemize}
\end{lemma}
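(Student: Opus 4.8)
The plan is to mirror the proof of Lemma~\ref{lm:golore-mc} essentially line by line, replacing the random projector $\mP_{\ell}^{(t)}(\mP_{\ell}^{(t)})^\top$ by the Hadamard mask $\mS_{\ell}^{(t)}\odot(\cdot)$, the orthogonal complement $\mI-\mP_{\ell}^{(t)}(\mP_{\ell}^{(t)})^\top$ by $(\mE-\mS_{\ell}^{(t)})\odot(\cdot)$ with $\mE=\mE_{m_{\ell},n_{\ell}}$, and Lemma~\ref{lm:rand} by Lemma~\ref{lm:Randk}. Two elementary facts carry everything. First, $\mS$ and $\mE-\mS$ have disjoint supports, so $\langle\mS\odot\mA,(\mE-\mS)\odot\mB\rangle_F=0$ for all $\mA,\mB$; this is the exact analogue of the orthogonality Lemma~\ref{lm:orthogonality} and lets me split each squared Frobenius norm into a masked part and a complement part. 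Second, $\mS\odot\mS=\mS$ and $\E[\mS]=\delta_{\ell}\mE$, so for $\mA$ independent of $\mS$ one has $\E[\|\mS\odot\mA\|_F^2]=\delta_{\ell}\|\mA\|_F^2$ and $\E[\|(\mE-\mS)\odot\mA\|_F^2]=(1-\delta_{\ell})\|\mA\|_F^2$, while the entrywise expectation $\E[(\beta_1 S_{ij}-1)^2]=1-\delta_{\ell}(2\beta_1-\beta_1^2)\le 1-\delta_{\ell}\beta_1$ since $S_{ij}^2=S_{ij}$. These replace the $\E[\mP\mP^\top]$ and $\E[(\mP\mP^\top)^2]=\E[\mP\mP^\top]$ computations from the GoLore proof. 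For $t=0$ I would write $\tilde{\mM}_{\ell}^{(0)}-\nabla_{\ell}f(\vx^{(0)})=(\beta_1\mS_{\ell}^{(0)}-\mE)\odot\nabla_{\ell}f(\vx^{(0)})+\beta_1\mS_{\ell}^{(0)}\odot(\mG_{\ell}^{(0)}-\nabla_{\ell}f(\vx^{(0)}))$, expand the squared norm (the cross term vanishes because $\mS^{(0)}$ is independent of $\xi^{(0)}$ and $\mG^{(0)}$ is unbiased), evaluate the two pieces by the facts above and Assumption~\ref{asp:sg} to get $(1-\delta_{\ell}\beta_1)\|\nabla_{\ell}f(\vx^{(0)})\|_F^2+\delta_{\ell}\beta_1^2\sigma_{\ell}^2$, and apply Lemma~\ref{lm:connection} to obtain \eqref{eq:lm-gosare-mc-1}.

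For $t=k\tau$, $k\in\mathbb{N}^*$, the mask is freshly resampled and independent of $\tilde{\mM}_{\ell}^{(t-1)}$ and $\mG_{\ell}^{(t)}$; I would decompose $\tilde{\mM}_{\ell}^{(t)}-\nabla_{\ell}f(\vx^{(t)})=\mS_{\ell}^{(t)}\odot[(1-\beta_1)\tilde{\mM}_{\ell}^{(t-1)}+\beta_1\mG_{\ell}^{(t)}-\nabla_{\ell}f(\vx^{(t)})]-(\mE-\mS_{\ell}^{(t)})\odot\nabla_{\ell}f(\vx^{(t)})$, use disjoint-support orthogonality, and apply $\E[\|\mS\odot\cdot\|_F^2]=\delta_{\ell}\|\cdot\|_F^2$ to the masked part — which is exactly why the recursion term in \eqref{eq:lm-gosare-mc-2} carries the leading factor $\delta_{\ell}$. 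Then I bound $\E[\|(1-\beta_1)\tilde{\mM}_{\ell}^{(t-1)}+\beta_1\mG_{\ell}^{(t)}-\nabla_{\ell}f(\vx^{(t)})\|_F^2]$ by Jensen plus unbiasedness as $(1-\beta_1)\E[\|\tilde{\mM}_{\ell}^{(t-1)}-\nabla_{\ell}f(\vx^{(t)})\|_F^2]+\beta_1^2\sigma_{\ell}^2$, followed by a Young step passing from $\nabla_{\ell}f(\vx^{(t)})$ to $\nabla_{\ell}f(\vx^{(t-1)})$; the complement part is $(1-\delta_{\ell})\E[\|\nabla_{\ell}f(\vx^{(t)})\|_F^2]$, to which Lemma~\ref{lm:connection} applies, and assembling gives \eqref{eq:lm-gosare-mc-2}.

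For $t=k\tau+r$, $1\le r<\tau$, the mask is inherited, $\mS_{\ell}^{(t)}=\mS_{\ell}^{(k\tau)}$, but $\mG_{\ell}^{(t)}$ is still fresh; I would write $\tilde{\mM}_{\ell}^{(t)}-\nabla_{\ell}f(\vx^{(t)})=(1-\beta_1)(\tilde{\mM}_{\ell}^{(t-1)}-\nabla_{\ell}f(\vx^{(t)}))+\beta_1(\mS_{\ell}^{(t)}-\mE)\odot\nabla_{\ell}f(\vx^{(t)})+\beta_1\mS_{\ell}^{(t)}\odot(\mG_{\ell}^{(t)}-\nabla_{\ell}f(\vx^{(t)}))$, peel off the last, mean-zero-given-the-past summand as $\beta_1^2\E[\|\mS_{\ell}^{(t)}\odot(\mG_{\ell}^{(t)}-\nabla_{\ell}f(\vx^{(t)}))\|_F^2]\le\beta_1^2\sigma_{\ell}^2$, apply Jensen to the remainder, and bound $\E[\|(\mE-\mS_{\ell}^{(k\tau)})\odot\nabla_{\ell}f(\vx^{(t)})\|_F^2]$ by Young-splitting around $\nabla_{\ell}f(\vx^{(k\tau)})$ (Lemma~\ref{lm:Randk} on the $k\tau$ term, the crude bound $\|(\mE-\mS)\odot\cdot\|_F\le\|\cdot\|_F$ on the difference), a second Young step back to $\|\nabla_{\ell}f(\vx^{(t)})\|_F^2$, and Cauchy--Schwarz to expand $\|\nabla_{\ell}f(\vx^{(t)})-\nabla_{\ell}f(\vx^{(k\tau)})\|_F^2\le r\sum_{i=1}^r\|\nabla_{\ell}f(\vx^{(k\tau+i)})-\nabla_{\ell}f(\vx^{(k\tau+i-1)})\|_F^2$; reassembling yields \eqref{eq:lm-gosare-mc-3}.

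I do not expect a genuine obstacle: since $\mS$ is already $0/1$-valued, every place where the GoLore argument invoked a Stiefel second-moment identity becomes trivial here. The only thing requiring care is the bookkeeping of which source of randomness is fresh versus inherited in each of the three phases ($\mS^{(t)}$ and $\xi^{(t)}$ both fresh at $t=0$; $\mS^{(t)}$ and $\xi^{(t)}$ both fresh and independent of the past at $t=k\tau$; $\mS^{(t)}$ inherited but $\xi^{(t)}$ fresh at $t=k\tau+r$), so that all cross terms are legitimately killed by independence and unbiasedness — the remaining Young and Cauchy--Schwarz arithmetic is identical to that already carried out for GoLore.
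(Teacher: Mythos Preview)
Your proposal is correct and matches the paper's proof essentially step for step: the same decompositions, the same use of disjoint-support orthogonality in place of Lemma~\ref{lm:orthogonality}, the entrywise computation $\E[(\beta_1 S_{ij}-1)^2]=1-\delta_{\ell}(2\beta_1-\beta_1^2)\le 1-\delta_{\ell}\beta_1$ via $S_{ij}^2=S_{ij}$, and the identical chain of Young/Jensen/Cauchy--Schwarz steps followed by Lemma~\ref{lm:connection}. Your bookkeeping of fresh versus inherited randomness is exactly what the paper relies on implicitly, and the point that the $\delta_{\ell}$ prefactor in \eqref{eq:lm-gosare-mc-2} arises from $\E[\|\mS\odot\mA\|_F^2]=\delta_{\ell}\E[\|\mA\|_F^2]$ for $\mA$ independent of the freshly sampled $\mS$ is precisely how the paper obtains \eqref{eq:pflm-gosare-mc-2}.
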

\begin{proof}
   For convenience we use $\mE$ to denote $\mE_{m_{\ell},n_{\ell}}$. When $t=0$, we have
    \begin{align}
        &\E[\|\tilde{\mM}_{\ell}^{(0)}-\nabla_{\ell}f(\vx^{(0)})\|_F^2]\nonumber\\
        =&\E[\|\beta_1\mS_{\ell}^{(0)}\odot\mG_{\ell}^{(0)}-\nabla_{\ell}f(\vx^{(0)})\|_F^2]\nonumber\\
        =&\E[\|(\beta_1\mS_{\ell}^{(0)}-\mE)\odot\nabla_{\ell}f(\vx^{(0)})\|_F^2]+\beta_1^2\E[\|\mS_{\ell}^{(0)}\odot(\mG_{\ell}^{(0)}-\nabla_{\ell}f(\vx^{(0)}))\|_F^2],\label{eq:pflm-gosare-mc-1}
    \end{align}
    where the second equality uses unbiasedness of $\mG_{\ell}^{(0)}$.
    By Lemma \ref{lm:rand} we have
    \begin{align}
        &\E[\|(\beta_1\mS_{\ell}^{(0)}-\mE)\odot\nabla_{\ell}f(\vx^{(0)})\|_F^2\nonumber\\
        =&\sum_{1\le i\le m_{\ell}, 1\le j\le n_{\ell}}\E[(\beta_1[\emS_{\ell}^{(0)}]_{i,j}-1)^2][\nabla_{\ell}f(\vx^{(0)})]_{i,j}^2\nonumber\\
        =&\sum_{1\le i\le m_{\ell}, 1\le j\le n_{\ell}}(1-2\beta_1\delta_{\ell}+\beta_1^2\delta_{\ell})[\nabla_{\ell}f(\vx^{(0)})]_{i,j}^2\nonumber\\
        \le&(1-\delta_{\ell}\beta_1)\|\nabla_{\ell}f(\vx^{(0)})\|_F^2.\label{eq:pflm-gosare-mc-1.1}
    \end{align}
     Similarly, by Lemma \ref{lm:rand} we have
    \begin{align}
    &\E[\|\mS_{\ell}^{(0)}\odot(\mG_{\ell}^{(0)}-\nabla_{\ell}f(\vx^{(0)}))\|_F^2]\nonumber\\
    =&\sum_{1\le i\le m_{\ell},1\le j\le n_{\ell}}\E[[\emS_{\ell}^{(0)}]_{i,j}^2][\mG_{\ell}^{(0)}-\nabla_{\ell}f(\vx^{(0)})]_{i,j}^2\nonumber\\
    =&\delta_{\ell}\E[\|\mG_{\ell}^{(0)}-\nabla_{\ell}f(\vx^{(0)})\|_F^2]\nonumber\\
    \le&\delta_{\ell}\sigma_{\ell}^2,\label{eq:pflm-gosare-mc-1.2}
    \end{align}
    where the inequality uses Assumption \ref{asp:sg}. Applying \eqref{eq:pflm-gosare-mc-1.1}\eqref{eq:pflm-gosare-mc-1.2} and Lemma \ref{lm:connection} to \eqref{eq:pflm-gosare-mc-1} yields \eqref{eq:lm-gosare-mc-1}.
    
    When $t=k\tau$, $k\in\mathbb{N}^*$, we have
    \begin{align}
        &\E[\|\tilde{\mM}_{\ell}^{(t)}-\nabla_{\ell}f(\vx^{(t)})\|_F^2]\nonumber\\
        =&\E[\|\mS_{\ell}^{(t)}\odot[(1-\beta_1)\tilde{\mM}_{\ell}^{(t-1)}+\beta_1\mG_{\ell}^{(t)}-\nabla_{\ell}f(\vx^{(t)})]-(\mE-\mS_{\ell}^{(t)})\odot\nabla_{\ell}f(\vx^{(t)})\|_F^2]\nonumber\\
        =&\delta_{\ell}\E[\|(1-\beta_1)\tilde{\mM}_{\ell}^{(t-1)}+\beta_1\mG_{\ell}^{(t)}-\nabla_{\ell}f(\vx^{(t)})\|_F^2]+(1-\delta_{\ell})\E[\|\nabla_{\ell}f(\vx^{(t)})\|_F^2],\label{eq:pflm-gosare-mc-2}
    \end{align}
    where the second equality uses Lemma \ref{lm:Randk}. For the first term, we have
    \begin{align}
        &\E[\|(1-\beta_1)\tilde{\mM}_{\ell}^{(t-1)}+\beta_1\mG_{\ell}^{(t)}-\nabla_{\ell}f(\vx^{(t)})\|_F^2]\nonumber\\
        =&\E[\|(1-\beta_1)(\tilde{\mM}_{\ell}^{(t-1)}-\nabla_{\ell}f(\vx^{(t)}))+\beta_1(\mG_{\ell}^{(t)}-\nabla_{\ell}f(\vx^{(t)}))\|_F^2]\nonumber\\
        \le&\E[\|(1-\beta_1)(\tilde{\mM}_{\ell}^{(t-1)}-\nabla_{\ell}f(\vx^{(t)}))\|_F^2]+\beta_1^2\E[\|\mG_{\ell}^{(t)}-\nabla_{\ell}f(\vx^{(t)})\|_F^2]\nonumber\\
        \le&(1-\beta_1)\E[\|\tilde{\mM}_{\ell}^{(t-1)}-\nabla_{\ell}f(\vx^{(t)})\|_F^2]+\beta_1^2\sigma_{\ell}^2,\label{eq:pflm-gosare-mc-2.1}
    \end{align}
    where both inequalities use Assumption \ref{asp:sg}. By Young's inequality, we have
    \begin{align}
        &\E[\|\tilde{\mM}_{\ell}^{(t-1)}-\nabla_{\ell}f(\vx^{(t)})\|_F^2]\nonumber\\
        =&\E[\|(\tilde{\mM}_{\ell}^{(t-1)}-\nabla_{\ell}f(\vx^{(t-1)}))-(\nabla_{\ell}f(\vx^{(t)})-\nabla_{\ell}f(\vx^{(t-1)})\|_F^2]\nonumber\\
        \le&\left(1+\frac{\delta_{\ell}\beta_1}{4}\right)\E[\|\tilde{\mM}_{\ell}^{(t-1)}-\nabla_{\ell}f(\vx^{(t-1)})\|_F^2]+\left(1+\frac{4}{\delta_{\ell}\beta_1}\right)\E[\|\nabla_{\ell}f(\vx^{(t)})-\nabla_{\ell}f(\vx^{(t-1)})\|_F^2].\label{eq:pflm-gosare-mc-3}
    \end{align}
     Applying \eqref{eq:pflm-gosare-mc-2.1}\eqref{eq:pflm-gosare-mc-3} and Lemma \ref{lm:connection} to \eqref{eq:pflm-gosare-mc-2} yields \eqref{eq:lm-gosare-mc-2}.

    When $t=k\tau+r$, $k\in\mathbb{N}$, $1\le r<\tau$, we have
    \begin{align}
        &\E[\|\tilde{\mM}_{\ell}^{(t)}-\nabla_{\ell}f(\vx^{(t)})\|_F^2]\nonumber\\
        =&\E[\|(1-\beta_1)(\tilde{\mM}_{\ell}^{(t-1)}-\nabla_{\ell}f(\vx^{(t)}))+\beta_1(\mS_{\ell}^{(t)}\odot\mG_{\ell}^{(t)}-\nabla_{\ell}f(\vx^{(t)}))\|_F^2]\nonumber\\
        =&\E[\|(1-\beta_1)(\tilde{\mM}_{\ell}^{(t-1)}-\nabla_{\ell}f(\vx^{(t)}))+\beta_1(\mS_{\ell}^{(t)}-\mE)\odot\nabla_{\ell}f(\vx^{(t)})\|_F^2]\nonumber\\
&+\beta_1^2\E[\mS_{\ell}^{(t)}\odot(\mG_{\ell}^{(t)}-\nabla_{\ell}f(\vx^{(t)}))\|_F^2]\nonumber\\
\le&(1-\beta_1)\E[\|\tilde{\mM}_{\ell}^{(t-1)}-\nabla_{\ell}f(\vx^{(t)})\|_F^2]+\beta_1\E[\|(\mE-\mS_{\ell}^{(t)})\odot\nabla_{\ell}f(\vx^{(t)})\|_F^2\nonumber\\
&+\beta_1^2\E[\mS_{\ell}^{(t)}\odot(\mG_{\ell}^{(t)}-\nabla_{\ell}f(\vx^{(t)}))\|_F^2],\label{eq:pflm-gosare-mc-4}
    \end{align}
where the second equality uses the unbiasedness of $\mG_{\ell}^{(t)}$ and the independence implied by $\mS_{\ell}^{(t)}=\mS_{\ell}^{(t-1)}$, the inequality uses Jensen's inequality. The first term is similarly bounded as \eqref{eq:pflm-gosare-mc-3}. For the second term, we have
\begin{align}
    &\E[\|(\mE-\mS_{\ell}^{(k\tau)})\odot\nabla_{\ell}f(\vx^{(t)})\|_F^2]\nonumber\\
    \le&\left(1+\frac{\delta_{\ell}}{4}\right)\E[\|(\mE-\mS_{\ell}^{(k\tau)})\odot\nabla_{\ell}f(\vx^{(k\tau)})\|_F^2]\nonumber\\
    &+\left(1+\frac{4}{\delta_{\ell}}\right)\E[\|(\mE-\mS_{\ell}^{(k\tau)})\odot(\nabla_{\ell}f(\vx^{(t)})-\nabla_{\ell}f(\vx^{(k\tau)}))\|_F^2]\nonumber\\
    \le&\left(1-\frac{3\delta_{\ell}}{4}\right)\E[\|\nabla_{\ell}f(\vx^{(k\tau)})\|_F^2]+\left(1+\frac{4}{\delta_{\ell}}\right)\E[\|\nabla_{\ell}f(\vx^{(t)})-\nabla_{\ell}f(\vx^{(k\tau)})\|_F^2],\label{eq:pflm-gosare-mc-5}
\end{align}
where the first inequality uses Young's inequality, the second inequality uses Lemma \ref{lm:Randk}. By Young's inequality, we have
\begin{align}
\E[\|\nabla_{\ell}f(\vx^{(k\tau)})\|_F^2]\le&\left(1+\frac{\delta_{\ell}}{4}\right)\E[\|\nabla_{\ell}f(\vx^{(t)})\|_F^2]+\left(1+\frac{4}{\delta_{\ell}}\right)\E[\|\nabla_{\ell}f(\vx^{(t)})-\nabla_{\ell}f(\vx^{(k\tau)})\|_F^2].
    \label{eq:pflm-gosare-mc-5.1}
\end{align}

Applying \eqref{eq:pflm-gosare-mc-5.1} to \eqref{eq:pflm-gosare-mc-5} and applying Cauchy's inequality yields
\begin{align}
    &\E[\|(\mE-\mS_{\ell}^{(k\tau)})\odot\nabla_{\ell}f(\vx^{(t)})\|_F^2]\nonumber\\
    \le&\left(1-\frac{\delta_{\ell}}{2}\right)\E[\|\nabla_{\ell}f(\vx^{(t)})\|_F^2]+\frac{10r}{\delta_{\ell}}\sum_{i=1}^r\E[\|\nabla_{\ell}f(\vx^{(k\tau+i)})-\nabla_{\ell}f(\vx^{(k\tau+i-1)})\|_F^2].\label{eq:pflm-gosare-mc-5.2}
\end{align}
For the third term, we have
\begin{align}
    &\E[\|\mS_{\ell}^{(k\tau)}\odot(\mG_{\ell}^{(t)}-\nabla_{\ell}f(\vx^{(t)}))\|_F^2]\le\E[\|\mG_{\ell}^{(t)}-\nabla_{\ell}f(\vx^{(t)})\|_F^2]\le\sigma_{\ell}^2,\label{eq:pflm-gosare-mc-6}
\end{align}
where the second inequality uses Assumption \ref{asp:sg}.

 Applying \eqref{eq:pflm-gosare-mc-3}\eqref{eq:pflm-gosare-mc-5.2}\eqref{eq:pflm-gosare-mc-6} to \eqref{eq:pflm-gosare-mc-4} yields \eqref{eq:lm-gosare-mc-3}.
\end{proof}

Based on Lemma \ref{lm:gosare-mc}, we can prove the convergence properties of GoSare similarly as the proofs of Lemma \ref{lm:golore-me}, Theorem \ref{thmres:golore} and Corollary \ref{corres:golore}. Below we directly present the final convergence results.

\begin{theorem}[Convergence of GoSare]\label{thmres:gosare}
    Under Assumptions \ref{asp:lb}-\ref{asp:sg}, if hyperparameters
    \begin{align}
        0<\beta_1\le 1,\quad\tau\ge\frac{64}{3\beta_1\underline{\delta}},\quad0<\eta\le\min\left\{\frac{1}{4L},\sqrt{\frac{3\underline{\delta}\beta_1^2}{80L^2}},\sqrt{\frac{3\underline{\delta}}{80\tau^2L^2}},\sqrt{\frac{3\beta_1}{16\tau L^2}}\right\},\nonumber
    \end{align}
    GoSare using small-batch stochastic gradients and MSGD  (Alg.~\ref{alg:sift}) converges as
    \begin{align}
        \frac{1}{K\tau}\sum_{t=0}^{K\tau-1}\E{\|\nabla f(\vx^{(t)})\|_2^2]\le\frac{16\Delta}{\underline{\delta}\eta K\tau}}+\frac{32\beta_1\sigma^2}{3\underline{\delta}}\nonumber
    \end{align}
    for any $K\ge1$, where $\Delta=f(\vx^{(0)})-\inf_{\vx}f(\vx)$. If $T\ge2+128/(3\underline{\delta})+(128\sigma)^2/(9\sqrt{\underline{\delta}}L\Delta)$ and we further choose
    \begin{align*}
        \beta_1=&\left(1+\sqrt{\frac{\underline{\delta}^{3/2}\sigma^2T}{L\Delta}}\right)^{-1},\\
        \tau=&\left\lceil\frac{64}{3\underline{\delta}\beta_1}\right\rceil,\\
        \eta=&\left(4L+\sqrt{\frac{80L^2}{3\underline{\delta}\beta_1^2}}+\sqrt{\frac{80\tau^2L^2}{3\underline{\delta}}}+\sqrt{\frac{16\tau L^2}{3\beta_1}}\right)^{-1},
    \end{align*}
    GoSare using small-batch stochastic gradients and MSGD (Alg.~\ref{alg:sift}) converges as
    \begin{align}
        \frac{1}{T}\sum_{t=0}^{T-1}\E[\|\nabla f(\vx^{(t)})\|_2^2]=\gO\left(\frac{L\Delta}{\underline{\delta}^{5/2}T}+\sqrt{\frac{L\Delta\sigma^2}{\underline{\delta}^{7/2}T}}\right).\nonumber
    \end{align}
    Consequently, the computation complexity to reach an $\varepsilon$-accurate solution $\vx$ such that $\|\nabla f(\vx)\|_2^2\le\varepsilon$ is given by $\gO\left(\frac{L\Delta\sigma^2}{\underline{\delta}^{7/2}\varepsilon^2}+\frac{L\Delta}{\underline{\delta}^{5/2}\varepsilon}+\frac{\sigma^2}{\underline{\delta}^{1/2}L\Delta}+\frac{1}{\underline{\delta}}\right)$.
\end{theorem}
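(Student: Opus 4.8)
The plan is to follow, step for step, the chain of arguments that proves Theorem~\ref{thmres:golore} and Corollary~\ref{corres:golore} for GoLore, substituting Lemma~\ref{lm:gosare-mc} (momentum contraction under a reused random sparse mask) wherever Lemma~\ref{lm:golore-mc} was used. Three ingredients suffice: (i) a GoSare momentum-error estimate, \ie\ the analogue of Lemma~\ref{lm:golore-me} and \eqref{eq:lm-golore-me}; (ii) the descent inequality of Lemma~\ref{lm:descent}, which is stated for any update of the form $\vx^{(t+1)}=\vx^{(t)}-\eta\tilde{\vm}^{(t)}$ and hence applies to GoSare verbatim once we recall that here $\tilde{\vm}^{(t)}$ collects the quantities $\tilde{\mM}_\ell^{(t)}=\mS_\ell^{(t)}\odot\mM_\ell^{(t)}$ produced by Alg.~\ref{alg:sift}; and (iii) the hyperparameter bookkeeping of \eqref{eq:thmres-golore-hyperparameter} together with the explicit choices displayed in the second part of the statement. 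I would first record that the sparse counterparts of the two linear-algebra facts underlying the GoLore proof are already in hand: Lemma~\ref{lm:Randk} plays the role of Lemma~\ref{lm:rand}, giving $\E[\mS_\ell^{(t)}]=\delta_\ell\mE_{m_\ell,n_\ell}$ and, since the entries of $\mS$ are $0/1$, also $\E[\mS_\ell^{(t)}\odot\mS_\ell^{(t)}]=\delta_\ell\mE_{m_\ell,n_\ell}$; and the disjoint-support identity $\|\mS\odot\mA+(\mE-\mS)\odot\mB\|_F^2=\|\mS\odot\mA\|_F^2+\|(\mE-\mS)\odot\mB\|_F^2$ replaces Lemma~\ref{lm:orthogonality}.

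The first step is the GoSare momentum-error lemma. Writing $a_\ell^{(t)}:=\E[\|\tilde{\mM}_\ell^{(t)}-\nabla_\ell f(\vx^{(t)})\|_F^2]$, Lemma~\ref{lm:gosare-mc} supplies a base-case bound on $a_\ell^{(0)}$ and, for $t\ge1$, a one-step recursion $a_\ell^{(t)}\le\bigl(1-(1-\delta_\ell/4)\beta_1\bigr)a_\ell^{(t-1)}+b_\ell^{(t)}$; in the $t=k\tau$ case the coefficient is even smaller, namely $\delta_\ell\bigl(1-(1-\delta_\ell/4)\beta_1\bigr)$, which only helps because $a_\ell^{(t-1)}\ge0$, so the uniform coefficient $1-(1-\delta_\ell/4)\beta_1\le1$ can be used throughout. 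Summing over $t=0,\dots,K\tau-1$ (with $\mM_\ell^{(-1)}=0$) collapses the recursion to $\sum_t a_\ell^{(t)}\le\frac{1}{(1-\delta_\ell/4)\beta_1}\sum_t b_\ell^{(t)}$. Substituting the explicit $b_\ell^{(t)}$ from Lemma~\ref{lm:gosare-mc} (a per-step noise term $\beta_1^2\sigma_\ell^2$, the telescoping ``stale-mask'' gaps $\|\nabla_\ell f(\vx^{(k\tau+i)})-\nabla_\ell f(\vx^{(k\tau+i-1)})\|_F^2$, and the favourable term $(1-\delta_\ell/2)\beta_1\|\nabla_\ell f(\vx^{(t)})\|_F^2$), passing from $\|\nabla_\ell f(\vx^{(t+1)})-\nabla_\ell f(\vx^{(t)})\|_F^2$ to $L^2\|\vx^{(t+1)}-\vx^{(t)}\|_2^2$ by $L$-smoothness, and summing over $\ell$ with $\sum_\ell\sigma_\ell^2=\sigma^2$, reproduces the GoLore momentum-error bound with the same constants as \eqref{eq:lm-golore-me} plus the extra variance term $\frac{K\tau\beta_1\sigma^2}{1-\overline{\delta}/4}$.

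The second step telescopes Lemma~\ref{lm:descent} over $t=0,\dots,K\tau-1$ and inserts this estimate. With $\tau\ge64/(3\beta_1\underline{\delta})$ the net coefficient of $\sum_t\E[\|\nabla f(\vx^{(t)})\|_2^2]$ stays $\ge\underline{\delta}/8$, and with $\eta$ restricted as in \eqref{eq:thmres-golore-hyperparameter} the coefficient of $\sum_t\|\vx^{(t+1)}-\vx^{(t)}\|_2^2$ is nonnegative and may be dropped, giving $\frac{1}{K\tau}\sum_t\E[\|\nabla f(\vx^{(t)})\|_2^2]\le\frac{16\Delta}{\underline{\delta}\eta K\tau}+\frac{32\beta_1\sigma^2}{3\underline{\delta}}$. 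Plugging in $\beta_1=(1+\sqrt{\underline{\delta}^{3/2}\sigma^2T/(L\Delta)})^{-1}$, $\tau=\lceil 64/(3\underline{\delta}\beta_1) \rceil$ and the stated $\eta$ (whose reciprocal equals the sum of the four terms, so each constraint in \eqref{eq:thmres-golore-hyperparameter} holds), bounding $1/\eta$ termwise, balancing the $1/(K\tau)$ and $\beta_1\sigma^2$ contributions, and separating the $\sigma=0$ and $\sigma>0$ regimes yields the $\gO\!\bigl(L\Delta/(\underline{\delta}^{5/2}T)+\sqrt{L\Delta\sigma^2/(\underline{\delta}^{7/2}T)}\bigr)$ rate for horizons that are multiples of $\tau$; the general horizon $T=K\tau+r$ is handled by applying the $K\tau$-bound to $\tilde K:=K+1$, exactly as in Corollary~\ref{corres:golore}. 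Inverting this rate in $\varepsilon$, together with the lower bound imposed on $T$, produces the claimed computation complexity.

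The main obstacle lives entirely in the first step and is bookkeeping rather than conceptual: one must treat the three regimes $t=0$, $t=k\tau$, $t=k\tau+r$ separately, check that the error from applying a mask frozen at iteration $k\tau$ to the drifted gradient $\nabla_\ell f(\vx^{(t)})$ telescopes into a sum of consecutive gradient gaps with the weights appearing in \eqref{eq:lm-golore-me}, and confirm that the per-step $\beta_1^2\sigma_\ell^2$ noise sums to only $\Theta(\tau\beta_1^2\sigma_\ell^2)$ over a period, so that after division by $K\tau$ the residual variance is $\Theta(\beta_1\sigma^2/\underline{\delta})$ and vanishes as $\beta_1\to0$. Since Lemma~\ref{lm:gosare-mc} already packages all of this, and since Lemma~\ref{lm:Randk} supplies exactly the unbiasedness and contraction-in-expectation that make the cross terms in the momentum recursion vanish or contract just as in GoLore, no new estimate is required beyond this routine accounting.
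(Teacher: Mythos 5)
Your proposal is correct and takes essentially the same route as the paper: the paper's own "proof" of Theorem~\ref{thmres:gosare} is precisely the remark that, once Lemma~\ref{lm:gosare-mc} is in hand, one repeats the chain Lemma~\ref{lm:golore-me} $\rightarrow$ Theorem~\ref{thmres:golore} $\rightarrow$ Corollary~\ref{corres:golore} verbatim, with Lemma~\ref{lm:Randk} replacing Lemma~\ref{lm:rand} and the entry-wise disjoint-support decomposition replacing Lemma~\ref{lm:orthogonality}. Your bookkeeping (uniform contraction coefficient $1-(1-\delta_\ell/4)\beta_1$, telescoped sum giving $\sum_t a_\ell^{(t)}\le \tfrac{1}{(1-\delta_\ell/4)\beta_1}\sum_t b_\ell^{(t)}$, residual variance $\Theta(\beta_1\sigma^2/\underline{\delta})$, and the $\tilde K=K+1$ trick for general horizons) matches the paper's constants and argument structure.
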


\section{Experimental specifications}\label{app:exp}
In this section, we elaborate on the missing details concerned with the experiments we present in Sec.~\ref{sec:exp}.

% \textbf{Quadratic problem. }We evaluate GaLore and GoLore on the following quadratic problem to justify our theoretical results:
% \begin{align*}
%     f(\mX)=&\frac{1}{2}\|\mA\mX\|_F^2+\langle \mB,\mX\rangle_F,\quad
%     \nabla F(\mX;\xi)=\nabla f(\mX)+\xi\sigma \mC,
% \end{align*}
% where $\mA=\begin{pmatrix}\mI_{n-r}&0\end{pmatrix}\in\R^{(n-r)\times n}$, $\mB=\begin{pmatrix}\mD & 0\\0&0\end{pmatrix}\in\R^{n\times n}$ with $\mD\in\R^{(n-r)\times(n-r)}$ generated randomly, $\mC=\begin{pmatrix}0 & 0\\0 & \mI_r\end{pmatrix}\in\R^{n\times n}$ and $\xi$ is uniformly sampled from $\{1,-1\}$. As illustrated in Fig.~\ref{fig:qp}, GaLore cannot converge to the exact solution while GoLore, large-batch GaLore and full-parameter training can converge equally well.

{\color{black}\textbf{GaLore's non-convergence. }We compared Galore, large-batch GaLore, GoLore and full-parameter training on the constructed quadratic problem defined in \eqref{eq:counterexample}. We used a batch size of 128 for large-batch GaLore and a batch size of 1 for the others.}

\textbf{Pre-training tasks on C4 dataset. }We pre-trained LLaMA-60M on C4 dataset for 30,000 iterations on 4 NVIDIA A100 40G GPUs. We use batch size 128, learning rate 
1.0e-3, rank 128, scaling factor $\alpha=1$, subspace changing frequency $\tau=200$, and a max sequence length of 256. 
% {\color{blue} Fig.~\ref{fig:C4-8bit} and \ref{fig:C4-Adamw-new} are further results using 8-bit training and AdamW with BF16, respectively.}
% where the number of training tokens per iteration is also reduced to match the setup in GaLore's original paper.}
% Results under 8-bit training are shown in Fig.~\ref{fig:C4-8bit}. Fig.~\ref{fig:C4-Adamw-new} presents the results of different algorithms after trained on more tokens.

% \begin{figure}[ht]
%     \centering
%     \begin{minipage}[c]{.48\textwidth}
%     \centering
%     \includegraphics[width=\textwidth]{figures/C4-8bitadamw.pdf}
%     \caption{Pre-training curves of algorithms using 8-bit AdamW.}
%     \label{fig:C4-8bit}
%     \end{minipage}
%     \hspace{5pt}
%     \begin{minipage}[c]{.48\textwidth}
%     \centering
%     \includegraphics[width=\textwidth]{figures/C4-adamw.pdf}
%     \caption{Pre-training curves of algorithms using AdamW.}
%     \label{fig:C4-Adamw-new}
%     \end{minipage}
% \end{figure}

\textbf{Fine-tuning tasks on WinoGrande dataset. }We fine-tune pre-trained LLaMA2-7B model on the WinoGrande dataset for 30 epochs on 4 NVIDIA A100 80G GPUs. We use batch size 1, rank 1024, subspaces changing frequency $\tau=500$ and a max sequence length of 2048. The learning rate and scaling factor are set as 1.0e-4 and $\alpha=4$ for GaLore/GoLore, thus corresponds to a learning rate of 4.0e-4 in full-parameter fine-tuning. {\color{black} The test accuracy is presented in Table \ref{tab:boolq}, where GoLore performs similarly to GaLore due to overfitting.}

\textbf{Fine-tuning tasks on BoolQ dataset. }We fine-tune pre-trained LLaMA2-7B model on the BoolQ dataset on 4 NVIDIA A100 80G GPUs. We use batch size 1, rank 1024, subspaces changing frequency $\tau=500$ and a max sequence length of 2048. We use MSGD as the subspace optimizer, where the learning rate and scaling factor are set as 1.0e-4 and $\alpha=4$ for GaLore/GoLore, corresponding to a learning rate of 4.0e-4 in full-parameter fine-tuning. Table \ref{tab:boolq} presents the test accuracy of different algorithms, where GoLore outperforms GaLore. {\color{black}We further fine-tune pre-trained OPT-13B for 1 epoch using the same experimental setup, whose results are shown in Table \ref{tab:opt13B}.}

\begin{table}[ht]
    \centering
        \caption{ Evaluating GaLore/GoLore for fine-tuning on WinoGrande and BoolQ using pre-trained LLaMA2-7B.}
        \vspace{1em}
    \label{tab:boolq}
    \begin{tabular}{cccc}
    \toprule
        Algorithm & BoolQ (1 epoch) & BoolQ (3 epochs) & {\color{black} WinoGrande (80 epochs)}\\ 
    \midrule
    Full Params. & 86.48 & 87.43 & {\color{black} 69.85}\\
    \midrule
    GaLore & 84.89 & 86.79 & {\color{black} \textbf{68.51}}\\
    GoLore@20\% & \textbf{85.81} & \textbf{86.88} & {\color{black} \textbf{68.51}}\\
    \bottomrule
    \end{tabular}
\end{table}

\textbf{Fine-tuning tasks on GLUE benchmark. }We fine-tune pre-trained RoBERTa-Base model on the GLUE benchmark for 30 epochs on a single GeForce RTX 4090. Training details including batch size, learning rate, rank, scaling factor $\alpha$ and max sequence length are illustrated in Table \ref{tab:glue-hyperparams}.

\begin{table}[t]
    \centering
    % \vspace{-1em}
    \caption{Hyperparameters used in fine-tuning pre-trained RoBERTa-Base model on the GLUE benchmark.}
    \label{tab:glue-hyperparams}
    \vspace{1em}
    \begin{tabular}{c|cccccccc}
        \toprule
        \textbf{\small Hyperparameter} & \textbf{\small CoLA} & \textbf{\small STS-B} & \textbf{\small MRPC} & \textbf{\small RTE} & \textbf{\small SST2} & \textbf{\small MNLI} & \textbf{\small QNLI} & \textbf{\small QQP}\\
        \midrule
        batch size & 32 & 16 & 16 & 16 & 16 & 16 & 16 & 16\\
        % \# Epochs & 30 & 30 & 30 & 30 & 30 & 30 & 30 & 30\\
        Learning Rate & 2.5e-5 & 2.0e-5 & 3.5e-5 & 7.0e-6 & 1.0e-5 & 1.0e-5 & 1.0e-5 & 1.0e-5\\
        Rank & 4 & 4 & 4 & 4 & 4 & 4 & 4 & 4\\
        GaLore's $\alpha$ & 4 & 4 & 4 & 4 & 4 & 4 & 4 & 4\\
        \textsc{Flora's} $\alpha$ & 4 & 4 & 4 & 4 & 4 & 4 & 4 & 4\\
        GoLore's $\alpha$ & 4 & 4 & 4 & 4 & 4 & 4 & 4 & 4\\
        Frequency $\tau$ & 500 & 500 & 500 & 500 & 500 & 500 & 500 & 500\\
        Max Seq. Len. & 512 & 512 & 512 & 512 & 512 & 512 & 512 & 512\\
        \bottomrule
    \end{tabular}
    \vspace{1em}
\end{table}

\section{Ablation studies}\label{app:ablation}
In this section, we conduct several ablation studies to provide a deeper understanding of the proposed GoLore algorithm.

\noindent\textbf{Ablation on the switching point. }By \textit{switching point}, we mean the ratio of GaLore to GoLore during training; for instance, GoLore@$50\%$ indicates a switching point of 0.5. In our experiments, we choose an earlier switching point if we expect the algorithm to converge more quickly to the solution's neighborhood and a later one if we anticipate slower convergence. To provide a broader guideline, we offer the following insights:

\begin{itemize}
\item If access to the true gradient is available, a reasonable switching point would be when gradient noise begins to dominate the true gradient.

\item If only stochastic gradients are available, and assuming the gradient noise has a roughly constant scale, this dominance can be estimated by monitoring whether the norm of the stochastic gradients falls below a certain threshold. This threshold serves as a hyperparameter that depends on both the training task and the batch size used in the algorithm.

\item The optimal switching point is task-dependent. Empirically, it is recommended to switch when the rate of decrease in the loss curve starts to slow down.

Fig.~\ref{fig:switching} shows model performance across different switching points. Except for pure GaLore (switching point 0), GoLore achieves comparable performance at all other points.

\begin{figure}[t]
    \centering
    \begin{minipage}{0.45\textwidth}
    \centering
    \includegraphics[width=\linewidth]{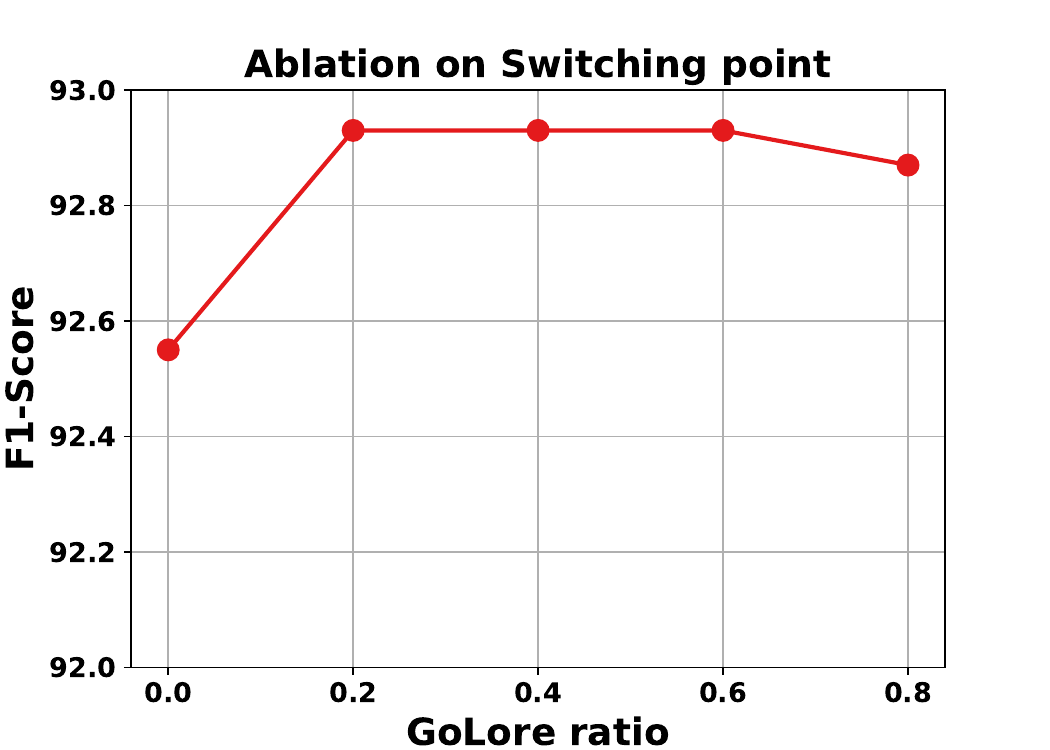}
    \caption{Ablation results on the switching point. We report the F1-scores of fine-tuning on the MRPC task in the GLUE benchmark with different GoLore ratios. A GoLore ratio of 0 represents GaLore.}
    \label{fig:switching}
    \end{minipage}\vspace{1em}
    \hspace{2mm}
    \begin{minipage}{0.45\textwidth}
    \includegraphics[width=\linewidth]{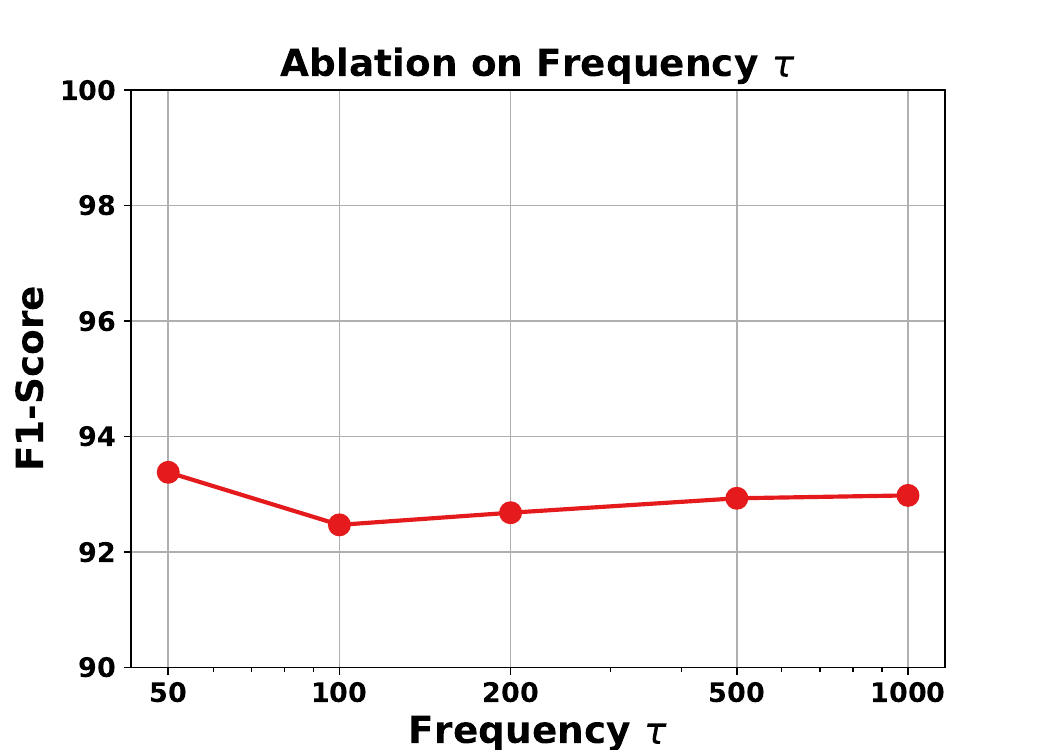}
    \caption{Ablation results on the subspace update frequency. We report the F1-scores of fine-tuning on the MRPC task in GLUE benchmark with GoLore@20\% with different subspace update frequency $\tau$'s.}
    \label{fig:tau}
    \end{minipage}\vspace{1em}
\end{figure}
\begin{figure}[t]
    \centering
    \includegraphics[width=0.5\linewidth]{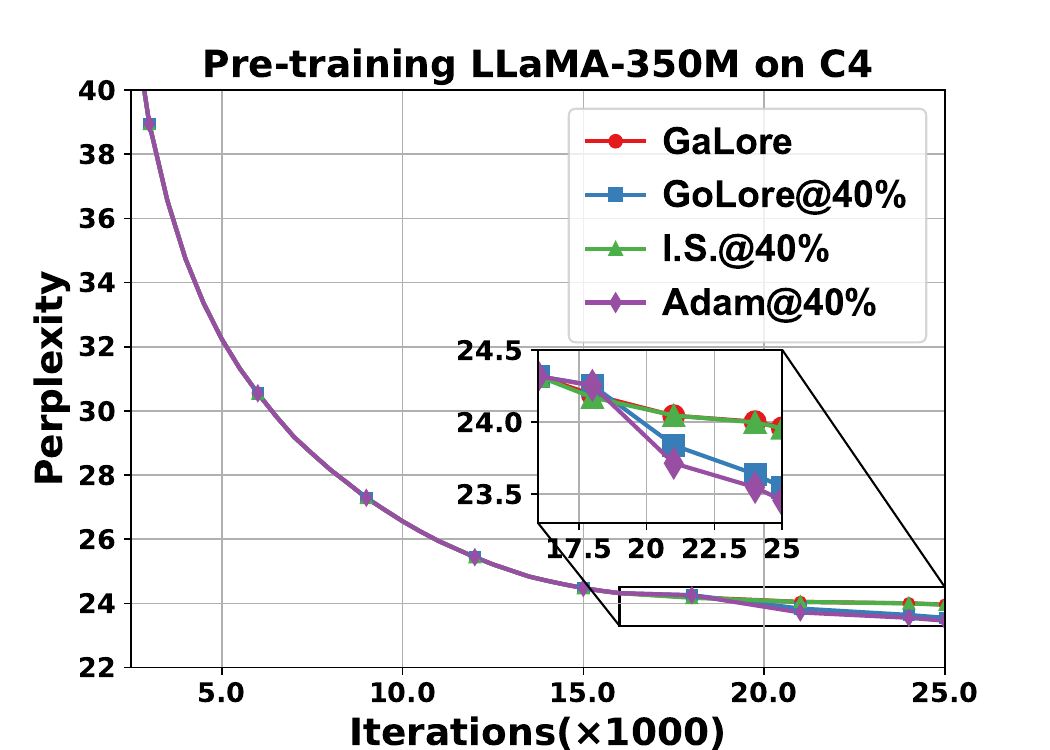}
    \caption{Additional results of pre-training LLaMA-350M on C4 dataset. "I.S." represents the alternative importance sampling strategy.}
    \label{fig:i.s.}
    
\end{figure}
\end{itemize}

\noindent\textbf{Ablation on subspace update frequency. }Consistent with \citet{zhao2024galore}, which reports stable performance for GaLore across $\tau \in [50,1000]$, Fig.~\ref{fig:tau} shows that GoLore@$20\%$ exhibits similar robustness to different $\tau$ values in the same range.

\noindent\textbf{Ablation on subspace sampling strategy. }An important question is whether alternative non-greedy sampling strategies can outperform GoLore. To explore this, we evaluate an importance sampling-based method. Given a stochastic gradient matrix $\bm{G}\in\mathbb{R}^{m\times n}$ ($m\le n$), we first perform SVD to obtain $\bm{G} = \bm{U}\bm{\Sigma} \bm{V}^\top$, following the procedure used in GaLore. However, instead of selecting the top-$r$ columns of $\bm{U}$ as in GaLore, we sample $r$ columns with probabilities proportional to the corresponding singular values $\sigma_i$. As shown in Fig.~\ref{fig:i.s.}, this sampling-based strategy achieves performance similar to GaLore, but is consistently outperformed by GoLore. This may be attributed to the fact that importance sampling does not yield unbiased projection matrices and remains susceptible to the bias introduced by stochastic gradient noise.

\noindent\textbf{Does GoLore’s projection truly result in smaller compression errors?} To assess whether GoLore achieves lower compression error than GaLore due to its randomized projection strategy, we report the gradient approximation error $\|\nabla f(\bm{x}^{(t)}) - \hat{\nabla}f(\bm{x}^{(t)})\|_2$ in Fig.\ref{fig:gradient-err-1} and Fig.\ref{fig:gradient-err-2}, where $\hat{\nabla}f(\bm{x}^{(t)})$ denotes the approximated stochastic gradient computed using GaLore or GoLore. As shown, GoLore consistently produces more accurate gradient approximations than GaLore across training steps.

\begin{figure}[t]
    \centering
    \begin{minipage}{0.45\textwidth}
    \includegraphics[width=\linewidth]{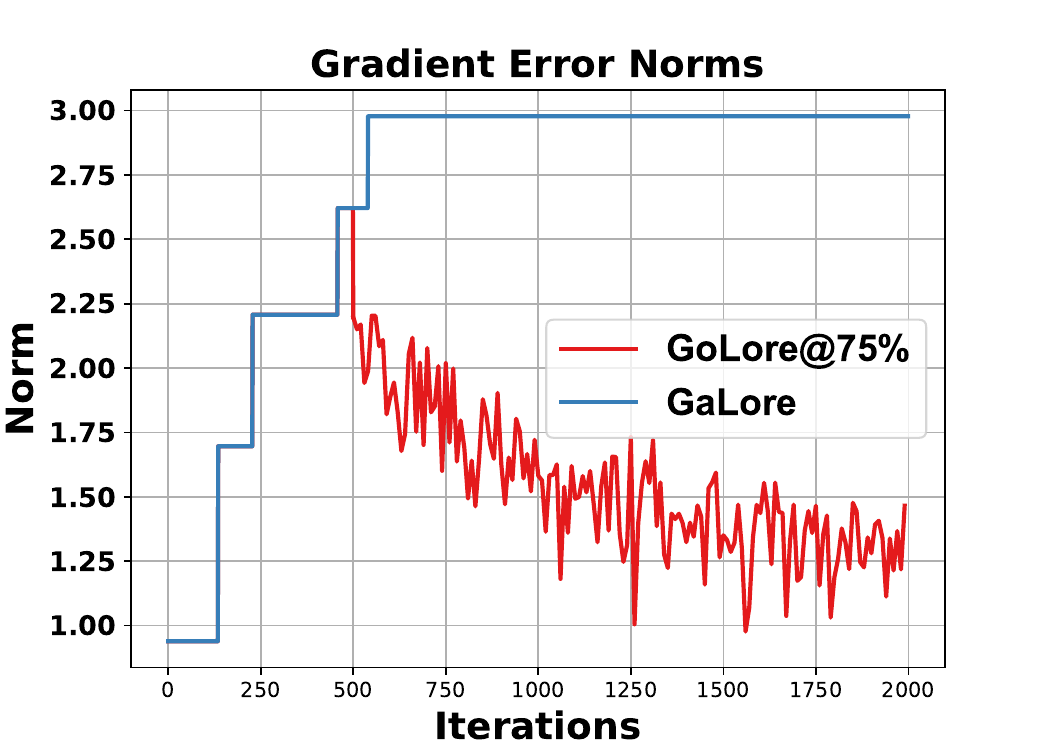}
    \caption{Illustration of gradient error norms when training the quadratic target problem with GaLore and GoLore@$75\%$ corresponding to Fig.~\ref{fig:qp} (right).}
    \label{fig:gradient-err-1}
    \end{minipage}
    \vspace{1em}
    \hspace{2mm}
    \begin{minipage}{0.45\textwidth}
    \includegraphics[width=\linewidth]{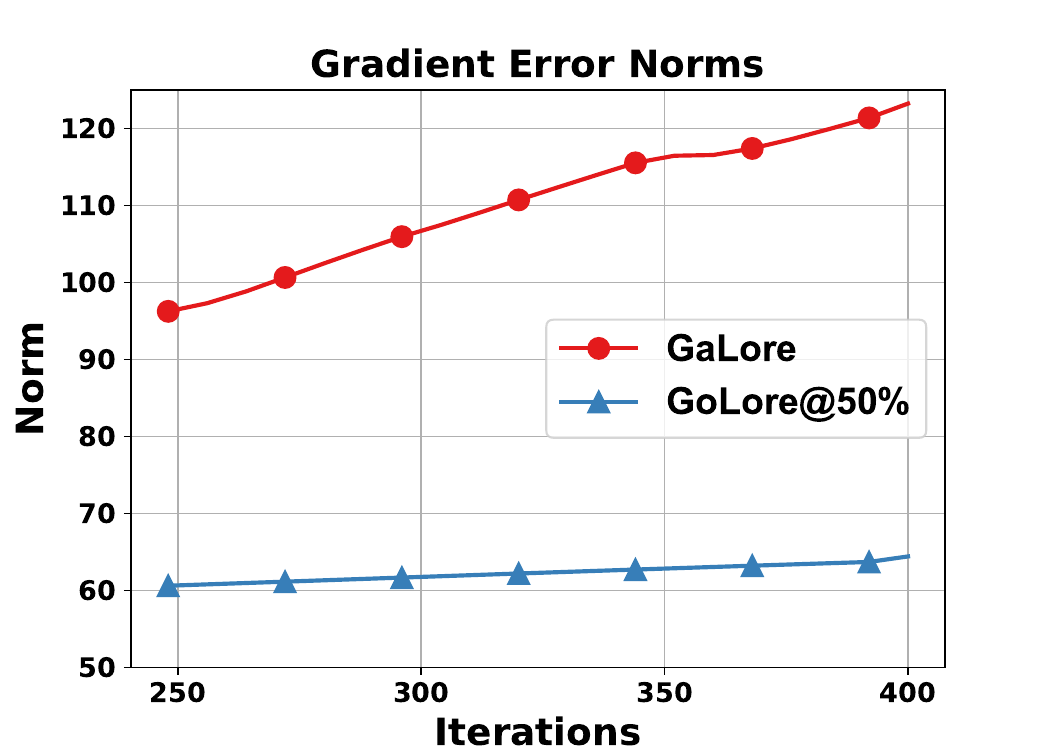}
    \caption{Illustration of gradient error norms when fine-tuning with GaLore and GoLore@$50\%$ on the MRPC task. Only 32 sequences are used for cheap true-gradient evaluation.}
    \label{fig:gradient-err-2}
    \end{minipage}
    \vspace{1em}
\end{figure}

\end{document}